\documentclass{article}

\usepackage{microtype}
\usepackage{graphicx}
\usepackage{booktabs}
\usepackage{enumitem}
\usepackage{algorithm2e}
\setlist[itemize]{noitemsep, nolistsep}
\usepackage[table]{xcolor}
\usepackage{color,soul}
\usepackage{lscape}
\usepackage{caption}
\usepackage{subcaption}
\usepackage{comment}
\usepackage{tabularx}
\usepackage{arydshln}


\usepackage{hyperref}

\usepackage[accepted]{icml2024}

\usepackage{amsmath}
\usepackage{amssymb}
\usepackage{mathtools}
\usepackage{multirow}
\usepackage{amsthm}
\usepackage[capitalize,noabbrev]{cleveref}
\usepackage{mymacros}
\pgfplotsset{compat=1.18}
\usepgfplotslibrary{groupplots}

\icmltitlerunning{Learning Optimal Deterministic Policies with Stochastic Policy Gradients} 

\begin{document}

\setlength{\abovedisplayskip}{3pt}
\setlength{\belowdisplayskip}{3pt}
\setlength{\textfloatsep}{8pt}

\twocolumn[
\icmltitle{Learning Optimal Deterministic Policies with Stochastic Policy Gradients}



\icmlsetsymbol{equal}{*}

\begin{icmlauthorlist}
\icmlauthor{Alessandro Montenegro}{poli}
\icmlauthor{Marco Mussi}{poli}
\icmlauthor{Alberto Maria Metelli}{poli}
\icmlauthor{Matteo Papini}{poli}
\end{icmlauthorlist}

\icmlaffiliation{poli}{Politecnico di Milano, Piazza Leonardo da Vinci 32, 20133, Milan, Italy}

\icmlcorrespondingauthor{Alessandro Montenegro}{alessandro.montenegro@polimi.it}

\icmlkeywords{Policy Gradient, Convergence, Deterministic Policies}

\vskip 0.3in
]

\printAffiliationsAndNotice{}

\thinmuskip=2mu
\medmuskip=2mu
\thickmuskip=2mu

\allowdisplaybreaks[4]

\begin{abstract}
\emph{Policy gradient} (PG) methods are successful approaches to deal with continuous \emph{reinforcement learning} (RL) problems. They learn stochastic parametric (hyper)policies by either exploring in the space of actions or in the space of parameters. Stochastic controllers, however, are often undesirable from a practical perspective because of their lack of robustness, safety, and traceability. In common practice, stochastic (hyper)policies are learned only to deploy their deterministic version. 
In this paper, we make a step towards the theoretical understanding of this practice. 
After introducing a novel framework for modeling this scenario, we study the global convergence to the \emph{best deterministic} policy, under (weak) gradient domination assumptions. Then, we illustrate how to tune the exploration level used for learning to optimize the trade-off between the sample complexity and the performance of the deployed deterministic policy. Finally, we quantitatively compare action-based and parameter-based exploration,
giving a formal guise to intuitive results.
\end{abstract}

\section{Introduction}
\label{sec:intro}
Within \emph{reinforcement learning}~\citep[RL,][]{sutton2018reinforcement} approaches, \emph{policy gradient}~\citep[PG,][]{deisenroth2013survey} algorithms 
have proved very effective in dealing with real-world control problems. 
Their advantages include the applicability to {continuous state and action spaces}~\citep{peters2006policy}, resilience to {sensor and actuator noise}~\citep{gravell2020learning}, robustness to {partial observability}~\citep{azizzadenesheli2018policy}, and the possibility of incorporating {prior knowledge} in the policy design phase~\citep{ghavamzadeh2006bayesian}, improving {explainability}~\citep{likmeta2020combining}. 
PG algorithms search directly in the space of parametric policies for the one that maximizes a performance function.
Nonetheless, as always in RL, the \emph{exploration} problem has to be addressed, and practical methods involve injecting noise in the actions or in the parameters.
This limits the application of PG methods in many real-world scenarios, 
such as autonomous driving, industrial plants, and robotic controllers. 
Indeed, stochastic policies typically do not meet the \emph{reliability}, \emph{safety}, and \emph{traceability} standards of this kind of applications.

The problem of learning \emph{deterministic} policies has been explicitly addressed in the PG literature by~\citet{silver2014deterministic} with their \emph{deterministic policy gradient}, which spawned very successful deep RL algorithms~\citep{lillicrap2016continuous,fujimoto2018addressing}. This approach, however, is affected by several drawbacks, mostly due to its inherent \emph{off-policy} nature.
First, this makes DPG hard to analyze from a theoretical perspective: local convergence guarantees have been established only recently, and only under assumptions that are very demanding for deterministic policies~\citep{xiong2022deterministic}. Furthermore, its practical versions~\citep[DDPG,][]{lillicrap2016continuous} 
are known to be very susceptible to hyperparameter tuning.

We study here a simpler and fairly common approach:
that of learning \emph{stochastic} policies with PG algorithms, then \emph{deploying} the corresponding deterministic version, ``switching off" the noise.\footnote{This can be observed in several libraries~\citep[\eg][]{stable-baselines3} and benchmarks~\citep[e.g.,][]{duan2016benchmarking}.}
 Intuitively, the amount of exploration (\eg the variance of a Gaussian policy) should be selected wisely. Indeed, the smaller the exploration level, the closer the optimized objective is to that of a deterministic policy. At the same time, with a small exploration, learning can severely slow down and get stuck on bad local optima.

Policy gradient methods can be partitioned based on the space on which the exploration is carried out, distinguishing between: \emph{action-based} (AB) and \emph{parameter-based}~\citep[PB,][]{SEHNKE2010551} exploration. The first, of which REINFORCE~\citep{williams1992simple} and GPOMDP~\citep{baxter2001infinite,sutton1999policy} are the progenitor algorithms, performs exploration in the action space, with a stochastic (e.g., Gaussian) \emph{policy}. 
On the other hand, PB exploration, introduced by Parameter-Exploring Policy Gradients~\citep[PGPE,][]{SEHNKE2010551}, implements the exploration at the level of policy parameters by means of a stochastic \emph{hyperpolicy}. The latter performs perturbations of the parameters of a (typically deterministic) action policy. 
Of course, this dualism only considers the simplest form of noise-based, \emph{undirected} exploration. 
Efficient exploration in large-scale MDPs is a very active area of research, with a large gap between theory and practice~\citep{glp2020aaaitutorial}, placing the matter well beyond the scope of this paper.
Also, we consider noise magnitudes that are \emph{fixed} during the learning process, as the common practice of \emph{learning} the exploration parameters themselves breaks all known sample complexity guarantees of vanilla PG (see Appendix~\ref{apx:mapping}). 

To this day, a large effort has been put into providing convergence guarantees and sample complexity analyses for AB exploration algorithms~\citep[e.g.,][]{papini2018stochastic,yuan2022general,fatkhullin2023stochastic}, while the theoretical analysis of PB exploration has been taking a back seat since~\citep{zhao2011analysis}. We are not aware of any \emph{global} convergence results for parameter-based PGs. Furthermore, even for AB exploration, current studies focus on the \emph{convergence to the best stochastic policy}.

\textbf{Original Contributions.}~~In this paper, we make a step towards the theoretical understanding of the practice of \emph{deploying} a deterministic policy learned with PG methods:
\begin{itemize}[noitemsep, leftmargin=*, topsep=-2pt]
    \item We introduce a framework for modeling the practice of \emph{deploying} a deterministic policy, by formalizing the notion of \emph{white noise-based exploration}, allowing for a unified treatment of both AB and PB exploration.
    \item We study the \emph{convergence to the best deterministic} policy for both AB and PB exploration. For this reason, we focus on the \emph{global convergence}, rather than on the first-order stationary point (FOSP) convergence, and we leverage on commonly used \emph{(weak) gradient domination} assumptions.
    \item We quantitatively show how the exploration level (i.e., noise) generates a trade-off between the sample complexity and the performance of the deployed deterministic policy. Then, we illustrate how it can be tuned 
    to optimize such a trade-off, delivering 
    sample complexity guarantees. 
\end{itemize}
In light of these results, we compare the advantages and disadvantages of AB and PB exploration in terms of sample complexity and requested assumptions, giving a formal guise to intuitive results. We also elaborate on how the assumptions used in the convergence analysis can be reconnected to the basic characteristics of the MDP and the policy classes.
We conclude with a numerical validation to empirically illustrate the discussed trade-offs.
The proofs of the results presented in the main paper are reported in Appendix~\ref{apx:proofs}. 

\section{Preliminaries}
\label{sec:preliminaries}
\noindent\textbf{Notation.}~~For a measurable set $\Xs$, we denote with $\Delta({\Xs})$ the set of probability measures over $\Xs$. For $P \in \Delta(\mathcal{X})$, we denote with $p$ its density function.
With a little abuse of notation, we will interchangeably use $x \sim P$ or $x \sim p$ to denote that random variable $x$ is sampled from the $P$. 
For $n\in\mathbb{N}$, we denote by $\dsb{n} \coloneqq \{ 1, \, \ldots , \, n \}$. 

\noindent\textbf{Lipschitz Continuous and Smooth Functions.}~~A function $f: \mathcal{X} \subseteq \mathbb{R}^d \rightarrow \Reals$ is $L$-\emph{Lipschitz continuous} ($L$-LC) if $|f(\mathbf{x}) - f(\mathbf{x}') |\le L \|\mathbf{x}-\mathbf{x}'\|_2$ for every $\mathbf{x},\mathbf{x}' \in \mathcal{X}$.  $f$ is $L_2$-\emph{Lipschitz smooth} ($L_2$-LS) if it is continuously differentiable and its gradient $\nabla_{\mathbf{x}} f$ is $L_2$-LC, i.e.,   $\|\nabla_{\mathbf{x}} f(\mathbf{x}) - \nabla_{\mathbf{x}}  f(\mathbf{x}') \|_2\le L_2 \|\mathbf{x}-\mathbf{x}'\|_2$ for every $\mathbf{x},\mathbf{x}' \in \mathcal{X}$. 

\noindent\textbf{Markov Decision Processes.}~~A Markov Decision Process~\citep[MDP,][]{puterman1990markov} is represented by $\mathcal{M} \coloneqq \left( \mathcal{S}, \mathcal{A}, p, r, \rho_0, \gamma \right)$, where $\mathcal{S} \subseteq \mathbb{R}^{\ds}$ and $\mathcal{A} \subseteq \mathbb{R}^{\da}$ are the measurable state and action spaces, $p: \mathcal{S} \times \mathcal{A} \xrightarrow[]{} \Delta(\mathcal{S})$ is the transition model, where $p(\vs' | \vs, \va)$ specifies the probability density of landing in state $\vs'\in \mathcal{S}$ by playing action $\va\in \mathcal{A}$ in state $\vs\in \mathcal{S}$, $r: \mathcal{S} \times \mathcal{A} \xrightarrow[]{} [-R_{\max}, R_{\max}]$ is the reward function, where $r(\vs, \va)$ specifies the reward the agent gets by playing action $\va$ in state $\vs$, $\rho_0 \in \Delta(\mathcal{S})$ is the initial-state distribution, and $\gamma \in [0, 1]$ is the discount factor.
A trajectory $\tau = \left( \vs_{\tau,0}, \va_{\tau,0},\dots, \vs_{\tau,T-1}, \va_{\tau,T-1} \right)$ of length $T \in \mathbb{N} \cup \{+\infty\}$ is a sequence of $T$ state-action pairs. The \emph{discounted return} of a trajectory $\tau$ is 
$
    R(\tau) \coloneqq \sum_{t=0}^{T-1} \gamma^t r(\vs_{\tau,t}, \va_{\tau,t})
$.

\noindent\textbf{Deterministic Parametric Policies.}~~
We consider a \textit{parametric deterministic policy} $\bm{\mu}_{\vtheta}: \mathcal{S} \rightarrow \mathcal{A}$, where $\vtheta \in \Theta \subseteq \mathbb{R}^{\dt}$ is the parameter vector belonging to the parameter space $\Theta$.
The performance of $\bm{\mu}_{\vtheta}$ is assessed via the \emph{expected return} $\Jd: \Theta \rightarrow \mathbb{R}$, defined as:
\begin{align}
    \Jd(\vtheta) \coloneqq \E_{\tau \sim \pd(\cdot | \vtheta)} \left[R(\tau)\right], 
\end{align}
where $
    \pd(\tau; \vtheta) \coloneqq \rho_0(\vs_{\tau,0}) \prod_{t=0}^{T-1} p(\vs_{\tau, t+1} | \vs_{\tau, t}, \bm{\mu}_{\vtheta}(\vs_{\tau, t}))
$ is the density of trajectory $\tau$ induced by policy $\bm{\mu}_{\vtheta}$.\footnote{For both $\Jd$ (resp. $\Ja$, $\Jp$) and $\pd$ (resp. $p_{\text{A}}$, $p_{\text{P}}$), we use the $\text{D}$ (resp. $\text{A}$, $\text{P}$) subscript to denote that the dependence on $\vtheta$ (resp. $\vrho$) is through a $\text{D}$eterministic policy (resp. $\text{A}$ction-based exploration policy, $\text{P}$arameter-based exploration hyperpolicy).\label{footnotelabel}}
The agent's goal consists of finding an optimal parameter $\vtheta^*_{\text{D}} \in \argmax_{\vtheta \in \Theta} \Jd(\vtheta)$ and we denote $\Jd^* \coloneqq \Jd(\vtheta^*_\text{D})$.

\noindent\textbf{\textcolor{vibrantBlue}{Action-Based (AB) Exploration.}}~~In {AB exploration}, we consider a \textit{parametric stochastic policy} $\pi_{\vrho}: \mathcal{S} \rightarrow \Delta(\mathcal{A})$, where $\vrho \in \mathcal{P}$ is the parameter vector belonging to the parameter space $\mathcal{P} \subseteq \mathbb{R}^{d_{\mathcal{P}}}$. The policy is used to sample actions $\va_t \sim \pi_{\vrho}(\cdot|\vs_t)$ to be played in state $\vs_t$ for \emph{every step} $t$ of interaction. The performance of $\pi_{\vrho}$ is assessed via the \emph{expected return} $\Ja: \mathcal{P} \rightarrow \mathbb{R}$, defined as:
\begin{align}
    \Ja(\vrho) \coloneqq \E_{\tau \sim \pa(\cdot | \vrho)} \left[ R(\tau) \right],\qquad \text{where}
\end{align}
$\pa(\tau; \vrho) \coloneqq \rho_0(\vs_{\tau, 0}) \prod_{t=0}^{T-1} \pi_{\vrho}(\va_{\tau,t} | \vs_{\tau, t}) p(\vs_{\tau, t+1} | \vs_{\tau, t}, \va_{\tau, t})$
is the density of trajectory $\tau$ induced by policy $\pi_{\vrho}$.\footref{footnotelabel} In AB exploration, we aim at learning  $\vrho^*_{\text{A}} \in \argmax_{\vrho \in \mathcal{P}} \Ja(\vrho)$ and we denote $ \Ja^* \coloneqq \Ja( \vrho^*_{\text{A}})$. If $\Ja(\vrho)$ is differentiable w.r.t.~$\vrho$, PG methods~\citep{peters2008reinforcement} update the parameter $\vrho$ via gradient ascent: $\vrho_{t+1} \xleftarrow{} \vrho_{t} + \zeta_{t} \widehat{\nabla}_{\vrho} \Ja(\vrho_{t})$, where $\zeta_{t}  >0$ is the \emph{step size} and $\widehat{\nabla}_{\vrho} \Ja(\vrho)$ is an estimator of $\nabla_{\vrho} \Ja(\vrho)$.
In particular, the GPOMDP \emph{estimator} is:\footnote{We limit our analysis to the GPOMDP estimator~\cite{baxter2001infinite}, neglecting the REINFORCE one~\cite{williams1992simple} since it is known that the latter suffers from larger variance.}
{\thinmuskip=-1mu
\medmuskip=-1mu
\thickmuskip=-1mu
\begin{equation*}\resizebox{\linewidth}{!}{$\displaystyle
    \widehat{\nabla}_{\vrho} \Ja(\vrho) \coloneqq\frac{1}{N} \sum_{i=1}^{N} \sum_{t=0}^{T-1} \left( \sum_{k=0}^{t} \nabla_{\vrho} \log \pi_{\vrho} (\va_{\tau_i, k} | \vs_{\tau_i, k}) \right) \gamma^t r(\vs_{\tau_i, t}, \va_{\tau_i, t}),$}
\end{equation*}}%
where $N$ is the number of independent trajectories $\{\tau_i\}_{i=1}^N$ collected with policy $\pi_{\vrho}$ ($\tau_i\sim p_{\text{A}}(\cdot;\vrho)$), called \emph{batch size}.

\noindent\textbf{\textcolor{vibrantRed}{Parameter-Based (PB) Exploration.}}~~In PB exploration, we use a \textit{parametric stochastic hyperpolicy} $\nu_{\vrho} \subseteq \Delta(\Theta)$, where $\vrho \in \mathbb{R}^{d_{\mathcal{P}}}$ is the parameter vector. The hyperpolicy is used to sample parameters $\vtheta \sim \nu_{\vrho}$ to be plugged in the deterministic policy $\bm{\mu}_{\vtheta}$ at the beginning of \emph{every trajectory}. The performance index of $\nu_{\vrho}$ is $\Jp: \mathbb{R}^{d_{\vrho}} \xrightarrow[]{} \mathbb{R}$, that is the expectation over $\vtheta$ of $\Jd(\vtheta)$ defined as:\footref{footnotelabel}
\begin{align*}
    \Jp(\vrho) \coloneqq \E_{\vtheta \sim \nu_{\vrho}} \left[ \Jd(\vtheta) \right].
\end{align*}
PB exploration aims at learning $\vrho^*_{\text{P}} \in \argmax_{\vrho \in {\mathcal{P}}} \Jp(\vrho)$ and we denote $ \Jp^* \coloneqq \Jp(\vrho^*_{\text{P}})$.
If $\Jd(\vrho)$ is differentiable w.r.t.~$\vrho$, PGPE~\citep{SEHNKE2010551} updates the hyperparameter $\vrho$ via gradient accent: $\vrho_{t+1} \xleftarrow{} \vrho_{t} + \zeta_{t} \widehat{\nabla}_{\vrho} \Jp(\vrho_t)$. In particular, PGPE uses an estimator of $\nabla_{\vrho} \Jp(\vrho)$ defined as:
\begin{align*}
    \widehat{\nabla}_{\vrho} \Jp(\vrho) = \frac{1}{N} \sum_{i=1}^{N} \nabla_{\vrho} \log \nu_{\vrho}(\vtheta_i) R(\tau_{i}),
\end{align*}
where $N$ is the number of independent parameters-trajectories pairs $\{(\vtheta_i,\tau_i)\}_{i=1}^N$,  collected with hyperpolicy $\nu_{\vrho}$ ($\vtheta_i\sim \nu_{\vrho}$ and $\tau_i \sim p_{\text{D}}(\cdot;\vtheta_i)$), called \emph{batch size}.


\section{White-Noise Exploration}\label{sec:whiteNoise}
We formalize a class of stochastic (hyper)policies widely employed in the practice of AB and PB exploration, namely \emph{white noise-based (hyper)policies}. These policies $\pi_{\vtheta}(\cdot|s)$ (resp. hyperpolicies $\nu_{\vtheta}$) are obtained by adding a \emph{white noise} $\bm{\epsilon}$ to the deterministic action $\mathbf{a}=\bm{\mu}_{\vtheta}(\vs)$ (resp. to the parameter $\vtheta$) independent of the state $s$ (resp. parameter $\vtheta$). 

\begin{defi}[White Noise]\label{defi:zmbv}
Let $d \in \Nat$ and $\sigma >0$. A probability distribution $\Phi_{d} \in \Delta(\mathbb{R}^{d})$ is a white-noise if:
    \begin{align}
        & \E_{\bm{\epsilon} \sim \Phi_d}[\bm{\epsilon}] = \mathbf{0}_{d}, \quad \E_{\bm{\epsilon} \sim \Phi_{d}}[\|\bm{\epsilon}\|_2^2] \le d \sigma^2.
    \end{align}
\end{defi}

This definition complies with the zero-mean Gaussian distribution $\bm{\epsilon} \sim \mathcal{N}(\mathbf{0}_d, \bm{\Sigma})$, where $\E_{\bm{\epsilon} \sim \mathcal{N}(\mathbf{0}_d, \bm{\Sigma})}[\|\bm{\epsilon}\|_2^2] = \text{tr}(\bm{\Sigma}) \le d \lambda_{\max}(\bm{\Sigma})$. In particular, for an isotropic Gaussian $\bm{\Sigma} = \sigma^2 \mathbf{I}_d$, we have that $\text{tr}(\bm{\Sigma}) = d\sigma^2$. We now formalize the notion of \emph{white noise-based (hyper)policy}.


\begin{defi}[\textcolor{vibrantBlue}{\textbf{White noise-based policies}}]\label{defi:add}
    Let $\vtheta \in \Theta$ and $\bm{\mu}_{\vtheta}: \mathcal{S} \rightarrow \mathcal{A}$ be a parametric deterministic policy and let $\Phi_{\da}$ be a white noise (Definition~\ref{defi:zmbv}). A \emph{white noise-based policy} $\pi_{\vtheta} : \Ss \rightarrow \Delta(\As)$ is such that, for every state $\vs \in \Ss$, action $\mathbf{a} \sim \pi_{\vtheta}(\cdot|\vs)$ satisfies $\mathbf{a} = \bm{\mu}_{\vtheta}(\vs) + \vepsilon $ where $\vepsilon \sim \Phi_{\da}$ independently at every step.
\end{defi}
This definition considers stochastic policies $\pi_{\vtheta}(\cdot|\vs)$ that are obtained by adding noise $\vepsilon$ fulfilling Definition~\ref{defi:zmbv}, \emph{sampled independently at every step}, to the action $\bm{\mu}_{\vtheta}(\vs)$ prescribed by the deterministic policy (\ie AB exploration), resulting in playing action $\bm{\mu}_{\vtheta}(\vs)+\vepsilon$. An analogous definition can be formulated for hyperpolicies.

\begin{defi}[\textcolor{vibrantRed}{\textbf{White noise-based hyperpolicies}}]\label{defi:addHype}
    Let $\vtheta \in \Theta$ and $\bm{\mu}_{\vtheta}: \mathcal{S} \rightarrow \mathcal{A}$ be a parametric deterministic policy and let $\Phi_{\dt}$ be a white-noise (Definition~\ref{defi:zmbv}). A \emph{white noise-based hyperpolicy} $\nu_{\vtheta} \in \Delta(\Theta)$ is such that, for every parameter $\vtheta \in \Theta$, parameter $\vtheta' \sim \nu_{\vtheta}$ satisfies $\vtheta' = \vtheta + \vepsilon $ where $ \vepsilon \sim \Phi_{\dt}$ independently in every trajectory.
\end{defi}

This definition considers stochastic hyperpolicies $\nu_{\vtheta}$ obtained by adding noise $\vepsilon$ fulfilling Definition~\ref{defi:zmbv}, \emph{sampled independently at the beginning of each trajectory}, to the parameter $\vtheta$ defining the deterministic policy $\bm{\mu}_{\vtheta}$, resulting in playing deterministic policy $\bm{\mu}_{\vtheta+\vepsilon}$ (\ie PB exploration).
Definitions~\ref{defi:add} and~\ref{defi:addHype} allow to represent a class of widely-used (hyper)policies, like Gaussian hyperpolicies and Gaussian policies with state-independent variance. Furthermore, once the parameter $\vtheta$ is learned with either AB or PB exploration, \emph{deploying} the corresponding deterministic policy (\ie \quotes{switching off} the noise) is straightforward.\footnote{For white noise-based (hyper)policies there exists a \emph{one-to-one mapping} between the parameter space of (hyper)policies and that of deterministic policies ($\mathcal{P}=\Theta$). For simplicity, we assume $\Theta=\Reals^{\dt}$ and $\mathcal{A}= \mathbb{R}^{\da}$ (see Appendix~\ref{apx:mapping}).}
Finally, we remark that the noise can exhibit an inner structure, while it is required to be \quotes{white} among different realizations.



\section{Fundamental Assumptions}\label{sec:acc}
In this section, we present the \emph{fundamental} assumptions on the MDP ($p$ and $r$), deterministic policy $\bm{\mu}_{\vtheta}$, and white noise $\Phi$. For the sake of generality, we will consider \emph{abstract} assumptions in the next sections and, then, show their relation to the {fundamental} ones (see Appendix~\ref{app:quick} for details).

\textbf{Assumptions on the MDP.}~~We start with the assumptions on the regularity of the MDP, i.e., on transition model $p$ and reward function $r$, \wrt variations of the played action $\ba$. 

\begin{ass}[%
Lipschitz MDP ($\log p$, $r$) w.r.t. actions%
]\label{ass:lipMdp}
    The $log$ transition model $\log p(\vs'|\vs,\cdot)$ and the reward function $r(\vs,\cdot)$ are $L_{p}$-\emph{LC} and $L_r$-\emph{LC}, respectively, \wrt the action for every $\vs,\vs'\in \Ss$, \ie for every $\ba,\overline{\ba}\in \As$:
    \begin{align}
        & |\log p(\vs'|\vs,\ba)- \log p(\vs'|\vs,\overline{\ba})| \le L_{p}\| \ba-\overline{\ba}\|_2, \\
        & |r(\vs,\ba)- r(\vs,\overline{\ba})| \le L_r \| \ba-\overline{\ba}\|_2.
    \end{align}
\end{ass}

\begin{ass}[%
Smooth MDP ($\log p$, $r$) w.r.t. actions%
]\label{ass:smoothMdp}
    The log transition model $\log p(\vs'|\vs,\cdot)$ and the reward function $r(\vs,\cdot)$ are $L_{2,p}$-\emph{LS} and $L_{2,r}$-\emph{LS}, respectively, \wrt the action for every $\vs,\vs'\in \Ss$, \ie for every $\ba,\overline{\ba}\in \As$:
    \begin{align*}
        & \|\nabla_{\va} \log p(\vs'|\vs,\ba)- \nabla_{\va}\log p(\vs'|\vs,\overline{\ba})\|_2 \le L_{2,p}\| \ba-\overline{\ba}\|_2, \\
        & \|\nabla_{\va}r(\vs,\ba)-\nabla_{\va} r(\vs,\overline{\ba})\|_2 \le L_{2,r} \| \ba-\overline{\ba}\|_2.
    \end{align*}
\end{ass}

Intuitively, these assumptions ensure that when we perform AB and/or PB exploration altering the played action \wrt a deterministic policy, the effect on the environment dynamics and on reward (and on their gradients) is controllable.

\textbf{Assumptions on the deterministic policy.}~~We now move to the assumptions on the regularity of the deterministic policy $\bm{\mu}_{\vtheta}$ \wrt the parameter $\vtheta$.

\begin{ass}[%
Lipschitz deterministic policy $\bmu$ w.r.t. parameters $\vtheta$%
]\label{ass:lipPol}
    The deterministic policy $\bm{\mu}_{\vtheta}(\vs)$ is $L_\mu$-\emph{LC} \wrt parameter for every $\vs \in \Ss$, \ie for every $\vtheta,\overline{\vtheta}\in\Theta$:
    \begin{align}
        \|\bmu(\vs) - \bmu[\overline{\vtheta}](\vs)\|_2 \le L_{{\mu}} \|\vtheta - \overline{\vtheta} \|_2.
    \end{align}
\end{ass}

\begin{ass}[%
Smooth deterministic policy $\bmu$ w.r.t. parameters $\vtheta$%
]\label{ass:smoothPol}
    The deterministic policy $\bm{\mu}_{\vtheta}(\vs)$ is $L_{2,\mu}$-\emph{LS} \wrt parameter for every $\vs \in \Ss$, \ie for every $\vtheta,\overline{\vtheta}\in\Theta$:
    \begin{align}
        \| \nabla_{\vtheta} \bmu(\vs) - \nabla_{\vtheta} 
 \bmu[\overline{\vtheta}](\vs)\|_2 \le L_{2,{\mu}} \|\vtheta - \overline{\vtheta} \|_2.
    \end{align}
\end{ass}

Similarly, these assumptions ensure that if we deploy an altered parameter $\vtheta$, like in PB exploration, the effect on the played action (and on its gradient) is bounded.

Assumptions~\ref{ass:lipMdp} and~\ref{ass:lipPol} are standard in the DPG literature~\citep{silver2014deterministic}. Assumption~\ref{ass:smoothMdp}, instead, can be interpreted as the counterpart of the \emph{Q-function smoothness} used in the DPG analysis~\citep{kumar2020zeroth,xiong2022deterministic}, while Assumption~\ref{ass:smoothPol} has been used to study the convergence of DPG~\cite{xiong2022deterministic}.
Similar conditions to our Assumption~\ref{ass:lipMdp} were adopted by~\citet{pirotta2015policy}, but measuring the continuity of $p$ in the Kantorovich metric, a weaker requirement that, unfortunately, does not come with a corresponding smoothness condition.

\textbf{Assumptions on the (hyper)policies.}~~ We introduce the assumptions on the score functions of the white noise $\Phi$.
\begin{ass}[%
Bounded Scores of $\Phi$%
]\label{ass:magic}
    Let $\Phi \in \Delta(\mathbb{R}^d)$ be a white noise with variance bound $\sigma>0$ (Definition~\ref{defi:zmbv}) and density $\phi$. $\phi$ is differentiable in its argument and there exists a universal constant $c>0$ such that:
    \begin{enumerate}[leftmargin=*, noitemsep, label=($\roman*$), topsep=-2pt]
        \item  $\E_{\vepsilon \sim \Phi} [\| \nabla_{\vepsilon} \log \phi(\vepsilon)\|_2^2] \le c  d\sigma^{-2}$;
        \item  $\E_{\vepsilon \sim \Phi} [\| \nabla_{\vepsilon}^2 \log \phi(\vepsilon)\|_2] \le c  \sigma^{-2}$.
    \end{enumerate}
    \end{ass}
Intuitively, this assumption is equivalent to the more common ones requiring the boundedness of the expected norms of the score function and its gradient~\citep[][see Appendix~\ref{app:ass_impl}]{papini2022smoothing,yuan2022general}. %
Note that a zero-mean Gaussian  $\Phi = \mathcal{N}(\mathbf{0}_d, \bm{\Sigma})$ fulfills Assumption~\ref{ass:magic}. Indeed, one has $\nabla_{\vepsilon} \log \phi(\vepsilon) = \bm{\Sigma}^{-1}\vepsilon$ and $\nabla_{\vepsilon}^2 \log \phi(\vepsilon) = \bm{\Sigma}^{-1}$. Thus, $\E[\| \nabla_{\vepsilon} \log \phi(\vepsilon)\|_2^2]  = \text{tr}(\bm{\Sigma}^{-1}) \le d \lambda_{\min}(\bm\Sigma)^{-1}$ and $\E[\| \nabla_{\vepsilon}^2 \log \phi(\vepsilon)\|_2] = \lambda_{\min}(\bm{\Sigma})^{-1}$. In particular, for an isotropic Gaussian $\bm{\Sigma}=\sigma^2 \mathbf{I}$, we have $\lambda_{\min}(\bm{\Sigma}) = \sigma^2$, fulfilling Assumption~\ref{ass:magic} with $c=1$.



\section{Deploying Deterministic Policies}\label{sec:deploy}
\label{sec:deploying_det_pol}


In this section, we study the performance $\Jd$ of the \emph{deterministic} policy $\bmu$, when the parameter $\vtheta$ is learned via AB or PB white noise-based exploration (Section~\ref{sec:whiteNoise}). We will refer to this scenario as \emph{deploying} the parameters, which reflects the common practice of \quotes{switching off the noise} once the learning process is over. 

\textbf{{\color{\pbcolor}{PB Exploration.}}}~~Let us start with PB exploration by observing that for white noise-based hyperpolicies (Definition~\ref{defi:addHype}), we can express the expected return $\Jp$ as a function of $\Jd$ and of the noise $\vepsilon$ for every $\vtheta \in \Theta$:
\begin{align}
    \Jp(\vtheta) = \E_{\vepsilon \sim \Phi_{\dt}}[\Jd(\vtheta + \vepsilon)].
\end{align}
This illustrates that PB exploration can be obtained by \emph{perturbing the parameter} $\vtheta$ of a deterministic policy $\bm{\mu}_{\vtheta}$ via the noise $\vepsilon \sim \Phi_{\dt}$. To achieve guarantees on the deterministic performance $\Jd$ of a parameter $\vtheta$ learned with PB exploration, we enforce the following regularity condition.

\begin{ass}[Lipschitz  $\Jd$ \wrt $\vtheta$]\label{ass:Jd_lip}
    $\Jd$ is $L_J$-\emph{LC} in the parameter $\vtheta$, \ie for every $\vtheta, {\vtheta'} \in \Theta$:
    \begin{align}
        |\Jd(\vtheta) - \Jd({\vtheta'})| \le L_J \|\vtheta - {\vtheta'} \|_2.
    \end{align}
\end{ass}
When the MDP and the deterministic policy are LC as in Assumptions~\ref{ass:lipMdp} and~\ref{ass:lipPol}, $L_J$ is $O((1-\gamma)^{-2})$ (see Table~\ref{tab:magic} in Appendix~\ref{app:quick} for the full expression). 
This way, we guarantee that the perturbation $\vepsilon$ on the parameter $\vtheta$ determines a variation on function $\Jd$ depending on the magnitude of $\vepsilon$, which allows obtaining the following result.

\begin{restatable}[\textbf{\textcolor{vibrantRed}{Deterministic deployment of parameters learned with PB white-noise exploration}}]{thr}{deployPB}\label{thr:deployPB}
    If the hyperpolicy complies with Definition~\ref{defi:addHype}, under Assumption~\ref{ass:Jd_lip}:
    \begin{enumerate}[leftmargin=*, noitemsep, label=($\roman*$), topsep=-2pt]
        \item \emph{(Uniform bound)} for every $\vtheta \in \Theta$, it holds that $ |\Jd(\vtheta) - \Jp(\vtheta)| \le L_J \sqrt{\dt} \sigma_{\text{P}}$;
        \item \emph{($\Jd$ upper bound)} let $\vtheta^*_{\text{P}} \in \argmax_{\vtheta \in \Theta} \Jp(\vtheta)$, it holds that: $
        \Jd^* - \Jd(\vtheta^*_{\text{P}}) \le 2L_J \sqrt{\dt} \sigma_{\text{P}}$;
    \item \emph{($\Jd$ lower bound)} there exists an MDP, a deterministic policy class $\bmu$ fulfilling Assumption~\ref{ass:Jd_lip}, and a noise complying with Definition~\ref{defi:zmbv}, such that $ \Jd^* - \Jd(\vtheta^*_{\text{P}}) \ge 0.28 L_J \sqrt{\dt} \sigma_{\text{P}}$.
    \end{enumerate}
\end{restatable}


Some observations are in order. ($i$) shows that the performance of the hyperpolicy $\Jp(\vtheta)$ is representative of the deterministic performance $\Jd(\vtheta)$ up to an additive term depending on $ L_J \sqrt{\dt} \sigma_{\text{P}}$.
As expected, this term grows with the Lipschitz constant $L_J$ of the function $\Jd$, with the standard deviation $\sigma_\text{P}$ of the additive noise, and with the dimensionality of the parameter space $\dt$. In particular, this implies that $\lim_{\sigma_{\text{P}} \rightarrow 0^+} \Jp(\vtheta) = \Jd(\vtheta)$. ($ii$) is a consequence of ($i$) and provides an \emph{upper bound} between the optimal performance obtained if we were able to directly optimize the deterministic policy $\max_{\vtheta \in \Theta} \Jd(\vtheta)$ and the performance of the parameter $\vtheta^*_{\text{P}}$ learned by optimizing $\Jp(\vtheta)$, \ie via PB exploration, \emph{when deployed on the deterministic policy}.
Finally, ($iii$) provides a lower bound to the same quantity on a specific instance of MDP and hyperpolicy, proving that the dependence on  $ L_J \sqrt{\dt} \sigma_{\text{P}}$ is \emph{tight} up to constant terms.

\textbf{{\color{\abcolor}{AB Exploration.}}}~~Let us move to the AB exploration case, where understanding the effect of the noise is more complex since it is applied to every action \emph{independently at every step}. To this end, we introduce the notion of \emph{non-stationary} deterministic policy $\underline{\bm{\mu}} = (\bm{\mu}_t)_{t=0}^{T-1}$, where at time step $t$ the deterministic policy $\bm{\mu}_t: \Ss \rightarrow \As$ is played, and its expected return (with  abuse of notation) 
 is $\Jd(\underline{\bm{\mu}}) = \E_{\tau \sim p_{\text{D}}(\cdot|\underline{\bm{\mu}})}[R(\tau)]$ where $ p_{\text{D}}(\cdot|\underline{\bm{\mu}}) \coloneqq \rho_0(\vs_{\tau,0}) \prod_{t=0}^{T-1} p(\vs_{\tau, t+1} | \vs_{\tau, t}, \bm{\mu}_{t}(\vs_{\tau, t}))  $. Let $\underline{\vepsilon} = (\vepsilon_t)_{t=0}^{T-1} \sim \Phi_{\da}^T$  be a sequence of noises sampled independently, we denote with $\underline{\bm{\mu}}_{\vtheta} + \underline{\vepsilon} = (\bmu + \vepsilon_t)_{t=0}^{T-1}$ the non-stationary policy  that, at time $t$, \emph{perturbs the action} as $\bm{\mu}_{\vtheta}(\vs_t) + \vepsilon_t$. 
Since the noise is independent on the state, we express $\Ja$ as a function of $\Jd$ for every $\vtheta \in \Theta$ as follows:
\begin{align}
    \Ja(\vtheta) = \E_{\underline{\vepsilon} \sim \Phi_{\da}^T }\left[ {\Jd}(\underline{\bm{\mu}}_{\vtheta} + \underline{\vepsilon}) \right].
\end{align}
Thus,  to ensure that the parameter learned with AB exploration achieves performance guarantees when evaluated as a deterministic policy, we need to enforce some regularity condition on $\Jd$ as a function of $\underline{\bm{\mu}}$.
\begin{ass}[Lipschitz $\Jd$ \wrt $\underline{\bm{\mu}}$]\label{ass:Jd_lip_ns}
     $\Jd$ of the non-stationary deterministic policy $\underline{\bm{\mu}}$ is $(L_t)_{t=0}^{T-1}$-\emph{LC} in the non-stationary policy, \ie for every $\underline{\bm{\mu}},\underline{\bm{\mu}}'$:
    \begin{align}
        |{\Jd}(\underline{\bm{\mu}}) - {\Jd}(\underline{\bm{\mu}}')| \le \sum_{t =0}^{T-1} L_t \sup_{\vs \in \Ss} \left\| \bm{\mu}_t(\vs) - {\bm{\mu}}_t'(\vs) \right\|_2.
    \end{align}
    Furthermore, we denote $L \coloneqq \sum_{t=0}^{T-1} L_t$.
\end{ass}
When the MDP is LC as in Assumptions~\ref{ass:lipMdp}, $L$ is $O((1-\gamma)^{-2})$ (see Table~\ref{tab:magic} in Appendix~\ref{app:quick} for the full expression). 
The assumption enforces that changing the deterministic policy at step $t$  from $\bm{\mu}_t$ to $\bm{\mu}_t'$, the variation of $\Jd$ is controlled by the action distance (in the worst state $\vs$) multiplied by a \emph{time-dependent} Lipschitz constant. This form of condition allows us to show the following result.

\begin{restatable}[\textbf{\textcolor{vibrantBlue}{Deterministic deployment of parameters learned with AB white-noise exploration}}]{thr}{deployAB}\label{thr:deployAB}
    If the policy complies with Definition~\ref{defi:add} and under Assumption~\ref{ass:Jd_lip_ns}:
    \begin{enumerate}[leftmargin=*, noitemsep, label=($\roman*$), topsep=-2pt]
        \item \emph{(Uniform bound)} for every $\vtheta \in \Theta$, it holds that: $ |\Jd(\vtheta) - \Ja(\vtheta)| \le L \sqrt{\da} \sigma_{\text{A}}$;
        \item \emph{($\Jd$ upper bound)} letting $\vtheta^*_{\text{A}} \in \argmax_{\vtheta \in \Theta} \Ja(\vtheta)$, it holds that $
        \Jd^* - \Jd(\vtheta^*_{\text{A}}) \le 2L \sqrt{\da} \sigma_{\text{A}}$;
    \item \emph{($\Jd$ lower bound)} there exists an MDP, a deterministic policy class $\bmu$ fulfilling Assumption~\ref{ass:Jd_lip}, and a noise complying with Definition~\ref{defi:zmbv}, such that $ \Jd^* - \Jd(\vtheta^*_{\text{A}}) \ge 0.28 L \sqrt{\da} \sigma_{\text{A}}$.
    \end{enumerate}
\end{restatable}

Similarly to Theorem~\ref{thr:deployPB}, ($i$) and ($ii$) provide an upper bound on the difference between the policy performance $\Ja(\vtheta)$ and the corresponding deterministic policy $\Jd(\vtheta)$, and on the performance of $\vtheta^*_{\text{A}}$ when deployed on a deterministic policy. Clearly, also in the AB exploration, we have that $\lim_{\asigma \rightarrow 0^+} \Ja(\vtheta) = \Jd(\vtheta)$.
As in the PB case, ($iii$) shows that the upper bound ($ii$) is tight up to constant terms. 

 Finally, let us note that our bounds for PB exploration depend on the dimension of the parameter space $\dt$ that is replaced by that of the action space $\da$ in AB exploration. 

\section{Global Convergence Analysis}
\label{sec:convergence}
In this section, we present our main results about the convergence of AB and PB \emph{white noise-based exploration} to a \emph{global optimal parameter} $\vtheta^*_{\text{D}}$ for the performance of the deterministic policy $\Jd$.
Let $K \in \mathbb{N}$ be the number of \emph{iterations} and $N$ the \emph{batch size}; 
given an accuracy threshold $\epsilon > 0$, our goal is to bound the \emph{sample complexity} $NK$ to fulfill the following  \emph{last-iterate global} convergence condition:
\begin{align}\label{eq:conver}
   \Jd^* - \E \left[ \Jd(\vtheta_{K}) \right] \leq \epsilon,
\end{align}
where $\vtheta_{K}$ is the (hyper)parameter at the end of learning.
We start in Section~\ref{sec:generalConv}, introducing the \emph{abstract} assumptions and providing a general convergence analysis applicable to both AB and PB exploration for learning the corresponding objective ($\Ja$ or $\Jp$). Then, in Section~\ref{sec:convBoth}, we derive the convergence guarantees on the deterministic objective $\Jd$ for AB and PB exploration, respectively. Our results are first presented for a \emph{fixed} white noise variance $\sigma^2$ to highlight the trade-off between sample complexity and performance, then extended to an \emph{$\epsilon$-adaptive} choice of $\sigma$.

\subsection{General Global Convergence Analysis}\label{sec:generalConv}
In this section, we provide a global convergence analysis for a generic stochastic first-order algorithm optimizing the differentiable objective function $J_\simbolo$ on the parameters space $\Theta \subseteq \mathbb{R}^d$, that can be instanced for both AB (setting $J_\simbolo=\Ja$) and PB (setting $J_\simbolo=\Jp$) exploration, when optimizing the corresponding objective. At every iteration $k \in \dsb{K}$, the algorithm performs the gradient ascent update:
\begin{align}\label{eq:update}
     \vtheta_{k + 1} \xleftarrow{} \vtheta_{k} + \zeta_{k} \widehat{\nabla}_{\vtheta} J_\simbolo(\vtheta_k),
\end{align}
where $\zeta_k>0$ is the \emph{step size} and $\widehat{\nabla}_{\vtheta} J_\simbolo(\vtheta_{k})$ is an \emph{unbiased} estimate of $\nabla_{\vtheta} J_\simbolo(\vtheta_{k})$. We denote $J^*_\simbolo = \max_{\vtheta \in \Theta} J_\simbolo(\vtheta)$ and we enforce the following standard assumptions.

\begin{ass}[%
Weak gradient domination for $J_\simbolo$%
] \label{ass:J_wgd}
    There exist $\alphagen > 0$ and $\betagen \geq 0$ such that  for every $ \vtheta \in \Theta$ it holds that $J^*_\simbolo - J_\simbolo(\vtheta) \leq \alphagen \|\nabla_{\vtheta} J_\simbolo (\vtheta) \|_2 + \betagen$.
\end{ass}

Assumption~\ref{ass:J_wgd} is the gold standard for the global convergence of stochastic optimization~\citep{yuan2022general,masiha2022stochastic,fatkhullin2023stochastic}. Note that, when $\beta = 0$, we recover the (strong) \emph{gradient domination} (GD) property: $J^*_\simbolo - J_\simbolo(\vtheta) \leq \alpha \left\| \nabla_{\vtheta} J(_\simbolo\vtheta) \right\|_2$ for all $\vtheta\in\Theta$.
GD is stricter than WGD and requires that $J_\simbolo$ has no local optima. Instead, WGD admits local maxima as long as their performance is $\beta$-close to the globally optimal one.\footnote{In this section, we will assume that $J_\simbolo$ (i.e., either $\Ja$ or $\Jp$) is already endowed with the WGD property. In Section~\ref{sec:assumptions}, we illustrate how it can be obtained in several common scenarios.} 

\begin{ass}[%
Smooth $J_\simbolo$ w.r.t. parameters $\vtheta$%
]\label{ass:J_gen_conv}
    $J_\simbolo$ is $L_{2,\simbolo}$-\emph{LS} \wrt parameters $\vtheta$, \ie for every $\vtheta,\vtheta' \in \Theta$:
    \begin{align}
        \|\nabla_{\vtheta} J_\simbolo(\vtheta') - \nabla_{\vtheta} J_\simbolo(\vtheta)\|_2 \le L_{2,\simbolo} \| \vtheta' - \vtheta \|_2.
    \end{align}
\end{ass}

Assumption~\ref{ass:J_gen_conv} is ubiquitous in the convergence analysis of policy gradient algorithms~\cite{papini2018stochastic, agarwal2021theory,yuan2022general,bhandari2019global}, which is usually studied as an instance of (non-convex) \emph{smooth} stochastic optimization. 
The smoothness of $J_\simbolo \in \{\Ja,\Jp\}$ can be: ($i$) inherited from the deterministic objective $\Jd$ (originating, in turn, from the regularity of the MDP) and of the deterministic policy $\bm{\mu}_{\vtheta}$ (Asm.~\ref{ass:lipMdp} and~\ref{ass:smoothPol}); or ($ii$) enforced through the properties on the white noise $\Phi$ (Asm.~\ref{ass:magic}). {The first result was observed in a similar form by~\citet[][Theorem 3]{pirotta2015policy}, while a generalization of the second was established by~\citet{papini2022smoothing} and refined by~\citet{yuan2022general}.}

\begin{ass}[%
Bounded estimator variance $\widehat{\nabla}_{\vtheta} J_\simbolo(\vtheta)$%
]\label{ass:J_var_gen_conv}
    The estimator $\widehat{\nabla}_{\vtheta} J_\simbolo(\vtheta)$ computed with batch size $N$ has a bounded variance, i.e., there exists $  V_\simbolo \geq 0$ such that, for every $ \vtheta \in \Theta$, we have $\Var[\widehat{\nabla}_{\vtheta} J_\simbolo(\vtheta)] \leq V_\simbolo/N$.
\end{ass}

Assumption~\ref{ass:J_var_gen_conv} guarantees that the gradient estimator is characterized by a bounded variance $V_\simbolo$ which scales with the batch size $N$. Under Assumption~\ref{ass:magic} (and \ref{ass:lipPol} for GPOMDP), the term $V_\simbolo$ can be further characterized (see Table~\ref{tab:magic} in Appendix~\ref{app:quick}).

We are now ready to state the global convergence result.

\begin{restatable}{thr}{globalConvGeneral}\label{thr:gen_conv_new_main}
    Consider an algorithm running the update rule of Equation~\eqref{eq:update}. Under Assumptions~\ref{ass:J_wgd}, \ref{ass:J_gen_conv}, and~\ref{ass:J_var_gen_conv}, with a suitable constant step size, to guarantee $
         J_\simbolo^* - \E[J_\simbolo(\vtheta_K)] \le \epsilon + \beta
$ the sample complexity is at most:
     \begin{align}
         NK =  \frac{16\alpha^4L_{2,\simbolo}V_\simbolo}{\epsilon^3} \log \frac{\max\{0,J^*_\simbolo - J_\simbolo(\vtheta_0) - \beta\}}{\epsilon}.
 \end{align}
\end{restatable}

This result establishes a convergence of order $\widetilde{O}(\epsilon^{-3})$\footnote{The $\widetilde{\mathcal{O}}(\cdot)$ notation hides logarithmic factors.} to the global optimum $J^*_\simbolo$ of the general objective $J_\simbolo$. Recalling that $J_\simbolo \in \{\Ja,\Jp\}$, Theorem~\ref{thr:gen_conv_new_main} provides: ($i$) the first global convergence guarantee for PGPE for PB exploration (setting $J_\simbolo=\Jp$) and ($ii$) a global convergence guarantee for PG (e.g., GPOMDP) for AB exploration of the same order (up to logarithmic terms in $\epsilon^{-1}$) of the state-of-the-art one of~\citet{yuan2022general} (setting $J_\simbolo=\Ja$). Note that our guarantee is obtained for a \emph{constant} step size and holds for the last parameter $\vtheta_K$, delivering a \emph{last-iterate} result, rather than a \emph{best-iterate} one as in~\citep[][Corollary 3.7]{yuan2022general}. Clearly, this result is not yet our ultimate goal since, we need to assess how far the performance of the learned parameter $\vtheta_K$ is from that of the optimal deterministic objective $\Jd^*$.

\begin{figure*}
    \centering
    \resizebox{\linewidth}{!}{\includegraphics{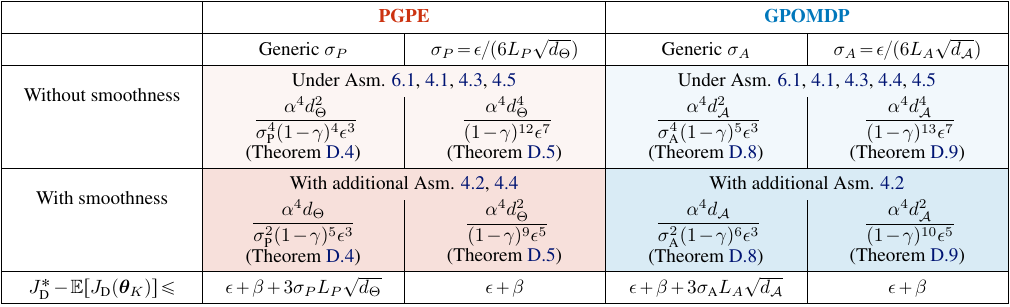}}
    
    \vspace{-.2cm}
    
    \captionof{table}{Sample complexity $NK=\widetilde{O}(\cdot)$ of GPOMDP and PGPE to converge to a deterministic optimal policy, retaining only dependencies on $\epsilon$, $(1-\gamma)^{-1}$, $\sigma_{\text{A}}$, $\sigma_\text{P}$, $\dt$, $\da$, and $\alpha$. Task-dependent constants $L_P$ and $L_A$ are $O((1-\gamma)^{-2})$---see Table~\ref{tab:magic} in Appendix~\ref{app:quick}.}
    \label{tab:convergence}
\end{figure*}

\subsection{Global Convergence of PGPE and GPOMDP}\label{sec:convBoth}
In this section, we provide results on the global convergence of PGPE and GPOMDP with white-noise exploration. The sample complexity bounds are summarized in Table~\ref{tab:convergence} and presented extensively in Appendix~\ref{apx:proofs}. They all follow from our general Theorem~\ref{thr:gen_conv_new_main} and our results on the deployment of deterministic policies from Section~\ref{sec:deploy}.


\textbf{\textcolor{vibrantRed}{PGPE.}}~~We start by commenting on the sample complexity of PGPE for a constant, generic hyperpolicy variance $\sigma_P$, shown in the first column (Table~\ref{tab:convergence}). First, the guarantee on $J_D^*-\E[J_D(\vtheta_K)]$ contains the additional variance-dependent term $3L_P\sqrt{\dt} \sigma_P
$ originating from the deterministic deployment. Second, the sample complexity scales with $\widetilde{O}(\epsilon^{-3})$.
Third, by enforcing the smoothness of the MDP and of the deterministic policy (Asm.~\ref{ass:smoothMdp} and~\ref{ass:smoothPol}), we improve the dependence on $\dt$ and on 
$\sigma_{\text{P}}$ at the price of an additional $(1-\gamma)^{-1}$ factor. 

A choice of $\sigma_{\text{P}}$ which adapts to $\epsilon$ allows us to achieve the global convergence on the deterministic objective $\Jd$, up to $\epsilon+\beta$ only.
Moving to the second column (Table~\ref{tab:convergence}), we 
observe that the convergence rate becomes $\widetilde{O}(\epsilon^{-7})$, which reduces to $\widetilde{O}(\epsilon^{-5})$ with the additional smoothness assumptions, which also improve the dependence on \emph{both} $(1-\gamma)^{-1}$ and $\dt$. The slower rate $\epsilon^{-5}$ or $\epsilon^{-7}$, compared to the $\epsilon^{-3}$ of the fixed-variance case, is easily explained by the more challenging requirement of converging to the \emph{optimal deterministic policy} rather than the \emph{optimal stochastic hyperpolicy}, as for standard PGPE. 
Note that we have set the standard deviation equal to $\sigma_{\text{P}} = \frac{\epsilon}{6L_P\sqrt{\dt}} = O(\epsilon (1-\gamma)^2 \dt^{-1/2})$ that, as expected, decreases with the desired accuracy $\epsilon$.\footnote{These results should be interpreted as a demonstration that global convergence to deterministic policies \emph{is possible} rather than a \emph{practical recipe} to set the value of $\sigma_{\text{P}}$. We do hope that our theory can guide the design of practical solutions in future works.}



\textbf{\textcolor{vibrantBlue}{GPOMDP.}}~~We now consider the global convergence of GPOMDP, starting again with a generic policy variance $\sigma_A$ (third column, Table~\ref{tab:convergence}).
The result is similar to that of PGPE with three notable exceptions. First, an additional $(1-\gamma)^{-1}$ factor appears in the sample complexity due to the variance bound of GPOMDP~\citep{papini2022smoothing}. This suggests that GPOMDP struggles more than PGPE in long-horizon environments, as already observed by~\citet{zhao2011analysis}. 
Second, the dependence on the dimensionality of the parameter space $\dt$ is replaced with the dimensionality of the action space $\da$. This is expected and derives from the nature of exploration that is performed in the parameter space for PGPE and in the action space for GPOMPD. Finally, the smoothness of the deterministic policy (Asm.~\ref{ass:smoothPol}) is always needed. Adding also the smoothness of the MDP (Asm.~\ref{ass:smoothMdp}), we lose a $\da$ factor getting a $(1-\gamma)^{-1}$ one.

Again, a careful $\epsilon$-dependent choice of $\sigma_{\text{A}}$ allows us to achieve global convergence on the deterministic objective $\Jd$.
In the last column (Table~\ref{tab:convergence}), we can notice that the convergence rates display the same dependence on $\epsilon$ as in PGPE. However, the dependence on the effective horizon $(1-\gamma)^{-1}$ is worse. In this case, the additional smoothness assumption improves the dependency on $\da$ and $(1-\gamma)^{-1}$.

\section{About the Weak Gradient Domination}
\label{sec:assumptions}
So far, we have assumed WGD for the AB $\Ja$ and PB $\Jp$ (Asm.~\ref{ass:J_wgd}). In this section, we discuss several scenarios in which such an assumption holds.

\subsection{Inherited Weak Gradient Domination}\label{sec:inherited}
We start by discussing the case in which the deterministic policy objective $\Jd$ already enjoys the (W)GD property.

\begin{ass}[%
Weak gradient domination for $\Jd$%
] \label{ass:J_wgdD}
    There exist $\dalpha > 0$ and $\dbeta \geq 0$ such that  for every $ \vtheta \in \Theta$ it holds that $\Jd^* - \Jd(\vtheta) \leq \dalpha \|\nabla_{\vtheta} \Jd (\vtheta) \|_2 + \dbeta$.
\end{ass}

Although the notion of WGD has been mostly applied to stochastic policies in the literature~\citep{liu2020improved,yuan2022general}, there is no reason why it should not be plausible for deterministic policies.~\citet{bhandari2019global} provide sufficient conditions for the performance function not to have any local optima, which is a stronger condition, without discriminating between deterministic and stochastic policies (see their Remark 1). Moreover, one of their examples is linear-quadratic regulators with \emph{deterministic} linear policies.

We show that, under Lipschiztianity and smoothness of the MDP and the deterministic policy (Asm.~\ref{ass:lipMdp} and~\ref{ass:smoothPol}), this is sufficient to enforce the WGD property for both the PB $\Jp$ and the AB $\Ja$ objectives. Let us start with $\Jp$.
\begin{restatable}[%
Inherited weak gradient domination for $\Jp$%
]{thr}{pgpeInheritedWGD} \label{thr:Jp_weak_gradient_domination}
	Under Assumptions~\ref{ass:lipMdp}, \ref{ass:smoothMdp}, \ref{ass:lipPol}, \ref{ass:smoothPol}, \ref{ass:J_wgdD}, for every $ \vtheta \in \Theta$:
    \begin{equation*}\resizebox{8cm}{!}{$\displaystyle
        \Jp^* - \Jp(\vtheta) \leq \dalpha \| \nabla_{\vtheta} \Jp(\vtheta) \|_2 + \dbeta + (\dalpha L_2 + L_P) \psigma \sqrt{\dt},$}
    \end{equation*}
    where $L_2 = O((1-\gamma)^{-3})$ (full expression in Lemma~\ref{lem:L_2_characterization}).
\end{restatable}
The result shows that the WGD property of $\Jd$ entails that of $\Jp$ with the same $\dalpha$ coefficient, but a different $\beta =\dbeta (\dalpha L_2 + L_P) \psigma \sqrt{\dt}$ that accounts for the gap between the two objectives encoded in $\sigma_{\text{P}}$. Note that even if $\Jd$ enjoys a (strong) GD (i.e., $\dbeta=0$), in general, $\Jp$ inherits a WGD property. 
In the setting of Theorem~\ref{thr:Jp_weak_gradient_domination}, convergence in the sense of $\Jd^*-\E[\Jd(\vtheta_K)]\le\epsilon+\dbeta$ can be achieved with $\widetilde{O}(\dalpha^6\epsilon^{-5}\dt^2(1-\gamma)^{-11})$ samples by carefully setting the hyperpolicy variance (see Theorem~\ref{thr:pgpe_sam_comp_wgdInherited} for details). 

An analogous result can be obtained for AB exploration.
\begin{restatable}[%
Inherited weak gradient domination on $\Ja$%
]{thr}{pgInheritedWGD} \label{thr:Ja_weak_gradient_domination}
	Under Assumptions~\ref{ass:lipMdp}, \ref{ass:smoothMdp}, \ref{ass:lipPol}, \ref{ass:smoothPol}, \ref{ass:J_wgdD}, for every $ \vtheta \in \Theta$:
	\begin{equation*}\resizebox{8cm}{!}{$\displaystyle
		\Ja^* - \Ja(\vtheta)  \leq \dalpha \| \nabla_{\vtheta} \Ja(\vtheta) \|_2 + \dbeta + (\dalpha \psi + L_A) \asigma \sqrt{\da},$}
	\end{equation*}
    where $\psi = O((1 - \gamma)^{-4})$ (full expression in the proof).
\end{restatable}
The sample complexity, in this case, is $\widetilde{O}(\alpha_D^6\epsilon^{-5}\da^2(1-\gamma)^{-14})$ (see Theorem~\ref{thr:gpomgp_sam_comp_wgdInherited} for details). 

\subsection{Policy-induced Weak Gradient Domination}\label{sec:fisher}
When the objective function does not enjoy weak gradient domination in the space of deterministic policies, we can still have WGD \wrt \emph{stochastic} policies if they satisfy a condition known as Fisher-non-degeneracy~\citep{liu2020improved,ding2022global}. As far as we know, WGD by Fisher-non-degeneracy is a peculiar property of AB exploration that has no equivalent in PB exploration.
%
White-noise policies satisfying Assumption~\ref{ass:magic}  are Fisher-non-degenerate under the following standard assumption~\citep{liu2020improved}.
\begin{ass}[Explorability]\label{ass:explore} There exists $\lambdaexp>0$ s.t.
    $\E_{\pi_{\vtheta}}[\nabla_{\vtheta}\bm{\mu}_{\vtheta}(\vs)\nabla_{\vtheta}\bm{\mu}_{\vtheta}(\vs)^\top] \succeq \lambdaexp \mathbf{I}$ for all $\vtheta\in\Theta$, where the expectation over states is induced by the \emph{stochastic} policy.
\end{ass}
We can use this fact to prove WGD for white-noise policies. 
\begin{restatable}[Policy-induced weak gradient domination]{thr}{wdgByFisher}\label{lem:wgd_by_fisher}
    Under Assumptions~\ref{ass:magic} and~\ref{ass:explore}, we have:  
    \begin{equation*}
        \Ja^* - \Ja(\vtheta) \le C\frac{\sqrt{d_{\As}}\sigma_A}{\lambdaexp}\|\nabla_{\vtheta}\Ja(\vtheta)\|_2 + \frac{\sqrt{\epsilon_\mathrm{bias}}}{1-\gamma},
    \end{equation*}
    for some numerical constant $C>0$. Thus, Assumption~\ref{ass:J_wgd} ($\dagger$=A) is satisfied with $\alpha=C\frac{\sqrt{d_{\As}}\sigma_A}{\lambdaexp}$ and $\beta=\frac{\sqrt{\epsilon_\mathrm{bias}}}{1-\gamma}$.
\end{restatable}
Here $\epsilon_\mathrm{bias}$ is the \emph{compatible-critic error}, which can be very small for rich policy classes~\citep{ding2022global}.\footnote{A formal definition of $\varepsilon_{\mathrm{bias}}$ can be found in Appendix~\ref{app:fisher}.}
We can leverage this to prove the global convergence of GPOMDP as in Section~\ref{sec:inherited}, this time to $\Jd-\E[\Jd(\vtheta)]\le \epsilon+\frac{\sqrt{\epsilon_\mathrm{bias}}}{1-\gamma}$.

Tuning $\asigma$, we can achieve a sample complexity of 
$\widetilde{O}(\epsilon^{-1}\lambdaexp^{-4}d_{\As}^4(1-\gamma)^{-10})$ 
(see Theorem~\ref{thr:gpomdp_fisher_optimal} for details)
%
This seems to violate the $\Omega(\epsilon^{-2})$ lower bound by~\citet{azar2013minimax}.
However, the factor $\lambdaexp$ can depend on $\asigma= O(\epsilon)$ in highly non-trivial ways and, thus, can hide additional factors of $\epsilon$. 
For this reason, the results granted by the Fisher-non-degeneracy of white-noise policies are not compared with the ones granted by inherited WGD from Section~\ref{sec:inherited}. Intuitively, $\lambdaexp$ encodes some difficulties of exploration that are absent in ``nice" MDPs satisfying Assumption~\ref{ass:J_wgdD}.
See Appendix~\ref{app:fisher} for further discussion and omitted proofs. 

\section{Related Works}
\label{sec:related}
In this section, we provide a discussion of previous works that addressed similar questions to the ones considered in this paper. Additional related works in Appendix~\ref{apx:addRel}.

\noindent\textbf{Convergence rates.}~~The convergence of PG to stationary points at a rate of $O(\epsilon^{-4})$ was clear at least since~\cite{sutton1999policy}, although the recent work by~\citet{yuan2022general} clarifies several aspects of the analysis and the required assumptions. Variants of REINFORCE with faster convergence, based on stochastic variance reduction, were explored much later~\citep{papini2018stochastic, xu2019improved}, and the $O(\epsilon^{-3})$ rate of~\cite{xu2020sample} is now believed to be optimal due to lower bounds from nonconvex stochastic optimization~\citep{yossi2023lower}. The same holds for second-order methods~\citep{shen2019hessian,yossi2020second}. Although the convergence properties of PGPE are analogous to those of PG, they have not received the same attention, with the exception of~\cite{xu2020sample}, where the $O(\epsilon^{-3})$ rate is proved for a variance-reduced version of PGPE. Studying the convergence of PG to globally optimal policies under additional assumptions is a more recent endeavor, pioneered by works such as~\citet{scherrer2014local},~\citet{fazel2018global},~\citet{bhandari2019global}. These works introduced to the policy gradient literature the concept of gradient domination, or gradient \emph{dominance}, or Polyak-Łojasiewicz condition, which has a long history in the optimization literature~\citep{lojasiewicz1963propriete,polyak1963gradient,karimi2016linear}. Several works study the iteration complexity of policy gradient with exact gradients~\citep[e.g., ][]{agarwal2021theory,mei2020global,lisoftmax2021}. These results are restricted to specific policy classes (e.g., softmax, direct tabular parametrization) for which gradient domination is guaranteed. A notable exception is the study of sample-based \emph{natural} policy gradient for general smooth policies~\citep{agarwal2021theory}. As for vanilla sample-based PG (i.e., GPOMDP),~\citet{liu2020improved} were the first to study the sample complexity of this algorithm in converging to a global optimum. They also introduced the concept of Fisher-non-degeneracy~\citep{ding2022global}, which allows to exploit a form of gradient domination for a general class of policies. We refer the reader to~\cite{yuan2022general} which achieves a better $\widetilde{O}(\epsilon^{-3})$ sample complexity under weaker assumptions. More sophisticated algorithms, such as variance-reduced methods mentioned above, can achieve even better sample complexity. The current state of the art is~\citep{fatkhullin2023stochastic}: $\widetilde{O}(\epsilon^{-2.5})$ for hessian-free and $\widetilde{O}(\epsilon^{-2})$ for second-order algorithms. The latter is optimal up to logarithmic terms~\citep{azar2013minimax}. When instantiated to Gaussian policies, all of the works mentioned in this paragraph implicitly assume that the covariance parameters are fixed. In this case, our Theorem~\ref{thr:pgpe_sam_comp_wgd} recovers the $\widetilde{O}(\epsilon^{-3})$ rate of~\citet[][Corollary~3.7]{yuan2022general}, the best-known result for GPOMDP under general WGD.

\vspace{-0.1cm}

\noindent\textbf{Deterministic policies.}~~Value-based RL algorithms, such as Q-learning, naturally produce deterministic policies as their final solution, while most policy-gradient methods must search, by design, in a space of non-degenerate stochastic policies. In~\citep{sutton1999policy}, this is presented as an opportunity rather than as a limitation since the optimal policy is often stochastic for partially observable problems. The possibility of deploying deterministic policies only is one of the appeals of PGPE and related evolutionary techniques~\citep{schwefel1993evolution}, but also of model-based approaches~\citep{deisenroth2011pilco}. In the context of action-based policy search, the DPG algorithm by~\citet{silver2014deterministic} was the first to search in a space of deterministic policies. Differently from PGPE, stochastic policies are run during the learning process for exploration purposes, similarly to value-based methods. Moreover, the distribution mismatch due to off-policy sampling is largely ignored. Nonetheless, popular deep RL algorithms were derived from DPG~\citep{lillicrap2016continuous,fujimoto2018addressing}. 
\citep{xiong2022deterministic} proved the convergence of \emph{on-policy} (hence, fully deterministic) DPG to a stationary point, with $O(\epsilon^{-4})$ sample complexity. However, they rely on an explorability assumption (Asm.~4 in their paper) that is standard for stochastic policies, but very demanding for deterministic policies.
A more practical way of achieving fully deterministic DPG was proposed by~\citet{saleh2022truly}, who also provide a discussion of the advantages of deterministic policies. Unsurprisingly, truly deterministic learning is only possible under strong assumptions on the regularity of the environment. In this paper, for PG, we considered the more common scenario of evaluating stochastic policies at training time, only to deploy a good deterministic policy in the end. PGPE, by design, does the same with hyperpolicies.

\section{Numerical Validation}
\label{sec:experiments}
In this section, we empirically validate the theoretical results presented in the paper.
We conduct a study on the gap in performance between the deterministic objective $\Jd$ and the ones 
of GPOMDP and PGPE (respectively $\Ja$ and $\Jp$) by varying the value of their exploration parameters ($\asigma$ and $\psigma$, respectively).
Details on the employed versions of PGPE and GPOMDP can be found in Appendix~\ref{apx:algo}.
Additional experimental results can be found in Appendix~\ref{apx:experiments}.

We run PGPE and GPOMDP for $K=2000$ iterations with batch size $N=100$ on three environments from the MuJoCo~\citep{todorov2012mujoco} suite: \emph{Swimmer-v4} ($T=200$), \emph{Hopper-v4} ($T=100$), and \emph{HalfCheetah-v4} ($T=100$).
For all the environments 
the deterministic policy 
is linear in the state and the noise is Gaussian. 
We consider $\sigma_{\simbolo}^2 \in \{0.01, 0.1, 1, 10, 100\}$.
More details in Appendix~\ref{apx:experiments:curves}.\footnote{The code is available at \url{https://github.com/MontenegroAlessandro/MagicRL}.}

From Figure~\ref{fig:variance_study}, we note that as the exploration parameter grows, the distance of $\Jp(\vtheta_{K})$ and $\Ja(\vtheta_{K})$ from $\Jd(\vtheta_{K})$ increases, coherently with Theorems~\ref{thr:deployPB} and \ref{thr:deployAB}.
Among the tested values for $\psigma$ and $\asigma$, 
some lead to the highest values of $\Jd(\vtheta_{K})$.
Empirically, we note that PGPE delivers the best deterministic policy with $\psigma^2 = 10$ for \emph{Swimmer} and  with $\psigma^2 = 1$ for the other environments. 
GPOMDP performs the best with $\asigma^2=1$ for \emph{Swimmer}, and with $\asigma^2 = 10$ in the other cases.
These outcomes agree with the theoretical results in showing that there exists an optimal value for $\sigma_{\simbolo}$.

We can also appreciate the trade-off between GPOMDP and PGPE w.r.t. $\dt$ and $T$, by comparing the best values of $\Jd$ found by the two algorithms in each environment.
GPOMDP 
is better than 
PGPE 
in \emph{Hopper} and \emph{HalfCheetah}.
Indeed, such environments are characterized by higher values of $\dt$.
Instead in \emph{Swimmer}, PGPE performs better than GPOMDP, since $T$ is higher and $\dt$ is lower.

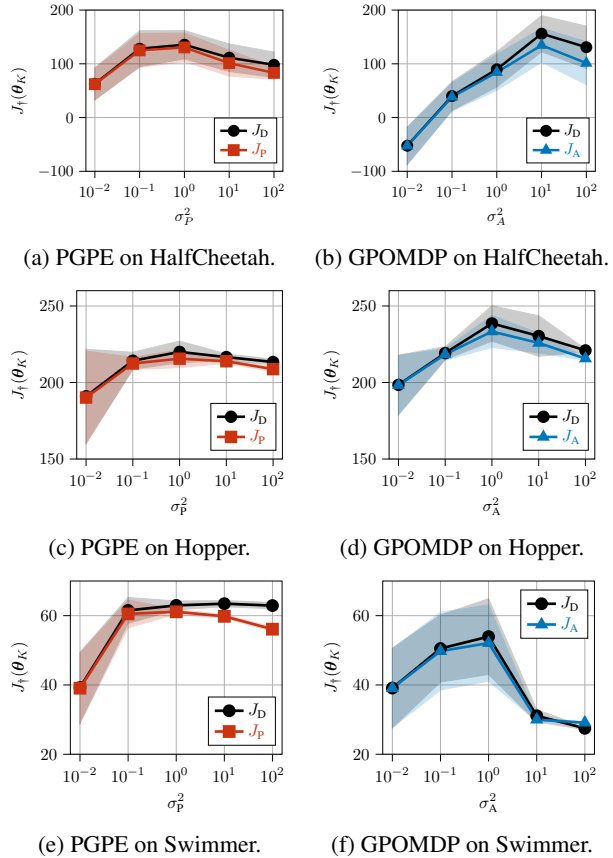
\begin{figure}[t!]
    \centering
    \subfloat[PGPE on HalfCheetah.]{
        \resizebox{0.465\linewidth}{!}{
\begin{tikzpicture}

\definecolor{darkgray176}{RGB}{176,176,176}
\definecolor{firebrick2035016}{RGB}{203,50,16}

\begin{axis}[
legend cell align={left},
legend style={
  draw opacity=1,
  text opacity=1,
  at={(0.97,0.05)},
  anchor=south east,
},
height=5.2cm,
width=6cm,
log basis x={10},
tick align=outside,
tick pos=left,
x grid style={darkgray176},
xlabel={\(\displaystyle \sigma_{P}^2\)},
xmajorgrids,
xmin=0.00630957344480193, xmax=158.489319246111,
xmode=log,
xtick style={color=black},
xtick={0.0001,0.001,0.01,0.1,1,10,100,1000,10000},
xticklabels={
  \(\displaystyle {10^{-4}}\),
  \(\displaystyle {10^{-3}}\),
  \(\displaystyle {10^{-2}}\),
  \(\displaystyle {10^{-1}}\),
  \(\displaystyle {10^{0}}\),
  \(\displaystyle {10^{1}}\),
  \(\displaystyle {10^{2}}\),
  \(\displaystyle {10^{3}}\),
  \(\displaystyle {10^{4}}\)
},
y grid style={darkgray176},
ylabel={$J_{\simbolo}(\vtheta_{K})$},
ymajorgrids,
ymin=-100, ymax=200,
ytick style={color=black}
]
\path [draw=black, fill=black, opacity=0.2]
(axis cs:0.01,92.7308720709284)
--(axis cs:0.01,32.7763258207019)
--(axis cs:0.1,94.6814069514699)
--(axis cs:1,109.240949401271)
--(axis cs:10,85.7342071298404)
--(axis cs:100,74.5916159766706)
--(axis cs:100,121.736515261894)
--(axis cs:100,121.736515261894)
--(axis cs:10,137.008354595159)
--(axis cs:1,161.889562396076)
--(axis cs:0.1,161.560914706746)
--(axis cs:0.01,92.7308720709284)
--cycle;

\path [draw=firebrick2035016, fill=firebrick2035016, opacity=0.2]
(axis cs:0.01,91.298007309583)
--(axis cs:0.01,32.950428912458)
--(axis cs:0.1,93.6581630350401)
--(axis cs:1,103.882767815887)
--(axis cs:10,77.2487703445653)
--(axis cs:100,70.9993094819309)
--(axis cs:100,95.2622561447675)
--(axis cs:100,95.2622561447675)
--(axis cs:10,125.279068922501)
--(axis cs:1,157.189129498703)
--(axis cs:0.1,156.37532533071)
--(axis cs:0.01,91.298007309583)
--cycle;

\addplot [ultra thick, black, mark=*, mark size=3, mark options={solid}]
table {%
0.01 62.7535989458152
0.1 128.121160829108
1 135.565255898673
10 111.3712808625
100 98.1640656192822
};
\addlegendentry{$\Jd$}
\addplot [ultra thick, firebrick2035016, mark=square*, mark size=3, mark options={solid}]
table {%
0.01 62.1242181110205
0.1 125.016744182875
1 130.535948657295
10 101.263919633533
100 83.1307828133492
};
\addlegendentry{$\Jp$}
\end{axis}
\end{tikzpicture}}
        \label{fig:variance_hc_pgpe}
    }
    \hfill
    \subfloat[GPOMDP on HalfCheetah.]{
        \resizebox{0.465\linewidth}{!}{
\begin{tikzpicture}

\definecolor{darkcyan0118186}{RGB}{0,118,186}
\definecolor{darkgray176}{RGB}{176,176,176}

\begin{axis}[
legend cell align={left},
legend style={
  draw opacity=1,
  text opacity=1,
  at={(0.97,0.05)},
  anchor=south east,
},
height=5.2cm,
width=6cm,
log basis x={10},
tick align=outside,
tick pos=left,
x grid style={darkgray176},
xlabel={\(\displaystyle \sigma_{A}^2\)},
xmajorgrids,
xmin=0.00630957344480193, xmax=158.489319246111,
xmode=log,
xtick style={color=black},
xtick={0.0001,0.001,0.01,0.1,1,10,100,1000,10000},
xticklabels={
  \(\displaystyle {10^{-4}}\),
  \(\displaystyle {10^{-3}}\),
  \(\displaystyle {10^{-2}}\),
  \(\displaystyle {10^{-1}}\),
  \(\displaystyle {10^{0}}\),
  \(\displaystyle {10^{1}}\),
  \(\displaystyle {10^{2}}\),
  \(\displaystyle {10^{3}}\),
  \(\displaystyle {10^{4}}\)
},
y grid style={darkgray176},
ylabel={$J_{\simbolo}(\vtheta_{K})$},
ymajorgrids,
ymin=-100, ymax=200,
ytick style={color=black}
]
\path [draw=black, fill=black, opacity=0.2]
(axis cs:0.01,-17.6763254473344)
--(axis cs:0.01,-87.2648304647577)
--(axis cs:0.1,13.0792734693349)
--(axis cs:1,56.8113757447342)
--(axis cs:10,122.4058808567)
--(axis cs:100,91.6984614423087)
--(axis cs:100,170.073909835926)
--(axis cs:100,170.073909835926)
--(axis cs:10,189.752596267244)
--(axis cs:1,122.576834110584)
--(axis cs:0.1,66.7926075219865)
--(axis cs:0.01,-17.6763254473344)
--cycle;

\path [draw=darkcyan0118186, fill=darkcyan0118186, opacity=0.2]
(axis cs:0.01,-18.282425709171)
--(axis cs:0.01,-87.0608809873365)
--(axis cs:0.1,12.0916275608749)
--(axis cs:1,50.6464639375196)
--(axis cs:10,102.973555917025)
--(axis cs:100,61.3555759246927)
--(axis cs:100,141.274423613582)
--(axis cs:100,141.274423613582)
--(axis cs:10,166.202817982415)
--(axis cs:1,117.605257951943)
--(axis cs:0.1,64.4423089242809)
--(axis cs:0.01,-18.282425709171)
--cycle;

\addplot [ultra thick, black, mark=*, mark size=3, mark options={solid}]
table {%
0.01 -52.4705779560461
0.1 39.9359404956607
1 89.6941049276592
10 156.079238561972
100 130.886185639117
};
\addlegendentry{$\Jd$}
\addplot [ultra thick, darkcyan0118186, mark=triangle*, mark size=3, mark options={solid}]
table {%
0.01 -52.6716533482537
0.1 38.2669682425779
1 84.1258609447312
10 134.58818694972
100 101.314999769137
};
\addlegendentry{$\Ja$}
\end{axis}
\end{tikzpicture}}
        \label{fig:variance_hc_pg}
    }

    \vspace{.2cm}

    \subfloat[PGPE on Hopper.]{
        \resizebox{0.465\linewidth}{!}{
\begin{tikzpicture}

\definecolor{darkgray176}{RGB}{176,176,176}
\definecolor{firebrick2035016}{RGB}{204,51,17}

\begin{axis}[
legend cell align={left},
legend style={
  draw opacity=1,
  text opacity=1,
  at={(0.97,0.05)},
  anchor=south east,
},
height=5.2cm,
width=6cm,
log basis x={10},
tick align=outside,
tick pos=left,
x grid style={darkgray176},
xlabel={\(\displaystyle \psigma^2\)},
xmajorgrids,
xmin=0.00630957344480193, xmax=158.489319246111,
xmode=log,
xtick style={color=black},
xtick={0.0001,0.001,0.01,0.1,1,10,100,1000,10000},
xticklabels={
  \(\displaystyle {10^{-4}}\),
  \(\displaystyle {10^{-3}}\),
  \(\displaystyle {10^{-2}}\),
  \(\displaystyle {10^{-1}}\),
  \(\displaystyle {10^{0}}\),
  \(\displaystyle {10^{1}}\),
  \(\displaystyle {10^{2}}\),
  \(\displaystyle {10^{3}}\),
  \(\displaystyle {10^{4}}\)
},
y grid style={darkgray176},
ylabel={$J_{\simbolo}(\vtheta_{K})$},
ymajorgrids,
ymin=150, ymax=260,
ytick style={color=black}
]
\path [draw=black, fill=black, opacity=0.2]
(axis cs:0.01,221.718647016274)
--(axis cs:0.01,160.3413273946)
--(axis cs:0.1,208.707719662799)
--(axis cs:1,212.901520188875)
--(axis cs:10,214.530847571976)
--(axis cs:100,211.626848580043)
--(axis cs:100,215.185497509745)
--(axis cs:100,215.185497509745)
--(axis cs:10,218.707088396945)
--(axis cs:1,227.065506218703)
--(axis cs:0.1,219.762790304797)
--(axis cs:0.01,221.718647016274)
--cycle;

\path [draw=firebrick2035016, fill=firebrick2035016, opacity=0.2]
(axis cs:0.01,220.472115374865)
--(axis cs:0.01,159.82294692517)
--(axis cs:0.1,208.431569371054)
--(axis cs:1,209.775007469089)
--(axis cs:10,212.462315739838)
--(axis cs:100,207.08874575076)
--(axis cs:100,210.377810452219)
--(axis cs:100,210.377810452219)
--(axis cs:10,215.309363830919)
--(axis cs:1,221.373362074865)
--(axis cs:0.1,216.463118764155)
--(axis cs:0.01,220.472115374865)
--cycle;

\addplot [ultra thick, black, mark=*, mark size=3, mark options={solid}]
table {%
0.01 191.029987205437
0.1 214.235254983798
1 219.983513203789
10 216.61896798446
100 213.406173044894
};
\addlegendentry{$\Jd$}
\addplot [ultra thick, firebrick2035016, mark=square*, mark size=3, mark options={solid}]
table {%
0.01 190.147531150017
0.1 212.447344067605
1 215.574184771977
10 213.885839785379
100 208.73327810149
};
\addlegendentry{$\Jp$}
\end{axis}
\end{tikzpicture}}
        \label{fig:variance_hop_pgpe}
    }
    \hfill
    \subfloat[GPOMDP on Hopper.]{
        \resizebox{0.465\linewidth}{!}{
\begin{tikzpicture}

\definecolor{darkcyan0118186}{RGB}{0,119,187}
\definecolor{darkgray176}{RGB}{176,176,176}

\begin{axis}[
legend cell align={left},
legend style={
  draw opacity=1,
  text opacity=1,
  at={(0.97,0.05)},
  anchor=south east,
},
height=5.2cm,
width=6cm,
log basis x={10},
tick align=outside,
tick pos=left,
x grid style={darkgray176},
xlabel={\(\displaystyle \asigma^2\)},
xmajorgrids,
xmin=0.00630957344480193, xmax=158.489319246111,
xmode=log,
xtick style={color=black},
xtick={0.0001,0.001,0.01,0.1,1,10,100,1000,10000},
xticklabels={
  \(\displaystyle {10^{-4}}\),
  \(\displaystyle {10^{-3}}\),
  \(\displaystyle {10^{-2}}\),
  \(\displaystyle {10^{-1}}\),
  \(\displaystyle {10^{0}}\),
  \(\displaystyle {10^{1}}\),
  \(\displaystyle {10^{2}}\),
  \(\displaystyle {10^{3}}\),
  \(\displaystyle {10^{4}}\)
},
y grid style={darkgray176},
ylabel={$J_{\simbolo}(\vtheta_{K})$},
ymajorgrids,
ymin=150, ymax=260,
ytick style={color=black}
]
\path [draw=black, fill=black, opacity=0.2]
(axis cs:0.01,217.81819560921)
--(axis cs:0.01,179.093713140226)
--(axis cs:0.1,215.035868352128)
--(axis cs:1,227.008423235486)
--(axis cs:10,217.133509618678)
--(axis cs:100,219.983496402366)
--(axis cs:100,221.951400935447)
--(axis cs:100,221.951400935447)
--(axis cs:10,243.576795349739)
--(axis cs:1,250.147835460241)
--(axis cs:0.1,223.409225815208)
--(axis cs:0.01,217.81819560921)
--cycle;

\path [draw=darkcyan0118186, fill=darkcyan0118186, opacity=0.2]
(axis cs:0.01,217.734210255714)
--(axis cs:0.01,178.815699175739)
--(axis cs:0.1,214.623187990061)
--(axis cs:1,222.870625948286)
--(axis cs:10,219.394101051028)
--(axis cs:100,214.511520218674)
--(axis cs:100,216.797584297395)
--(axis cs:100,216.797584297395)
--(axis cs:10,232.264399894417)
--(axis cs:1,243.794447854486)
--(axis cs:0.1,222.141147073994)
--(axis cs:0.01,217.734210255714)
--cycle;

\addplot [ultra thick, black, mark=*, mark size=3, mark options={solid}]
table {%
0.01 198.455954374718
0.1 219.222547083668
1 238.578129347863
10 230.355152484209
100 220.967448668907
};
\addlegendentry{$\Jd$}
\addplot [ultra thick, darkcyan0118186, mark=triangle*, mark size=3, mark options={solid}]
table {%
0.01 198.274954715726
0.1 218.382167532028
1 233.332536901386
10 225.829250472723
100 215.654552258034
};
\addlegendentry{$\Ja$}
\end{axis}
\end{tikzpicture}}
        \label{fig:variance_hop_pg}
    }

    \vspace{.2cm}

    \subfloat[PGPE on Swimmer.]{
        \resizebox{0.465\linewidth}{!}{
\begin{tikzpicture}

\definecolor{darkgray176}{RGB}{176,176,176}
\definecolor{firebrick2035016}{RGB}{204,51,17}

\begin{axis}[
legend cell align={left},
legend style={
  draw opacity=1,
  text opacity=1,
  at={(0.97,0.05)},
  anchor=south east,
},
height=5.2cm,
width=6cm,
log basis x={10},
tick align=outside,
tick pos=left,
x grid style={darkgray176},
xlabel={\(\displaystyle \psigma^2\)},
xmajorgrids,
xmin=0.00630957344480193, xmax=158.489319246111,
xmode=log,
xtick style={color=black},
xtick={0.0001,0.001,0.01,0.1,1,10,100,1000,10000},
xticklabels={
  \(\displaystyle {10^{-4}}\),
  \(\displaystyle {10^{-3}}\),
  \(\displaystyle {10^{-2}}\),
  \(\displaystyle {10^{-1}}\),
  \(\displaystyle {10^{0}}\),
  \(\displaystyle {10^{1}}\),
  \(\displaystyle {10^{2}}\),
  \(\displaystyle {10^{3}}\),
  \(\displaystyle {10^{4}}\)
},
y grid style={darkgray176},
ylabel={$J_{\simbolo}(\vtheta_{K})$},
ymajorgrids,
ymin=20, ymax=70,
ytick style={color=black}
]
\path [draw=black, fill=black, opacity=0.2]
(axis cs:0.01,49.4640825694345)
--(axis cs:0.01,29.0549432870804)
--(axis cs:0.1,57.7790171277888)
--(axis cs:1,61.6936228402995)
--(axis cs:10,62.5668394299987)
--(axis cs:100,61.98955115903)
--(axis cs:100,63.8379517208324)
--(axis cs:100,63.8379517208324)
--(axis cs:10,64.3410246808092)
--(axis cs:1,64.2423343201004)
--(axis cs:0.1,65.2539978908142)
--(axis cs:0.01,49.4640825694345)
--cycle;

\path [draw=firebrick2035016, fill=firebrick2035016, opacity=0.2]
(axis cs:0.01,49.2092616050713)
--(axis cs:0.01,28.9354801488941)
--(axis cs:0.1,56.5106293079868)
--(axis cs:1,60.4315137188044)
--(axis cs:10,59.3036607250727)
--(axis cs:100,55.4742264592327)
--(axis cs:100,56.7260766311501)
--(axis cs:100,56.7260766311501)
--(axis cs:10,60.3228781349179)
--(axis cs:1,61.8092474565359)
--(axis cs:0.1,64.5385073971904)
--(axis cs:0.01,49.2092616050713)
--cycle;

\addplot [ultra thick, black, mark=*, mark size=3, mark options={solid}]
table {%
0.01 39.2595129282574
0.1 61.5165075093015
1 62.9679785802
10 63.4539320554039
100 62.9137514399312
};
\addlegendentry{$\Jd$}
\addplot [ultra thick, firebrick2035016, mark=square*, mark size=3, mark options={solid}]
table {%
0.01 39.0723708769827
0.1 60.5245683525886
1 61.1203805876702
10 59.8132694299953
100 56.1001515451914
};
\addlegendentry{$\Jp$}
\end{axis}
\end{tikzpicture}}
        \label{fig:variance_swimmer_pgpe}
    }
    \hfill
    \subfloat[GPOMDP on Swimmer.]{
        \resizebox{0.465\linewidth}{!}{
\begin{tikzpicture}

\definecolor{darkcyan0118186}{RGB}{0,119,187}
\definecolor{darkgray176}{RGB}{176,176,176}

\begin{axis}[
legend cell align={left},
legend style={
  draw opacity=1,
  text opacity=1,
  at={(0.97,0.95)},
  anchor=north east,
},
height=5.2cm,
width=6cm,
log basis x={10},
tick align=outside,
tick pos=left,
x grid style={darkgray176},
xlabel={\(\displaystyle \asigma^2\)},
xmajorgrids,
xmin=0.00630957344480193, xmax=158.489319246111,
xmode=log,
xtick style={color=black},
xtick={0.0001,0.001,0.01,0.1,1,10,100,1000,10000},
xticklabels={
  \(\displaystyle {10^{-4}}\),
  \(\displaystyle {10^{-3}}\),
  \(\displaystyle {10^{-2}}\),
  \(\displaystyle {10^{-1}}\),
  \(\displaystyle {10^{0}}\),
  \(\displaystyle {10^{1}}\),
  \(\displaystyle {10^{2}}\),
  \(\displaystyle {10^{3}}\),
  \(\displaystyle {10^{4}}\)
},
y grid style={darkgray176},
ylabel={$J_{\simbolo}(\vtheta_{K})$},
ymajorgrids,
ymin=20, ymax=70,
ytick style={color=black}
]
\path [draw=black, fill=black, opacity=0.2]
(axis cs:0.01,50.6064724615339)
--(axis cs:0.01,27.6069772524778)
--(axis cs:0.1,40.8377268505465)
--(axis cs:1,43.019545936134)
--(axis cs:10,29.3451563299489)
--(axis cs:100,27.0988348144394)
--(axis cs:100,27.8882882252758)
--(axis cs:100,27.8882882252758)
--(axis cs:10,32.8812888021278)
--(axis cs:1,64.8994422524388)
--(axis cs:0.1,60.1865383041738)
--(axis cs:0.01,50.6064724615339)
--cycle;

\path [draw=darkcyan0118186, fill=darkcyan0118186, opacity=0.2]
(axis cs:0.01,50.5176760048779)
--(axis cs:0.01,27.6684052556549)
--(axis cs:0.1,38.5595560130926)
--(axis cs:1,41.0753621730268)
--(axis cs:10,29.5188649613307)
--(axis cs:100,28.9513834953741)
--(axis cs:100,29.2877975171338)
--(axis cs:100,29.2877975171338)
--(axis cs:10,30.4686945604018)
--(axis cs:1,63.1222868250538)
--(axis cs:0.1,60.9608695717433)
--(axis cs:0.01,50.5176760048779)
--cycle;

\addplot [ultra thick, black, mark=*, mark size=3, mark options={solid}]
table {%
0.01 39.1067248570058
0.1 50.5121325773602
1 53.9594940942864
10 31.1132225660383
100 27.4935615198576
};
\addlegendentry{$\Jd$}
\addplot [ultra thick, darkcyan0118186, mark=triangle*, mark size=3, mark options={solid}]
table {%
0.01 39.0930406302664
0.1 49.760212792418
1 52.0988244990403
10 29.9937797608662
100 29.119590506254
};
\addlegendentry{$\Ja$}
\end{axis}
\end{tikzpicture}}
        \label{fig:variance_swimmer_pg}
    }
    
    \caption{Variance study on 
    Mujoco ($5$ runs, mean $\pm$ $95\%$ C.I.).}
    \label{fig:variance_study}
\end{figure}

\section{Conclusions}
\label{sec:conclusions}
In this work, we have perfected recent theoretical results on the global convergence of policy gradient algorithms 
to address the practical problem of finding a good \emph{deterministic} parametric policy.
We have studied the effects of noise on the learning process and identified a theoretical value of the variance of the (hyper)policy that allows to find a good deterministic policy using a polynomial number of samples.
We have compared the two common forms of noisy exploration, action-based and parameter-based, both from a theoretical and an empirical perspective. 

Our work paves the way for several exciting research directions. 
First, our theoretical selection of the policy variance is not practical, but our theoretical findings should guide the design of sound and efficient adaptive-variance schedules.
We have shown how white-noise exploration preserves \emph{weak} gradient domination---the natural next question is whether a sufficient amount of noise can smooth or even eliminate the local optima of the objective function. 
Finally, we have focused on ``vanilla" policy gradient methods, but 
our ideas could be applied to more advanced algorithms, such as the ones recently proposed by~\citet{fatkhullin2023stochastic}, to find optimal deterministic policies with $\widetilde{O}(\epsilon^{-2})$ samples.

\clearpage
\section*{Impact Statement} 
This paper presents work whose goal is to advance the field of Machine Learning. There are many potential societal consequences of our work, none which we feel must be specifically highlighted here.

\section*{Acknowledgements}
Funded by the European Union – Next Generation EU within the project NRPP M4C2, Investment 1.,3 DD. 341 -  15 march 2022 – FAIR – Future Artificial Intelligence Research – Spoke 4 - PE00000013 - D53C22002380006.

\bibliography{biblio}
\bibliographystyle{icml2024}

\newpage
\appendix
\onecolumn
\section{Assumptions and Constants: Quick Reference}\label{app:quick}
As mentioned in Section~\ref{sec:assumptions}, we can start from fundamental assumptions on the MDP and the (hyper)policy classes to satisfy more abstract assumptions that can be used directly in convergence analyses. Figure~\ref{fig:assum} shows the relationship between the assumptions, and Table~\ref{tab:magic} the constants obtained in the process. All proofs of the assumptions' implications can be found in Appendix~\ref{app:ass_impl}.

\newcolumntype{C}[1]{>{\centering\arraybackslash}p{#1}}
\begin{table*}[h!]
    \centering
    \renewcommand{\arraystretch}{1}
    \resizebox{\linewidth}{!}{
        \begin{tabular}{|C{2.5cm}|C{2.7cm}|C{6.8cm}|C{3.2cm}|C{3cm}|}\hline
             & $L_{\dagger} $ (Lipschitz) & \multicolumn{2}{c|}{ $L_{2,\dagger}$ (Smooth)} &   $V_{\dagger}$ (Variance bound)\\ \hline
             \textbf{\textcolor{vibrantBlue}{AB Exploration}}  ($\dagger$=A) & \cellcolor{brightYellow!20} $  \frac{L_p R_{\max}}{(1-\gamma)^2} + \frac{ L_r}{1 - \gamma} $ & \cellcolor{vibrantMagenta!20} $ \frac{2L_p^2 L_\mu^2 R_{\max}}{(1-\gamma)^3} + \frac{2L_\mu^2L_p L_r+ L_{2,\mu}L_{2,p}R_{\max} }{(1-\gamma)^2} + \frac{L_{2,\mu}L_{2,r}}{1-\gamma}  $  & \cellcolor{vibrantMagenta!20}  $ \frac{R_{\max}c (\da+1)(L_\mu^2 + L_{2,\mu})}{\sigma^2_\text{A}(1-\gamma)^2} $\textsuperscript{$\ddagger$} & \cellcolor{vibrantBlue!20}  $ \frac{R_{\max} c \da L_\mu^2}{\sigma_{\text{A}}^2(1-\gamma)^3}$ \\\cdashline{2-5}
             {\small \hfill  Assumptions:} & {\small \ref{ass:lipMdp}} & {\small \ref{ass:lipMdp}, \ref{ass:smoothMdp}, \ref{ass:lipPol}, \ref{ass:smoothPol} } & {\small  \ref{ass:lipPol}, \ref{ass:smoothPol}, \ref{ass:magic}} & {\small \ref{ass:lipPol}, \ref{ass:magic} } \\\cdashline{2-5}
             {\small \hfill Reference:}  & {\small Lemma~\ref{thr:assImpl}} 
             &  \multicolumn{2}{c|}{\small Lemma~\ref{lem:Ja_bounded_hessian}}&  {\small Lemma \ref{ass:Ja_estim_bounded_variance}} \\\hline
             \textbf{\textcolor{vibrantRed}{PB Exploration}}  ($\dagger$=P)& $ \cellcolor{brightYellow!30}  \frac{L_p L_\mu R_{\max}}{(1-\gamma)^2} + \frac{ L_r L_\mu }{1 - \gamma} $  & \cellcolor{vibrantMagenta!20} $ \frac{2L_p^2 L_\mu^2 R_{\max}}{(1-\gamma)^3} + \frac{2L_\mu^2L_p L_r+ L_{2,\mu}L_{2,p}R_{\max} }{(1-\gamma)^2} + \frac{L_{2,\mu}L_{2,r}}{1-\gamma}  $ & \cellcolor{vibrantMagenta!20} $ \frac{R_{\max}c (\dt+1)}{\sigma^2_\text{P}(1-\gamma)^2} $  & \cellcolor{vibrantBlue!20} $  \frac{R_{\max} c \dt}{\sigma_{\text{P}}^2(1-\gamma)^2}$ \\\cdashline{2-5}
             {\small \hfill Assumptions:} & {\small \ref{ass:lipMdp}, \ref{ass:lipPol} } & {\small \ref{ass:lipMdp}, \ref{ass:smoothMdp}, \ref{ass:lipPol}, \ref{ass:smoothPol} } & {\small \ref{ass:magic}} & {\small \ref{ass:magic} } \\\cdashline{2-5}
             {\small \hfill Reference:}  & {\small Lemma~\ref{thr:assImpl}} 
             &   \multicolumn{2}{c|}{\small Lemma~\ref{lem:Jp_bounded_hessian}}  &  {\small Lemma~\ref{ass:Jp_estim_bounded_variance}} \\
             \hline
        \end{tabular}
    }
    \caption{Bounds to the Lipschitz and smoothness constants for the AB and PB objectives ($\Ja$ and $\Jp$) and variance of the GPOMDP and PGPE estimators. Both presented bounds on $L_{2,\dagger}$ hold under different sets of assumptions. ${}^\ddagger$ if $\sigma_{\text{A}} < \sqrt{\da}$.}\label{tab:magic}
\end{table*}

\begin{landscape}

\tikzstyle{block} = [draw=black, align=center, text width=4cm, node distance=2cm]
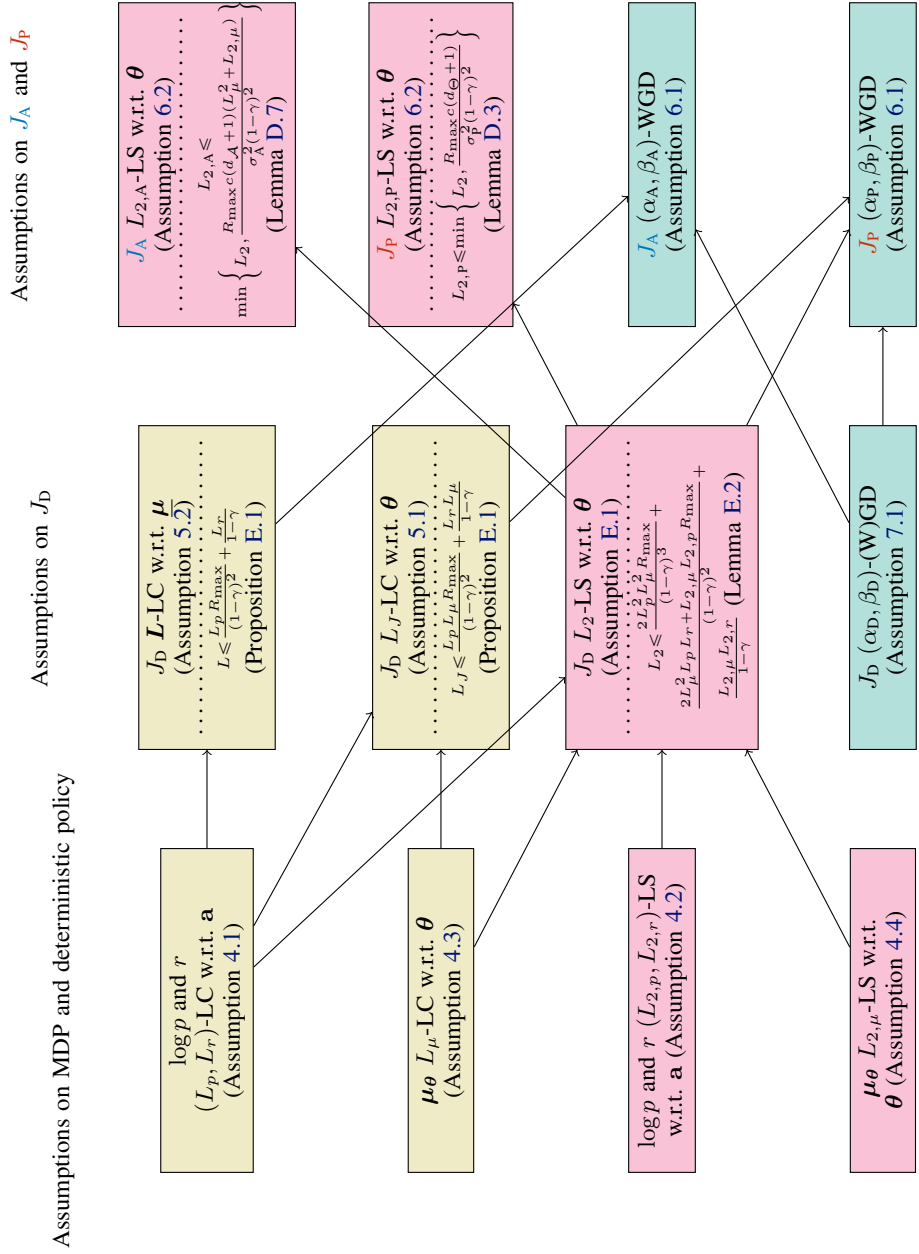
\begin{figure*}[t]
    \centering
    \resizebox{.8\linewidth}{!}{
    \small 
    \begin{tikzpicture}[text width=6.5cm]
        \node[block, fill=brightYellow!30] (a) {$\log p$ and $r$ $(L_p,L_r)$-LC \wrt $\ba$      \\(Assumption~\ref{ass:lipMdp})};
        \node[block, below=of a, fill=brightYellow!30] (b) {$\bmu$ $L_{{\mu}}$-LC \wrt $\vtheta$ (Assumption~\ref{ass:lipPol})};
        \node[block, below=of b, fill=vibrantMagenta!30] (c) {$\log p$ and $r$ $(L_{2,p},L_{2,r})$-LS \wrt $\ba$ (Assumption~\ref{ass:smoothMdp})};
        \node[block, below=of c, fill=vibrantMagenta!30] (d) {$\bmu$ $L_{2,{\mu}}$-LS \wrt $\vtheta$ (Assumption~\ref{ass:smoothPol})};
        \node[above=of a, align=center] (x) {Assumptions on MDP and deterministic policy};

        \node[block, right of=a, node distance=5.5cm, fill=brightYellow!30] (e) {$\Jd$ $\bm{L}$-LC \wrt $\underline{\bm{\mu}}$ (Assumption~\ref{ass:Jd_lip_ns}) 
        
        \vspace{-.2cm}\par\noindent\dotfill
        
        $\scriptstyle L \le \frac{L_p R_{\max}}{(1-\gamma)^2} + \frac{ L_r}{1 - \gamma} $ (Proposition~\ref{thr:assImpl})
        };
         \node[block, right of=b, node distance=5.5cm, , fill=brightYellow!30] (f) {$\Jd$ $L_{J}$-LC \wrt $\vtheta$ (Assumption~\ref{ass:Jd_lip})\vspace{-.2cm}\par\noindent\dotfill
        
        $\scriptstyle L_J \le \frac{L_p L_\mu R_{\max}}{(1-\gamma)^2} + \frac{ L_rL_\mu}{1 - \gamma} $ (Proposition~\ref{thr:assImpl})};
          \node[block, right of=c, node distance=5.5cm, fill=vibrantMagenta!30] (g) {$\Jd$ $L_2$-LS \wrt $\vtheta$ 
          (Assumption~\ref{ass:smoothJ})
          
        \vspace{-.2cm}\par\noindent\dotfill
        
        $\scriptstyle L_{2} \le  \frac{2L_p^2 L_\mu^2 R_{\max}}{(1-\gamma)^3} + \frac{2L_\mu^2L_p L_r+ L_{2,\mu}L_{2,p}R_{\max} }{(1-\gamma)^2} + \frac{L_{2,\mu}L_{2,r}}{1-\gamma}   $
        (Lemma~\ref{lem:L_2_characterization})
          };
        
        \node[block, right of=d, node distance=5.5cm, fill=vibrantTeal!30] (j) {$\Jd$ $(\alpha_{\text{D}},\beta_{\text{D}})$-(W)GD (Assumption~\ref{ass:J_wgdD})};

         \node[above=of e, align=center] (y) {Assumptions on $\Jd$};

          \node[block, right of=g, node distance=5.5cm, fill=vibrantTeal!30] (k) {$\Ja$  $(\alpha_{\text{A}},\beta_{\text{A}})$-WGD (Assumption~\ref{ass:J_wgd})};
         \node[block, right of=j, node distance=5.5cm, fill=vibrantTeal!30] (l) {$\Jp$  $(\alpha_{\text{P}},\beta_{\text{P}})$-WGD (Assumption~\ref{ass:J_wgd})};
          \node[block, right of=e, node distance=5.5cm, fill=vibrantMagenta!30] (m) {$\Ja$ $L_{2,\text{A}}$-LS \wrt $\vtheta$ (Assumption~\ref{ass:J_gen_conv})
          
        \vspace{-.2cm}\par\noindent\dotfill
        
        $\scriptstyle L_{2,\text{A}} \le \min\Big\{L_2, \frac{R_{\max}c (\da+1)(L_\mu^2+L_{2,\mu})}{\sigma^2_\text{A}(1-\gamma)^2}\Big\} $ (Lemma~\ref{lem:Ja_bounded_hessian})};
        \node[block, right of=f, node distance=5.5cm, fill=vibrantMagenta!30] (n) {$\Jp$ $L_{2,\text{P}}$-LS \wrt $\vtheta$ (Assumption~\ref{ass:J_gen_conv})
        
        \vspace{-.2cm}\par\noindent\dotfill
        
        $\scriptstyle L_{2,\text{P}} \le \min\Big\{L_2, \frac{R_{\max}c (\dt+1)}{\sigma^2_\text{P}(1-\gamma)^2}\Big\} $ (Lemma~\ref{lem:Jp_bounded_hessian})};

         \node[above=of m, align=center] (x) {Assumptions on $\Ja$ and $\Jp$};

  
         
        

         
        

        
        \draw [->] (a) -- (e);
        \draw [->] (a) -- (g);
        \draw [->] (b) -- (f);
        \draw [->] (a) -- (f);
        \draw [->] (b) -- (g);
        \draw [->] (c) -- (g);
        \draw [->] (d) -- (g);
        \draw [->] (j) -- (k);
        \draw [->] (j) -- (l);
        \draw [->] (e) -- (k);
        \draw [->] (f) -- (l);
        \draw [->] (g) -- (l);
        \draw [->] (g) -- (m);
        \draw [->] (g) -- (n);


\end{tikzpicture}
}
    \caption{Dependency of some of the assumptions presented in the paper.}
    \label{fig:assum}
\end{figure*}

\end{landscape}

\section{Additional Related Works}\label{apx:addRel}

\noindent\textbf{Policy variance.}~~When optimizing Gaussian policies with policy-gradient methods, the scale parameters (those of the variance or, more in general, of the covariance matrix of the policy) are typically fixed in theory, and optimized via gradient descent in practice. To the best of our knowledge, there is no satisfying theory of the effects of a varying policy (or hyperpolicy) variance on the convergence rates of PG (or PGPE).
~\citet{ahmed2019understanding} were the first to take into serious consideration the impact of the policy stochasticity on the geometry of the objective function, although their focus was on entropy regularization. 
~\citet{papini2020balancing}, focusing on monotonic improvement rather than convergence, proposed to use second-order information to overcome the greediness of gradient updates, arguing that the latter is particularly harmful for scale parameters.~\citet{bolland2023policy} propose to study PG with Gaussian policies under the lens of \emph{optimization by continuation}~\citep{allgower1990numerical}, that is, as a sequence of smoothed version of the deterministic policy optimization problem. Unfortunately, the theory of optimization by continuation is rather scarce. We studied the impact of a \emph{fixed} policy variance on the number of samples needed to find a good \emph{deterministic} policy. We hope that this can provide some insight on how to design adaptive policy-variance strategies in future work. We remark here that the common practice of \emph{learning} the exploration parameters together with all the other policy parameters breaks all of the known convergence results of GPOMDP, since the smoothness of the stochastic objective is inversely proportional to the policy variance~\citep{papini2022smoothing}. In this regard, entropy-regularized policy optimization is different, and is better studied using mirror descent theory, rather than stochastic gradient descent theory~\citep{shani2020adaptive}.

\noindent\textbf{Comparing AB and PB exploration.}~~A classic on the topic is the paper by~\citet{zhao2011analysis}. They prove upper bounds on the variance of the REINFORCE and PGPE estimators, highlighting the better dependence on the task horizon of the latter. 
The idea that variance reduction does not tell the whole story about the efficiency of policy gradient methods is rather recent~\citep{ahmed2019understanding}. We revisited the comparison of action-based and parameter based methods under the lens of modern sample complexity theory. We reached similar conclusions but achieved, we believe, a more complete understanding of the matter.
To our knowledge, the only other work that thoroughly compares AB and PB exploration is~\citep{metelli2018policy,MetelliPMR20,MetelliPDR21}, where the trade-off between the task horizon and the number of policy parameters is discussed both in theory and experiments, but in the context of trust-region methods.
\clearpage
\section{Additional Considerations}\label{apx:mapping}
We only considered (hyper)policy variances $\asigma^2,\psigma^2$ that are \emph{fixed} for the duration of the learning process, albeit they can be set as functions of problem-dependent constants and of the desired accuracy $\epsilon$. This is due to our focus on convergence guarantees based on smooth optimization theory, as explained in the following.

\begin{remark}[About learning the (hyper)policy variance]
    It is a well established practice to parametrize the policy variance and learn these exploration parameters via gradient descent together with all the other policy parameters~\citep[again, for examples, see ][]{duan2016benchmarking,stable-baselines3}. The same is true for parameter-based exploration~\citep{schwefel1993evolution,SEHNKE2010551}.
    However, it is easy to see that an adaptive (in the sense of time-varying) policy variance breaks the sample complexity guarantees of GPOMDP~\citep{yuan2022general} and its variance-reduced variants~\citep[\eg][]{liu2020improved}. That is because these guarantees all rely on Assumption~\ref{ass:J_gen_conv}, or equivalent smoothness conditions, and obtain sample complexity upper bounds that scale with the smoothness constant $L_{2,A}$. However, the latter can depend \emph{inversely} on $\asigma^2$, as already observed by~\citet{papini2022smoothing} for Gaussian policies. Thus, unconstrained learning of $\sigma_A$ breaks the convergence guarantees. Analogous considerations hold for PGPE with adaptive hyperpolicy variance.
    Different considerations apply to entropy-regularized policy optimization methods, which were not considered in this paper, mostly because they converge to a surrogate objective that is even further from optimal deterministic performance. These methods are better analyzed using the theory of mirror descent. We refer the reader to~\citep{shani2020adaptive}. 
\end{remark}

In order to properly define the white noise-based (hyper)policies, we need that $\bm{\mu}_{\vtheta}(\vs)+\vepsilon \in \As$ (for AB exploration) and $\vtheta+\vepsilon \in \Theta$ (for PB exploration), we will assume that $\As = \Reals^{\da}$ and $\Theta=\Reals^{\dt}$ for simplicity.

\begin{remark}[About  $\As = \Reals^{\da}$ and $\Theta=\Reals^{\dt}$ assumption]
    We have assumed that the action space $\mathcal{A}$ and the parameter space $\Theta$ correspond to $\Reals^{\da}$ and $\Reals^{\dt}$, respectively. If this is not the case, we can easily alter the transition model $p$ and the reward function $r$ (for the AB exploration), and the deterministic policy $\mu_{\vtheta}$ (for the PB exploration) by means of a \emph{retraction} function. Let $\mathcal{X}\subseteq \Reals^d$ be a measurable set, a retraction function $\iota_{\mathcal{X}}: \Reals^d \rightarrow \mathcal{X}$ is such that $\iota_{\mathcal{X}}(x) = x$ if $x \in \mathcal{X}$, \ie it is the identity over $\mathcal{X}$.
    \begin{itemize}[noitemsep, leftmargin=*, topsep=-2pt]
        \item For the AB exploration, we redefine the transition model as $\overline{p}(\vs'|\vs,\mathbf{a}) \coloneqq p(\vs'|\vs,\bm{\iota}_{\mathcal{A}}(\mathbf{a}))$ for every $\vs,\vs' \in \Ss$ and $\va \in \As$. Furthermore, we redefine the reward function as $\overline{r}(\vs, \va) \coloneqq r(\vs, \iota_{\As}(\va))$ for every $\vs \in \Ss$ and $\va \in \As$.
        \item For the PB exploration, we redefine the deterministic policy as $\overline{\bm{\mu}}_{\vtheta}(\vs) \coloneqq \bm{\mu}_{\bm{\iota}_{\Theta}(\vtheta)}$, for every $\vtheta \in \Theta$.
    \end{itemize}
\end{remark}

\clearpage
\section{Proofs}\label{apx:proofs}

\subsection{Proofs from Section~\ref{sec:deploy}}
\begin{lemma}\label{lemma:lb}
Let $L > 0$, consider the function $f : \Reals \rightarrow \Reals$ defined for every $x \in \Reals$ as follows:
\begin{align}
	f(x) = \begin{cases}
	0 & \text{if } x < -1/L \text{ or } x > 2/L \\
	Lx +1 & \text{if } -1/L \le x < 0 \\
	1-\frac{L}{2}x & \text{if } 0\le x \le 2/L
	\end{cases}.
\end{align}
Consider the function $\psi_{\sigma} : \Reals \rightarrow \mathbb{R}_{\ge 0}$ defined for every $x \in \Reals$ as follows:
\begin{align}
    \psi_{\sigma}(x) = \begin{cases}
        \frac{1}{2\sqrt{3}\sigma} & \text{if } -\sqrt{3}\sigma \le x \le \sqrt{3}\sigma \\
        0 & \text{otherwise}
    \end{cases},
\end{align}
\ie the p.d.f. of a uniform distribution with zero mean and variance $\sigma^2$. Let $f_\sigma \coloneqq f * \psi_\sigma$, let $x^* = \argmax_{x \in \Reals }f(x)$, and let $x^*_\sigma = \argmax_{x \in \Reals} f_\sigma(x)$. Then $f$ is $L$-LC and, if $\sqrt{3}\sigma \le 1/L$, it holds that $f(x^*) - f(x^*_\sigma) = L\sigma /(2\sqrt{3})$.
\end{lemma}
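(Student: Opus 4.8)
The plan is to treat $f$ as a piecewise-linear tent with apex at $x^* = 0$, $f(0)=1$, ascending arm of slope $L$ on $[-1/L,0]$ and descending arm of slope $-L/2$ on $[0,2/L]$; Lipschitzness is then immediate, since the largest absolute slope is $\max\{L, L/2\} = L$. I would also abbreviate $h := \sqrt 3\,\sigma$, the half-width of the support of $\psi_\sigma$, so that $f_\sigma(x) = (f * \psi_\sigma)(x) = \frac{1}{2h}\int_{x-h}^{x+h} f(u)\,du$ (using that $\psi_\sigma$ is even), and differentiate under the integral sign to obtain $f_\sigma'(x) = \frac{1}{2h}\bigl(f(x+h) - f(x-h)\bigr)$.

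Next I would locate $x^*_\sigma$. The key is to work in the regime where the smoothing window $[x-h,x+h]$ straddles the apex, with its left endpoint on the ascending arm and its right endpoint on the descending arm. There $f(x-h) = L(x-h)+1$ and $f(x+h) = 1 - \tfrac L2 (x+h)$, so the stationarity condition $f(x+h) = f(x-h)$ reduces to the linear equation $L(x-h) = -\tfrac L2(x+h)$, whose unique solution is $x^*_\sigma = h/3$. The hypothesis $\sqrt 3\,\sigma = h \le 1/L$ is exactly what makes this admissible: the endpoints $x-h = -2h/3 \in [-1/L,0]$ and $x+h = 4h/3 \in [0,2/L]$ then genuinely lie on the stated arms.

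I would then certify that $h/3$ is the global maximizer rather than a mere critical point, by tracking the sign of $f_\sigma'$. On the straddling region a short computation gives $f(x+h)-f(x-h) = \tfrac{L}{2}h - \tfrac{3L}{2}x$, strictly decreasing and vanishing exactly at $x=h/3$; on the remaining regions (window entirely on one arm, or overlapping only the boundary of the support $[-1/L,2/L]$) the difference $f(x+h)-f(x-h)$ keeps a constant sign. Hence $f_\sigma$ is increasing on $(-\infty, h/3]$ and decreasing on $[h/3,\infty)$, which establishes unimodality and pins down $x^*_\sigma = h/3$ uniquely.

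Finally — and this is the delicate point — I must evaluate the \emph{original} tent $f$, not the smoothed $f_\sigma$, at $x^*_\sigma$. Since $0 < x^*_\sigma = h/3 \le 1/(3L) < 2/L$, the argument $x^*_\sigma$ sits on the descending arm, so $f(x^*_\sigma) = 1 - \tfrac L2\cdot\tfrac h3 = 1 - \tfrac{Lh}{6}$. Subtracting from $f(x^*)=1$ yields $f(x^*) - f(x^*_\sigma) = \tfrac{Lh}{6} = \tfrac{L\sqrt3\,\sigma}{6} = \tfrac{L\sigma}{2\sqrt3}$, as claimed. The main obstacle is the global-maximality bookkeeping of the third paragraph (ruling out spurious critical points as the window slides across the support), together with the care needed to read off the value of the \emph{un-smoothed} function at the shifted maximizer: it is this distinction between $f(x^*_\sigma)$ and $f_\sigma(x^*_\sigma)$ that produces the factor $1/6$, and hence the constant $1/(2\sqrt3)\approx 0.2887$ underlying the lower bounds of Theorems~\ref{thr:deployPB} and~\ref{thr:deployAB}.
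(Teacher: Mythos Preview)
Your proof is correct and follows essentially the same approach as the paper: locate the maximizer of $f_\sigma$ in the regime where the smoothing window straddles the apex, then evaluate the original $f$ there. The only difference is cosmetic---you find $x^*_\sigma$ by differentiating the convolution via $f_\sigma'(x)=\tfrac{1}{2h}(f(x+h)-f(x-h))$ and checking unimodality, whereas the paper computes the integral explicitly to get a concave quadratic in $x$; your treatment of global maximality is in fact more careful than the paper's, which simply asserts the optimum lies in $|x|\le 1/L$.
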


\begin{proof}

    Let us first verify that the distribution whose p.d.f. is $\phi_\sigma$ has zero mean and variance $\sigma^2$:
    \begin{align}
        & \int_{\Reals} \psi_\sigma(x) x \de x = 0, \\
        & \int_{\Reals} \psi_\sigma(x) x^2 \de x = 2 \int_{0}^{\sqrt{3}\sigma} \psi_\sigma(x) x^2 \de x = \sigma^2.
    \end{align}
Under the assumption $\sqrt{3}\sigma \le 1/L$, functions $f$ and $\psi_\sigma$ can be represented as follows:  
    \begin{figure}[H]
        \centering
        \includegraphics[width=0.5\linewidth,viewport=0 0 248.97 117.93,clip]{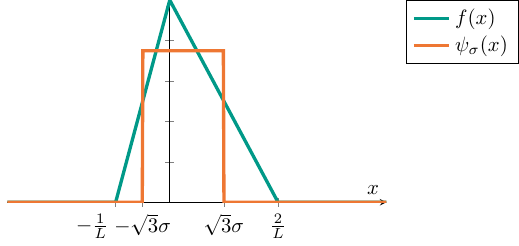}
    \end{figure}
    Let us now compute the convolution:
    \begin{align}
        f_\sigma(x) = f * \psi_\sigma = \int_{\Reals} \psi_\sigma(x-t) f(t) \de t.
    \end{align}
    It is clear that the global optimum of function $f_\sigma$ is located in the interval given by $|x| \le 1/L$. This combined, with  the assumption $\sqrt{3}\sigma \le 1/L$, allows to simplify the integral as:
    \begin{align}
        \int_{\Reals} \psi_\sigma(x-t) f(t) \de t & = \int_{x-\sqrt{3}\sigma}^0 \frac{1}{2\sqrt{3}\sigma}(Lt+1) \de t + \int_0^{x+\sqrt{3}\sigma} \frac{1}{2\sqrt{3}\sigma}\left(1-\frac{L}{2}t\right) \de t  \\
        &  1 - =\frac{1}{2\sqrt{3}\sigma} \left(\frac{L}{2}(x-\sqrt{3}\sigma)^2 + \frac{L}{4} (x+\sqrt{3}\sigma)^2 \right).
    \end{align}
    The latter is a concave (quadratic) function of $x$, which is maximized for $x^*_\sigma = \sigma/\sqrt{3}$. Noticing that $x^* = \argmax_{x \in \Reals} f(x)$=0, we have:
    \begin{align}
        f(x^*) - f(x^*_\sigma) = f(0) - f(\sigma/\sqrt{3}) = 1 - \left(1 - \frac{L\sigma}{2\sqrt{3}}\right) = \frac{L\sigma}{2\sqrt{3}}.
    \end{align}
\end{proof}

\deployPB*

\begin{proof}
    Before starting the derivation, we remark that:
	\begin{align}
		\Jp(\vtheta) = \E_{\vepsilon \sim \Phi_{\dt}} \left[\Jd(\vtheta + \vepsilon)\right],
	\end{align}
	where $\E_{\vepsilon \sim \Phi_{\dt}}[\|\vepsilon\|_2^2]\le {\dt} \sigma^2_{\text{P}}$.	
	From Assumption~\ref{ass:Jd_lip}, we can easily derive ($i$):
	\begin{align}
		|\Jd(\vtheta) - \Jp(\vtheta) | & = |\Jd(\vtheta) - 
 \E_{\vepsilon \sim \Phi_{\dt}} \left[\Jd(\vtheta + \vepsilon) \right]|  \\
 & \le \E_{\vepsilon \sim \Phi_{\dt}} \left[ |\Jd(\vtheta) - \Jd(\vtheta + \vepsilon)| \right] \\
 & \le L_J \E_{\vepsilon \sim \Phi_{\dt}} \left[ \|\vepsilon\|_2 \right] \\
 & \le L_J \sqrt{\E_{\vepsilon \sim \Phi_{\dt}} \left[ \|\vepsilon\|_2^2 \right]} \\
 & \le L_J\psigma \sqrt{\dt}.
 \end{align}
 For ($ii$), let $\vtheta^* \in \argmax_{\vtheta \in \Theta} \Jd(\vtheta)$, we have:
 \begin{align}
     \max_{\vtheta \in \Theta} \Jd(\vtheta) - \Jd(\vtheta^*_{\text{P}}) & = \Jd(\vtheta^*) - \Jd(\vtheta^*_{\text{P}}) \pm \Jp(\vtheta^*)\\
     &\leq \Jd(\vtheta^*) - \Jp(\vtheta^*) + \Jp(\vtheta^*_{\text{P}}) - \Jd(\vtheta_{\text{P}}^*) \label{eq:pgpe_global_performance_distance_eq_1}\\
        &\leq 2 \max_{\vtheta \in \Theta} \left| \Jd(\vtheta) - \Jp(\vtheta) \right| \label{eq:pgpe_global_performance_distance_eq_2}\\
        &\leq 2 L_J \psigma \sqrt{\dt},
    \end{align}
	where line~(\ref{eq:pgpe_global_performance_distance_eq_1}) follows from $\Jp(\vtheta^*_{\text{P}}) =\max_{\vtheta \in \Theta} \Jp(\vtheta)\geq \Jp(\vtheta^*)$, and line~(\ref{eq:pgpe_global_performance_distance_eq_2}), follows by applying twice result ($i$). 
 
    To prove ($iii$) we construct the MDP $(\{s\},\Reals^{\dt},p,r,\rho_0,\gamma)$ (\ie a bandit), where $r(s,\ba) = \frac{1}{\dt} \sum_{i=1}^{\dt} f(a_i)$, where $f$ is defined in Lemma~\ref{lemma:lb} and $\bmu(s) = \vtheta$ with $\vtheta \in \Reals^{\dt}$. Thus, we can compute the expected return as follows:
 \begin{align}
     \Jd(\vtheta) = \frac{1-\gamma^T}{1-\gamma}\cdot \frac{1}{\dt} \sum_{i=1}^{\dt} f(\theta_i).
 \end{align}
Let us compute its Lipschitz constant recalling that $f$ is $L$-LC thanks to Lemma~\ref{lemma:lb}. In particular, we take $\vtheta = \mathbf{0}_{\dt}$ and $\vtheta' = - \eta \mathbf{1}_{\dt}$ with $\eta \in (0,1/L)$, recalling that $\|\vtheta-\vtheta'\|_2 = \eta \sqrt{\dt} $ and that $f(\theta_i) = 1$ and $f(\theta_i') = -\eta L + 1$, we have:
\begin{align}
    | \Jd(\vtheta)- \Jd(\vtheta')| & =  \left| \frac{1-\gamma^T}{1-\gamma}\cdot \frac{1}{\dt} \sum_{i=1}^{\dt} f(\theta_i) - \frac{1-\gamma^T}{1-\gamma}\cdot \frac{1}{\dt} \sum_{i=1}^{\dt} f(\theta_i')\right| \\
    & = \frac{1-\gamma^T}{1-\gamma}\cdot \frac{1}{\dt} \sum_{i=1}^{\dt}|f(\theta_i)-f(\theta_i')| \\
    & = \frac{1-\gamma^T}{1-\gamma}\cdot \frac{L}{\dt} \sum_{i=1}^{\dt} |\theta_i - \theta_i'| \\
    & = \frac{1-\gamma^T}{1-\gamma} L \eta \\
    & = \frac{1-\gamma^T}{1-\gamma}\cdot \frac{L}{\sqrt{\dt}} \| \vtheta-\vtheta'\|_2.
\end{align}
Thus, we have that $\Jd(\vtheta)$ is $\left( \frac{1-\gamma^T}{1-\gamma}\cdot \frac{L}{\sqrt{\dt}} \right)$-LC. By naming $L_J =  \frac{1-\gamma^T}{1-\gamma}\cdot \frac{L}{\sqrt{\dt}}$, we have $L = \frac{1-\gamma}{1-\gamma^T} \sqrt{\dt} L_J$. We now consider the additive noise $\Phi_{\dt} =  \otimes_{i=1}^{\dt} \text{Uni}([-\sqrt{3}\sigma,\sqrt{3}\sigma])$, \ie the $\dt$-dimensional uniform distribution with independent components over the hypercube $[-\sqrt{3}\sigma,\sqrt{3}\sigma]^{\dt}$. From Lemma~\ref{lemma:lb}, we know that each dimension has variance $\sigma^2$, consequently:
 \begin{align}
     \E_{\vepsilon \sim \Phi_{\dt}}[\|\vepsilon\|_2^2] = \sum_{i=1}^{\dt} \E_{\epsilon_i \sim \text{Uni}([-\sqrt{3}\sigma,\sqrt{3}\sigma])}[\epsilon_i^2] = \dt \sigma^2,
 \end{align}
 thus complying with Definition~\ref{defi:add}. Consequently:
 \begin{align}
     \Jp(\vtheta) = \E_{\vepsilon \sim \Phi_{\dt}}[\Jd(\vtheta+\vepsilon)] = \sum_{i=1}^{\dt} \E_{\epsilon_i \sim \text{Uni}([-\sqrt{3}\sigma,\sqrt{3}\sigma])}[f(\theta_i+\epsilon_i)] =  \sum_{i=1}^{\dt} (f*\psi_\sigma)(\theta_i),
 \end{align}
 where $\psi_\sigma$ is the p.d.f. of the considered uniform distribution as defined in Lemma~\ref{lemma:lb}. From Lemma~\ref{lemma:lb} and observing that both $\Jd$ and $\Jp$ decompose into a sum over the $\dt$ dimensions, we have for $\sqrt{3}\sigma < 1/L$:
 \begin{align}
      \vtheta^*= \argmax_{\vtheta \in \Reals^{\dt}} \Jd(\vtheta) = \mathbf{0}_{\dt}, \qquad \vtheta^*_{\text{P}} = \argmax_{\vtheta \in \Reals^{\dt}} \Jp(\vtheta) = \frac{\sigma}{\sqrt{3}} \mathbf{1}_{\dt}.
 \end{align}
 It follows that:
 \begin{align}
     \Jd(\vtheta^*) - \Jd(\vtheta^*_{\text{P}}) & = \Jd(\mathbf{0}_{\dt}) - \Jd\left( \frac{\sigma}{\sqrt{3}} \mathbf{1}_{\dt}\right) \\
     & =\frac{1-\gamma^T}{1-\gamma} \frac{1}{\dt} \sum_{i=1}^{\dt} f(0) - f(\sigma/\sqrt{3}) \\
     & =\frac{1-\gamma^T}{1-\gamma}\frac{L\sigma}{2\sqrt{3}} \\
     & = \frac{1}{ 2\sqrt{3} } L_J  \sqrt{\dt} \sigma.
 \end{align}
\end{proof}

\deployAB*

\begin{proof}
    From Assumption~\ref{ass:Jd_lip_ns}, noting that $\Jd(\vtheta) = \Jd(\underline{\bm{\mu}}_{\vtheta})$ we can easily derive ($i$):
    \begin{align}
        \left|\Jd(\vtheta)-\Ja(\vtheta)\right|  & = \left| \Jd(\vtheta)- \E_{\underline{\vepsilon} \sim \Phi_{\da}^T }\left[ {\Jd}(\underline{\bm{\mu}}_{\vtheta} + \underline{\vepsilon}) \right] \right| \\
        & = \left| \Jd(\underline{\bm{\mu}}_{\vtheta} )- \E_{\underline{\vepsilon} \sim \Phi_{\da}^T }\left[ {\Jd}(\underline{\bm{\mu}}_{\vtheta} + \underline{\vepsilon}) \right] \right| \\
        & \le \E_{\underline{\vepsilon} \sim \Phi_{\da}^T } \left[ \sum_{t=0}^{T-1} L_t \sup_{s_t \in \mathcal{S}} \left\| \bmu(s_t) - ( \bmu(s_t) + \vepsilon_t) \right\|_2 \right] \\
        & = \sum_{t=0}^{T-1} L_t  \E_{\vepsilon\sim \Phi_{\da}}[\|\vepsilon\|_2^2] \\
        & \le L \sqrt{{\da}} \sigma_{\text{A}}.
    \end{align}
   For ($ii$), let $\vtheta^* \in \argmax_{\vtheta \in \Theta} \Jd(\vtheta)$, we have:
 \begin{align}
     \max_{\vtheta \in \Theta} \Jd(\vtheta) - \Jd(\vtheta^*_{\text{A}}) & = \Jd(\vtheta^*) - \Jd(\vtheta^*_{\text{A}}) \pm \Ja(\vtheta^*)\\
     &\leq \Jd(\vtheta^*) - \Ja(\vtheta^*) + \Ja(\vtheta^*_{\text{A}}) - \Jd(\vtheta_{\text{A}}^*) \label{eq:pgpe_global_performance_distance_eq_12}\\
        &\leq 2 \max_{\vtheta \in \Theta} \left| \Jd(\vtheta) - \Ja(\vtheta) \right| \label{eq:pgpe_global_performance_distance_eq_22}\\
        &\leq 2 L \asigma \sqrt{\da},
    \end{align}
	where line~(\ref{eq:pgpe_global_performance_distance_eq_12}) follows from $\Ja(\vtheta^*_{\text{A}}) =\max_{\vtheta \in \Theta} \Ja(\vtheta)\geq \Ja(\vtheta^*)$, and line~(\ref{eq:pgpe_global_performance_distance_eq_22}) follows by applying twice result ($i$). The proof of $(iii)$ is identical to that of Theorem~\ref{thr:deployPB} since, for the particular instance, we have enforced $\bmu(s) = \vtheta$ (which implies $\da=\dt$) and, thus, AB exploration is equivalent to PB exploration.
\end{proof}
\subsection{Proofs from Section~\ref{sec:convergence}}
\begin{lemma}[Variance of $\widehat{\nabla}_{\vtheta} \Jp(\vtheta)$ bounded] \label{ass:Jp_estim_bounded_variance}
    Under Assumption~\ref{ass:magic}, the variance the PGPE estimator with batch size $N$ is bounded for every $ \vtheta \in \Theta$ as:
    \begin{align*}
        \Var\left[ \widehat{\nabla}_{\vtheta} \Jp(\vtheta) \right] \leq \frac{R_{\max}^2 \xi_2 (1-\gamma^T)^2}{N (1-\gamma)^2}\le \frac{R_{\max}^2 \xi_{1}^2}{N (1-\gamma)^2}.
    \end{align*}
    with $\xi_2 \le  c \dt \sigma_\text{P}^{-2}$.
\end{lemma}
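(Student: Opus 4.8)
The plan is to exploit the i.i.d.\ structure of the $N$ parameter--trajectory pairs, reduce to a single-sample second moment, peel off the bounded return, and then invoke Assumption~\ref{ass:magic} on the score of the white noise. Concretely, since $\{(\vtheta_i,\tau_i)\}_{i=1}^N$ are i.i.d.\ ($\vtheta_i\sim\nu_{\vtheta}$, $\tau_i\sim p_{\text{D}}(\cdot;\vtheta_i)$) and $\widehat{\nabla}_{\vtheta}\Jp(\vtheta)$ is their empirical mean, we have $\Var[\widehat{\nabla}_{\vtheta}\Jp(\vtheta)] = \tfrac1N\Var\big[\nabla_{\vtheta}\log\nu_{\vtheta}(\vtheta')R(\tau)\big] \le \tfrac1N\E\big[\|\nabla_{\vtheta}\log\nu_{\vtheta}(\vtheta')\|_2^2\,R(\tau)^2\big]$, using $\Var[X]\le\E[\|X\|_2^2]$ for a vector-valued random variable. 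Bounding the discounted return pointwise, $|R(\tau)| = \big|\sum_{t=0}^{T-1}\gamma^t r(\vs_{\tau,t},\va_{\tau,t})\big| \le R_{\max}\sum_{t=0}^{T-1}\gamma^t = R_{\max}\tfrac{1-\gamma^T}{1-\gamma}$, so the return factor pulls out of the expectation, leaving $\Var[\widehat{\nabla}_{\vtheta}\Jp(\vtheta)] \le \tfrac{R_{\max}^2(1-\gamma^T)^2}{N(1-\gamma)^2}\,\E_{\vtheta'\sim\nu_{\vtheta}}\big[\|\nabla_{\vtheta}\log\nu_{\vtheta}(\vtheta')\|_2^2\big]$.

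The key step is to recognize the remaining expectation as a white-noise score quantity, so that Assumption~\ref{ass:magic}($i$) applies. For a white noise-based hyperpolicy (Definition~\ref{defi:addHype}), $\vtheta' = \vtheta + \vepsilon$ with $\vepsilon\sim\Phi_{\dt}$, hence the density factorizes as $\nu_{\vtheta}(\vtheta') = \phi(\vtheta'-\vtheta)$, and by the chain rule $\nabla_{\vtheta}\log\nu_{\vtheta}(\vtheta') = -(\nabla\log\phi)(\vtheta'-\vtheta)$. Therefore $\E_{\vtheta'\sim\nu_{\vtheta}}\big[\|\nabla_{\vtheta}\log\nu_{\vtheta}(\vtheta')\|_2^2\big] = \E_{\vepsilon\sim\Phi_{\dt}}\big[\|\nabla_{\vepsilon}\log\phi(\vepsilon)\|_2^2\big] \le c\,\dt\,\sigma_{\text{P}}^{-2}$ by Assumption~\ref{ass:magic}($i$) with $d=\dt$ and $\sigma=\sigma_{\text{P}}$; this bound is uniform in $\vtheta$. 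Defining $\xi_2$ as (an upper bound on) this expectation gives $\xi_2 \le c\,\dt\,\sigma_{\text{P}}^{-2}$ and establishes the first inequality for every $\vtheta\in\Theta$. The second inequality follows immediately from $1-\gamma^T\le 1$ together with the relation $\xi_2\le\xi_1^2$ defining $\xi_1$.

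There is no serious obstacle here — the computation is routine. The only points requiring care are: the chain-rule identification of the hyperpolicy score with the white-noise score (this is precisely what makes Assumption~\ref{ass:magic} invocable, and it is the one place where the \emph{white-noise} structure, as opposed to a generic hyperpolicy, is used); and the observation that the second-moment bound should be taken under the joint law of $(\vtheta',\tau)$ rather than conditionally on $\vtheta'$, which is harmless because the return is bounded pointwise and the conditional law of $\tau$ given $\vtheta'$ never enters the estimate.
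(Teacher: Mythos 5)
Your proof is correct and follows essentially the same route as the paper's: reduce to the single-sample second moment via independence, pull out the pointwise bound $|R(\tau)|\le R_{\max}\tfrac{1-\gamma^T}{1-\gamma}$, and bound $\E_{\vtheta'\sim\nu_{\vtheta}}[\|\nabla_{\vtheta}\log\nu_{\vtheta}(\vtheta')\|_2^2]$ via the white-noise score identity $\nabla_{\vtheta}\log\nu_{\vtheta}(\vtheta')=-\nabla_{\vepsilon}\log\phi(\vepsilon)|_{\vepsilon=\vtheta'-\vtheta}$ and Assumption~\ref{ass:magic}($i$) — the paper packages this last step as Lemma~\ref{lemma:boundsMagic2}, which you re-derive inline.
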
 
\begin{proof}
    We recall that the estimator employed by PGPE in its update rule is:
    \begin{align*}
        \widehat{\nabla}_{\vtheta} \Jp(\vtheta) = \frac{1}{N} \sum_{i=1}^{N} \nabla_{\vtheta} \log \nu_{\vtheta}(\vtheta_{i}) R(\tau_{i}),
    \end{align*}
    where $N$ is the number of parameter configuration tested (on one trajectory) at each iteration.
    Thus, we can compute the variance of such an estimator as:
    \begin{align*}
        \Var_{\vtheta' \sim \nu_{\vtheta}} \left[ \widehat{\nabla}_{\vtheta} \Jp(\vtheta') \right] &= \frac{1}{N} \Var_{\vtheta' \sim \nu_{\vtheta}} \left[ \nabla_{\vtheta} \log \nu_{\vtheta}(\vtheta') R(\tau_{1}) \right] \\
        &= \frac{1}{N} \E_{\vtheta' \sim \nu_{\vtheta}} \left[ \left\| \nabla_{\vtheta}\log \nu_{\vtheta}(\vtheta') \right\|^{2}_2 R(\tau_{1})^{2} \right] \\
        &\leq \frac{R_{\max}^2 \xi_{1}^2 (1 - \gamma^T)^2}{N (1-\gamma)^2},
    \end{align*}
    where the last line follows form Assumption~\ref{ass:magic} and Lemma~\ref{lemma:boundsMagic2} after having defined $\xi_2 = \E_{\vtheta' \sim \nu_{\vtheta}} \left[ \left\| \nabla_{\vtheta}\log \nu_{\vtheta}(\vtheta') \right\|^{2}_2 \right]$ and from the fact that, given a trajectory $\tau$, $R(\tau)$ is defined as:
    \begin{align*}
        R(\tau) = \sum_{t=0}^{T-1} \gamma^{t} r(s_{\tau, t}, \va_{\tau, t}),
    \end{align*}
with $r(\vs, \va) \in [-R_{\max}, R_{\max}]$ for every $\vs \in \mathcal{S}$ and $\va \in \mathcal{A}$.
\end{proof}

\begin{lemma}[Bounded $\Jp$ Hessian] \label{lem:Jp_bounded_hessian}
	Under Assumption~\ref{ass:magic} and using a hyperpolicy complying with Definition~\ref{defi:add}, $\forall \vtheta \in \Theta$ it holds that:
	\begin{align*}
		\left\| \nabla^2_{\vtheta} \Jp(\vtheta)\right\|_2 \leq L_{2,\text{P}} \frac{R_{\max} (1 - \gamma^T)}{1 - \gamma} \left(\xi_2 + \xi_3\right),
	\end{align*}
 where $\xi_2 \le c \dt\sigma_{\text{P}}^{-2}$ and $\xi_3 \le c\sigma_{\text{P}}^{-2}$. Furthermore, under Assumptions~\ref{ass:lipMdp}, \ref{ass:lipPol}, \ref{ass:smoothMdp}, and \ref{ass:smoothPol}, and using a hyperpolicy complying with Definition~\ref{defi:add}, $\forall \vtheta \in \Theta$ it holds that:
	\begin{align*}
		\left\| \nabla^2_{\vtheta} \Jp(\vtheta)\right\|_2 \leq L_2,
	\end{align*}
\end{lemma}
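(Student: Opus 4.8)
The plan is to prove the two displayed bounds by two independent arguments, reflecting the two ``sources'' of smoothness mentioned in the main text: the first bound through the score functions of the white noise $\Phi$ (the PGPE-style likelihood-ratio route), the second through the smoothness of the deterministic objective $\Jd$ itself. Both routes use only the representation of $\Jp$ in terms of $\Jd$ plus the corresponding regularity assumptions, and the only nontrivial analytic step is justifying the relevant exchanges of differentiation and integration.

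\textbf{First bound.} I would start from $\Jp(\vtheta) = \int_{\Theta} \nu_{\vtheta}(\vtheta')\,\Jd(\vtheta')\,\de\vtheta'$ and differentiate twice under the integral sign, using the likelihood-ratio identities $\nabla_{\vtheta}\nu_{\vtheta} = \nu_{\vtheta}\nabla_{\vtheta}\log\nu_{\vtheta}$ and $\nabla^2_{\vtheta}\nu_{\vtheta} = \nu_{\vtheta}\big(\nabla_{\vtheta}\log\nu_{\vtheta}\,\nabla_{\vtheta}\log\nu_{\vtheta}^\top + \nabla^2_{\vtheta}\log\nu_{\vtheta}\big)$, to obtain
\begin{align*}
    \nabla^2_{\vtheta}\Jp(\vtheta) = \E_{\vtheta'\sim\nu_{\vtheta}}\left[\left(\nabla_{\vtheta}\log\nu_{\vtheta}(\vtheta')\nabla_{\vtheta}\log\nu_{\vtheta}(\vtheta')^\top + \nabla^2_{\vtheta}\log\nu_{\vtheta}(\vtheta')\right)\Jd(\vtheta')\right].
\end{align*}
Then I would take the operator norm, apply the triangle inequality and Jensen's inequality, use $\|\mathbf{v}\mathbf{v}^\top\|_2 = \|\mathbf{v}\|_2^2$, and bound $|\Jd(\vtheta')|\le R_{\max}(1-\gamma^T)/(1-\gamma)$ (immediate from $|r|\le R_{\max}$ and the geometric sum), which yields $\|\nabla^2_{\vtheta}\Jp(\vtheta)\|_2 \le \frac{R_{\max}(1-\gamma^T)}{1-\gamma}(\xi_2+\xi_3)$ with $\xi_2 = \E_{\vtheta'\sim\nu_{\vtheta}}[\|\nabla_{\vtheta}\log\nu_{\vtheta}(\vtheta')\|_2^2]$ and $\xi_3 = \E_{\vtheta'\sim\nu_{\vtheta}}[\|\nabla^2_{\vtheta}\log\nu_{\vtheta}(\vtheta')\|_2]$. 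The key observation (the same bookkeeping as in Lemma~\ref{ass:Jp_estim_bounded_variance}) is that for a white noise-based hyperpolicy $\nu_{\vtheta}(\vtheta') = \phi(\vtheta'-\vtheta)$, so the change of variables $\vepsilon = \vtheta'-\vtheta$ gives $\nabla_{\vtheta}\log\nu_{\vtheta}(\vtheta') = -\nabla_{\vepsilon}\log\phi(\vepsilon)$ and $\nabla^2_{\vtheta}\log\nu_{\vtheta}(\vtheta') = \nabla^2_{\vepsilon}\log\phi(\vepsilon)$; hence $\xi_2 = \E_{\vepsilon\sim\Phi_{\dt}}[\|\nabla_{\vepsilon}\log\phi(\vepsilon)\|_2^2]\le c\,\dt\,\sigma_{\text{P}}^{-2}$ and $\xi_3 = \E_{\vepsilon\sim\Phi_{\dt}}[\|\nabla^2_{\vepsilon}\log\phi(\vepsilon)\|_2]\le c\,\sigma_{\text{P}}^{-2}$ by Assumption~\ref{ass:magic} $(i)$ and $(ii)$, respectively.

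\textbf{Second bound.} Here I would instead exploit $\Jp(\vtheta) = \E_{\vepsilon\sim\Phi_{\dt}}[\Jd(\vtheta+\vepsilon)]$; exchanging the Hessian with the expectation gives $\nabla^2_{\vtheta}\Jp(\vtheta) = \E_{\vepsilon\sim\Phi_{\dt}}[\nabla^2_{\vtheta}\Jd(\vtheta+\vepsilon)]$, and since Assumptions~\ref{ass:lipMdp}, \ref{ass:smoothMdp}, \ref{ass:lipPol}, \ref{ass:smoothPol} imply, via Lemma~\ref{lem:L_2_characterization}, that $\Jd$ is $L_2$-\emph{LS} (so that $\|\nabla^2_{\vtheta}\Jd(\vtheta')\|_2\le L_2$ for all $\vtheta'$), Jensen's inequality gives $\|\nabla^2_{\vtheta}\Jp(\vtheta)\|_2 \le \E_{\vepsilon}[\|\nabla^2_{\vtheta}\Jd(\vtheta+\vepsilon)\|_2]\le L_2$.

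The main obstacle I anticipate is making the two exchanges of differentiation and integration rigorous — exhibiting integrable dominating functions for the first- and second-order $\vtheta$-derivatives of $\nu_{\vtheta}(\vtheta')\Jd(\vtheta')$ (resp.\ of $\phi(\vepsilon)\Jd(\vtheta+\vepsilon)$) so that dominated convergence applies — together with the careful change of variables that converts $\vtheta$-derivatives of $\log\nu_{\vtheta}$ into $\vepsilon$-derivatives of $\log\phi$ (the sign flip in the first-order term being immaterial, since it enters only through its square). Everything else is the routine norm/Jensen bookkeeping already displayed above.
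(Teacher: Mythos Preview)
Your proposal is correct and follows essentially the same approach as the paper: the paper too uses the two representations $\Jp(\vtheta)=\E_{\vtheta'\sim\nu_{\vtheta}}[\Jd(\vtheta')]$ and $\Jp(\vtheta)=\E_{\vepsilon\sim\Phi}[\Jd(\vtheta+\vepsilon)]$, applies the likelihood-ratio identity for the first bound (then invokes Lemma~\ref{lemma:boundsMagic2}, which is exactly your change-of-variables argument), and passes the Hessian inside the expectation for the second bound (invoking Assumption~\ref{ass:smoothJ}, obtained from Lemma~\ref{lem:L_2_characterization}). You are, if anything, slightly more careful in flagging the dominated-convergence step, which the paper leaves implicit.
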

where $L_2$ is bounded as in Lemma~\ref{lem:L_2_characterization}.
\begin{proof}
	The performance index $\Jp$ of a hyperpolicy $\nu_{\vtheta}$ can be seen as the expectation over the sampling of a parameter configuration $\vtheta'$ from the hyperpolicy $\nu_{\vtheta}$, or as the perturbation according to the realization $\bm{\vepsilon}$ of a sub-gaussian noise $\psigma$ of the parameter configuration of the deterministic policy $\bm{\mu}_{\vtheta}$.
	
	Using the first characterization we can write:
	\begin{align}
		\Jp(\vtheta) = \E_{\vtheta' \sim \nu_{\vtheta}} \left[ \Jd(\vtheta') \right]. \label{eq:pgpe_def_1}
	\end{align}
	
	Equivalently, we can write:
	\begin{align}
		\Jp(\vtheta) = \E_{\bm{\vepsilon} \sim \Phi} \left[ \Jd(\vtheta + \bm{\vepsilon}) \right]. \label{eq:pgpe_def_2}
	\end{align}
	
	By using the latter, we have that:
	\begin{align}
		\left\| \nabla^2_{\vtheta} \Jp(\vtheta) \right\|_2 &= \left\| \nabla^2_{\vtheta} \E_{\bm{\vepsilon} \sim g} \left[ \Jd(\vtheta + \bm{\vepsilon}) \right] \right\|_2 \nonumber\\
		&= \left\|  \E_{\bm{\vepsilon} \sim g} \left[ \nabla^2_{\vtheta} \Jd(\vtheta + \bm{\vepsilon}) \right] \right\|_2 \nonumber\\
		&\leq \E_{\bm{\vepsilon} \sim g} \left[ \left\| \nabla^2_{\vtheta} \Jd(\vtheta + \bm{\vepsilon}) \right\|_2 \right] \nonumber\\
		&\leq L_2, \label{eq:J_pb_hessian_bounded:1}
	\end{align}
	where the last inequality simply follows from Assumption~\ref{ass:smoothJ}.
 
	By using Equation~\eqref{eq:pgpe_def_1}, instead, we have the following:
	\begin{align*}
		\nabla^2_{\vtheta} \Jp(\vtheta) &= \nabla^2_{\vtheta} \E_{\vtheta' \sim \nu_{\vtheta}} \left[ \Jd(\vtheta') \right] \\
		&= \int \nabla^2_{\vtheta} \left(\nu_{\vtheta}(\vtheta') \Jd(\vtheta') \right) \text{d} \vtheta' \\
		&= \int \nabla_{\vtheta} \left( \nabla_{\vtheta} \nu_{\vtheta}(\vtheta') \Jd(\vtheta') + \nu_{\vtheta}(\vtheta') \nabla_{\vtheta} \Jd (\vtheta') \right) \text{d} \vtheta' \\
		&= \int \nabla_{\vtheta} \left( \nu_{\vtheta}(\vtheta') \left(\nabla_{\vtheta} \log \nu_{\vtheta} (\vtheta') \Jd(\vtheta')\right)\right) \text{d} \vtheta' \\
		&= \int  \nabla_{\vtheta} \nu_{\vtheta}(\vtheta') \nabla_{\vtheta} \log \nu_{\vtheta} (\vtheta') \Jd(\vtheta') + \nu_{\vtheta}(\vtheta') \bigg( \nabla_{\vtheta}^2 \log \nu_{\vtheta}(\vtheta') \Jd(\vtheta') + \nabla_{\vtheta} \log \nu_{\vtheta}(\vtheta') \nabla_{\vtheta} \Jd(\vtheta') \bigg)  \text{d} \vtheta' \\
		&= \int  \nu_{\vtheta}(\vtheta') \bigg( \nabla_{\vtheta} \log \nu_{\vtheta}(\vtheta')\nabla_{\vtheta} \log \nu_{\vtheta}(\vtheta')^\top \Jd(\vtheta') +  \nabla_{\vtheta}^2 \log \nu_{\vtheta}(\vtheta') \Jd(\vtheta') \bigg) \text{d} \vtheta' \\
		&= \E_{\vtheta' \sim \nu_{\vtheta}} \bigg[ \bigg( \nabla_{\vtheta} \log \nu_{\vtheta}(\vtheta')\nabla_{\vtheta} \log \nu_{\vtheta}(\vtheta')^\top + \nabla_{\vtheta}^2 \log \nu_{\vtheta}(\vtheta') \bigg)\Jd(\vtheta') \bigg].
	\end{align*}
	
	Now, given the previous argument, it follows that:
	\begin{align}
		\left\| \nabla^2_{\vtheta} \Jp(\vtheta) \right\|_2 &=  \left\| \E_{\vtheta' \sim \nu_{\vtheta}} \bigg[ \bigg( \nabla_{\vtheta} \log \nu_{\vtheta}(\vtheta')\nabla_{\vtheta} \log \nu_{\vtheta}(\vtheta')^\top + \nabla_{\vtheta}^2 \log \nu_{\vtheta}(\vtheta') \bigg)\Jd(\vtheta') \bigg] \right\|_2 \nonumber\\
		&\leq \E_{\vtheta' \sim \nu_{\vtheta}} \left[ \left\| \nabla_{\vtheta} \log \nu_{\vtheta}(\vtheta') \right\|_2^2 \left| \Jd(\vtheta') \right| + \left\| \nabla_{\vtheta}^2 \log \nu_{\vtheta}(\vtheta') \right\|_2 \left| \Jd(\vtheta') \right| \right] \nonumber\\
		&\leq \frac{R_{\max} (1 - \gamma^T)}{1 - \gamma} \left(\xi_2 + \xi_3\right).\label{eq:J_pb_hessian_bounded:2}
	\end{align}
	We employ Lemma~\ref{lemma:boundsMagic2} to bound $\xi_2$ and $\xi_3$.
\end{proof}

\begin{restatable}[\textbf{\textcolor{vibrantRed}{Global convergence of PGPE - Fixed $\sigma_{\text{P}}$}}]{thr}{pgpeSamCompWGD} \label{thr:pgpe_sam_comp_wgd}
     Under Assumptions~\ref{ass:J_wgd} (with $J_{\simbolo}=\Jp$), \ref{ass:lipMdp}, \ref{ass:lipPol}, \ref{ass:magic}, with a suitable constant step size, to guarantee $\Jd^* - \E[\Jd(\vtheta_K)] \le \epsilon + \beta + 3L_P\sqrt{\dt} \sigma_P$, where $3L_P\sqrt{\dt} \sigma_P = O(\sqrt{\dt}\sigma_\text{P}(1-\gamma)^{-2})$ the sample complexity of PGPE is at most:
    \begin{align}
        NK = \widetilde{O} \left( \frac{\alpha^4 \dt^2}{\sigma_{\text{P}}^4(1-\gamma)^4\epsilon^3}\right).
    \end{align}
    Furthermore, under Assumptions~\ref{ass:smoothMdp} and~\ref{ass:smoothPol}, the same guarantee is obtained with a sample complexity at most:
    \begin{align}
        NK = \widetilde{O} \left( \frac{\alpha^4\dt}{\sigma_{\text{P}}^2(1-\gamma)^5\epsilon^3}\right).
    \end{align}
 \end{restatable}
\begin{proof}
We first apply Theorem~\ref{thr:gen_conv_new} with $J_\simbolo = \Jp$, recalling that the  assumptions enforced in the statement entail those of Theorem~\ref{thr:gen_conv_new}:
\begin{align}
    \Jp^* - \E[\Jp(\vtheta_K)] \le \epsilon + \beta \qquad \text{with}\quad NK = \frac{16\alpha^2L_{2,\text{P}} V_{\text{P}}}{\epsilon^3}\log \frac{\max\{0,\Jp^*-\Jp(\vtheta_0)-\beta\}}{\epsilon^3}.
\end{align}
By Theorem~\ref{thr:deployPB} ($i$) and ($ii$), we have that:
\begin{align}
    \Jd^* - \E \left[\Jd(\vtheta_{K})\right] = (\Jd^* - \Jp^*) + \E \left[ \Jp(\vtheta_{K})-\Jd(\vtheta_{K})\right] +  \Jp^* - \E[\Jp(\vtheta_K)] \le  \Jp^* - \E[\Jp(\vtheta_K)] + 3 L_J \sqrt{\dt} \sigma_{\text{P}}. 
\end{align}
After renaming $L_P\coloneqq L_J$ for the sake of exposition, the result follows by replacing in the sample complexity $NK$ the bounds on $ L_P$, $L_{2,\text{P}}$, and $V_{\text{P}}$ from Table~\ref{tab:magic} under the two set of assumptions and retaining only the desired dependences with the Big-$\widetilde{O}$ notation.

\end{proof}

\begin{restatable}[\textbf{\textcolor{vibrantRed}{Global convergence of PGPE - $\epsilon$-adaptive $\sigma_{\text{P}}$}}]{thr}{pgpeSamCompWGDAdaptive} \label{thr:pgpe_sam_comp_wgdAdaptive}
     Under Assumptions~\ref{ass:J_wgd} (with $J_{\simbolo}=\Jp$), \ref{ass:lipMdp}, \ref{ass:lipPol}, \ref{ass:magic}, with a suitable constant step size and {$\sigma_{\text{P}} = \frac{\epsilon}{6L_P\sqrt{\dt}} = O(\epsilon (1-\gamma)^2 \dt^{-1/2})$}, to guarantee $
         \Jd^* - \E[\Jd(\vtheta_K)] \le \epsilon + \beta$ the sample complexity of PGPE is at most:
     \begin{align}
         NK = \widetilde{O} \left( \frac{\alpha^4\dt^4}{(1-\gamma)^{12}\epsilon^7}\right).
 \end{align}
 Furthermore, under Assumptions~\ref{ass:smoothMdp} and~\ref{ass:smoothPol}, the same guarantee is obtained with a sample complexity at most:
          \begin{align}
         NK = \widetilde{O} \left( \frac{\alpha^4\dt^2}{(1-\gamma)^9\epsilon^5}\right).
 \end{align}
 \end{restatable}

\begin{proof}
    We apply Theorem~\ref{thr:pgpe_sam_comp_wgd} with $\epsilon \leftarrow \epsilon/2$ and set $\sigma_{\text{P}}$ so that:
    \begin{align}
        3L_J \sqrt{\dt} \sigma_{\text{P}} = \frac{\epsilon}{2} \implies  \sigma_{\text{P}} = \frac{\epsilon}{6L_J \sqrt{\dt}}.
    \end{align}
        After renaming $L_P\coloneqq L_J$ for the sake of exposition,
    the result follows substituting this value in the sample complexity and bounding the constant $L_{P}$ as in Table~\ref{tab:magic}.
\end{proof}

\begin{lemma}[Variance of $\widehat{\nabla}_{\vtheta} \Ja(\vtheta)$ bounded] \label{ass:Ja_estim_bounded_variance}
    Under Assumptions~\ref{ass:lipPol} and \ref{ass:magic}, the variance the GPOMDP estimator with batch size $N$ is bounded for every $ \vtheta \in \Theta$ as:
    \begin{align*}
        \Var\left[ \widehat{\nabla}_{\vtheta} \Ja(\vtheta) \right] \leq  \frac{R_{\max}^2 \xi_2 (1-\gamma^T)}{N (1-\gamma)^3} \le   \frac{R_{\max}^2 \xi_2}{N (1-\gamma)^3}.
    \end{align*}
    with $\xi_2 \le c \da \sigma_\text{A}^{-2} L_{\mu}^2$.
\end{lemma}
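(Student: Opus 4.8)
The plan is to follow the template of the PGPE variance bound, Lemma~\ref{ass:Jp_estim_bounded_variance}: reduce to the single-trajectory estimator by i.i.d.-ness, pass from variance to second moment, then use the reward-weighted structure of GPOMDP together with a Cauchy--Schwarz split to extract the effective-horizon factors, and finally bound the expected squared score through the white-noise parametrization. Concretely, since the $N$ trajectories are independent, $\Var[\widehat{\nabla}_{\vtheta}\Ja(\vtheta)] = \tfrac1N\Var[g(\tau_1)] \le \tfrac1N\E[\|g(\tau_1)\|_2^2]$, where $g(\tau) \coloneqq \sum_{t=0}^{T-1}\gamma^t r(\vs_{\tau,t},\va_{\tau,t})\,h_t$ is the per-trajectory GPOMDP estimator with partial-score sums $h_t \coloneqq \sum_{k=0}^t \nabla_{\vtheta}\log\pi_{\vtheta}(\va_{\tau,k}|\vs_{\tau,k})$. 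So it suffices to bound $\E_{\tau}[\|g(\tau)\|_2^2]$ for $\tau$ generated by $\pi_{\vtheta}$.

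By the triangle inequality and $|r|\le R_{\max}$, $\|g(\tau)\|_2 \le R_{\max}\sum_{t=0}^{T-1}\gamma^t\|h_t\|_2$, and then I would apply Cauchy--Schwarz in the weighted form
\begin{align*}
\Big(\sum_{t=0}^{T-1}\gamma^{t/2}\cdot\gamma^{t/2}\|h_t\|_2\Big)^{\!2} \le \Big(\sum_{t=0}^{T-1}\gamma^t\Big)\Big(\sum_{t=0}^{T-1}\gamma^t\|h_t\|_2^2\Big) \le \frac{1-\gamma^T}{1-\gamma}\sum_{t=0}^{T-1}\gamma^t\|h_t\|_2^2 .
\end{align*}
Taking expectations reduces the task to bounding $\E[\|h_t\|_2^2]$. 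The key observation is that the score auto-correlations at distinct time steps vanish: for $j<k$, conditioning on the trajectory up to and including $\vs_{\tau,k}$ makes the $j$-th score measurable, while the conditional expectation of the $k$-th score is $\int\nabla_{\vtheta}\pi_{\vtheta}(\va\,|\,\vs_{\tau,k})\,\de\va = \nabla_{\vtheta}1 = \mathbf{0}$ (the interchange of $\nabla_{\vtheta}$ and $\int$ being licensed by the differentiability of $\phi$ assumed in Assumption~\ref{ass:magic}). Hence $\E[\|h_t\|_2^2] = \sum_{k=0}^t\E[\|\nabla_{\vtheta}\log\pi_{\vtheta}(\va_{\tau,k}|\vs_{\tau,k})\|_2^2] \le (t+1)\,\xi_2$, where $\xi_2$ is any uniform-in-$\vs$ bound on $\E_{\va\sim\pi_{\vtheta}(\cdot|\vs)}[\|\nabla_{\vtheta}\log\pi_{\vtheta}(\va|\vs)\|_2^2]$; substituting and using $\sum_{t\ge 0}\gamma^t(t+1) = (1-\gamma)^{-2}$ yields $\E[\|g(\tau)\|_2^2] \le R_{\max}^2\,\xi_2\,(1-\gamma^T)(1-\gamma)^{-3}$, which together with the first display gives both claimed inequalities (the second using $1-\gamma^T\le 1$).

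It then remains to control $\xi_2$, which is the GPOMDP analogue of Lemma~\ref{lemma:boundsMagic2}. Since the policy is white-noise-based, $\pi_{\vtheta}(\va|\vs) = \phi(\va - \bmu(\vs))$ (Definition~\ref{defi:add}), so the chain rule gives $\nabla_{\vtheta}\log\pi_{\vtheta}(\va|\vs) = (\nabla_{\vtheta}\bmu(\vs))^{\!\top}\,\nabla_{\vepsilon}\log\phi(\vepsilon)$ evaluated at $\vepsilon = \va - \bmu(\vs)$. Assumption~\ref{ass:lipPol} bounds the operator norm of the Jacobian $\nabla_{\vtheta}\bmu(\vs)$ by $L_\mu$, and when $\va\sim\pi_{\vtheta}(\cdot|\vs)$ one has $\vepsilon\sim\Phi_{\da}$; so by Assumption~\ref{ass:magic}$(i)$, $\E_{\va\sim\pi_{\vtheta}(\cdot|\vs)}[\|\nabla_{\vtheta}\log\pi_{\vtheta}(\va|\vs)\|_2^2] \le L_\mu^2\,\E_{\vepsilon\sim\Phi_{\da}}[\|\nabla_{\vepsilon}\log\phi(\vepsilon)\|_2^2] \le c\,\da\,\sigma_{\text{A}}^{-2}L_\mu^2$, uniformly in $\vs$, which is the claimed bound on $\xi_2$.

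I expect the only delicate point to be the interplay between the two effective-horizon sources in the second step: one must choose the Cauchy--Schwarz weighting so that GPOMDP pays only $(1-\gamma)^{-3}$ rather than the $(1-\gamma)^{-4}$ that a cruder use of the triangle inequality would produce --- this is exactly the extra $(1-\gamma)^{-1}$ relative to PGPE highlighted in the main text --- and one must be careful in justifying the vanishing of the cross-correlations $\E[\langle\nabla_{\vtheta}\log\pi_{\vtheta}(\va_{\tau,j}|\vs_{\tau,j}),\nabla_{\vtheta}\log\pi_{\vtheta}(\va_{\tau,k}|\vs_{\tau,k})\rangle]$, $j\ne k$, via the filtration argument and the zero-expected-score identity (which is where the regularity of $\phi$ enters). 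Everything else is a routine geometric-series computation.
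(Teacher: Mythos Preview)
Your proof is correct. The paper's own proof is a one-line citation --- ``It follows from Lemma~29 of \citet{papini2022smoothing} and from the application of Lemma~\ref{lemma:boundsMagic} to bound $\xi_2$'' --- and what you have written is precisely an unpacking of that cited result: the i.i.d.\ reduction, the weighted Cauchy--Schwarz split $\gamma^t = \gamma^{t/2}\cdot\gamma^{t/2}$ to extract the correct $(1-\gamma)^{-3}$ horizon factor, the martingale-difference property of the per-step scores to kill cross terms, and the chain-rule bound on $\xi_2$ (which is exactly Lemma~\ref{lemma:boundsMagic}$(i)$).
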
 
\begin{proof}
    It follows from Lemma 29 of \citet{papini2022smoothing} and from the application of Lemma~\ref{lemma:boundsMagic} to bound $\xi_2$. 
\end{proof}

\begin{lemma}[Bounded $\Ja$ Hessian] \label{lem:Ja_bounded_hessian}
    Under Assumptions~\ref{ass:lipPol},~\ref{ass:smoothPol}, and~\ref{ass:magic} $\forall \vtheta \in \Theta$ it holds that:
    \begin{align*}
        \left\| \nabla^2_{\vtheta} \Ja(\vtheta) \right\|_2 \leq \frac{R_{\max} \left( 1 - \gamma^{T+1}\right)}{(1-\gamma)^2} (\upsilon_2 + \upsilon_3),
    \end{align*}
    where $\upsilon_2 \le c \da \sigma{-2}_{\text{A}} L_\mu^2$ and $\upsilon_3\le c \sigma_{\text{A}}^{-2}L_\mu^2 + c\sqrt{\da} \sigma_{\text{A}}^{-1} L_{2,\mu}$.
    Furthermore,  under Assumptions \ref{ass:lipMdp}, \ref{ass:lipPol}, \ref{ass:smoothMdp}, and \ref{ass:smoothPol}, $\forall \vtheta \in \Theta$ it holds that:
    \begin{align*}
        \left\| \nabla^2_{\vtheta} \Ja(\vtheta) \right\|_2 \le L_2,
        \end{align*}
    where $L_2$ is bounded in Lemma~\ref{lem:L_2_characterization}.
\end{lemma}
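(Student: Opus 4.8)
The plan is to follow the same two-characterization scheme used for $\Jp$ in Lemma~\ref{lem:Jp_bounded_hessian}, adapting it to the fact that in AB exploration the noise enters every action rather than the parameter once per trajectory. For the second, assumption-heavy bound $\|\nabla^2_{\vtheta}\Ja(\vtheta)\|_2 \le L_2$, I would start from the white-noise reformulation $\Ja(\vtheta) = \E_{\underline{\vepsilon}\sim\Phi_{\da}^T}[\Jd(\underline{\bm{\mu}}_{\vtheta}+\underline{\vepsilon})]$ established in Section~\ref{sec:deploy}. Since the noise sequence $\underline{\vepsilon}$ is independent of $\vtheta$, I can differentiate twice under the expectation, obtaining $\nabla^2_{\vtheta}\Ja(\vtheta) = \E_{\underline{\vepsilon}}[\nabla^2_{\vtheta}\Jd(\underline{\bm{\mu}}_{\vtheta}+\underline{\vepsilon})]$, and then bound the integrand uniformly in $\underline{\vepsilon}$ by $L_2$ using (the non-stationary-policy version of) Lemma~\ref{lem:L_2_characterization}, which already packages the composition of the return with $\vtheta\mapsto\bm{\mu}_{\vtheta}$ under Assumptions~\ref{ass:lipMdp}--\ref{ass:smoothPol}; the triangle inequality then transfers the bound to $\Ja$. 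This half is essentially identical to the PB case.

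For the first bound I would instead work with the trajectory-distribution form $\Ja(\vtheta) = \E_{\tau\sim p_{\text{A}}(\cdot;\vtheta)}[R(\tau)]$ and the standard policy-Hessian decomposition (as in \citep{papini2022smoothing}): writing $\nabla_{\vtheta}\log p_{\text{A}}(\tau;\vtheta) = \sum_{t}\nabla_{\vtheta}\log\pi_{\vtheta}(\va_{\tau,t}|\vs_{\tau,t})$, the Hessian of $\Ja$ expands into $\E_{\tau}$ of $R(\tau)$ times the time-sum of outer products of cumulative score terms plus their cumulative Hessians. Using $|R(\tau)|\le R_{\max}(1-\gamma^{T})/(1-\gamma)$ and the causal, zero-mean, independent-across-steps structure of the scores to collapse the time sums into the factor $(1-\gamma^{T+1})/(1-\gamma)^2$, the estimate reduces to controlling $\upsilon_2 \coloneqq \sup_{\vs}\E_{\va\sim\pi_{\vtheta}(\cdot|\vs)}[\|\nabla_{\vtheta}\log\pi_{\vtheta}(\va|\vs)\|_2^2]$ and $\upsilon_3 \coloneqq \sup_{\vs}\E_{\va\sim\pi_{\vtheta}(\cdot|\vs)}[\|\nabla^2_{\vtheta}\log\pi_{\vtheta}(\va|\vs)\|_2]$. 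I would state the time-sum bookkeeping and this reduction as the skeleton, and delegate the score-moment estimates to an auxiliary ``bounds-magic'' lemma, exactly as Lemma~\ref{ass:Ja_estim_bounded_variance} does via Lemma~\ref{lemma:boundsMagic}.

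The main obstacle is precisely that auxiliary step: translating $\upsilon_2,\upsilon_3$ into the claimed closed forms for a white-noise policy $\pi_{\vtheta}(\va|\vs)=\phi(\va-\bm{\mu}_{\vtheta}(\vs))$. The chain rule gives $\nabla_{\vtheta}\log\pi_{\vtheta}(\va|\vs) = -\nabla_{\vtheta}\bm{\mu}_{\vtheta}(\vs)\,\nabla_{\vepsilon}\log\phi(\vepsilon)$ and $\nabla^2_{\vtheta}\log\pi_{\vtheta}(\va|\vs) = \nabla_{\vtheta}\bm{\mu}_{\vtheta}(\vs)\,\nabla^2_{\vepsilon}\log\phi(\vepsilon)\,\nabla_{\vtheta}\bm{\mu}_{\vtheta}(\vs)^{\top} - \nabla^2_{\vtheta}\bm{\mu}_{\vtheta}(\vs)[\nabla_{\vepsilon}\log\phi(\vepsilon)]$, evaluated at $\vepsilon=\va-\bm{\mu}_{\vtheta}(\vs)$. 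Bounding $\|\nabla_{\vtheta}\bm{\mu}_{\vtheta}(\vs)\|_2\le L_{\mu}$ (Assumption~\ref{ass:lipPol}), $\|\nabla^2_{\vtheta}\bm{\mu}_{\vtheta}(\vs)\|_2\le L_{2,\mu}$ (Assumption~\ref{ass:smoothPol}), and invoking Assumption~\ref{ass:magic} to get $\E[\|\nabla_{\vepsilon}\log\phi(\vepsilon)\|_2^2]\le c\,\da\sigma_{\text{A}}^{-2}$ (hence $\E[\|\nabla_{\vepsilon}\log\phi(\vepsilon)\|_2]\le\sqrt{c\da}\,\sigma_{\text{A}}^{-1}$ by Jensen) and $\E[\|\nabla^2_{\vepsilon}\log\phi(\vepsilon)\|_2]\le c\,\sigma_{\text{A}}^{-2}$, yields $\upsilon_2\le c\,\da\,\sigma_{\text{A}}^{-2}L_{\mu}^2$ and $\upsilon_3\le c\,\sigma_{\text{A}}^{-2}L_{\mu}^2 + c\,\sqrt{\da}\,\sigma_{\text{A}}^{-1}L_{2,\mu}$, as claimed (after absorbing $\sqrt{c}$ into $c$). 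The delicate points are the matrix-norm bookkeeping in the composition (submultiplicativity of the spectral norm, and bounding the third-order tensor $\nabla^2_{\vtheta}\bm{\mu}_{\vtheta}(\vs)$ contracted against a vector) and making the discounted time-sum argument tight enough to produce $(1-\gamma^{T+1})/(1-\gamma)^2$ rather than a looser $T/(1-\gamma)$; everything else is routine and mirrors the $\Jp$ proof.
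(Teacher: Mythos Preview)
Your proposal is correct and follows essentially the same route as the paper: for the first bound the paper invokes a finite-horizon adaptation of Lemma~4.4 of \citet{yuan2022general} to obtain the $(1-\gamma)^{-2}$ time-sum factor multiplied by $(\upsilon_2+\upsilon_3)$, and then applies Lemma~\ref{lemma:boundsMagic} (your ``bounds-magic'' lemma) to translate the score moments into the claimed closed forms via exactly the chain-rule computation you wrote down; for the second bound the paper uses precisely your white-noise reformulation $\Ja(\vtheta)=\E_{\underline{\vepsilon}}[\Jd(\underline{\bm{\mu}}_{\vtheta}+\underline{\vepsilon})]$, pushes $\nabla_{\vtheta}^2$ inside, and bounds the integrand by $L_2$ via Lemma~\ref{lem:L_2_characterization}. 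The only cosmetic difference is that the paper outsources the discounted time-sum bookkeeping to the cited reference rather than spelling it out.
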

\begin{proof}
    Under Assumption~\ref{ass:magic}, by a slight modification of the proof of Lemma~4.4 by \citet{yuan2022general} (in which we consider a finite horizon $T$), it follows that:
    \begin{align*}
        \left\| \nabla_{\vtheta}^2 \Ja(\vtheta) \right\|_2 \leq \frac{R_{\max} \left( 1 - (T+1) \gamma^T + T \gamma^{T+1}\right)}{(1-\gamma)^2} (\upsilon_1 + \upsilon_2) \leq \frac{R_{\max} \left( 1 - \gamma^{T}\right)}{(1-\gamma)^2} (\upsilon_1 + \upsilon_2).
    \end{align*}

    As in the proof of Theorem~\ref{thr:assImpl}, we introduce the following convenient expression for the trajectory density function having fixed a sequence of noise $\underline{\vepsilon} \sim \Phi_{\da}^T$:
    \begin{align*}
        p_{\text{D}}(\tau; \underline{\bm{\mu}}_{\vtheta} + \underline{\vepsilon}) = \rho_0(s_{\tau, 0}) \prod_{t=0}^{T-1}  p(s_{\tau, t+1} | s_{\tau, t}, \bmu(s_{\tau, t}) + \vepsilon_t).
    \end{align*}

    This allows us to express the function $\Ja(\vtheta)$, for a generic $\vtheta \in \Theta$, as:
    \begin{align*}
        \Ja(\vtheta) = \E_{\underline{\vepsilon} \sim \Phi_{\da}^T} \left[\int_{\tau}  p_{\text{D}}(\tau; \underline{\bm{\mu}}_{\vtheta} + \underline{\vepsilon})  \sum_{t=0}^{T-1} \gamma^t r(s_{\tau, t}, \bm{\mu}_{\vtheta}(s_{\tau, t})+ \vepsilon_t ) \de \tau \right].
    \end{align*}

    With a slight abuse of notation, let us call $\Jd(\underline{\bm{\mu}_{\vtheta}} + \underline{\vepsilon})$ the following quantity:
    \begin{align*}
        \Jd(\underline{\bm{\mu}_{\vtheta}} + \underline{\vepsilon}) \coloneqq \int_{\tau}  p_{\text{D}}(\tau; \underline{\bm{\mu}}_{\vtheta} + \underline{\vepsilon})  \sum_{t=0}^{T-1} \gamma^t r(s_{\tau, t}, \bm{\mu}_{\vtheta}(s_{\tau, t})+ \vepsilon_t ) \de \tau.
    \end{align*}

    Now, considering the norm of the hessian \wrt $\vtheta$ of $\Ja$, we have that:
    \begin{align*}
        \left\| \nabla_{\vtheta}^2 \Ja(\vtheta) \right\|_2 \leq \E_{\underline{\vepsilon} \sim \Phi_{\da}^T} \left[ \left\| \nabla_{\vtheta}^2 \Jd(\underline{\bm{\mu}_{\vtheta}} + \underline{\vepsilon}) \right\|_2 \right] \leq L_{2},
    \end{align*}
    which follows from Assumptions~\ref{ass:smoothJ}.
\end{proof}

\begin{restatable}[\textbf{\textcolor{vibrantBlue}{Global convergence of GPOMDP - Fixed $\sigma_{\text{A}}$}}]{thr}{gpomdpSamCompWGD} \label{thr:gpomgp_sam_comp_wgd}
      Under Assumptions~\ref{ass:J_wgd} (with $J_{\simbolo}=\Ja$), \ref{ass:lipMdp}, \ref{ass:lipPol}, \ref{ass:smoothPol}, \ref{ass:magic}, with a suitable constant step size, to guarantee $
         \Jd^* - \E[\Jd(\vtheta_K)] \le \epsilon + \beta + 3L_A\sqrt{\da} \sigma_\text{A}
$, where $3L_A\sqrt{\da} \sigma_\text{A} = O(\sqrt{\da}\sigma_\text{A}(1-\gamma)^{-2})$ the sample complexity of GPOMDP is at most:
     \begin{align}
         NK = \widetilde{O} \left( \frac{\alpha^4\da^2}{\sigma_{\text{A}}^4(1-\gamma)^5\epsilon^3}\right).
 \end{align}
 Furthermore, under Assumption~\ref{ass:smoothMdp}, the same guarantee is obtained with a sample complexity at most:
          \begin{align}
         NK = \widetilde{O} \left( \frac{\alpha^4\da}{\sigma_{\text{A}}^2(1-\gamma)^6\epsilon^3}\right).
 \end{align}
 \end{restatable}

\begin{proof}
We first apply Theorem~\ref{thr:gen_conv_new} with $J_\simbolo = \Ja$, recalling that the  assumptions enforced in the statement entail those of Theorem~\ref{thr:gen_conv_new}:
\begin{align}
    \Ja^* - \E[\Ja(\vtheta_K)] \le \epsilon + \beta \qquad \text{with}\quad NK = \frac{16\alpha^2L_{2,\text{A}} V_{\text{A}}}{\epsilon^3}\log \frac{\max\{0,\Ja^*-\Ja(\vtheta_0)-\beta\}}{\epsilon^3}.
\end{align}
By Theorem~\ref{thr:deployAB} ($i$) and ($ii$), we have that:
\begin{align}
    \Jd^* - \E \left[\Jd(\vtheta_{K})\right] = (\Jd^* - \Ja^*) + \E \left[\Ja(\vtheta_{K}) - \Jd(\vtheta_{K})\right] +  \Ja^* - \E[\Ja(\vtheta_K)] \le  \Ja^* - \E[\Ja(\vtheta_K)] + 3 L \sqrt{\da} \sigma_{\text{A}}. 
\end{align}
After renaming $L_A\coloneqq L$ for the sake of exposition, the result follows by replacing in the sample complexity $NK$ the bounds on $ L_A$, $L_{2,\text{A}}$, and $V_{\text{A}}$ from Table~\ref{tab:magic} under the two set of assumptions and retaining only the desired dependences with the Big-$\widetilde{O}$ notation.
\end{proof}

\begin{restatable}[\textbf{\textcolor{vibrantBlue}{Global convergence of GPOMDP - $\epsilon$-adaptive $\sigma_{\text{P}}$}}]{thr}{gpomdpSamCompWGDAdaptive} \label{thr:pg_sam_comp_wgdAdaptive}
   Under Assumptions~\ref{ass:J_wgd} (with $J_{\simbolo}=\Ja$), \ref{ass:lipMdp}, \ref{ass:lipPol}, \ref{ass:smoothPol}, \ref{ass:magic}, with a suitable constant step size and {setting $\sigma_{\text{A}} = \frac{\epsilon}{6L_A\sqrt{\da}} = O(\epsilon (1-\gamma)^2 \da^{-1/2})$}, to guarantee $\Jd^* - \E[\Jd(\vtheta_K)] \le \epsilon + \beta$ the sample complexity of GPOMDP is at most:
     \begin{align}
         NK = \widetilde{O} \left( \frac{\alpha^4\da^4}{(1-\gamma)^{13}\epsilon^7}\right).
    \end{align}
    Furthermore, under Assumption~\ref{ass:smoothMdp}, the same guarantee is obtained with a sample complexity at most:
    \begin{align}
         NK = \widetilde{O} \left( \frac{\alpha^4\da^2}{(1-\gamma)^{10}\epsilon^5}\right).
    \end{align}
 \end{restatable}

\begin{proof}
    We apply Theorem~\ref{thr:gpomgp_sam_comp_wgd} with $\epsilon \leftarrow \epsilon/2$ and set $\sigma_{\text{A}}$ so that:
    \begin{align}
        3L \sqrt{\da} \sigma_{\text{A}} = \frac{\epsilon}{2} \implies  \sigma_{\text{A}} = \frac{\epsilon}{6L \sqrt{\da}}.
    \end{align}
    After renaming $L_A\coloneqq L$ for the sake of exposition, the result follows substituting this value in the sample complexity and bounding the constant $L_A$ as in Table~\ref{tab:magic}.
\end{proof}

\subsection{Proofs from Section~\ref{sec:inherited}}

\label{apx:inherited:pgpe}
\begin{lemma} \label{lem:Jd_Jp_mixed_weak_grad_dom}
    Under Assumptions~\ref{ass:lipMdp}, \ref{ass:lipPol}, \ref{ass:smoothMdp}, \ref{ass:smoothPol}, \ref{ass:J_wgdD}, and using a hyperpolicy complying with Definition~\ref{defi:add}, $\forall \vtheta \in \Theta$ it holds that:
    \begin{align*}
        \Jd^* - \Jd(\vtheta) \leq \dalpha \| \nabla_{\vtheta} \Jp(\vtheta) \|_2 + \dbeta + \dalpha L_2 \psigma \sqrt{\dt}.
    \end{align*}
\end{lemma}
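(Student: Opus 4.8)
The plan is to lift the weak gradient domination of $\Jd$ (Assumption~\ref{ass:J_wgdD}) to one featuring $\nabla_{\vtheta}\Jp$, by controlling $\|\nabla_{\vtheta}\Jd(\vtheta)-\nabla_{\vtheta}\Jp(\vtheta)\|_2$ through the smoothness of $\Jd$. First I would recall that for a white-noise hyperpolicy (Definition~\ref{defi:addHype}) one has $\Jp(\vtheta)=\E_{\vepsilon\sim\Phi_{\dt}}[\Jd(\vtheta+\vepsilon)]$; since Assumptions~\ref{ass:lipMdp}, \ref{ass:smoothMdp}, \ref{ass:lipPol}, \ref{ass:smoothPol} make $\Jd$ differentiable and $L_2$-Lipschitz smooth (this is exactly Assumption~\ref{ass:smoothJ}, with the constant bounded as in Lemma~\ref{lem:L_2_characterization}), dominated convergence lets me differentiate under the expectation, so that $\nabla_{\vtheta}\Jp(\vtheta)=\E_{\vepsilon\sim\Phi_{\dt}}[\nabla_{\vtheta}\Jd(\vtheta+\vepsilon)]$.

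Next I would estimate the gap between the two gradients using Jensen's inequality, the $L_2$-smoothness of $\Jd$, Jensen again, and finally the white-noise variance bound of Definition~\ref{defi:zmbv}:
\[
\|\nabla_{\vtheta}\Jd(\vtheta)-\nabla_{\vtheta}\Jp(\vtheta)\|_2 \le \E_{\vepsilon\sim\Phi_{\dt}}\!\big[\|\nabla_{\vtheta}\Jd(\vtheta)-\nabla_{\vtheta}\Jd(\vtheta+\vepsilon)\|_2\big] \le L_2\,\E_{\vepsilon\sim\Phi_{\dt}}[\|\vepsilon\|_2] \le L_2\sqrt{\E_{\vepsilon\sim\Phi_{\dt}}[\|\vepsilon\|_2^2]} \le L_2\sqrt{\dt}\,\psigma .
\]
Combining this with the triangle inequality $\|\nabla_{\vtheta}\Jd(\vtheta)\|_2 \le \|\nabla_{\vtheta}\Jp(\vtheta)\|_2 + L_2\sqrt{\dt}\,\psigma$ and plugging into Assumption~\ref{ass:J_wgdD} gives
\[
\Jd^* - \Jd(\vtheta) \le \dalpha\|\nabla_{\vtheta}\Jd(\vtheta)\|_2 + \dbeta \le \dalpha\|\nabla_{\vtheta}\Jp(\vtheta)\|_2 + \dbeta + \dalpha L_2 \sqrt{\dt}\,\psigma ,
\]
which is the claim.

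The only genuinely non-routine points are (i) justifying the exchange of gradient and expectation, which follows from the (local) Lipschitzness of $\nabla_{\vtheta}\Jd$ implied by $L_2$-smoothness together with a standard dominated-convergence argument, and (ii) making sure that the smoothness constant appearing in the bound is precisely the $L_2$ characterized in Lemma~\ref{lem:L_2_characterization} and not some coarser quantity; both are immediate given the regularity results already developed in the paper. I expect step (i)---or rather, exhibiting an integrable dominating function for the differentiation under the integral---to be the most delicate part, but it is entirely standard and causes no real difficulty here.
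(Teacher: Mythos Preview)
Your proof is correct and follows essentially the same approach as the paper's: both differentiate under the expectation to get $\nabla_{\vtheta}\Jp(\vtheta)=\E_{\vepsilon}[\nabla_{\vtheta}\Jd(\vtheta+\vepsilon)]$, bound $\|\nabla_{\vtheta}\Jd(\vtheta)-\nabla_{\vtheta}\Jp(\vtheta)\|_2$ by $L_2\sqrt{\dt}\,\psigma$ via smoothness and the white-noise variance bound, and then plug into Assumption~\ref{ass:J_wgdD}. The only cosmetic difference is that the paper writes the gradient increment via a Taylor expansion with Lagrange remainder, $\nabla_{\vtheta}\Jd(\vtheta+\vepsilon)=\nabla_{\vtheta}\Jd(\vtheta)+\vepsilon^{\top}\nabla_{\vtheta}^2\Jd(\tilde{\vtheta}_{\vepsilon})$, whereas you invoke the $L_2$-Lipschitz property of $\nabla_{\vtheta}\Jd$ directly; your route is slightly cleaner since it does not require the Hessian to exist.
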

\begin{proof}
    We start by observing that
    \begin{align*}
        \Jp(\vtheta) = \E_{\vtheta' \sim \nu_{\vtheta}} \left[ \Jd(\vtheta') \right] = \E_{\bm{\vepsilon} \sim \Phi} \left[ \Jd(\vtheta + \bm{\vepsilon}) \right].  
    \end{align*}

    From this fact, we can proceed as follows:
    \begin{align*}
        \nabla_{\vtheta} \Jp(\vtheta) &= \nabla_{\vtheta} \E_{\bm{\vepsilon} \sim \Phi} \left[ \Jd(\vtheta + \bm{\vepsilon}) \right] \\
        &= \E_{\bm{\vepsilon} \sim \Phi} \left[ \nabla_{\vtheta} \Jd(\vtheta + \bm{\vepsilon}) \right].
    \end{align*}

    For what follows, we define $\Tilde{\vtheta}_{\bm{\vepsilon}}$ as an intermediate parameter configuration between $\vtheta$ and $\vtheta + \bm{\vepsilon}$. 
    More formally, let $\lambda \in [0,1]$, then $ \Tilde{\vtheta}_{\bm{\vepsilon}} = \lambda \vtheta + (1 - \lambda) (\vtheta + \bm{\vepsilon})$.   
    We can proceed by rewriting the term $\nabla_{\vtheta} \Jd(\vtheta + \bm{\vepsilon})$ exploiting the first-order Taylor expansion centered in $\bm{\vepsilon}$: there exists a $\lambda\in[0,1]$ such that
    \begin{align*}
        \E_{\bm{\vepsilon} \sim g} \left[ \nabla_{\vtheta} \Jd(\vtheta + \bm{\vepsilon}) \right] &= \E_{\bm{\vepsilon} \sim g} \left[ \nabla_{\vtheta} \Jd(\vtheta) + \bm{\vepsilon}^T \nabla_{\vtheta}^2 \Jd(\Tilde{\vtheta}_{\bm{\vepsilon}}) \right] \\
        &= \nabla_{\vtheta} \Jd(\vtheta) + \E_{\bm{\vepsilon} \sim \Phi} \left[ \bm{\vepsilon}^T \nabla_{\vtheta}^2 \Jd(\Tilde{\vtheta}_{\bm{\vepsilon}}) \right].
    \end{align*}

    Now, we can consider the 2-norm of the gradient:
    \begin{align}
        \| \nabla_{\vtheta} \Jp(\vtheta)\|=\left\| \nabla_{\vtheta} \Jd(\vtheta) + \E_{\bm{\vepsilon} \sim \Phi} \left[ \bm{\vepsilon}^T \nabla^2_{\vtheta} \Jd(\Tilde{\vtheta}_{\bm{\vepsilon}}) \right] \right\|_2 &\geq \| \nabla_{\vtheta} \Jd(\vtheta) \|_2 - \left\| \E_{\bm{\vepsilon} \sim \Phi} \left[ \bm{\vepsilon}^T \nabla^2_{\vtheta} \Jd(\Tilde{\vtheta}_{\bm{\vepsilon}}) \right] \right\|_2 \nonumber\\
        &\geq \| \nabla_{\vtheta} \Jd(\vtheta) \|_2 - L_2 \E_{\bm{\vepsilon} \sim \Phi} \left[ \| \bm{\vepsilon} \|_2 \right] \label{eq:Jd_Jp_mixed_weak_grad_dom:1}\\
        &\geq \frac{1}{\dalpha} \left( \Jd^* - \Jd(\vtheta) \right) - \frac{\dbeta}{\dalpha} - L_2 \E_{\bm{\vepsilon} \sim \Phi} \left[ \| \bm{\vepsilon} \|_2 \right] \label{eq:Jd_Jp_mixed_weak_grad_dom:2}\\
        &\geq \frac{1}{\dalpha} \left( \Jd^* - \Jd(\vtheta) \right) - \frac{\dbeta}{\dalpha} - L_2 \psigma \sqrt{\dt}, \nonumber
    \end{align}
    where Equation~(\ref{eq:Jd_Jp_mixed_weak_grad_dom:1}) follows from Assumption~\ref{ass:smoothJ}, and Equation~(\ref{eq:Jd_Jp_mixed_weak_grad_dom:2}) follows from Assumption~\ref{ass:J_wgd}.
    Thus, it simply follows that:
    \begin{align*}
        \Jd^* - \Jd(\vtheta) \leq \dalpha \| \nabla_{\vtheta} \Jp(\vtheta) \|_2 + \dbeta + \dalpha L_2 \psigma \sqrt{\dt}.
    \end{align*}
\end{proof}

\pgpeInheritedWGD*
\begin{proof}
	We recall that under the assumptions in the statement, the results of Lemma~\ref{lem:Jd_Jp_mixed_weak_grad_dom} and of Theorem~\ref{thr:deployPB} hold
	In particular, we need the result from Theorem~\ref{thr:deployPB}, saying that $\forall \vtheta \in \Theta$ it holds that
	\begin{align}
		\Jp(\vtheta) - L_J \psigma \sqrt{\dt} \leq \Jd(\vtheta) \leq \Jp(\vtheta) + L_J \psigma \sqrt{\dt}. \label{eq:Jp_weak_gradient_domination_eq_1}
	\end{align}
	Thus, using the result of Lemma~\ref{lem:Jd_Jp_mixed_weak_grad_dom}, we need to work on the left-hand side of the following inequality:
	\begin{align*}
		\Jd^* - \Jd(\vtheta) \leq \dalpha \| \nabla_{\vtheta} \Jp(\vtheta) \|_2 + \dbeta + \dalpha L_2 \psigma \sqrt{\dt}.
	\end{align*}

    Moreover, by definition of $\Jp$, we have that $\Jd^* \geq \Jp^*$.
    Thus, it holds that:
    \begin{align*}
        \Jd^* - \Jd(\vtheta) &\geq \Jp^* - \Jp(\vtheta) \\
        &\geq \Jp^* - \Jp(\vtheta) - L_J \sqrt{\dt},
    \end{align*}
    where the last line follows from Line~(\ref{eq:Jp_weak_gradient_domination_eq_1}). We rename $L_P\coloneqq L_J$ in the statement.

\end{proof}

\begin{lemma} \label{lem:Jd_Ja_mixed_weak_grad_dom}
    Under Assumptions~\ref{ass:J_wgdD}, \ref{ass:lipMdp}, \ref{ass:lipPol}, \ref{ass:smoothMdp}, \ref{ass:smoothPol}, using a policy complying
with Definition~\ref{defi:add}, $\forall \vtheta \in \Theta$, it holds that:
    \begin{align*}
        \Jd^* - \Jd(\vtheta) \leq \dalpha \| \nabla_{\vtheta} \Ja(\vtheta) \|_2 + \dbeta + \dalpha \psi \asigma \sqrt{\da},
    \end{align*}

    where
    \begin{align*}
        \psi = L_{\mu}\left(\frac{L_{ p}^2 R_{\max} \gamma}{(1 - \gamma)^4} + \frac{(L_{r} L_{ p} + R_{\max} L_{2,  p} + L_{ p} L_{r} \gamma)}{(1 - \gamma)^2} + \frac{L_{2, r}}{1-\gamma}\right)(1 - \gamma^{T}).
    \end{align*}
\end{lemma}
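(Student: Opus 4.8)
The plan is to mimic exactly the structure of the proof of Lemma~\ref{lem:Jd_Jp_mixed_weak_grad_dom}, replacing the perturbation of the parameter $\vtheta$ with the non-stationary perturbation of the played actions. First I would recall that, since the policy complies with Definition~\ref{defi:add}, we have $\Ja(\vtheta) = \E_{\underline{\vepsilon}\sim\Phi_{\da}^T}[\Jd(\underline{\bm{\mu}}_{\vtheta}+\underline{\vepsilon})]$, so that, differentiating under the integral sign, $\nabla_{\vtheta}\Ja(\vtheta) = \E_{\underline{\vepsilon}\sim\Phi_{\da}^T}[\nabla_{\vtheta}\Jd(\underline{\bm{\mu}}_{\vtheta}+\underline{\vepsilon})]$. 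The idea is then to Taylor-expand $\nabla_{\vtheta}\Jd(\underline{\bm{\mu}}_{\vtheta}+\underline{\vepsilon})$ around $\underline{\vepsilon}=\underline{\mathbf 0}$, obtaining for each noise realization an intermediate point $\widetilde{\vtheta}_{\underline{\vepsilon}}$ on the segment joining $\underline{\bm{\mu}}_{\vtheta}$ and $\underline{\bm{\mu}}_{\vtheta}+\underline{\vepsilon}$ such that $\nabla_{\vtheta}\Jd(\underline{\bm{\mu}}_{\vtheta}+\underline{\vepsilon}) = \nabla_{\vtheta}\Jd(\vtheta) + (\text{Hessian-type term at }\widetilde{\vtheta}_{\underline{\vepsilon}})\cdot\underline{\vepsilon}$. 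Taking expectations, the zero-mean property of $\Phi_{\da}$ is what kills the first-order contribution in the limit and leaves only a term controlled by the magnitude of $\underline{\vepsilon}$.

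Next I would bound the remainder. Applying the reverse triangle inequality as in~\eqref{eq:Jd_Jp_mixed_weak_grad_dom:1},
\begin{align*}
    \|\nabla_{\vtheta}\Ja(\vtheta)\|_2 \ge \|\nabla_{\vtheta}\Jd(\vtheta)\|_2 - \Big\| \E_{\underline{\vepsilon}\sim\Phi_{\da}^T}\big[\text{(remainder)}\cdot\underline{\vepsilon}\big]\Big\|_2.
\end{align*}
The remainder norm must be bounded by $\psi\,\asigma\sqrt{\da}$; the constant $\psi = O((1-\gamma)^{-4})$ in the statement is essentially the Lipschitz constant of the map $\underline{\vepsilon}\mapsto\nabla_{\vtheta}\Jd(\underline{\bm{\mu}}_{\vtheta}+\underline{\vepsilon})$ (a kind of ``second-order sensitivity'' of $\Jd$ to action perturbations along the whole trajectory), composed with the policy Lipschitz constant $L_{\mu}$—hence the $L_{\mu}$ factor in $\psi$. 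The structure $\psi = L_{\mu}\big(\tfrac{L_p^2 R_{\max}\gamma}{(1-\gamma)^4} + \dots\big)(1-\gamma^T)$ is obtained by differentiating $\Jd(\underline{\bm{\mu}}_{\vtheta}+\underline{\vepsilon})$ twice (once w.r.t.\ $\vtheta$, once w.r.t.\ the noise) and collecting the score-function / reward / transition contributions over the horizon, exactly as in the policy-gradient Hessian computations underlying Table~\ref{tab:magic} and Lemma~\ref{lem:L_2_characterization}. I would then use the Cauchy--Schwarz step $\E[\|\underline{\vepsilon}\|_2] \le \sqrt{\E[\|\underline{\vepsilon}\|_2^2]} \le \sqrt{T\da\sigma^2}$—but, since the relevant Lipschitz constants are time-discounted, the $\gamma^t$ weights reabsorb the horizon and one is left with the $\sqrt{\da}\,\asigma$ dependence (the $(1-\gamma^T)$ prefactor in $\psi$ tracking the geometric sum).

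Finally I would plug in Assumption~\ref{ass:J_wgdD} as in~\eqref{eq:Jd_Jp_mixed_weak_grad_dom:2}, i.e.\ $\|\nabla_{\vtheta}\Jd(\vtheta)\|_2 \ge \tfrac{1}{\dalpha}(\Jd^*-\Jd(\vtheta)) - \tfrac{\dbeta}{\dalpha}$, and rearrange to obtain the claimed inequality $\Jd^*-\Jd(\vtheta)\le\dalpha\|\nabla_{\vtheta}\Ja(\vtheta)\|_2 + \dbeta + \dalpha\psi\asigma\sqrt{\da}$. The main obstacle I anticipate is the derivation of the explicit value of $\psi$: unlike the PB case, where the noise perturbs the parameter directly and one can invoke the smoothness constant $L_2$ of $\Jd$ as a black box (Assumption~\ref{ass:smoothJ}), here the noise acts on the \emph{actions} at every step independently, so one cannot directly reuse $L_2$; instead one needs a careful term-by-term differentiation of the trajectory density and reward along the horizon, keeping track of the discount factors, which is where the extra $(1-\gamma)^{-1}$ (relative to the $(1-\gamma)^{-3}$ of $L_2$) comes from. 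I would organize that computation as an auxiliary estimate on $\|\nabla_{\vtheta}\nabla_{\underline{\vepsilon}}\Jd(\underline{\bm{\mu}}_{\vtheta}+\underline{\vepsilon})\|$ before assembling the final bound.
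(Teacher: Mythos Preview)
Your proposal is correct and follows essentially the same route as the paper: Taylor-expand the AB objective in the action noise $\underline{\vepsilon}$, apply the reverse triangle inequality to relate $\|\nabla_{\vtheta}\Ja(\vtheta)\|_2$ and $\|\nabla_{\vtheta}\Jd(\vtheta)\|_2$, invoke the WGD of $\Jd$, and bound the mixed-derivative remainder $\sum_{t}\|\nabla_{\vtheta}\nabla_{\vepsilon_t}\Jd(\underline{\bm{\mu}}_{\vtheta}+\underline{\vepsilon})\|_2$ by a term-by-term differentiation of the trajectory density and reward, which yields exactly the constant $\psi$. The only cosmetic difference is that the paper Taylor-expands $\Ja$ first and then differentiates in $\vtheta$, whereas you differentiate first and then expand; also note that the zero-mean property of $\Phi_{\da}$ is not actually used at this step (the remainder is controlled purely through $\E[\|\vepsilon_t\|_2]\le\sqrt{\da}\,\asigma$ applied per time step via Cauchy--Schwarz, with the discount weights absorbing the horizon sum).
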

\begin{proof}
    As in the proof of Theorem~\ref{thr:assImpl}, we introduce the following convenient expression for the trajectory density function having fixed a sequence of noise $\underline{\vepsilon} \sim \Phi_{\da}^T$:
    \begin{align}
        p_{\text{D}}(\tau; \underline{\bm{\mu}}_{\vtheta} + \underline{\vepsilon}) = \rho_0(s_{\tau, 0}) \prod_{t=0}^{T-1}  p(s_{\tau, t+1} | s_{\tau, t}, \bmu(s_{\tau, t}) + \vepsilon_t).
    \end{align}
    Also in this case, we denote with $ p_{\text{D}}(\tau_{0:l}; \underline{\bm{\mu}}_{\vtheta} + \underline{\vepsilon})$ the density function of a trajectory prefix of length $l$:
    \begin{align}
       p_{\text{D}}(\tau_{0:l}; \underline{\bm{\mu}}_{\vtheta} + \underline{\vepsilon}) = \rho_0(s_{\tau, 0}) \prod_{t=0}^{l-1}  p(s_{\tau, t+1} | s_{\tau, t}, \bmu(s_{\tau, t}) + \vepsilon_t).  
    \end{align}
    From the proof of Proposition~\ref{thr:assImpl},
    considering a generic parametric configuration $\vtheta \in \Theta$, we can write the AB performance index $\Ja(\vtheta)$ as:
    \begin{align*}
        \Ja(\vtheta) & = \E_{\underline{\vepsilon} \sim \Phi_{\da}^T} \left[\int_{\tau}  p_{\text{D}}(\tau; \underline{\bm{\mu}}_{\vtheta} + \underline{\vepsilon})  \sum_{t=0}^{T-1} \gamma^t r(s_t, \bm{\mu}_{\vtheta}(s_t)+ \vepsilon_t ) \de \tau \right] \\
        & = \E_{\underline{\vepsilon} \sim \Phi_{\da}^T} \left[\underbrace{\sum_{t=0}^{T-1}  \int_{\tau_{0:t}}  p_{\text{D}}(\tau_{0:t}; \underline{\bm{\mu}}_{\vtheta} + \underline{\vepsilon})  \gamma^t r(s_t, \bm{\mu}_{\vtheta}(s_t)+ \vepsilon_t ) \de \tau_{0:t}}_{\eqqcolon f(\underline{\vepsilon})}\right],
    \end{align*}

    moreover, by using the Taylor expansion centered in $\underline{\vepsilon}=\underline{\bm{0}}$, for $\widetilde{\underline{\vepsilon}} = x \underline{\vepsilon}$ (for some $x \in [0,1]$) the following holds:
    \begin{align*}
        \Ja(\vtheta) &= \E_{\underline{\vepsilon}\sim \Phi_{\da}^T}\left[
        f(\underline{\bm{0}}) + \sum_{t=0}^{T-1} \vepsilon_t^\top \nabla_{\vepsilon_t} f(\underline{\vepsilon})\rvert_{\underline{\vepsilon} = \widetilde{\underline{\vepsilon}}}\right] \\
        &=\Jd(\vtheta) + \sum_{t=0}^{T-1} \E_{\underline{\vepsilon}\sim \Phi_{\da}^T}[\vepsilon_t^\top \nabla_{\vepsilon_t} f(\underline{\vepsilon})\rvert_{\underline{\vepsilon} = \widetilde{\underline{\vepsilon}}}].
    \end{align*}

    Here, we are interested in the gradient of $\Ja$:
    \begin{align*}
        \nabla_{\vtheta} \Ja(\vtheta) &= \nabla_{\vtheta} \Jd(\vtheta) +\sum_{t=0}^{T-1} \nabla_{\vtheta} \E_{\underline{\vepsilon}\sim \Phi_{\da}^T}[\vepsilon_t^\top \nabla_{\vepsilon_t} f(\underline{\vepsilon})\rvert_{\underline{\vepsilon} = \widetilde{\underline{\vepsilon}}}] \\
        &= \nabla_{\vtheta} \Jd(\vtheta) + \sum_{t=0}^{T-1} \E_{\underline{\vepsilon}\sim \Phi_{\da}^T}[\vepsilon_t^\top \nabla_{\vtheta} \nabla_{\vepsilon_t} f(\underline{\vepsilon})\rvert_{\underline{\vepsilon} = \widetilde{\underline{\vepsilon}}}].
    \end{align*}

    Now, considering the norm of the gradient we have:
    \begin{align*}
        \left\| \nabla_{\vtheta} \Ja(\vtheta)\right\|_2 &\geq \left\| \nabla_{\vtheta} \Jd(\vtheta) \right\|_2 - \left\| \sum_{t=0}^{T-1} \E_{\underline{\vepsilon}\sim \Phi_{\da}^T}[\vepsilon_t^\top \nabla_{\vtheta} \nabla_{\vepsilon_t} f(\underline{\vepsilon})\rvert_{\underline{\vepsilon} = \widetilde{\underline{\vepsilon}}}] \right\|_2 \\
        &\geq \frac{1}{\dalpha} \left( \Jd^* - \Jd(\vtheta) \right) - \frac{\dbeta}{\dalpha} - \left\| \sum_{t=0}^{T-1} \E_{\underline{\vepsilon}\sim \Phi_{\da}^T}[\vepsilon_t^\top \nabla_{\vtheta} \nabla_{\vepsilon_t} f(\underline{\vepsilon})\rvert_{\underline{\vepsilon} = \widetilde{\underline{\vepsilon}}}] \right\|_2 \\
        &\geq \frac{1}{\dalpha} \left( \Jd^* - \Jd(\vtheta) \right) - \frac{\dbeta}{\dalpha} - \sum_{t=0}^{T-1} \E_{{\vepsilon_t}\sim \Phi_{\da}}[\|\vepsilon_t\|_2^2]^{1/2} \E_{\underline{\vepsilon}\sim \Phi_{\da}^T}[ \| \nabla_{\vtheta} \nabla_{\vepsilon_t} f(\underline{\vepsilon})\rvert_{\underline{\vepsilon} = \widetilde{\underline{\vepsilon}}}\|_2^2]^{1/2} \\
        &\geq \frac{1}{\dalpha} \left( \Jd^* - \Jd(\vtheta) \right) - \frac{\dbeta}{\dalpha} - \asigma \sqrt{\da} \sum_{t=0}^{T-1} \E_{\underline{\vepsilon}\sim \Phi_{\da}^T}[ \| \nabla_{\vtheta} \nabla_{\vepsilon_t} f(\underline{\vepsilon})\rvert_{\underline{\vepsilon} = \widetilde{\underline{\vepsilon}}}\|_2^2]^{1/2},
    \end{align*}    
    where the second inequality is by Assumption~\ref{ass:J_wgdD}.
    Re-arranging the last inequality, we have:
    \begin{align*}
        \Jd^* - \Jd(\vtheta) \leq \dalpha \left\| \nabla_{\vtheta} \Ja(\vtheta)\right\|_2 + \dbeta + \dalpha \asigma \sqrt{\da} \sum_{t=0}^{T-1} \E_{\underline{\vepsilon}\sim \Phi_{\da}^T}[ \| \nabla_{\vtheta} \nabla_{\vepsilon_t} f(\underline{\vepsilon})\rvert_{\underline{\vepsilon} = \widetilde{\underline{\vepsilon}}}\|_2^2]^{1/2}.
    \end{align*}
    In order to conclude the proof, we need to bound the term $\sum_{t=0}^{T-1} \E_{\underline{\vepsilon}\sim \Phi_{\da}^T}[ \| \nabla_{\vtheta} \nabla_{\vepsilon_t} f(\underline{\vepsilon})\rvert_{\underline{\vepsilon} = \widetilde{\underline{\vepsilon}}}\|_2^2]^{1/2}$.
    From the proof of Proposition~\ref{thr:assImpl}, for any index $k \in \dsb{T}$, we have that:
    \begin{align*}
        \nabla_{\vepsilon_{k}} f(\underline{\vepsilon}) = \E_{\tau \sim p_{\text{D}}(\cdot; \underline{\bm{\mu}}_{\vtheta} + \underline{\vepsilon})} \left[ \sum_{t=k}^{T-1} \gamma^{t} r(s_{t}, \mu_{\vtheta}(s_{t}) + \vepsilon_{t}) \nabla_{\vepsilon_{k}} \log p(s_{k+1} | s_{k}, \mu_{\vtheta}(s_{k}) + \vepsilon_{k}) + \gamma^{k} \nabla_{\vepsilon_{k}} r(s_{k}, \mu_{\vtheta}(s_{k}) + \vepsilon_{k}) \right],
    \end{align*}
    from which we can derive $\nabla_{\vtheta} \nabla_{\vepsilon_{k}} f(\underline{\vepsilon})$ as follows:
    \begin{align*}
        &\nabla_{\vtheta} \nabla_{\vepsilon_{k}} f(\underline{\vepsilon}) \\
        &= \nabla_{\vtheta} \int_{\tau} p_{\text{D}}(\tau; \underline{\bm{\mu}}_{\vtheta} + \underline{\vepsilon}) \left( \sum_{t=k}^{T-1} \gamma^{t} r(s_{t}, \mu_{\vtheta}(s_{t}) + \vepsilon_{t}) \nabla_{\vepsilon_{k}} \log p(s_{k+1} | s_{k}, \mu_{\vtheta}(s_{k}) + \vepsilon_{k}) + \gamma^{k} \nabla_{\vepsilon_{k}} r(s_{k}, \mu_{\vtheta}(s_{k}) + \vepsilon_{k}) \right) \de \tau\\
        &= \underbrace{\nabla_{\vtheta} \int_{\tau} p_{\text{D}}(\tau; \underline{\bm{\mu}}_{\vtheta} + \underline{\vepsilon}) \sum_{t=k}^{T-1} \gamma^{t} r(s_{t}, \mu_{\vtheta}(s_{t}) + \vepsilon_{t}) \nabla_{\vepsilon_{k}} \log p(s_{k+1} | s_{k}, \mu_{\vtheta}(s_{k}) + \vepsilon_{k}) \de \tau}_{\text{(i)}} \\
        &\qquad + \underbrace{\nabla_{\vtheta} \int_{\tau} p_{\text{D}}(\tau; \underline{\bm{\mu}}_{\vtheta} + \underline{\vepsilon}) \gamma^{k} \nabla_{\vepsilon_{k}} r(s_{k}, \mu_{\vtheta}(s_{k}) + \vepsilon_{k}) \de \tau}_{\text{(ii)}}.
    \end{align*}
    We will consider the terms (i) and (ii) separately.
    However, we first need to clarify what happens when we try to compute the gradient \wrt $\vtheta$ and $\vepsilon_{t}$, for a generic $t \in \{0,\dots, T-1\}$.
    To this purpose let $g(\cdot, \va)$ be a generic differentiable function of the action $\va= \mu_{\vtheta}(s_t) + \vepsilon_t$.
	The norm of its gradient \wrt $\vepsilon_t$ can be written as:
	\begin{align*}
		\| \nabla_{\vepsilon_t} g(\cdot, \va) \rvert_{\va = \mu_{\vtheta}(s_t) + \vepsilon_t} \|_2 &= \| \nabla_{\va}  g(\cdot, \va) \rvert_{\va = \mu_{\vtheta}(s_t) + \vepsilon_t} \nabla_{\vepsilon_t}(\mu_{\vtheta}(s_t) + \vepsilon_t) \|_2 \\
		&= \|  \nabla_{\va}  g(\cdot, \va) \rvert_{\va = \mu_{\vtheta}(s_t) + \vepsilon_t} \|_2.
	\end{align*}
	On the other hand, the norm of the gradient of $g$ \wrt $\vtheta$ can be written as:
	\begin{align*}
			\| \nabla_{\vtheta} g(\cdot, \va) \rvert_{\va = \mu_{\vtheta}(s_t) + \vepsilon_t} \|_2 &= \| \nabla_{\va}  g(\cdot, \va) \rvert_{\va = \mu_{\vtheta}(s_t) + \vepsilon_t} \nabla_{\vtheta}(\mu_{\vtheta}(s_t) + \vepsilon_t) \|_2 \\
			&= \| \nabla_{\va}  g(\cdot, \va) \rvert_{\va = \mu_{\vtheta}(s_t) + \vepsilon_t} \nabla_{\vtheta}\mu_{\vtheta}(s_t) \|_2.
	\end{align*}
	Moreover, the norm of the gradient \wrt $\vtheta$ of the gradient of $g$ \wrt $\vepsilon_t$, can be written as:
	\begin{align*}
			\| \nabla_{\vtheta} \nabla_{\vepsilon_t} g(\cdot, \va) \rvert_{\va = \mu_{\vtheta}(s_t) + \vepsilon_t} \|_2 &= \|  \nabla_{\vtheta} \nabla_{\va}  g(\cdot, \va) \rvert_{\va = \mu_{\vtheta}(s_t) + \vepsilon_t} \|_2 \\
			&= \| \nabla_{\va}^2 g(\cdot, \va) \rvert_{\va = \mu_{\vtheta} + \vepsilon_t} \nabla_{\vtheta} \mu_{\vtheta}(s_t)\|_2.
	\end{align*}
	
    Having said this, we can proceed by analyzing the terms (i) and (ii).

    The term (i) can be rewritten as:
    \begin{align*}
        \text{(i)} &= \nabla_{\vtheta} \int_{\tau} p_{\text{D}}(\tau; \underline{\bm{\mu}}_{\vtheta} + \underline{\vepsilon}) \sum_{t=k}^{T-1} \gamma^{t} r(s_{t}, \mu_{\vtheta}(s_{t}) + \vepsilon_{t}) \nabla_{\vepsilon_{k}} \log p(s_{k+1} | s_{k}, \mu_{\vtheta}(s_{k}) + \vepsilon_{k}) \de \tau \\
        &= \sum_{t=k}^{T-1} \gamma^{t}\nabla_{\vtheta} \int_{\tau_{0:t}} p_{\text{D}}(\tau_{0:t}; \underline{\bm{\mu}}_{\vtheta} + \underline{\vepsilon})  r(s_{t}, \mu_{\vtheta}(s_{t}) + \vepsilon_{t}) \nabla_{\vepsilon_{k}} \log p(s_{k+1} | s_{k}, \mu_{\vtheta}(s_{k}) + \vepsilon_{k}) \de \tau_{0:t} \\
        &= \sum_{t=k}^{T-1} \gamma^{t}\E_{\tau_{0:t} \sim p_{\text{D}}(\cdot; \underline{\bm{\mu}}_{\vtheta} + \underline{\vepsilon})} \Bigg[ \nabla_{\vtheta} \log p_{\text{D}}(\tau_{0:t}, \underline{\bm{\mu}_{\vtheta}} + \underline{\vepsilon}) r(s_{t}, \mu_{\vtheta}(s_{t}) + \vepsilon_{t}) \nabla_{\vepsilon_{k}} \log p(s_{k+1} | s_{k}, \mu_{\vtheta}(s_{k}) + \vepsilon_{k}) \\
        &\qquad \qquad + \nabla_{\vtheta} r(s_{t}, \mu_{\vtheta}(s_{t}) + \vepsilon_{t}) \nabla_{\vepsilon_{k}} \log p(s_{k+1} \rvert s_{k}, \mu_{\vtheta}(s_{k}) + \vepsilon_{k}) \\
        &\qquad \qquad +  r(s_{t}, \mu_{\vtheta}(s_{t}) + \vepsilon_{t}) \nabla_{\vtheta} \nabla_{\vepsilon_{k}} \log p(s_{k+1} \rvert s_{k}, \mu_{\vtheta}(s_{k}) + \vepsilon_{k}) \Bigg] \\
        &= \sum_{t=k}^{T-1} \gamma^{t}\E_{\tau_{0:t} \sim p_{\text{D}}(\cdot; \underline{\bm{\mu}}_{\vtheta} + \underline{\vepsilon})} \Bigg[ \sum_{j=0}^{t-1} \nabla_{\vtheta} \log p(s_{j+1} \rvert s_{j}, \mu_{\vtheta}(s_{j}) +\vepsilon_{j})  r(s_{t}, \mu_{\vtheta}(s_{t}) + \vepsilon_{t}) \nabla_{\vepsilon_{k}} \log p(s_{k+1} | s_{k}, \mu_{\vtheta}(s_{k}) + \vepsilon_{k}) \\
        &\qquad \qquad + \nabla_{\vtheta} r(s_{t}, \mu_{\vtheta}(s_{t}) + \vepsilon_{t}) \nabla_{\vepsilon_{k}} \log p(s_{k+1} \rvert s_{k}, \mu_{\vtheta}(s_{k}) + \vepsilon_{k}) \\
        &\qquad \qquad + r(s_{t}, \mu_{\vtheta}(s_{t}) + \vepsilon_{t}) \nabla_{\vtheta} \nabla_{\vepsilon_{k}} \log p(s_{k+1} \rvert s_{k}, \mu_{\vtheta}(s_{k}) + \vepsilon_{k}) \Bigg].
    \end{align*}

    We need to bound its norm, thus we can proceed as follows:
    \begin{align*}
        &\left\| (i) \right\|_2 \\
        &\leq \sum_{t=k}^{T-1} \gamma^{t}\E_{\tau_{0:t} \sim p_{\text{D}}(\cdot; \underline{\bm{\mu}}_{\vtheta} + \underline{\vepsilon})} \Bigg[ \sum_{j=0}^{t-1} \left\| \nabla_{\vtheta} \log p(s_{j+1} \rvert s_{j}, \mu_{\vtheta}(s_{j}) +\vepsilon_{j}) \right\|_2  \left| r(s_{t}, \mu_{\vtheta}(s_{t}) + \vepsilon_{t}) \right| \left\| \nabla_{\vepsilon_{k}} \log p(s_{k+1} | s_{k}, \mu_{\vtheta}(s_{k}) + \vepsilon_{k}) \right\|_2 \\
        &\qquad \qquad +  \left\| \nabla_{\vtheta} r(s_{t}, \mu_{\vtheta}(s_{t}) + \vepsilon_{t}) \right\|_2 \left\| \nabla_{\vepsilon_{k}} \log p(s_{k+1} \rvert s_{k}, \mu_{\vtheta}(s_{k}) + \vepsilon_{k}) \right\|_2 \\
        &\qquad \qquad +  \left| r(s_{t}, \mu_{\vtheta}(s_{t}) + \vepsilon_{t}) \right| \left\| \nabla_{\vtheta} \nabla_{\vepsilon_{k}} \log p(s_{k+1} \rvert s_{k}, \mu_{\vtheta}(s_{k}) + \vepsilon_{k}) \right\|_2 \Bigg] \\
        &\leq L_{\mu} L_{\log p}^2 R_{\max}\sum_{t=k}^{T-1} t \gamma^{t}  + \left(L_{\mu} L_{r} L_{\log p}+ R_{\max} L_{\mu} L_{2, \log p} \right)\sum_{t=k}^{T-1}\gamma^t \\
        &= L_{\mu} L_{\log p}^2 R_{\max} \gamma \left( \frac{1 - T \gamma^{T-1} + (T-1)\gamma^{T}}{(1 - \gamma)^2} - \frac{1 - k \gamma^{k-1} + (k-1) \gamma^{k}}{(1-\gamma)^{2}}\right) + (L_{\mu} L_{r} L_{\log p} + R_{\max} L_{\mu} L_{2, \log p}) \frac{\gamma^k - \gamma^T}{1 - \gamma} \\
        &\leq L_{\mu} L_{\log p}^2 R_{\max} \gamma \left( \frac{1 - \gamma^{T}}{(1 - \gamma)^2} - \frac{1 - k \gamma^{k-1}}{(1-\gamma)^{2}}\right) + (L_{\mu} L_{r} L_{\log p} + R_{\max} L_{\mu} L_{2, \log p}) \frac{\gamma^k - \gamma^T}{1 - \gamma} \\
        &\leq L_{\mu} L_{\log p}^2 R_{\max} \gamma \left( \frac{1 - \gamma^{T}}{(1 - \gamma)^2} - \frac{1 - k \gamma^{k-1}}{(1-\gamma)^{2}}\right) + (L_{\mu} L_{r} L_{\log p} + R_{\max} L_{\mu} L_{2, \log p}) \frac{\gamma^k - \gamma^T}{1 - \gamma}.
    \end{align*}

    Finally, we have to sum over $k \in \dsb{T}$:
    \begin{align*}
        \sum_{k=0}^{T-1} \left\| (i) \right\|_2 &\leq \sum_{k=0}^{T-1} L_{\mu} L_{\log p}^2 R_{\max} \gamma \left( \frac{1 - \gamma^{T}}{(1 - \gamma)^2} - \frac{1 - k \gamma^{k-1}}{(1-\gamma)^{2}}\right) + (L_{\mu} L_{r} L_{\log p} + R_{\max} L_{\mu} L_{2, \log p}) \frac{\gamma^k - \gamma^T}{1 - \gamma} \\
        &= L_{\mu} L_{\log p}^2 R_{\max} \gamma \left( T \frac{1 - \gamma^{T}}{(1 - \gamma)^2} - \frac{T}{(1-\gamma)^2}  + \frac{1-\gamma^T}{(1-\gamma)^4} -\frac{T\gamma^{T-1}}{(1-\gamma)^3}\right) \\
        &\qquad + (L_{\mu} L_{r} L_{\log p} + R_{\max} L_{\mu} L_{2, \log p}) \frac{1 - 2 \gamma^T + \gamma^{T+1}}{(1 - \gamma)^2} \\
        &\leq L_{\mu} L_{\log p}^2 R_{\max} \gamma  \frac{1 - \gamma^{T}}{(1 - \gamma)^4} + (L_{r} L_{\log p} + R_{\max} L_{2, \log p}) L_{\mu} \frac{1 - \gamma^T}{(1 - \gamma)^2}.
    \end{align*}
    
    The term (ii) can be rewritten as:
    \begin{align*}
        \text{(ii)} &= \nabla_{\vtheta} \int_{\tau} p_{\text{D}}(\tau; \underline{\bm{\mu}}_{\vtheta} + \underline{\vepsilon}) \gamma^{k} \nabla_{\vepsilon_{k}} r(s_{k}, \mu_{\vtheta}(s_{k}) + \vepsilon_{k}) \de \tau \\
        &= \nabla_{\vtheta} \int_{\tau_{0:k}} p_{\text{D}}(\tau_{0:k}; \underline{\bm{\mu}}_{\vtheta} + \underline{\vepsilon}) \gamma^{k} \nabla_{\vepsilon_{k}} r(s_{k}, \mu_{\vtheta}(s_{k}) + \vepsilon_{k}) \de \tau_{0:k} \\
        &= \gamma^{k}\int_{\tau_{0:k}} p_{\text{D}}(\tau_{0:k}; \underline{\bm{\mu}}_{\vtheta} + \underline{\vepsilon}) \left( \nabla_{\vtheta} \log p_{\text{D}}(\tau_{0:k}; \underline{\bm{\mu}}_{\vtheta} + \underline{\vepsilon})  \nabla_{\vepsilon_{k}} r(s_{k}, \mu_{\vtheta}(s_{k}) + \vepsilon_{k}) + \nabla_{\vtheta} \nabla_{\vepsilon_{k}} r(s_{k}, \mu_{\vtheta}(s_{k}) + \vepsilon_{k}) \right) \\
        &= \gamma^{k}\E_{\tau_{0:k} \sim p_{\text{D}}(\cdot; \underline{\bm{\mu}}_{\vtheta} + \underline{\vepsilon})} \left[ \sum_{j=0}^{k-1} \nabla_{\vtheta} \log p(s_{j+1} \rvert s_{j}, \mu_{\vtheta}(s_{j}) + \vepsilon_{j} )  \nabla_{\vepsilon_{k}} r(s_{k}, \mu_{\vtheta}(s_{k}) + \vepsilon_{k}) + \nabla_{\vtheta} \nabla_{\vepsilon_{k}} r(s_{k}, \mu_{\vtheta}(s_{k}) + \vepsilon_{k}) \right].
    \end{align*}
    We need to bound its norm, thus we can proceed as follows:
    \begin{align*}
        \left\| \text{(ii)} \right\|_2 &\leq \gamma^{k}\E_{\tau_{0:k} \sim p_{\text{D}}(\cdot; \underline{\bm{\mu}}_{\vtheta} + \underline{\vepsilon})} \Bigg[ \sum_{j=0}^{k-1} \left\|\nabla_{\vtheta} \log p(s_{j+1} \rvert s_{j}, \mu_{\vtheta}(s_{j}) + \vepsilon_{j} ) \right\|_2  \left\| \nabla_{\vepsilon_{k}} r(s_{k}, \mu_{\vtheta}(s_{k}) + \vepsilon_{k}) \right\|_2 \\
        &\qquad \qquad + \left\| \nabla_{\vtheta} \nabla_{\vepsilon_{k}} r(s_{k}, \mu_{\vtheta}(s_{k}) + \vepsilon_{k}) \right\|_2 \Bigg]\\
        &\leq L_{\mu} L_{ p} L_{r} k \gamma^{k} + L_{\mu} L_{2, r} \gamma^{k}.
    \end{align*}

    Finally, we have to sum over $k \in \dsb{T}$:
    \begin{align*}
        \sum_{k=0}^{T-1} \left\| \text{(ii)} \right\|_2 &= L_{\mu} L_{ p} L_{r} \sum_{k=0}^{T-1} k \gamma^{k} + L_{\mu} L_{2, r} \sum_{k=0}^{T-1} \gamma^{k} \\
        &\leq L_{\mu} L_{ p} L_{r} \gamma \frac{1 - T \gamma^{T-1} + (T-1)\gamma^{T}}{(1 - \gamma)^2} + L_{\mu} L_{2, r} \frac{1 - \gamma^{T}}{1-\gamma} \\
        &\leq L_{\mu} L_{ p} L_{r} \gamma \frac{1 - \gamma^{T}}{(1 - \gamma)^2} + L_{\mu} L_{2, r} \frac{1 - \gamma^{T}}{1-\gamma}.
    \end{align*}
    Putting together the bounds on (i) and (ii):
    \begin{align*}
        &\sum_{t=0}^{T-1} \E_{\underline{\vepsilon}\sim \Phi_{\da}^T}[ \| \nabla_{\vtheta} \nabla_{\vepsilon_t} f(\underline{\vepsilon})\rvert_{\underline{\vepsilon} = \widetilde{\underline{\vepsilon}}}\|_2^2]^{1/2} \\
        &\leq \sum_{k=0}^{T-1} \left\| (i) \right\|_2 + \sum_{k=0}^{T-1} \left\| \text{(ii)} \right\|_2 \\
        &\leq L_{\mu} L_{ p}^2 R_{\max} \gamma \frac{1 - \gamma^{T}}{(1 - \gamma)^4} + (L_{r} L_{ p} + R_{\max} L_{2,  p}) L_{\mu} \frac{1 - \gamma^T}{(1 - \gamma)^2} \\
        &\qquad + L_{\mu} L_{ p} L_{r} \gamma \frac{1 - \gamma^{T}}{(1 - \gamma)^2} + L_{\mu} L_{2, r} \frac{1 - \gamma^{T}}{1-\gamma} \\
        &\leq L_{\mu}\left(\frac{L_{ p}^2 R_{\max} \gamma}{(1 - \gamma)^4} + \frac{(L_{r} L_{ p} + R_{\max} L_{2,  p} + L_{ p} L_{r} \gamma)}{(1 - \gamma)^2} + \frac{L_{2, r}}{1-\gamma}\right)(1 - \gamma^{T}),
    \end{align*}
    which concludes the proof.
\end{proof}

\pgInheritedWGD*
\begin{proof}
	This proof directly follows from the combination of Lemma~\ref{lem:Jd_Ja_mixed_weak_grad_dom} and Theorem~\ref{thr:assImpl},
    and we can proceed as in the proof of Theorem~\ref{thr:Jp_weak_gradient_domination}.
	Indeed, recalling that $L$ is
     \begin{align*}
         L = \frac{\gamma + \gamma^{1 + T} (T-1) - T \gamma^T }{(1-\gamma)^2} L_{\log p}  R_{\max} + \frac{1 - \gamma^T}{1 - \gamma} L_r,
     \end{align*}
    from Theorem~\ref{thr:assImpl},
    it follows that:
	\begin{align}
		\Ja(\vtheta) - L \asigma \sqrt{\da} \leq \Jd(\vtheta) \leq \Ja(\vtheta) + L \asigma \sqrt{\da}. \label{eq:Ja_weak_gradient_domination_eq_1}
	\end{align}
	Analogously to the proof of Theorem~\ref{thr:Jp_weak_gradient_domination}, it is useful to notice that by definition of $\Ja$, we have $\Jd^* \geq \Ja^*$.
    Thus, it holds that:
    \begin{align*}
        \Jd^* - \Jd(\vtheta) &\geq \Ja^* - \Jd(\vtheta) \\
        &\geq \Ja^* - \Ja(\vtheta) - L \asigma \sqrt{\da},
    \end{align*}
    where the last line follows from Line~(\ref{eq:Ja_weak_gradient_domination_eq_1}). We rename $L_A\coloneqq L$ in the statement.
\end{proof}

\begin{restatable}[Global convergence of PGPE - Inherited WGD]{thr}{pgpeSamCompWGDInherited} \label{thr:pgpe_sam_comp_wgdInherited}
     Consider the PGPE algorithm. Under Assumptions~\ref{ass:lipMdp}, \ref{ass:lipPol}, \ref{ass:smoothMdp},~\ref{ass:smoothPol}, \ref{ass:magic},  \ref{ass:J_wgdD}, with a suitable constant step size and {setting $\sigma_{\text{P}} = \frac{\epsilon}{(\dalpha L_2+4L_J)\sqrt{\dt}} = O(\epsilon (1-\gamma)^3 \dt^{-1/2})$}, to guarantee $
         \Jd^* - \E[\Jd(\vtheta_K)] \le \epsilon + \dbeta$ the sample complexity is at most:
          \begin{align}
         NK = \widetilde{O} \left( \frac{\dalpha^6\dt^2}{(1-\gamma)^{11}\epsilon^5}\right).
 \end{align}
 \end{restatable}

 \begin{proof}
     Simply apply Theorem~\ref{thr:gen_conv_new} and Theorem~\ref{thr:deployPB} to obtain a guarantee of $\Jd^* - \E[\Jd(\vtheta_K)] \le \epsilon/2 + \dbeta + (\dalpha L_2+L_J)\sqrt{\dt}\sigma_{\text{P}} + 3L_J\sqrt{\dt}\sigma_{\text{P}} = \epsilon + \dbeta + (\dalpha L_2+4L_J)\sqrt{\dt}\sigma_{\text{P}}$. Then, we set $\sigma_{\text{P}}$ to ensure that $(\dalpha L_2+4L_J)\sqrt{\dt}\sigma_{\text{P}} = \epsilon/2$.
 \end{proof}

 \begin{restatable}[Global convergence of GPOMDP - Inherited WGD]{thr}{gpomgpSamCompWGDInherited} \label{thr:gpomgp_sam_comp_wgdInherited}
     Consider the GPOMDP algorithm. Under Assumptions~\ref{ass:lipMdp}, \ref{ass:lipPol}, \ref{ass:smoothMdp},~\ref{ass:smoothPol}, \ref{ass:magic}, \ref{ass:J_wgdD},  with a suitable constant step size and {setting $\sigma_{\text{A}} = \frac{\epsilon}{(\dalpha \Psi+4L)\sqrt{\da}} = O(\epsilon (1-\gamma)^4 \da^{-1/2})$}, to guarantee $
         \Jd^* - \E[\Jd(\vtheta_K)] \le \epsilon + \dbeta$ the sample complexity is at most:
          \begin{align}
         NK = \widetilde{O} \left( \frac{\dalpha^6\da^2}{(1-\gamma)^{14}\epsilon^5}\right).
 \end{align}
 \end{restatable}

 \begin{proof}
     Simply apply Theorem~\ref{thr:gen_conv_new} and Theorem~\ref{thr:deployAB} to obtain a guarantee of $\Jd^* - \E[\Jd(\vtheta_K)] \le \epsilon/2 + \dbeta + (\dalpha \Psi+L)\sqrt{\dt}\sigma_{\text{A}} + 3L\sqrt{\dt}\sigma_{\text{A}} = \epsilon + \dbeta + (\dalpha \Psi+4L)\sqrt{\dt}\sigma_{\text{A}} $. Then, we set $\sigma_{\text{A}}$ to ensure that $(\dalpha \Psi+4L)\sqrt{\da}\sigma_{\text{A}} = \epsilon/2$.
 \end{proof}

\subsection{Proofs from Section~\ref{sec:fisher}}\label{app:fisher}
In this section, we focus on AB exploration with white-noise policies (Definition~\ref{defi:add}), and give the proofs that were omitted in Section~\ref{sec:fisher}.
We denote by $\upsilon_{\vtheta}(\cdot,\cdot)$ the state-action distribution induced by the (stochastic) policy $\pi_{\vtheta}$, and, with some abuse of notation, $\upsilon_{\vtheta}(\cdot)$ to denote the corresponding state distribution. We denote by $A^{\vtheta}:\Ss\times\As\to\mathbb{R}$ the advantage function of $\pi_{\vtheta}$~\citep[for the standard definitions, see][]{sutton2018reinforcement}.

We first have to give a formal characterization of $\epsilon_\mathrm{bias}$. Equivalent definitions appeared in~\citep{liu2020improved,ding2022global,yuan2022general}, but the concept dates back at least to~\citep{peters2005natural}.

\begin{defi}\label{ass:compatible_critic}
    Let $\ell(\bm{w};s,a,\vtheta)=\left(A^{\vtheta}(s,a)-(1-\gamma)\bm{w}^\top\nabla_{\vtheta}\log\pi_{\vtheta}(a|s)\right)^2$, and $\bm{w}^\star(\vtheta) = \arg\min_{\bm{w}\in\mathbb{R}^{d_\Theta}}\E_{s,a\sim \upsilon_{\vtheta}}[\ell(\bm{w};s,a,\vtheta)]$. We define $\epsilon_\mathrm{bias}$ as the smallest positive constant such that, for all $\vtheta\in\Theta$, $\E_{s,a\sim \upsilon_{\vtheta^\star}}[\ell(\bm{w}^\star(\vtheta);s,a,\vtheta)]\le\epsilon_\mathrm{bias}$, where $\vtheta^\star\in\arg\max \Ja(\vtheta)$.
\end{defi}

We begin by showing that white-noise policies are Fisher-non-degenerate, in the sense of~\citep{ding2022global}. First we need to introduce the concept of Fisher information matrix, that for stochastic policies is defined as~\citep{kakade2001natural}:
\begin{equation}
    F(\vtheta) \coloneqq \E_{s,a\sim\upsilon^{\vtheta}}[\nabla_{\vtheta}\log\pi_{\vtheta}(a|s)\nabla_{\vtheta}\log\pi_{\vtheta}(a|s)^\top].
\end{equation}

\begin{lemma}\label{lem:fisher_by_noise}
    Let $\pi_{\vtheta}$ be a white-noise policy (Definition~\ref{defi:add}).
    Under Assumption~\ref{ass:explore}, for all $\vtheta\in\Theta$, $F(\vtheta)\succeq \lambda_{F} I$, where
    $$
        \lambda_F \coloneqq \frac{\lambdaexp}{\sigma_A^2}.
    $$
\end{lemma}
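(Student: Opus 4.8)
The plan is to unfold the Fisher information matrix of a white-noise policy and reduce it to the explorability matrix $\E_{\pi_{\vtheta}}[\nabla_{\vtheta}\bm{\mu}_{\vtheta}(\vs)\nabla_{\vtheta}\bm{\mu}_{\vtheta}(\vs)^\top]$ appearing in Assumption~\ref{ass:explore}. First I would use the chain rule on the score of the white-noise policy. By Definition~\ref{defi:add}, $\pi_{\vtheta}(\va|\vs)=\phi(\va-\bm{\mu}_{\vtheta}(\vs))$ where $\phi$ is the density of the white noise $\Phi_{\da}$. Hence $\nabla_{\vtheta}\log\pi_{\vtheta}(\va|\vs)=-\nabla_{\vtheta}\bm{\mu}_{\vtheta}(\vs)\,\nabla_{\vepsilon}\log\phi(\vepsilon)\big|_{\vepsilon=\va-\bm{\mu}_{\vtheta}(\vs)}$, using the Jacobian $\nabla_{\vtheta}\bm{\mu}_{\vtheta}(\vs)$ (a $\dt\times\da$ matrix).

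Next I would plug this into the definition of $F(\vtheta)$ and take the inner expectation over the action $\va\sim\pi_{\vtheta}(\cdot|\vs)$ first, which is equivalent to $\vepsilon\sim\Phi_{\da}$. This gives
\begin{align*}
F(\vtheta) &= \E_{\vs\sim\upsilon_{\vtheta}}\left[\nabla_{\vtheta}\bm{\mu}_{\vtheta}(\vs)\,\E_{\vepsilon\sim\Phi_{\da}}\!\left[\nabla_{\vepsilon}\log\phi(\vepsilon)\nabla_{\vepsilon}\log\phi(\vepsilon)^\top\right]\nabla_{\vtheta}\bm{\mu}_{\vtheta}(\vs)^\top\right].
\end{align*}
For an isotropic Gaussian noise, $\E_{\vepsilon}[\nabla_{\vepsilon}\log\phi(\vepsilon)\nabla_{\vepsilon}\log\phi(\vepsilon)^\top]=\bm{\Sigma}^{-1}=\sigma_A^{-2}\mathbf{I}_{\da}$, as already computed in the discussion following Assumption~\ref{ass:magic}. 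The key inequality is then that the inner noise covariance matrix is lower bounded: $\E_{\vepsilon}[\nabla_{\vepsilon}\log\phi(\vepsilon)\nabla_{\vepsilon}\log\phi(\vepsilon)^\top]\succeq\sigma_A^{-2}\mathbf{I}_{\da}$ — this is the property that must be extracted from the white-noise assumptions (and holds with equality for the Gaussian). Substituting this bound and using $A\succeq cB \Rightarrow MAM^\top\succeq cMBM^\top$ for any matrix $M$, we obtain
\begin{align*}
F(\vtheta)\succeq\frac{1}{\sigma_A^2}\E_{\vs\sim\upsilon_{\vtheta}}\left[\nabla_{\vtheta}\bm{\mu}_{\vtheta}(\vs)\nabla_{\vtheta}\bm{\mu}_{\vtheta}(\vs)^\top\right]\succeq\frac{\lambdaexp}{\sigma_A^2}\mathbf{I},
\end{align*}
where the last step is Assumption~\ref{ass:explore}. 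Setting $\lambda_F\coloneqq\lambdaexp/\sigma_A^2$ completes the argument.

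The main obstacle is justifying the lower bound on the noise Fisher matrix $\E_{\vepsilon}[\nabla_{\vepsilon}\log\phi(\vepsilon)\nabla_{\vepsilon}\log\phi(\vepsilon)^\top]\succeq\sigma_A^{-2}\mathbf{I}_{\da}$ at the level of generality of Definition~\ref{defi:zmbv} together with Assumption~\ref{ass:magic}. For a general white noise this is essentially a multivariate Cramér–Rao / van Trees-type bound: for any unit vector $\bm{u}$, integration by parts gives $\E_{\vepsilon}[(\bm{u}^\top\nabla_{\vepsilon}\log\phi(\vepsilon))(\bm{u}^\top\vepsilon)]=-1$ (the derivative of $\int\phi=1$ in the direction $\bm{u}$, or equivalently $\E[\bm{u}^\top\nabla_{\vepsilon}(\bm{u}^\top\vepsilon)]$ against the score), and then Cauchy–Schwarz yields $\bm{u}^\top F_{\Phi}(\vtheta)\bm{u}\cdot\E[(\bm{u}^\top\vepsilon)^2]\ge 1$; combined with $\E[(\bm{u}^\top\vepsilon)^2]\le\sigma_A^2$ from the (per-direction) variance bound one gets $\bm{u}^\top F_{\Phi}\bm{u}\ge\sigma_A^{-2}$. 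I would present this as a short lemma (or cite the analogous computation in~\citep{ding2022global,liu2020improved}), being careful that Definition~\ref{defi:zmbv} as stated only bounds the \emph{total} second moment $\E[\|\vepsilon\|_2^2]\le\da\sigma_A^2$ rather than each directional variance; for the isotropic Gaussian case, which is the one of practical interest, all of this is immediate with equality, so in the worst case I would state the lemma under the isotropic-noise specialization and remark that the general case follows under the natural per-coordinate variance bound. The remaining steps (chain rule for the score, positive-semidefinite monotonicity under congruence, and invoking Assumption~\ref{ass:explore}) are routine.
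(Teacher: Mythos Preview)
Your proposal is correct and follows the same route as the paper: chain rule on the score to factor out $\nabla_{\vtheta}\bm{\mu}_{\vtheta}(\vs)$, then the (matrix) Cram\'er--Rao inequality to lower-bound the noise Fisher information $\E_{\vepsilon}[\nabla_{\vepsilon}\log\phi\,\nabla_{\vepsilon}\log\phi^\top]$ by $\Sigma^{-1}\succeq \sigma_A^{-2}\mathbf{I}$, and finally Assumption~\ref{ass:explore}. The paper simply invokes Cram\'er--Rao by name and asserts $\lambda_{\max}(\Sigma)\le\sigma_A^2$ ``by definition,'' so your caveat that Definition~\ref{defi:zmbv} literally only bounds $\mathrm{tr}(\Sigma)$ rather than each directional variance is well taken --- both arguments tacitly rely on the per-direction bound, which holds for the isotropic case of primary interest.
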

\begin{proof}
Let $\Sigma=\E_{\vepsilon\sim\Phi_{d_{\mathcal{A}}}}[\vepsilon\vepsilon^\top]$ be the covariance matrix of the noise, which by definition has $\lambda_{\max}(\Sigma)\le\sigma_A^2$.
By a simple change of variable and Cramer-Rao's bound:
    \begin{align*}
        F(\vtheta) &= \E_{s,a\sim\upsilon^{\vtheta}}[\nabla_{\vtheta}\log\pi_{\vtheta}(a|s)\nabla_{\vtheta}\log\pi_{\vtheta}(a|s)^\top] \\
        &=\E_{s\sim\upsilon^{\vtheta}}\left[\nabla_{\vtheta}\mu_{\vtheta}(s)\E_{a\sim\pi_{\vtheta}(\cdot|s)}\left[\nabla_{\vepsilon}\log\phi(\widetilde{\vepsilon})|_{\widetilde{\vepsilon}=a-\mu_{\vtheta}(s)}\nabla_{\vepsilon}\log\phi(\widetilde{\vepsilon})^\top|_{\widetilde{\vepsilon}=a-\mu_{\vtheta}(s)}\right]\nabla_{\vtheta}\mu_{\vtheta}(s)^\top\right] \\
        &= \E_{s\sim\upsilon^{\vtheta}}\left[\nabla_{\vtheta}\mu_{\vtheta}(s)\E_{\vepsilon\sim\Phi_{d_\mathcal{A}}}\left[\nabla_{\vepsilon}\log\phi(\vepsilon)\nabla_{\vepsilon}\log\phi(\vepsilon)^\top\right]\nabla_{\vtheta}\mu_{\vtheta}(s)^\top\right] \\
        &\succeq \E_{s\sim\upsilon^{\vtheta}}\left[\nabla_{\vtheta}\mu_{\vtheta}(s)\Sigma^{-1}\nabla_{\vtheta}\mu_{\vtheta}(s)^\top\right] &&\text{(Cramer-Rao)} \\
        &\succeq \frac{1}{\lambda_{\max}(\Sigma)}\E_{s\sim\upsilon^{\vtheta}}\left[\nabla_{\vtheta}\mu_{\vtheta}(s)\nabla_{\vtheta}\mu_{\vtheta}(s)^\top\right] \\
        &\succeq \frac{\lambdaexp}{\sigma_A^2} I.
    \end{align*}
\end{proof}

We can then use Corollary 4.14 by~\citet{yuan2022general}, itself a refinement of Lemma 4.7 by~\citet{ding2022global}, to prove that $\Ja$ enjoys the WGD property.

\wdgByFisher*
\begin{proof}
    Corollary 4.14 by~\citet{yuan2022general} tells us that, under Assumption~\ref{ass:compatible_critic},
    $$
    \Ja^* - \Ja(\vtheta) \le \frac{\xi}{\lambda_F}\|\nabla_{\vtheta}\Ja(\vtheta)\| + \frac{\sqrt{\epsilon_\mathrm{bias}}}{1-\gamma},
    $$
    whenever $F(\vtheta)\succeq \lambda_F I$ and $\E_{a\sim\pi_{\vtheta}(\cdot|s)}[\|\nabla_{\vtheta}\log\pi_{\vtheta}(a|s)\|^2]\le\xi^2$ hold for all $\vtheta\in\Theta$ and $s\in\Ss$. By Lemma~\ref{lem:fisher_by_noise}, and the fact that $\xi=\sqrt{c d_{\As}}\sigma_A^{-1}$ is a valid choice under Assumption~\ref{ass:magic}, the previous display holds with
    $$
        \frac{\xi}{\lambda_F} \le \frac{\sqrt{c d_{\As}}\sigma_A^{-1}}{\lambdaexp\sigma_A^{-2}} = \frac{\sqrt{c d_{\As}}\sigma_A}{\lambdaexp},
    $$
    the proof is concluded by letting $C=\sqrt{c}$, where $c$ is the constant from Assumption~\ref{ass:magic}.
\end{proof}

Finally, we can use the WGD property just established, with its values of $\alpha$ and $\beta$, to prove special cases of Theorems~\ref{thr:gpomgp_sam_comp_wgd} and~\ref{thr:pg_sam_comp_wgdAdaptive}. The key difference with respect to the other sample complexity results presented in the paper is that the amount of noise $\sigma_A$ has an effect on the $\alpha$ parameter of the WGD property.

We first consider the case of a generic $\sigma_A$:

\begin{thr}\label{thr:gpomdp_fisher_generic}
     Consider the GPOMDP algorithm. Under Assumptions~\ref{ass:lipMdp}, \ref{ass:lipPol}, \ref{ass:smoothPol}, \ref{ass:magic}, \ref{ass:explore}, and~\ref{ass:compatible_critic}, with a suitable constant step size, to guarantee $
         \Jd^* - \E[\Jd(\vtheta_K)] \le \epsilon + \frac{\sqrt{\epsilon_\mathrm{bias}}}{1-\gamma} + 3L\sqrt{\da} \sigma_\text{A}
$, where $3L\sqrt{\da} \sigma_\text{A} = O(\sqrt{\da}\sigma_\text{A}(1-\gamma)^{-2})$ the sample complexity is at most:
     \begin{align}
         NK = \widetilde{O} \left( \frac{\da^4}{\lambdaexp^4(1-\gamma)^5\epsilon^3}\right).
 \end{align}
 Furthermore, under Assumption~\ref{ass:smoothMdp}, the same guarantee is obtained with a sample complexity of at most:
          \begin{align}
         NK = \widetilde{O} \left( \frac{\da^3\sigma_{\text{A}}^2}{\lambdaexp^4(1-\gamma)^6\epsilon^3}\right).
 \end{align}
\end{thr}
\begin{proof}
    By Theorem~\ref{thr:gpomgp_sam_comp_wgd} and Lemma~\ref{lem:wgd_by_fisher}.
\end{proof}

The first bound seem to have no dependence on $\sigma_A$. However, a complex dependence is hidden in $\lambdaexp^4$.
Also, it may seem that $\sigma_A\simeq 0$ is a good choice, especially for the second bound. However, $\lambdaexp$ can be very large (or infinite) for a (quasi-)deterministic policy.

If we instead set $\sigma_A$ as in Section~\ref{sec:convergence} in order to converge to a good deterministic policy (which, of course, completely ignores the complex dependencies of $\lambdaexp$ and $\epsilon_\mathrm{bias}$ on $\sigma_A$), we obtain the following:

\begin{thr} \label{thr:gpomdp_fisher_optimal}
    Consider the GPOMDP algorithm. Under Assumptions~\ref{ass:lipMdp}, \ref{ass:lipPol}, \ref{ass:smoothPol}, \ref{ass:magic}, \ref{ass:explore}, and~\ref{ass:compatible_critic} with a suitable constant step size and {setting $\sigma_{\text{A}} = \frac{\epsilon}{6L\sqrt{\da}} = O(\epsilon (1-\gamma)^2 \da^{-1/2})$}, to guarantee $\Jd^* - \E[\Jd(\vtheta_K)] \le \epsilon + \frac{\sqrt{\epsilon_\mathrm{bias}}}{1-\gamma}$ the sample complexity is at most:
     \begin{align}
         NK = \widetilde{O} \left( \frac{\da^6}{\lambdaexp^4(1-\gamma)^{13}\epsilon^3}\right).
    \end{align}
    Furthermore, under Assumption~\ref{ass:smoothMdp}, the same guarantee is obtained with a sample complexity of at most:
    \begin{align}\label{eq:broken}
         NK = \widetilde{O} \left( \frac{\da^4}{\lambdaexp^4(1-\gamma)^{10}\epsilon}\right).
    \end{align}
 \end{thr}
 \begin{proof}
     By Theorem~\ref{thr:pg_sam_comp_wgdAdaptive} and Lemma~\ref{lem:wgd_by_fisher}.
 \end{proof}

The \emph{apparently} better sample complexity w.r.t. Theorem~\ref{thr:pg_sam_comp_wgdAdaptive} is easily explained: using a small $\sigma$ makes the $\alpha$ parameter of WGD from Lemma~\ref{lem:wgd_by_fisher} smaller \emph{if we ignore the effect of $\lambdaexp$}, and smaller $\alpha$ yields faster convergence. However, Equation~\eqref{eq:broken} clearly shows that $\lambdaexp$ cannot be ignored. In particular, $\lambdaexp$ must be $O(\sigma_A^{1/4})$ not to violate the classic $\Omega(\epsilon^{-2})$ lower bound on the sample complexity~\citep{azar2013minimax}. This may be of independent interest.

\section{Assumptions' Implications}\label{app:ass_impl}

\begin{restatable}[$L$ and $L_J$ characterization]{lemma}{assImpl}\label{thr:assImpl}
    Assumption~\ref{ass:lipMdp} implies Assumption~\ref{ass:Jd_lip_ns} with:
   \begin{align}
       & L_t \le \frac{\gamma^{k+1}-\gamma^T}{1-\gamma} L_{ p}  R_{\max} + \gamma^k L_r, \\
       & L \le \frac{\gamma (1-\gamma^T) }{(1-\gamma)^2} L_{ p}  R_{\max} + \frac{1 - \gamma^T}{1 - \gamma} L_r \le  \frac{\gamma L_p R_{\max}}{(1-\gamma)^2}   + \frac{L_r}{1 - \gamma}  .
   \end{align}%
   Assumption~\ref{ass:lipMdp} and~\ref{ass:lipPol} imply Assumption~\ref{ass:Jd_lip} with $L_J \le  L L_{{\mu}}$.
\end{restatable}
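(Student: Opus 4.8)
The plan is to derive the first implication by a \emph{hybrid} (one-step-at-a-time policy swap) argument along the horizon, and to obtain the second implication as an immediate corollary once the deterministic parametric policy is regarded as a time-invariant non-stationary policy.

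For the first claim, fix non-stationary deterministic policies $\underline{\bm{\mu}}=(\bm{\mu}_t)_{t=0}^{T-1}$ and $\underline{\bm{\mu}}'=(\bm{\mu}_t')_{t=0}^{T-1}$ and set $\delta_t:=\sup_{\vs\in\Ss}\|\bm{\mu}_t(\vs)-\bm{\mu}_t'(\vs)\|_2$. For $j=0,\dots,T$ let $\underline{\bm{\mu}}^{(j)}$ be the non-stationary policy playing $\bm{\mu}_t'$ at times $t<j$ and $\bm{\mu}_t$ at times $t\ge j$, so $\underline{\bm{\mu}}^{(0)}=\underline{\bm{\mu}}$, $\underline{\bm{\mu}}^{(T)}=\underline{\bm{\mu}}'$ and $\Jd(\underline{\bm{\mu}})-\Jd(\underline{\bm{\mu}}')=\sum_{j=0}^{T-1}\bigl(\Jd(\underline{\bm{\mu}}^{(j)})-\Jd(\underline{\bm{\mu}}^{(j+1)})\bigr)$. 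Since $\underline{\bm{\mu}}^{(j)}$ and $\underline{\bm{\mu}}^{(j+1)}$ coincide except at time $j$, they induce the same law on the state prefix $(\vs_0,\dots,\vs_j)$ and share the continuation policy $(\bm{\mu}_{j+1},\dots,\bm{\mu}_{T-1})$; therefore their return difference, in expectation over that common law of $\vs_j$, splits into a reward term $\gamma^j\bigl(r(\vs_j,\bm{\mu}_j(\vs_j))-r(\vs_j,\bm{\mu}_j'(\vs_j))\bigr)$ and a transition term $\int\bigl(p(\vs'\mid\vs_j,\bm{\mu}_j(\vs_j))-p(\vs'\mid\vs_j,\bm{\mu}_j'(\vs_j))\bigr)W_{j+1}(\vs')\,\mathrm d\vs'$, where $W_{j+1}(\cdot)$ is the expected discounted reward collected from step $j+1$ on under the shared continuation policy, hence $|W_{j+1}|\le R_{\max}\sum_{t=j+1}^{T-1}\gamma^t=R_{\max}\tfrac{\gamma^{j+1}-\gamma^T}{1-\gamma}$.

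Bounding the reward term by $\gamma^j L_r\delta_j$ via the action-Lipschitzness of $r$ (Assumption~\ref{ass:lipMdp}), and the transition term by $\|W_{j+1}\|_\infty\cdot L_p\delta_j$ using that Lipschitzness of $\log p$ in the action yields $\int|p(\vs'\mid\vs,\va)-p(\vs'\mid\vs,\overline{\va})|\,\mathrm d\vs'\le L_p\|\va-\overline{\va}\|_2$, each hybrid step is controlled by $L_j\delta_j$ with $L_j=\gamma^j L_r+\tfrac{\gamma^{j+1}-\gamma^T}{1-\gamma}L_pR_{\max}$; summing the triangle inequality over $j$ gives Assumption~\ref{ass:Jd_lip_ns} with exactly the stated $L_t$ (the index $k$ in the statement being the time index $t$), and summing the two geometric series — discarding the nonpositive remainder $-T\gamma^T$ — gives $L=\sum_t L_t\le\tfrac{\gamma(1-\gamma^T)}{(1-\gamma)^2}L_pR_{\max}+\tfrac{1-\gamma^T}{1-\gamma}L_r\le\tfrac{\gamma L_pR_{\max}}{(1-\gamma)^2}+\tfrac{L_r}{1-\gamma}$. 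For the second claim, view $\bm{\mu}_{\vtheta}$ as the time-invariant policy $(\bm{\mu}_{\vtheta})_{t=0}^{T-1}$, so $\Jd(\vtheta)=\Jd((\bm{\mu}_{\vtheta})_t)$; applying the first claim to $(\bm{\mu}_{\vtheta})_t$ and $(\bm{\mu}_{\overline{\vtheta}})_t$ and then $\sup_{\vs}\|\bm{\mu}_{\vtheta}(\vs)-\bm{\mu}_{\overline{\vtheta}}(\vs)\|_2\le L_{\mu}\|\vtheta-\overline{\vtheta}\|_2$ (Assumption~\ref{ass:lipPol}) yields $|\Jd(\vtheta)-\Jd(\overline{\vtheta})|\le\bigl(\sum_t L_t\bigr)L_{\mu}\|\vtheta-\overline{\vtheta}\|_2=L\,L_{\mu}\|\vtheta-\overline{\vtheta}\|_2$, i.e.\ $L_J\le L\,L_{\mu}$.

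The main obstacle is the total-variation estimate $\int|p(\vs'\mid\vs,\va)-p(\vs'\mid\vs,\overline{\va})|\,\mathrm d\vs'\le L_p\|\va-\overline{\va}\|_2$: the naive consequence of $\log$-Lipschitzness, $|p-\overline{p}|\le\overline{p}\,(e^{L_p\|\va-\overline{\va}\|_2}-1)$, only produces an exponential bound, so one must instead integrate the pointwise inequality $\|\nabla_{\va}p(\vs'\mid\vs,\va)\|_2\le L_p\,p(\vs'\mid\vs,\va)$ along the segment joining $\va$ and $\overline{\va}$ and apply Fubini to use $\int p(\vs'\mid\vs,\cdot)\,\mathrm d\vs'=1$, which is where a.e.\ differentiability of the Lipschitz map $\va\mapsto\log p(\vs'\mid\vs,\va)$ enters. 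The only other point demanding care is the bookkeeping of the $\gamma$-powers, so that the per-step constant comes out as $\tfrac{\gamma^{t+1}-\gamma^T}{1-\gamma}L_pR_{\max}+\gamma^tL_r$ and not an off-by-one variant.
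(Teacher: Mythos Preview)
Your proof is correct and takes a genuinely different route from the paper's. The paper parameterizes the perturbation as a constant action shift $\underline{\bm{\mu}}'=\underline{\bm{\mu}}+\underline{\vepsilon}$ with $\vepsilon_t\in\Reals^{\da}$, applies a first-order Taylor expansion $f(\underline{\vepsilon})=f(\underline{\bm 0})+\sum_t\vepsilon_t^\top\nabla_{\vepsilon_t}f(\widetilde{\underline{\vepsilon}})$, and then bounds each $\|\nabla_{\vepsilon_k}f\|_2$ via the score-function identity $\nabla_{\vepsilon_k}p=p\,\nabla_{\vepsilon_k}\log p$ together with $\|\nabla_{\va}\log p\|_2\le L_p$ and $\|\nabla_{\va}r\|_2\le L_r$. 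Your hybrid (one-step policy swap) decomposition instead isolates the effect of each time step directly and controls the transition part through a total-variation bound $\int|p-p'|\,\mathrm d\vs'\le L_p\|\va-\overline{\va}\|_2$, which you correctly derive by integrating $\|\nabla_{\va}p\|_2\le L_p\,p$ along the segment and applying Fubini. The two arguments produce the same per-step constant $L_t=\gamma^t L_r+\tfrac{\gamma^{t+1}-\gamma^T}{1-\gamma}L_pR_{\max}$ and hence the same $L$ and $L_J$. Your approach buys a bit of extra generality: it handles arbitrary (state-dependent) policy changes $\bm{\mu}_t\mapsto\bm{\mu}_t'$, whereas the paper's Taylor-in-$\vepsilon_t$ derivation, as written, treats the state-independent additive shift case (which is all that is needed downstream in Theorem~\ref{thr:deployAB}). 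The paper's route is slightly more streamlined in that it never needs the TV/Fubini step---the gradient bound follows pointwise from Lipschitzness without integrating over $\vs'$.
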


\begin{proof}
    In AB exploration, we introduce the following convenient expression for the trajectory density function having fixed a sequence of noise $\underline{\vepsilon}$:
    \begin{align}
     p_{\text{D}}(\tau; \underline{\bm{\mu}} + \underline{\vepsilon}) = \rho_0(s_{\tau, 0}) \prod_{t=0}^{T-1}  p(s_{\tau, t+1} | s_{\tau, t}, \bm{\mu}_t(s_{\tau, t}) + \vepsilon_t).
    \end{align}
    Furthermore, we denote with $ p_{\text{D}}(\tau_{0:l}; \underline{\bm{\mu}} + \underline{\vepsilon})$ the density function of a trajectory prefix of length $l$:
    \begin{align}
       p_{\text{D}}(\tau_{0:l}; \underline{\bm{\mu}} + \underline{\vepsilon}) = \rho_0(s_{\tau, 0}) \prod_{t=0}^{l-1}  p(s_{\tau, t+1} | s_{\tau, t}, \bm{\mu}_t(s_{\tau, t}) + \vepsilon_t).  
    \end{align}
    Let us decompose $\underline{\bm{\mu}}' = \underline{\bm{\mu}}' + \underline{\vepsilon}$. We have:
    \begin{align*}
        \Jd(\underline{\bm{\mu}}') & = \int_{\tau}  p_{\text{D}}(\tau; \underline{\bm{\mu}} + \underline{\vepsilon})  \sum_{t=0}^{T-1} \gamma^t r(s_t, \bm{\mu}_t(s_t)+ \vepsilon_t ) \de \tau \\
        & = \underbrace{\sum_{t=0}^{T-1}  \int_{\tau_{0:t}}  p_{\text{D}}(\tau_{0:t}; \underline{\bm{\mu}} + \underline{\vepsilon})  \gamma^t r(s_t, \bm{\mu}_t(s_t)+ \vepsilon_t ) \de \tau_{0:t}}_{\eqqcolon f(\underline{\vepsilon})}.
    \end{align*}
    Note that given the definition of $f(\underline{\vepsilon})$, we have that $ f(\underline{\mathbf{0}_{\da}})= \Jd(\underline{\bm{\mu}})$.  Using Taylor expansion, we have for $\widetilde{\underline{\vepsilon}} = x{\underline{\vepsilon}}$, for some $x \in [0,1]$:
    \begin{align*}
        \Jd(\underline{\bm{\mu}}')  &= f(\underline{\vepsilon}) \\
        &= f(\underline{\mathbf{0}_{\da}}) + \underline{\vepsilon}^\top  \nabla_{\underline{\vepsilon}}  f(\underline{\vepsilon})\rvert_{\underline{\vepsilon} = \widetilde{\underline{\vepsilon}}}\\
        &= \Jd(\underline{\bm{\mu}}) + \sum_{t=0}^{T-1} \vepsilon_t^\top \nabla_{\vepsilon_t} f(\underline{\vepsilon})\rvert_{\underline{\vepsilon} = \widetilde{\underline{\vepsilon}}}\\
        &\le \Jd(\underline{\bm{\mu}})+ \sum_{t=0}^{T-1} \|\vepsilon_t\|_2   \|\nabla_{\vepsilon_t} f(\underline{\vepsilon})\rvert_{\underline{\vepsilon} = \widetilde{\underline{\vepsilon}}}\|_2.
    \end{align*}
    We want to find a bound for the $ \|\nabla_{\vepsilon_t} f(\vepsilon)\rvert_{\underline{\vepsilon} = \widetilde{\underline{\vepsilon}}}\|_2^2$ which is different for every $t$. This will result in the Lipschitz constant $L_t$. 
      We have for $k \in \dsb{0,T-1}$:
    \begin{align*}
        \left\| \nabla_{\underline{\vepsilon}_k} f(\underline{\vepsilon}) \right\|_2 &\leq \E_{\tau \sim p_{\text{D}}(\cdot; \underline{\bm{\mu}} + \underline{\vepsilon})} \left[  \sum_{t=0}^{T-1} \left\| \nabla_{{\vepsilon}_k} \log p_{\text{D}}(\tau_{0:t}; \underline{\bm{\mu}} + \underline{\vepsilon}) \right\|_2 \gamma^t |r(s_t, \bm{\mu}_{t}(s_t)+ {\vepsilon}_t )| +  \sum_{t=0}^{T-1} \gamma^t \|\nabla_{{\vepsilon}_k} r(s_t, \bm{\mu}_{t}(s_t)+ {\vepsilon}_t )\|_2 \right] \\
        &  \leq \E_{\tau \sim p_{\text{D}}(\cdot; \underline{\bm{\mu}} + \underline{\vepsilon})} \left[  \sum_{t=0}^{T-1}  \gamma^t |r(s_t, \bm{\mu}_{t}(\vs_t)+ {\vepsilon}_t ) |\sum_{l=0}^{t-1} \| \nabla_{{\vepsilon}_k}  p(s_{l+1}|\vs_l, \bm{\mu}_{l}(s_l)+{\vepsilon}_l)  \|_2  +  \sum_{t=0}^{T-1} \gamma^t \|\nabla_{{\vepsilon}_k} r(s_t, \bm{\mu}_{t}(s_t)+ {\vepsilon}_t )\|_2 \right] \\
        &  = \E_{\tau \sim p_{\text{D}}(\cdot; \underline{\bm{\mu}} + \underline{\vepsilon})} \left[  \sum_{t=k+1}^{T-1}  \gamma^t |r(s_t, \bm{\mu}_{t}(s_t)+ {\vepsilon}_t ) | \| \nabla_{{\vepsilon}_k}  p(s_{k+1}|s_k, \bm{\mu}_{k}(s_k)+{\vepsilon}_k)  \|_2  +  \gamma^k \|\nabla_{{\vepsilon}_k} r(s_k,\bm{\mu}_{k}(s_k)+ {\vepsilon}_k )\|_2 \right] \\
        &\leq \sum_{t=k+1}^{T-1} \gamma^t  R_{\max}  L_{ p} + \gamma^k L_r   \\
        & = \frac{\gamma^{k+1}-\gamma^T}{1-\gamma} L_{ p}  R_{\max} + \gamma^k L_r \eqqcolon L_k.
    \end{align*}
    Thus, we have:
    \begin{align*}
        \sum_{k=0}^{T-1}L_t &= \sum_{k=0}^{T-1} \frac{\gamma^{k+1}-\gamma^T}{1-\gamma} L_{ p}  R_{\max} + \gamma^k L_r \\ 
        &= \frac{\gamma + \gamma^{1 + T} (T-1) - T \gamma^T }{(1-\gamma)^2} L_{ p}  R_{\max} + \frac{1 - \gamma^T}{1 - \gamma} L_r \\
        & \le \frac{\gamma(1-\gamma^T)}{(1-\gamma)^2} L_p R_{\max} + \frac{(1-\gamma^T)}{1-\gamma} L_r \eqqcolon L.
    \end{align*}
    
 For the PB exploration, we consider the trajectory density function:
    \begin{align}
        p_{\text{A}}(\tau;\vtheta+\vepsilon) =  \rho_0(s_{\tau, 0}) \prod_{t=0}^{T-1}  p(s_{\tau, t+1} | s_{\tau, t}, \bmu[\vtheta+\vepsilon](s_{\tau, t})),
    \end{align}
    and the corresponding version for a trajectory prefix:
    \begin{align}
         p_{\text{D}}(\tau_{0:l}; \vtheta+\vepsilon) = \rho_0(s_{\tau, 0}) \prod_{t=0}^{l-1}  p(s_{\tau, t+1} | s_{\tau, t}, \bmu[\vtheta+\vepsilon](s_{\tau, t}) ).  
    \end{align}
    With such a notation, we can write the $\vtheta' = \vtheta+\vepsilon$ index as follows:

\begin{align*}
\Jd(\vtheta') = \int_{\tau} p_{\text{D}}(\tau;\vtheta+\vepsilon) \sum_{t=0}^{T-1} \gamma^t r(s_t, \bm{\mu}_{\vtheta+ \vepsilon}(s_t) ) \de \tau = \underbrace{\int_{\tau}  \sum_{t=0}^{T-1} p_{\text{D}}(\tau_{0:t};\vtheta + \vepsilon) \gamma^t r(s_t, \bm{\mu}_{\vtheta + \vepsilon}(s_t))}_{\eqqcolon g(\vepsilon)} .
\end{align*}
We recall that $ g(\mathbf{0}_{\dt}) = \Jd(\vtheta)$.
By using Taylor expansion, where $\widetilde{\vepsilon} = x\vepsilon$ for some $x \in [0,1]$:
\begin{align}
    \Jd(\vtheta') & = g(\vepsilon) \\
    & = g(\mathbf{0}_{\dt}) +  \vepsilon^\top\nabla_{\vepsilon}g(\vepsilon)\rvert_{\vepsilon=\widetilde{\vepsilon}}\\
    & \le  \Jd(\vtheta) + \|\vepsilon\|_2  \|\nabla_{\vepsilon}g(\vepsilon)\rvert_{\vepsilon=\widetilde{\vepsilon}}\|_2.
\end{align}
We now bound the norm of the gradient:
\begin{align*}
\| \nabla_{\vepsilon} g(\vepsilon) \|_2 &\leq \E_{\tau \sim p_{\text{A}}(\cdot;\bm{\mu}_{\vtheta + \vepsilon})} \left[  \sum_{t=0}^{T-1} \| \nabla_{\vepsilon} \log p(\tau_{0:t};{{\vtheta + \vepsilon}})\|_2 \gamma^t |r(s_t, \bm{\mu}_{\vtheta+ \vepsilon}(s_t) )| +  \sum_{t=0}^{T-1} \gamma^t \|\nabla_{\vepsilon} r(s_t, \bm{\mu}_{\bm{\theta}+ \vepsilon}(s_t) )\|_2 \right] \\
& \leq \E_{\tau \sim p_{\text{A}}(\cdot;\bm{\mu}_{\vtheta + \vepsilon})} \Bigg[  \sum_{t=0}^{T-1}  \gamma^t |r(s_t, \bm{\mu}_{\vtheta+ \vepsilon}(s_t) ) |\sum_{l=0}^{t-1} \| \nabla_{\ba} \log p(s_{l+1}|s_l,\ba) \rvert_{\ba = \bm{\mu}_{\vtheta+\epsilon}(s_t)} \|_2 \| \nabla_{\vepsilon} \bm{\mu}_{\vtheta + \vepsilon}(s_l) \|_2 \\
& \qquad  +  \sum_{t=0}^{T-1} \gamma^t \|\nabla_{\ba} r(s_t, \ba ) \rvert_{\ba=\bm{\mu}_{\vtheta+ \vepsilon}(s_t)}\|_2  \| \nabla_{\vepsilon} \bm{\mu}_{\bm{\theta} + \vepsilon}(s_t) \|_2 \Bigg] \\
&\leq \sum_{t=0}^{T-1}  \gamma^t R_{\max} t L_{ p} L_{{\mu}} + \frac{1 - \gamma^T}{1 - \gamma}  L_r L_{{\mu}} \\
		&  = \frac{\gamma + \gamma^{1 + T} (T-1) - T \gamma^T }{(1-\gamma)^2} L_{ p} L_{{\mu}} R_{\max} + \frac{1 - \gamma^T}{1 - \gamma} L_rL_{{\mu}} = L L_{{\mu}}.
\end{align*}
\end{proof}

\begin{ass}[Smooth $\Jd$ w.r.t. parameter $\vtheta$]\label{ass:smoothJ}  $\Jd$ is $L_2$-LS \wrt parameter $\vtheta$, i.e., for every $\vtheta,\vtheta' \in \Theta$, we have:
\begin{align}
    \| \nabla_{\vtheta}\Jd(\vtheta')-\nabla_{\vtheta}\Jd(\vtheta)\|_2 \le L_2 \|\vtheta' - \vtheta \|_2.
\end{align}
\end{ass}

\begin{lemma}[$L_2$ Characterization] \label{lem:L_2_characterization}
    Assumptions~\ref{ass:lipMdp}, \ref{ass:lipPol}, \ref{ass:smoothMdp}, and \ref{ass:smoothPol} imply Assumption~\ref{ass:smoothJ}  with
    \begin{align*}
        L_2 \leq \frac{\gamma (1 + \gamma) L_{\mu}^2 L_{p}^2 R_{\max}}{(1-\gamma)^3} + \frac{\gamma(2 L_{\mu}^2 L_{p} L_{r} + L_{2, \mu} L_{2, p} R_{\max})}{(1-\gamma)^2} + \frac{L_{2, \mu} L_{2, r}}{1 - \gamma}.
    \end{align*}
\end{lemma}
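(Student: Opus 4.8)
The plan is to prove Lemma~\ref{lem:L_2_characterization} by differentiating the policy-gradient expression for $\nabla_{\vtheta}\Jd(\vtheta)$ one more time, obtaining an explicit formula for the Hessian $\nabla_{\vtheta}^2\Jd(\vtheta)$, and then bounding its spectral norm term by term using the Lipschitz/smoothness constants of $p$, $r$, and $\bm{\mu}_{\vtheta}$. The starting point is the finite-horizon likelihood-ratio form of the gradient, $\nabla_{\vtheta}\Jd(\vtheta) = \E_{\tau\sim p_{\text{D}}(\cdot;\vtheta)}\big[\sum_{t=0}^{T-1}\big(\sum_{k=0}^{t}\nabla_{\vtheta}\log p(\vs_{t+1}\mid\vs_t,\bm{\mu}_{\vtheta}(\vs_t))\big)\gamma^t r(\vs_t,\bm{\mu}_{\vtheta}(\vs_t))\big]$, where each score term factors through the chain rule as $\nabla_{\vtheta}\bm{\mu}_{\vtheta}(\vs_k)\,\nabla_{\va}\log p(\vs_{k+1}\mid\vs_k,\va)\big|_{\va=\bm{\mu}_{\vtheta}(\vs_k)}$. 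Differentiating once more, I would organize the result into three families of terms, as done in the proofs of Lemma~\ref{lem:Jp_bounded_hessian} and Lemma~\ref{lem:Ja_bounded_hessian}: (a) a "double-score'' term $\propto (\sum_k \nabla_{\vtheta}\log p)(\sum_k \nabla_{\vtheta}\log p)^\top$ times the return; (b) a cross term pairing one score with the gradient of the reward; (c) a "second-order'' term involving $\nabla_{\vtheta}^2\log p$ and $\nabla_{\vtheta}^2 r$, each of which expands via the chain rule into a piece with $\nabla_{\vtheta}^2\bm{\mu}_{\vtheta}$ (controlled by $L_{2,\mu}$) and a piece with $(\nabla_{\vtheta}\bm{\mu}_{\vtheta})^{\otimes 2}$ composed with $\nabla_{\va}^2\log p$ or $\nabla_{\va}^2 r$ (controlled by $L_{2,p}, L_{2,r}$).

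The bookkeeping then proceeds by plugging in the uniform bounds $\|r\|\le R_{\max}$, $\|\nabla_{\va}\log p\|\le L_p$, $\|\nabla_{\va}r\|\le L_r$, $\|\nabla_{\vtheta}\bm{\mu}_{\vtheta}\|\le L_\mu$, $\|\nabla_{\va}^2\log p\|\le L_{2,p}$, $\|\nabla_{\va}^2 r\|\le L_{2,r}$, $\|\nabla_{\vtheta}^2\bm{\mu}_{\vtheta}\|\le L_{2,\mu}$, and then summing the resulting geometric series in $\gamma$. The double-score term, having $k\le t$ nested scores each of magnitude $L_p L_\mu$ and discount $\gamma^t$, contributes $\sum_{t} (t+1)^? \gamma^t \cdot (L_p L_\mu)^2 R_{\max}$ — more precisely the combinatorics of $(\sum_{k=0}^t \cdot)(\sum_{j=0}^t \cdot)^\top$ gives roughly $\sum_t \gamma^t \cdot O(t)$ and possibly $O(t^2)$ contributions, which I expect to collapse to the $\frac{\gamma(1+\gamma)}{(1-\gamma)^3} L_\mu^2 L_p^2 R_{\max}$ leading term (the $(1-\gamma)^{-3}$ matching the $\sum_t t\gamma^t$ order, with the $(1+\gamma)$ absorbing the two ways of distributing indices). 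The cross term gives $\sum_t \gamma^t \cdot O(t) \cdot L_\mu^2 L_p L_r$, hence the $\frac{2\gamma}{(1-\gamma)^2} L_\mu^2 L_p L_r$ contribution (factor 2 from symmetry of the cross pairing). The second-order terms give $\frac{\gamma L_{2,\mu}L_{2,p}R_{\max}}{(1-\gamma)^2}$ (from the $\nabla_{\vtheta}^2\log p$ piece, summed with the $\sum_t t\gamma^t \cdot \gamma$-type series) and $\frac{L_{2,\mu}L_{2,r}}{1-\gamma}$ (from $\nabla_{\vtheta}^2 r$, a plain $\sum_t\gamma^t$). Collecting everything and bounding $T$-dependent numerators by their $T\to\infty$ values yields exactly the claimed expression.

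I would structure the proof to mirror the AB/PB Hessian lemmas already in the appendix (Lemmas~\ref{lem:Jp_bounded_hessian}, \ref{lem:Ja_bounded_hessian}), since this $L_2$ bound is precisely what those cite as "Assumption~\ref{ass:smoothJ}''. The cleanest route is: write $\Jd(\vtheta) = \int_\tau p_{\text{D}}(\tau;\vtheta)R(\tau)\,\de\tau$; compute $\nabla_{\vtheta}^2\Jd(\vtheta) = \E_{\tau}\big[(\nabla_{\vtheta}\log p_{\text{D}}\,\nabla_{\vtheta}\log p_{\text{D}}^\top + \nabla_{\vtheta}^2\log p_{\text{D}})R(\tau) + \nabla_{\vtheta}\log p_{\text{D}}\,\nabla_{\vtheta}R(\tau)^\top + \nabla_{\vtheta}R(\tau)\,\nabla_{\vtheta}\log p_{\text{D}}^\top + \nabla_{\vtheta}^2 R(\tau)\big]$ using the standard "score + log-derivative-of-score'' identity; substitute the chain-rule expansions for $\nabla_{\vtheta}\log p_{\text{D}} = \sum_t \nabla_{\vtheta}\bm{\mu}_{\vtheta}(\vs_t)\nabla_{\va}\log p(\cdots)$ and $\nabla_{\vtheta}R(\tau) = \sum_t \gamma^t \nabla_{\vtheta}\bm{\mu}_{\vtheta}(\vs_t)\nabla_{\va}r(\cdots)$; take norms, apply the triangle inequality and submultiplicativity; and sum the geometric/arithmetico-geometric series.

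\textbf{The main obstacle} I anticipate is the combinatorial accounting of the double-score term: $(\sum_{k=0}^{T-1}\nabla_{\vtheta}\log p(\vs_{k+1}|\vs_k,\cdot))(\sum_{j=0}^{T-1}\nabla_{\vtheta}\log p(\cdots))^\top$ weighted by $\gamma$-discounted rewards produces a double sum over $(k,j)$ whose contribution to the operator norm, after inserting $|R(\tau)|\le R_{\max}\frac{1-\gamma^T}{1-\gamma}$ naively, would give $(1-\gamma)^{-3}$ with a constant that must be tracked carefully to land the exact $\gamma(1+\gamma)$ coefficient rather than a looser bound. Matching the stated constant (as opposed to merely the $O((1-\gamma)^{-3})$ order) requires being careful about whether one bounds $R(\tau)$ crudely or exploits the reward-at-step-$t$ structure; I would most likely follow the tighter "reward-at-step-$t$'' expansion used in Lemma~\ref{lem:Jd_Ja_mixed_weak_grad_dom}'s term (i), writing the Hessian as a sum over the step index $t$ of prefix-expectations, so that the score sums only run up to $t$ and the geometric series telescope into the claimed closed forms. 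Everything else is routine norm-bounding and summation of $\sum_t \gamma^t$, $\sum_t t\gamma^t = \gamma(1-\gamma)^{-2}$ (up to $T$-truncation), and $\sum_t t^2\gamma^t$-type series.
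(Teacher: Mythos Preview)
Your proposal is correct and follows essentially the same approach as the paper's proof: the paper writes $\Jd(\vtheta)=\sum_{t}\int_{\tau_{0:t}}p_{\text{D}}(\tau_{0:t};\vtheta)\gamma^t r(\vs_t,\bm{\mu}_{\vtheta}(\vs_t))\,\de\tau_{0:t}$, differentiates twice to obtain exactly the four families of terms you list (double-score, two cross terms, and second-order in $\log p_{\text{D}}$ and $r$), bounds their norms via the Lipschitz/smoothness constants, and then sums $\sum_t t^2\gamma^t$, $\sum_t t\gamma^t$, $\sum_t \gamma^t$ to get the stated constants. Your anticipated ``main obstacle'' and its resolution---using the prefix-trajectory decomposition so that the score sums run only up to $t$---is precisely what the paper does, and the $\gamma(1+\gamma)/(1-\gamma)^3$ factor comes from the closed form of $\sum_{t\ge 0} t^2\gamma^t$ as you surmised.
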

\begin{proof}
    It suffices to find a bound to the quantity $\left\| \nabla_{\vtheta}^2 \Jd(\vtheta) \right\|_2$, for a generic $\vtheta \in \Theta$.
    Notice that in the following we use the notation $\tau_{0:l}$ to refer to a trajectory of length $l$.
    Recalling that:
    \begin{align*}
        \Jd(\vtheta) = \E_{\tau \sim p_{\text{D}}(\cdot | \vtheta)} \left[ \sum_{t=0}^{T-1} \gamma^t r(s_t, \mu_{\vtheta}(s_t)) \right],
    \end{align*}   
    we have what follows:
    \begin{align*}
        \nabla_{\vtheta}^2 \Jd(\vtheta) &= \nabla_{\vtheta}^2 \E_{\tau \sim p_{\text{D}}(\cdot | \vtheta)} \left[ \sum_{t=0}^{T-1} \gamma^t r(s_t, \mu_{\vtheta}(s_t)) \right] \\
        &= \nabla_{\vtheta}^2 \int_{\tau} p_{\text{D}}(\tau, \vtheta) \sum_{t=0}^{T-1} \gamma^t r(s_t, \mu_{\vtheta}(s_t)) \de \tau \\
        &= \sum_{t=0}^{T-1} \nabla_{\vtheta}^2 \int_{\tau_{0:t}} p_{\text{D}}(\tau_{0:t}, \vtheta) \gamma^t r(s_t, \mu_{\vtheta}(s_t)) \de \tau_{0:t} \\
        &= \sum_{t=0}^{T-1} \nabla_{\vtheta} \int_{\tau_{0:t}} p_{\text{D}}(\tau_{0:t}, \vtheta) \left( \nabla_{\vtheta} \log p_{\text{D}}(\tau_{0:t}, \vtheta) \gamma^t r(s_t, \mu_{\vtheta}(s_t)) + \gamma^t \nabla_{\vtheta}r(s_t, \mu_{\vtheta}(s_t)) \right) \de \tau_{0:t} \\
        &= \sum_{t=0}^{T-1} \E_{\tau_{0:t} \sim p_{\text{D}}(\cdot | \vtheta)} \Bigg[ \nabla_{\vtheta} \log p_{\text{D}}(\tau_{0:t}, \vtheta) \left( \nabla_{\vtheta} \log p_{\text{D}}(\tau_{0:t}, \vtheta) \gamma^t r(s_t, \mu_{\vtheta}(s_t)) + \gamma^t \nabla_{\vtheta}r(s_t, \mu_{\vtheta}(s_t)) \right) \\
        &\qquad + \nabla_{\vtheta}^2 \log p_{\text{D}}(\tau_{0:t}, \vtheta) \gamma^t r(s_t, \mu_{\vtheta}(s_t)) + \nabla_{\vtheta} \log p_{\text{D}}(\tau_{0:t}, \vtheta) \gamma^t \nabla_{\vtheta} r(s_t, \mu_{\vtheta}(s_t)) + \gamma^t \nabla_{\vtheta}^2 r(s_t, \mu_{\vtheta}(s_t)) \Bigg].
    \end{align*}
    Now that we have characterized $\nabla_{\vtheta}^2 \Jd(\vtheta)$, we can consider its norm by applying the assumptions in the statement, obtaining the following result:
    \begin{align*}
        &\left\| \nabla_{\vtheta}^2 \Jd(\vtheta) \right\| \\
        &\leq \sum_{t=0}^{T-1} \E_{\tau_{0:t} \sim p_{\text{D}}(\cdot | \vtheta)} \Bigg[ \left\| \nabla_{\vtheta} \log p_{\text{D}}(\tau_{0:t}, \vtheta) \right\|_2 \left( \left\| \nabla_{\vtheta} \log p_{\text{D}}(\tau_{0:t}, \vtheta) \right\|_2 \gamma^t \left| r(s_t, \mu_{\vtheta}(s_t)) \right| + \gamma^t \left\| \nabla_{\vtheta}r(s_t, \mu_{\vtheta}(s_t)) \right\|_2 \right) \\
        &\qquad + \left\| \nabla_{\vtheta}^2 \log p_{\text{D}}(\tau_{0:t}, \vtheta) \right\|_2 \gamma^t \left| r(s_t, \mu_{\vtheta}(s_t)) \right| + \left\| \nabla_{\vtheta} \log p_{\text{D}}(\tau_{0:t}, \vtheta) \right\|_2 \gamma^t \left\| \nabla_{\vtheta} r(s_t, \mu_{\vtheta}(s_t)) \right\|_2 + \gamma^t \left\| \nabla_{\vtheta}^2 r(s_t, \mu_{\vtheta}(s_t)) \right\|_2 \Bigg] \\
        &\leq \sum_{t=0}^{T-1} L_{\mu}^2 L_{p}^2 R_{\max} t^2 \gamma^t + (2 L_{\mu}^2 L_{p} L_{r} + L_{2, \mu} L_{2, p} R_{\max}) t \gamma^t + L_{2, \mu} L_{2, r} \gamma^t \\
        &\leq L_{\mu}^2 L_{p}^2 R_{\max} \gamma \frac{1 + \gamma - T^2 \gamma^{T-1} + (2(T-1)^2 +2(T-1) - 1)\gamma^T - (T-1)^2 \gamma^{T+1}}{(1-\gamma)^3} \\
        &\qquad + (2 L_{\mu}^2 L_{p} L_{r} + L_{2, \mu} L_{2, p} R_{\max}) \gamma \frac{1- T \gamma^{T-1} + (T-1) \gamma^T}{(1-\gamma)^2} + L_{2, \mu} L_{2, r} \frac{1-\gamma^T}{1 - \gamma} \\
        &\leq L_{\mu}^2 L_{p}^2 R_{\max} \gamma \frac{1 + \gamma - \gamma^T}{(1-\gamma)^3} + (2 L_{\mu}^2 L_{p} L_{r} + L_{2, \mu} L_{2, p} R_{\max}) \gamma \frac{1- \gamma^T}{(1-\gamma)^2} + L_{2, \mu} L_{2, r} \frac{1-\gamma^T}{1 - \gamma} \\
        &\leq \frac{\gamma (1 + \gamma) L_{\mu}^2 L_{p}^2 R_{\max}}{(1-\gamma)^3} + \frac{\gamma(2 L_{\mu}^2 L_{p} L_{r} + L_{2, \mu} L_{2, p} R_{\max})}{(1-\gamma)^2} + \frac{L_{2, \mu} L_{2, r}}{1 - \gamma}.
    \end{align*}
\end{proof}

\begin{lemma}\label{lemma:boundsMagic}
    Let $\pi_{\vtheta}$ be a white noise-based policy. Under Assumption~\ref{ass:lipPol},~\ref{ass:smoothPol}, and ~\ref{ass:magic} it holds that for every $s \in \mathcal{S}$:
     \begin{enumerate}[leftmargin=*, noitemsep, label=($\roman*$), topsep=-2pt]
        \item  $\E_{\va \sim \pi_{\vtheta}(\va|\vs)} [\| \nabla_{\vtheta} \log \pi_{\vtheta}(\va|\vs)\|_2^2] \le   c\da \sigma^{-2}_{\text{A}} L_\mu^2$;
        \item  $\E_{\va \sim \pi_{\vtheta}(\va|\vs)} [\| \nabla_{\vtheta}^2 \log \pi_{\vtheta}(\va|\vs)\|_2] \le  c \sigma_{\text{A}}^{-2} L_\mu^2 +c \sqrt{\da} \sigma_{\text{A}}^{-1} L_{2,\mu} $.
    \end{enumerate}
\end{lemma}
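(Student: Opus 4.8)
The plan is to exploit the defining structure of a white noise-based policy: by Definition~\ref{defi:add}, the conditional density of $\pi_{\vtheta}$ factorizes as $\pi_{\vtheta}(\va|\vs)=\phi(\va-\bm{\mu}_{\vtheta}(\vs))$, where $\phi$ is the density of the white noise $\Phi_{\da}$, so that $\log\pi_{\vtheta}(\va|\vs)=\log\phi(\va-\bm{\mu}_{\vtheta}(\vs))$ and the policy scores are obtained from the noise scores by the chain rule. Writing $\vepsilon\coloneqq\va-\bm{\mu}_{\vtheta}(\vs)$ and using $\nabla_{\vtheta}\vepsilon=-\nabla_{\vtheta}\bm{\mu}_{\vtheta}(\vs)$ (with $\nabla_{\vtheta}\bm{\mu}_{\vtheta}(\vs)$ the $\dt\times\da$ Jacobian), one gets
\begin{align*}
\nabla_{\vtheta}\log\pi_{\vtheta}(\va|\vs) = -\nabla_{\vtheta}\bm{\mu}_{\vtheta}(\vs)\,\nabla_{\vepsilon}\log\phi(\vepsilon).
\end{align*}
Assumption~\ref{ass:lipPol} is equivalent to the operator-norm bound $\|\nabla_{\vtheta}\bm{\mu}_{\vtheta}(\vs)\|_2\le L_\mu$ for all $\vs$, hence by submultiplicativity $\|\nabla_{\vtheta}\log\pi_{\vtheta}(\va|\vs)\|_2\le L_\mu\|\nabla_{\vepsilon}\log\phi(\vepsilon)\|_2$. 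Squaring, taking $\E_{\va\sim\pi_{\vtheta}(\cdot|\vs)}$, and performing the change of variables $\va\mapsto\vepsilon$, which maps $\pi_{\vtheta}(\cdot|\vs)$ onto $\Phi_{\da}$, claim~($i$) follows immediately from Assumption~\ref{ass:magic}~($i$):
\begin{align*}
\E_{\va\sim\pi_{\vtheta}(\cdot|\vs)}\big[\|\nabla_{\vtheta}\log\pi_{\vtheta}(\va|\vs)\|_2^2\big] \le L_\mu^2\,\E_{\vepsilon\sim\Phi_{\da}}\big[\|\nabla_{\vepsilon}\log\phi(\vepsilon)\|_2^2\big] \le c\,\da\,\asigma^{-2}L_\mu^2.
\end{align*}

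For claim~($ii$) I would differentiate the previous identity once more in $\vtheta$. Applying the product rule to $-\nabla_{\vtheta}\bm{\mu}_{\vtheta}(\vs)\,\nabla_{\vepsilon}\log\phi(\vepsilon)$ and using $\nabla_{\vtheta}\vepsilon=-\nabla_{\vtheta}\bm{\mu}_{\vtheta}(\vs)$ for the inner derivative of $\nabla_{\vepsilon}\log\phi(\vepsilon)$, one obtains
\begin{align*}
\nabla_{\vtheta}^2\log\pi_{\vtheta}(\va|\vs) = -\big(\nabla_{\vtheta}^2\bm{\mu}_{\vtheta}(\vs)\big)\big[\nabla_{\vepsilon}\log\phi(\vepsilon)\big] + \nabla_{\vtheta}\bm{\mu}_{\vtheta}(\vs)\,\nabla_{\vepsilon}^2\log\phi(\vepsilon)\,\nabla_{\vtheta}\bm{\mu}_{\vtheta}(\vs)^\top.
\end{align*}
Taking operator norms and using $\|\nabla_{\vtheta}^2\bm{\mu}_{\vtheta}(\vs)\|_2\le L_{2,\mu}$ (Assumption~\ref{ass:smoothPol}) and $\|\nabla_{\vtheta}\bm{\mu}_{\vtheta}(\vs)\|_2\le L_\mu$ (Assumption~\ref{ass:lipPol}) gives $\|\nabla_{\vtheta}^2\log\pi_{\vtheta}(\va|\vs)\|_2 \le L_{2,\mu}\|\nabla_{\vepsilon}\log\phi(\vepsilon)\|_2 + L_\mu^2\|\nabla_{\vepsilon}^2\log\phi(\vepsilon)\|_2$. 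I would then take $\E_{\va\sim\pi_{\vtheta}(\cdot|\vs)}$, change variables to $\vepsilon\sim\Phi_{\da}$, bound $\E_{\vepsilon}[\|\nabla_{\vepsilon}\log\phi(\vepsilon)\|_2]\le(\E_{\vepsilon}[\|\nabla_{\vepsilon}\log\phi(\vepsilon)\|_2^2])^{1/2}\le\sqrt{c}\,\sqrt{\da}\,\asigma^{-1}$ by Jensen's inequality and Assumption~\ref{ass:magic}~($i$), and $\E_{\vepsilon}[\|\nabla_{\vepsilon}^2\log\phi(\vepsilon)\|_2]\le c\,\asigma^{-2}$ by Assumption~\ref{ass:magic}~($ii$), yielding exactly the stated bound (absorbing $\sqrt{c}$ into the universal constant $c$).

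The only delicate point — and the step I would be most careful with — is the tensor/matrix bookkeeping in the chain and product rules: namely, that the spectral norm of the Jacobian $\nabla_{\vtheta}\bm{\mu}_{\vtheta}(\vs)$ is controlled by the Lipschitz constant $L_\mu$ of Assumption~\ref{ass:lipPol}, that the object $\nabla_{\vtheta}^2\bm{\mu}_{\vtheta}(\vs)$ appearing above is precisely the one whose norm Assumption~\ref{ass:smoothPol} bounds by $L_{2,\mu}$, and that its contraction with a vector, $\big(\nabla_{\vtheta}^2\bm{\mu}_{\vtheta}(\vs)\big)[v]$, has matrix norm at most $\|\nabla_{\vtheta}^2\bm{\mu}_{\vtheta}(\vs)\|_2\|v\|_2$. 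One also needs $\phi$ to be twice differentiable so that $\nabla_{\vepsilon}^2\log\phi$ is well defined, which is consistent with Assumption~\ref{ass:magic}. Everything else — the change of variables and the Cauchy–Schwarz/Jensen step — is routine.
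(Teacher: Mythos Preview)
Your proposal is correct and follows essentially the same approach as the paper's proof: both use the identification $\pi_{\vtheta}(\va|\vs)=\phi(\va-\bm{\mu}_{\vtheta}(\vs))$, apply the chain rule to obtain $\nabla_{\vtheta}\log\pi_{\vtheta}=-\nabla_{\vtheta}\bm{\mu}_{\vtheta}\nabla_{\vepsilon}\log\phi$ and the corresponding two-term product-rule expression for the Hessian, then bound the Jacobian and second-derivative norms via Assumptions~\ref{ass:lipPol} and~\ref{ass:smoothPol} and invoke Assumption~\ref{ass:magic} after the change of variables $\va\mapsto\vepsilon$. You are slightly more explicit than the paper about the Jensen step $\E[\|\nabla_{\vepsilon}\log\phi\|_2]\le(\E[\|\nabla_{\vepsilon}\log\phi\|_2^2])^{1/2}$ and the absorption of $\sqrt{c}$ into the universal constant, but the argument is otherwise identical.
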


\begin{proof}
    Since  $\pi_{\vtheta}$ is a white noise-based policy, we have that $ \pi_{\vtheta}(\va|\vs) = \phi(\va - \bm{\mu}_{\vtheta}(\vs))$. Consequently, we have:
    \begin{align}
        & \nabla_{\vtheta} \log \pi_{\vtheta}(\va|\vs)  = \nabla_{\vtheta} \log\phi(\va - \bm{\mu}_{\vtheta}(\vs)) =  - \nabla_{\vtheta} \bm{\mu}_{\vtheta}(\vs) \nabla_{\vepsilon} \log\phi(\vepsilon)\rvert_{\vepsilon = \va - \bm{\mu}_{\vtheta}(\vs)} , \\
        & \nabla_{\vtheta}^2 \log \pi_{\vtheta}(\va|\vs) = \nabla_{\vtheta}^2 \log\phi(\va - \bm{\mu}_{\vtheta}(\vs)) = \nabla_{\vtheta} \bm{\mu}_{\vtheta} \nabla_{\vepsilon}^2 \log\phi(\vepsilon)\rvert_{\vepsilon = \va - \bm{\mu}_{\vtheta}(\vs)} \nabla_{\vtheta} \bm{\mu}_{\vtheta}^\top -\nabla_{\vtheta}^2 \bm{\mu}_{\vtheta}\nabla_{\vepsilon} \log\phi(\vepsilon)\rvert_{\vepsilon = \va - \bm{\mu}_{\vtheta}(\vs)}  .
    \end{align}
    Thus, recalling that $\va - \bm{\mu}_{\vtheta}(\vs) \sim  \Phi_{\da}$ and using the Lipschitzinity and smoothness of $\bm{\mu}_{\vtheta}$, we have:
    \begin{align}
         \E_{\va \sim \pi_{\vtheta}(\va|\vs)} [\| \nabla_{\vtheta} \log \pi_{\vtheta}(\va|\vs)\|_2^2]  & = \E_{\va \sim \pi_{\vtheta}(\va|\vs)} [\| \nabla_{\vtheta} \bm{\mu}_{\vtheta}(\vs)\nabla_{\vepsilon} \log\phi(\vepsilon)\rvert_{\vepsilon = \va - \bm{\mu}_{\vtheta}(\vs)}  \|_2^2] \\
        & \le L_\mu^2 \E_{\vepsilon \sim \Phi_{\dt}} [\|-\nabla_{\vepsilon} \log\phi(\vepsilon)\|_2^2] \le c\dt \sigma^2_{\text{A}}L_\mu^2,\\
         \E_{\va \sim \pi_{\vtheta}(\va|\vs)} [\| \nabla_{\vtheta}^2\log \pi_{\vtheta}(\va|\vs)\|_2]  &  = \E_{\va \sim \pi_{\vtheta}(\va|\vs)} [\|\nabla_{\vtheta} \bm{\mu}_{\vtheta} \nabla_{\vepsilon}^2 \log\phi(\vepsilon)\rvert_{\vepsilon = \va - \bm{\mu}_{\vtheta}(\vs)} \nabla_{\vtheta} \bm{\mu}_{\vtheta}^\top -\nabla_{\vtheta}^2 \bm{\mu}_{\vtheta}\nabla_{\vepsilon} \log\phi(\vepsilon)\rvert_{\vepsilon = \va - \bm{\mu}_{\vtheta}(\vs)}\|_2]\\
         & \le L_\mu^2  \E_{\vepsilon\sim \Phi_{\da}} [\|\nabla_{\vepsilon}^2 \log\phi(\vepsilon)\|_2] + L_{2,\mu} \E_{\vepsilon\sim \Phi_{\da}} [\|\nabla_{\vepsilon} \log\phi(\vepsilon)\|_2]   \\      
         & \le c \sigma_{\text{A}}^{-2} L_\mu^2 +c \sqrt{\da} \sigma_{\text{A}}^{-1} L_{2,\mu} .
    \end{align}
\end{proof}

\begin{lemma}\label{lemma:boundsMagic2}
    Let $\nu_{\vtheta}$ be a white noise-based hyperpolicy. Under Assumption~\ref{ass:magic}, it holds that:
     \begin{enumerate}[leftmargin=*, noitemsep, label=($\roman*$), topsep=-2pt]
        \item  $\E_{\vtheta' \sim \nu_{\vtheta}} [\| \nabla_{\vtheta} \log \nu_{\vtheta}(\vtheta')\|_2^2] \le c \dt \sigma^{-2}_{\text{P}}$;
        \item  $\E_{\vtheta' \sim \nu_{\vtheta}} [\| \nabla_{\vtheta}^2 \log \nu_{\vtheta}(\vtheta')\|_2] \le c  \sigma^{-2}_{\text{P}}$.
    \end{enumerate}
\end{lemma}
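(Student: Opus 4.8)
The plan is to reduce both expectations directly to the score-function bounds for the white noise density $\phi$ provided by Assumption~\ref{ass:magic}, exploiting that a white noise-based hyperpolicy (Definition~\ref{defi:addHype}) is nothing but a translation family. Concretely, since $\vtheta' \sim \nu_{\vtheta}$ means $\vtheta' = \vtheta + \vepsilon$ with $\vepsilon \sim \Phi_{\dt}$, the hyperpolicy density factorizes as $\nu_{\vtheta}(\vtheta') = \phi(\vtheta' - \vtheta)$. This is the degenerate case of Lemma~\ref{lemma:boundsMagic} in which the deterministic policy is the identity map $\vtheta \mapsto \vtheta$ (so $L_\mu = 1$ and $L_{2,\mu} = 0$), but it is cleanest to argue it from scratch.

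First I would compute the scores via the chain rule. Writing $\vepsilon = \vtheta' - \vtheta$, one gets $\nabla_{\vtheta} \log \nu_{\vtheta}(\vtheta') = -\nabla_{\vepsilon} \log \phi(\vepsilon)\rvert_{\vepsilon = \vtheta' - \vtheta}$, and differentiating a second time in $\vtheta$ (each differentiation introducing a factor $-\mathbf{I}$) yields $\nabla_{\vtheta}^2 \log \nu_{\vtheta}(\vtheta') = \nabla_{\vepsilon}^2 \log \phi(\vepsilon)\rvert_{\vepsilon = \vtheta' - \vtheta}$, the two sign flips cancelling in the Hessian. These two identities are the only ``computation'' involved.

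Then I would change variables: as $\vtheta'$ ranges over $\nu_{\vtheta}$, the argument $\vepsilon = \vtheta' - \vtheta$ is distributed exactly as $\Phi_{\dt}$, so that
\begin{align*}
    \E_{\vtheta' \sim \nu_{\vtheta}}\!\left[\| \nabla_{\vtheta} \log \nu_{\vtheta}(\vtheta')\|_2^2\right] &= \E_{\vepsilon \sim \Phi_{\dt}}\!\left[\| \nabla_{\vepsilon} \log \phi(\vepsilon)\|_2^2\right] \le c\,\dt\,\sigma_{\text{P}}^{-2}, \\
    \E_{\vtheta' \sim \nu_{\vtheta}}\!\left[\| \nabla_{\vtheta}^2 \log \nu_{\vtheta}(\vtheta')\|_2\right] &= \E_{\vepsilon \sim \Phi_{\dt}}\!\left[\| \nabla_{\vepsilon}^2 \log \phi(\vepsilon)\|_2\right] \le c\,\sigma_{\text{P}}^{-2},
\end{align*}
where the two inequalities are exactly items ($i$) and ($ii$) of Assumption~\ref{ass:magic} applied with $d = \dt$ and $\sigma = \sigma_{\text{P}}$. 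I do not expect any real obstacle here: unlike the action-based case (Lemma~\ref{lemma:boundsMagic}), no deterministic map $\bm{\mu}_{\vtheta}$ is composed with the noise, so no Lipschitz/smoothness constants of the policy enter, and the only thing worth stating carefully is the harmless sign bookkeeping in the chain rule for the gradient versus the Hessian.
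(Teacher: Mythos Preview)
Your proposal is correct and follows essentially the same argument as the paper: identify $\nu_{\vtheta}(\vtheta')=\phi(\vtheta'-\vtheta)$, compute the first and second $\vtheta$-derivatives via the chain rule (picking up the sign flip for the gradient and its cancellation for the Hessian), change variables $\vepsilon=\vtheta'-\vtheta\sim\Phi_{\dt}$, and invoke Assumption~\ref{ass:magic} with $d=\dt$, $\sigma=\sigma_{\text{P}}$. Your remark that this is the degenerate instance of Lemma~\ref{lemma:boundsMagic} with $L_\mu=1$, $L_{2,\mu}=0$ is a nice extra observation not made explicitly in the paper.
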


\begin{proof}
    Since  $\nu_{\vtheta}$ is a white noise-based hyperpolicy, we have that $ \nu_{\vtheta}(\vtheta') = \phi(\vtheta'-\vtheta)$. Consequently, we have:
    \begin{align}
        & \nabla_{\vtheta} \log \nu_{\vtheta}(\vtheta') = \nabla_{\vtheta} \log\phi(\vtheta'-\vtheta) =  -\nabla_{\vepsilon} \log\phi(\vepsilon)\rvert_{\vepsilon = \vtheta'-\vtheta}, \\
        & \nabla_{\vtheta}^2 \log \nu_{\vtheta}(\vtheta') = \nabla_{\vtheta}^2 \log\phi(\vtheta'-\vtheta) =  \nabla_{\vepsilon}^2 \log\phi(\vepsilon)\rvert_{\vepsilon = \vtheta'-\vtheta}.
    \end{align}
    Thus, recalling that $\vtheta' - \vtheta \sim  \Phi_{\dt}$
    \begin{align}
        &\E_{\vtheta' \sim \nu_{\vtheta}} [\| \nabla_{\vtheta} \log \nu_{\vtheta}(\vtheta')\|_2^2]  = \E_{\vtheta' \sim \nu_{\vtheta}} [\|  \nabla_{\vepsilon} \log\phi(\vepsilon)\rvert_{\vepsilon = \vtheta'-\vtheta}\|_2^2] = \E_{\vepsilon \sim \Phi_{\dt}} [\|  \nabla_{\vepsilon} \log\phi(\vepsilon)\|_2^2] \le c\dt \sigma^2_{\text{P}},\\
        & \E_{\vtheta' \sim \nu_{\vtheta}} [\| \nabla_{\vtheta}^2 \log \nu_{\vtheta}(\vtheta')\|_2]  = \E_{\vtheta' \sim \nu_{\vtheta}} [\|  \nabla_{\vepsilon}^2 \log\phi(\vepsilon)\rvert_{\vepsilon = \vtheta'-\vtheta}\|_2] = \E_{\vepsilon \sim \Phi_{\dt}} [\|  \nabla_{\vepsilon}^2 \log\phi(\vepsilon)\|_2] \le c \sigma^2_{\text{P}}.
    \end{align}
\end{proof}

\section{General Convergence Analysis under Weak Gradient Domination} \label{apx:general_convergence}
In this section, we provide the theoretical guarantees on the convergence to the global optimum of a generic stochastic first-order optimization algorithm $\mathfrak{A}$ (\eg policy gradient employing either AB or PB exploration).
Let $\vtheta$ be the parameter vector optimized by $\mathfrak{A}$, and let $\Theta = \mathbb{R}^{\dt}$ be the parameter space.
The objective function that $\mathfrak{A}$ aims at optimizing is $J: \Theta \xrightarrow{} \mathbb{R}$, which is a generic function taking as argument a parameter vector $\vtheta \in \Theta$ and mapping it into a real value.
Examples of objective functions of this kind are $\Jd$, $\Ja$, or $\Jp$, which are all defined in Section~\ref{sec:preliminaries}.
The algorithm $\mathfrak{A}$ is run for $K$ iterations and it updates directly the parameter vector $\vtheta \in \Theta$.
At the $k$-th iteration, the update is:
\begin{align*}
    \vtheta_{k + 1} \xleftarrow{} \vtheta_{k} + \zeta_{k} \widehat{\nabla}_{\vtheta} J(\vtheta_k),
\end{align*}
where $\zeta_k$ is the step size, $\vtheta_{k}$ is the parameter configuration at the $k$-th iteration, and $\widehat{\nabla}_{\vtheta} J(\vtheta_{k})$ is an \emph{unbiased} estimate of $\nabla_{\vtheta} J(\vtheta_{k})$ computed from a batch $\mathcal{D}_k$ of $N$ \emph{samples}.
In the following, we refer to $N$ as \textit{batch size}.
Examples of unbiased gradient estimators are the ones employed by GPOMDP and PGPE, which can be found in Section~\ref{sec:preliminaries}. For GPOMDP, samples are trajectories; for PGPE, parameter-trajectory pairs.
In what follows, we refer to the optimal parameter configuration as $\vtheta^* \in \argmax_{\vtheta \in \Theta} J(\vtheta)$.
For the sake of simplicity, we will shorten $J(\vtheta^*)$ as $J^*$.
Given an optimality threshold $\delta \geq 0$, we are interested in assessing the \emph{last-iterate} convergence guarantees:
\begin{align*}
    J^* - \E \left[ J(\vtheta_{K}) \right] \leq \delta,
\end{align*}
where the expectation is taken over the stochasticity of the learning process.

\begin{thr}\label{thr:gen_conv_new}
    
Under Assumptions~\ref{ass:J_wgd}, \ref{ass:J_gen_conv}, and \ref{ass:J_var_gen_conv}, running the Algorithm $\mathfrak{A}$ for $K>0$ iterations with a batch size of $N>0$ trajectories in each iteration with the constant learning rate $\zeta$ fulfilling:
	\begin{align*}
		\zeta \le \min \left\{ \frac{1}{\Lhgen}, \frac{1}{\mu \max\{0,J^* - J(\vtheta_0) - \betagen\}}, \left(\frac{N}{\Lhgen\Vgen\mu}\right)^{1/3} \right\}
  \end{align*}
	where $\mu = \frac{1}{\alphagen^2}$. Then, it holds that:
	\begin{align*}
		J^* - \E[J(\vtheta_K)] \le \betagen +\left(1- \frac{1}{2} \sqrt{\frac{\mu \zeta^3 \Lhgen \Vgen}{N}}\right)^{K} \max\left\{0,J^* - J(\vtheta_0) - \betagen \right\} + \sqrt{\frac{\Lhgen \Vgen \zeta}{\mu N}} .
	\end{align*}
	In particular, for sufficiently small $\epsilon>0$, setting $\zeta = \frac{\epsilon^2\mu N}{4 \Lhgen \Vgen}$, the following total number of samples is sufficient to ensure that $J(\vtheta^*) - \E[J(\vtheta_K)] \le \betagen + \epsilon$:
 \begin{align}
     KN \ge \frac{16 \Lhgen \Vgen}{\epsilon^3 \mu^2 } \log \frac{\max\left\{0,J^* - J(\vtheta_0) - \betagen \right\}}{\epsilon}.
 \end{align}

\end{thr}

\begin{proof}
     Before starting the proof, we need a preliminary result that immediately follows from Assumption~\ref{ass:J_wgd}, by rearranging:
 
	\begin{align}\label{eq:genconv_eq_1}
		\frac{1}{\alphagen} \max\left\{0, J(\vtheta^*) - \betagen - J(\vtheta)\right\} \leq \| \nabla_{\vtheta} J(\vtheta) \|_2,
	\end{align}
    and we will use the notation $\widetilde{J}(\vtheta^*) \coloneqq J(\vtheta^*)-\betagen$ and $\mu = \alphagen^{-2}$. Note that
     $\widetilde{J}(\vtheta^*)-J(\vtheta)$ can be negative.
	Considering a $k \in \dsb{K}$, it follows that:
	\begin{align*}
		\widetilde{J}(\vtheta^*) - J(\vtheta_{k+1}) 
		&=  \widetilde{J}(\vtheta^*) - J(\vtheta_{k}) - ( J(\vtheta_{k+1}) - J(\vtheta_{k})) \\
		&\leq \widetilde{J}(\vtheta^*) - J(\vtheta_{k}) - \left<  \vtheta_{k+1} - \vtheta_{k}, \nabla_{\vtheta}J(\vtheta_{k}) \right> + \frac{\Lhgen}{2} \|  \vtheta_{k+1} - \vtheta_{k} \|_2^2 \\
		&\leq \widetilde{J}(\vtheta^*) - J(\vtheta_{k}) - \zeta_k \left< \widehat{\nabla}_{\vtheta}J(\vtheta_{k})  , \nabla_{\vtheta}J(\vtheta_{k}) \right> + \frac{\Lhgen}{2} \zeta_k^2 \|  \widehat{\nabla}_{\vtheta}J(\vtheta_{k}) \|_2^2,
	\end{align*}
    where the first inequality follows by applying the Taylor expansion with Lagrange remainder and exploiting Assumption~\ref{ass:J_gen_conv}, and the last inequality follows from the fact that the parameter update is $\vtheta_{k+1} \leftarrow \vtheta_{k} + \zeta_k \widehat{\nabla}_{\vtheta} J(\vtheta_{k})$. 

    In the following, we use the shorthand notation $\E_k[\cdot]$ to denote the conditional expectation w.r.t. the history up to the $k$-th iteration \emph{not} included. More formally, let $\mathcal{F}_k=\sigma\left(\vtheta_0,\mathcal{D}_0,\mathcal{D}_1,\dots,\mathcal{D}_k\right)$ be the $\sigma$-algebra encoding all the stochasticity up to iteration $k$ included. Note that all the stochasticity comes from the samples (except from the initial parameter $\vtheta_0$, which may be randomly initialized), and that $\vtheta_{k}$ is $\mathcal{F}_{k-1}$-measurable, that is, deterministically determined by the realization of the samples collected in the first $k-1$ iterations. Then, $\E_k[\cdot] \coloneqq \E[\cdot|\mathcal{F}_{k-1}]$. We will make use of the basic facts $\E[\cdot]=\E[\E_k[\cdot]]$ and $\E_k[X] = X$ for $\mathcal{F}_{k-1}$-measurable $X$. The variance of $\widehat{\nabla} J(\vtheta_k)$ must be always understood as conditional on $\mathcal{F}_{k-1}$.
	Now, 
    for any $k \in \dsb{K}$:
    \begin{align*}
		\E_k \left[ 	\widetilde{J}(\vtheta^*) - J(\vtheta_{k+1}) \right] &\leq \E_k \left[ \widetilde{J}(\vtheta^*) - J(\vtheta_{k}) - \zeta_k \left< \widehat{\nabla}_{\vtheta}J(\vtheta_{k}) ,\nabla_{\vtheta}J(\vtheta_{k}) \right> + \frac{\Lhgen}{2} \zeta_k^2 \|  \widehat{\nabla}_{\vtheta}J(\vtheta_{k}) \|_2^2 \right] \\
		&\leq \widetilde{J}(\vtheta^*) - J(\vtheta_{k}) - \zeta_k \| \nabla_{\vtheta} J(\vtheta_{k}) \|_2^2+ \frac{\Lhgen}{2} \zeta_k^2 \E_k \left[\|  \widehat{\nabla}_{\vtheta}J(\vtheta_{k}) \|_2^2\right] \\
		&\leq \widetilde{J}(\vtheta^*) - J(\vtheta_{k}) - \zeta_k \left(1 -  \frac{\Lhgen}{2} \zeta_k \right)  \| \nabla_{\vtheta} J(\vtheta_{k}) \|_2^2 + \frac{\Lhgen}{2} \zeta_k^2 \Var\left[\widehat{\nabla}_{\vtheta}J(\vtheta_{k}) \right] \\
		&\leq \widetilde{J}(\vtheta^*) - J(\vtheta_{k}) - \zeta_k \left( 1 - \frac{\Lhgen}{2} \zeta_k \right)  \| \nabla_{\vtheta} J(\vtheta_{k}) \|_2^2 + \frac{\Lhgen \Vgen}{2N} \zeta_k^2,
	\end{align*}
	where the third inequality follows from the fact that $\widehat{\nabla}_{\vtheta} J(\vtheta)$ is an unbiased estimator and from the definition of $\Var[\widehat{\nabla} J(\vtheta)]$, and the last inequality is by Assumption~\ref{ass:J_var_gen_conv}.
	Now, selecting a step size $\zeta_k \leq \frac{1}{\Lhgen}$, we have that $1-\frac{\Lhgen}{2}\zeta_k \ge \frac{1}{2}$, we can use the bound derived in Equation~(\ref{eq:genconv_eq_1}):
	\begin{align*}
			\E_k &\left[ \widetilde{J}(\vtheta^*) - J(\vtheta_{k+1}) \right]  \le \widetilde{J}(\vtheta^*) - J(\vtheta_{k}) - \frac{\mu \zeta_k}{2}\max\left\{0, \widetilde{J}(\vtheta^*) - J(\vtheta_{k}) \right\}^2  + \frac{\Lhgen \Vgen}{2N} \zeta_k^2.
	\end{align*}
	The next step is to consider the total expectation over both the terms of the inequality and observe that
 \begin{align*}
     \E\left[ \max\left\{0, \widetilde{J}(\vtheta^*) - J(\vtheta_{k}) \right\}^2\right] \ge \E\left[ \max\left\{0, \widetilde{J}(\vtheta^*) - J(\vtheta_{k}) \right\}\right]^2 \ge  \max\left\{0, \E\left[ \widetilde{J}(\vtheta^*) - J(\vtheta_{k}) \right] \right\}^2,
 \end{align*}
 having applied Jensen's inequality twice, being both the square and the $\max$ convex functions. In particular, we define $r_k \coloneqq \E \left[J(\vtheta^*) - J(\vtheta_{k})\right]$.
	We can then rewrite the previous inequality as follows:
	\begin{align*}
		r_{k+1} \leq r_k - \frac{\mu \zeta_k}{2}\max\{0,r_k\}^2 + \frac{\Lhgen \Vgen}{2N} \zeta_k^2.
	\end{align*}
To study the recurrence, we define the helper sequence:
\begin{align}
    \begin{cases}
        \rho_{0}= r_0 \\
        \rho_{k+1} =  \rho_k - \frac{\mu \zeta_k}{2}\max\{0,\rho_k\}^2 + \frac{\Lhgen \Vgen}{2N} \zeta_k^2 & \text{if } k \ge 0
    \end{cases}.
\end{align}
We now show that under a suitable condition on the step size $\zeta_k$, the sequence $\rho_k$ upper bounds the sequence $r_k$.

\begin{lemma}\label{lem:aux_1}
    If $\zeta_k \le \frac{1}{\mu\rho_k}$ for every $k \ge 0$, then, $r_k \le \rho_k$ for every $k \ge 0$.
\end{lemma}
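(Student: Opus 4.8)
The plan is to prove Lemma~\ref{lem:aux_1} by induction on $k$, showing simultaneously that $r_k \le \rho_k$ for every $k \ge 0$. The base case $k=0$ is immediate since $\rho_0 = r_0$ by definition. For the inductive step, I assume $r_k \le \rho_k$ and the step-size condition $\zeta_k \le \frac{1}{\mu\rho_k}$, and I want to deduce $r_{k+1} \le \rho_{k+1}$. Recall the two recurrences:
\begin{align*}
    r_{k+1} &\leq r_k - \frac{\mu \zeta_k}{2}\max\{0,r_k\}^2 + \frac{\Lhgen \Vgen}{2N} \zeta_k^2, \\
    \rho_{k+1} &= \rho_k - \frac{\mu \zeta_k}{2}\max\{0,\rho_k\}^2 + \frac{\Lhgen \Vgen}{2N} \zeta_k^2.
\end{align*}
Subtracting, it suffices to show that the function $x \mapsto g(x) \coloneqq x - \frac{\mu\zeta_k}{2}\max\{0,x\}^2$ is nondecreasing on the relevant range, so that $r_k \le \rho_k$ implies $g(r_k) \le g(\rho_k)$; the noise terms $\frac{\Lhgen\Vgen}{2N}\zeta_k^2$ are identical and cancel.

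The key observation is that $g$ is differentiable with $g'(x) = 1 - \mu\zeta_k\max\{0,x\} = 1$ for $x \le 0$ and $g'(x) = 1 - \mu\zeta_k x$ for $x > 0$. Thus $g$ is nondecreasing precisely on the half-line $x \le \frac{1}{\mu\zeta_k}$. I therefore need to check that both $r_k$ and $\rho_k$ lie in this range. For $\rho_k$: the hypothesis $\zeta_k \le \frac{1}{\mu\rho_k}$ rearranges to $\rho_k \le \frac{1}{\mu\zeta_k}$ (when $\rho_k > 0$; if $\rho_k \le 0$ it is trivially in range), so $\rho_k$ is fine. For $r_k$: by the inductive hypothesis $r_k \le \rho_k \le \frac{1}{\mu\zeta_k}$, so $r_k$ is also in the monotonicity region. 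Hence $g(r_k) \le g(\rho_k)$, and adding the common noise term gives $r_{k+1} \le \rho_{k+1}$, closing the induction.

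One subtlety worth spelling out: when $\rho_k \le 0$ the step-size condition $\zeta_k \le \frac{1}{\mu\rho_k}$ should be read as vacuously satisfied (the bound in the statement of Theorem~\ref{thr:gen_conv_new} uses $\max\{0, J^* - J(\vtheta_0) - \beta\}$, so the constraint is really on the positive part), and then $\rho_k \le 0 \le \frac{1}{\mu\zeta_k}$ still places $\rho_k$ in the monotonicity region, so the argument goes through unchanged. I expect the main (minor) obstacle to be phrasing this degenerate case cleanly and making sure the ``nondecreasing on $x \le 1/(\mu\zeta_k)$'' claim is justified carefully — since $g$ is a piecewise-quadratic that is increasing then decreasing, the whole proof hinges on certifying that neither iterate has crossed the vertex $x = 1/(\mu\zeta_k)$, which is exactly what the step-size hypothesis buys us. Everything else is a one-line monotonicity comparison.
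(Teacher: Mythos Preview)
Your proof is correct and follows essentially the same approach as the paper: induction on $k$ with the base case $\rho_0 = r_0$, and the inductive step handled by observing that the map $x \mapsto x - \tfrac{\mu\zeta_k}{2}\max\{0,x\}^2$ is nondecreasing on $(-\infty, 1/(\mu\zeta_k)]$, then using the step-size hypothesis to place $\rho_k$ (and, by the inductive hypothesis, $r_k$) in that region. Your explicit handling of the $\rho_k \le 0$ case is a nice clarification that the paper leaves implicit.
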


\begin{proof}[Proof of Lemma~\ref{lem:aux_1}]
    By induction on $k$. For $k=0$, the statement holds since $\rho_0 = r_0$. Suppose the statement holds for every $j \le k$, we prove that it holds for $k+1$:
    \begin{align}
         \rho_{k+1} & =  \rho_k - \frac{\mu \zeta_k}{2}\max\{0,\rho_k\}^2 + \frac{\Lhgen \Vgen}{2N} \zeta_k^2 \\
         & \ge r_k - \frac{\mu \zeta_k}{2}\max\{0,r_k\}^2 + \frac{\Lhgen \Vgen}{2N} \zeta_k^2 \\
         & \ge r_{k+1}.
    \end{align}
    where the first inequality holds by the inductive hypothesis and by observing that the function $f(x) = x -  \frac{\mu \zeta_k}{2}\max\{0,x\}^2$ is non-decreasing in $x$ when $\zeta_k \le 1/(\mu x)$. Indeed, if $x < 0$, then $f(x) = x$, which is non-decreasing; if $x \ge 0$, we have $f(x) = x -  \frac{\mu \zeta_k}{2}x^2$, that can be proved to be non-decreasing in the interval $[0,1/(\mu\zeta_k)]$ simply by studying the sign of the derivative.
    The requirement $\zeta_k \le 1/(\mu \rho_k)$ ensures that $\rho_k$ falls in the non-decreasing region, and so does $r_k$ by the inductive hypothesis. 
\end{proof}
Thus, from now on, we study the properties of the sequence $\rho_{k}$ and enforce the learning rate to be constant, $\zeta_k \coloneqq \zeta$ for every $k \ge 0$. Let us note that, if $\rho_k$ is convergent, than it converges to the fixed-point $\overline{\rho}$ computed as follows:
\begin{align}
    \overline{\rho} =  \overline{\rho} - \frac{\mu \zeta}{2}\max\{0,\overline{\rho}\}^2 + \frac{\Lhgen \Vgen}{2N} \zeta^2 \implies \overline{\rho}  = \sqrt{\frac{\Lhgen \Vgen \zeta}{\mu N}},
\end{align}
having retained the positive solution of the second-order equation only, since the negative one never attains the maximum $\max\{0,\overline{\rho}\}$.
Let us now study the monotonicity properties of the sequence $\rho_k$. 

\begin{lemma}\label{lem:aux_2}
    The following statements hold:
    \begin{itemize}
        \item If $r_0 > \overline{\rho}$ and $\zeta \le \frac{1}{\mu r_0}$, then for every $k \ge 0$ it holds that: $\overline{\rho} \le \rho_{k+1} \le \rho_k$.
        \item If $r_0 < \overline{\rho}$ and $\zeta \le \frac{1}{\mu \overline{\rho}}$, then for every $k \ge 0$ it holds that: $\overline{\rho} \ge \rho_{k+1} \ge \rho_k$.
    \end{itemize}
\end{lemma}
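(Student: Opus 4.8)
The plan is to read the recursion $\rho_{k+1}=\rho_k-\frac{\mu\zeta}{2}\max\{0,\rho_k\}^2+\frac{\Lhgen\Vgen\zeta^2}{2N}$ as the orbit $\rho_{k+1}=g(\rho_k)$ of the scalar map $g(x)\coloneqq x-\frac{\mu\zeta}{2}\max\{0,x\}^2+\frac{\Lhgen\Vgen\zeta^2}{2N}$, and to run a monotone–map argument around its fixed point $\overline\rho=\sqrt{\Lhgen\Vgen\zeta/(\mu N)}$. I would first record two elementary properties of $g$. (a) $g$ is non-decreasing on $(-\infty,1/(\mu\zeta)]$: on $(-\infty,0)$ it equals $x+\frac{\Lhgen\Vgen\zeta^2}{2N}$ (slope $1$), on $[0,\infty)$ it equals the downward parabola $x-\frac{\mu\zeta}{2}x^2+\frac{\Lhgen\Vgen\zeta^2}{2N}$ whose derivative $1-\mu\zeta x$ is $\ge 0$ exactly for $x\le 1/(\mu\zeta)$, and the two pieces match continuously at $x=0$. (b) Using the fixed-point identity $\frac{\mu\zeta}{2}\overline\rho^2=\frac{\Lhgen\Vgen\zeta^2}{2N}$ (valid since $\overline\rho>0$), one gets $g(x)-x=\frac{\mu\zeta}{2}(\overline\rho^2-\max\{0,x\}^2)$ for $x\ge 0$ and $g(x)-x=\frac{\mu\zeta}{2}\overline\rho^2>0$ for $x<0$; hence $g(x)>x$ for $x<\overline\rho$, $g(\overline\rho)=\overline\rho$, and $g(x)<x$ for $\overline\rho<x\le 1/(\mu\zeta)$. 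Both facts hold irrespective of the sign of the iterates.

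For the first bullet I would assume $r_0>\overline\rho$ and $\zeta\le 1/(\mu r_0)$, the latter rewriting as $r_0\le 1/(\mu\zeta)$, so $\overline\rho<r_0\le 1/(\mu\zeta)$. Then I would prove by induction on $k\ge 0$ the strengthened invariant $\overline\rho\le\rho_{k+1}\le\rho_k\le r_0\le 1/(\mu\zeta)$. Base: $\rho_1=g(r_0)$, and since $g$ is non-decreasing on $[\overline\rho,1/(\mu\zeta)]$ with $g(\overline\rho)=\overline\rho$ we get $\rho_1=g(r_0)\ge\overline\rho$, while $g(r_0)<r_0$ by property (b). Step: from $\overline\rho\le\rho_{k+1}\le\rho_k\le 1/(\mu\zeta)$ all three quantities lie in the monotone region, so applying $g$ gives $\overline\rho=g(\overline\rho)\le g(\rho_{k+1})=\rho_{k+2}$ and $\rho_{k+2}=g(\rho_{k+1})\le g(\rho_k)=\rho_{k+1}$, which restores the invariant. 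This yields $\overline\rho\le\rho_{k+1}\le\rho_k$ for all $k$.

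For the second bullet I would assume $r_0<\overline\rho$ and $\zeta\le 1/(\mu\overline\rho)$, i.e. $\overline\rho\le 1/(\mu\zeta)$ (hence also $r_0<1/(\mu\zeta)$), and induct on the invariant $\rho_k\le\rho_{k+1}\le\overline\rho$. Base: $\rho_1=g(r_0)\ge r_0$ by property (b) since $r_0<\overline\rho$, and $\rho_1=g(r_0)\le g(\overline\rho)=\overline\rho$ since $g$ is non-decreasing on $(-\infty,\overline\rho]$. Step: from $\rho_k\le\rho_{k+1}\le\overline\rho\le 1/(\mu\zeta)$ and monotonicity of $g$ on $(-\infty,\overline\rho]$, $\rho_{k+1}\le g(\rho_{k+1})=\rho_{k+2}$ (because $\rho_{k+1}\le\overline\rho$) and $\rho_{k+2}=g(\rho_{k+1})\le g(\overline\rho)=\overline\rho$. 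Hence $\overline\rho\ge\rho_{k+1}\ge\rho_k$ for all $k$.

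The only delicate point — and the reason the two hypotheses on $\zeta$ are imposed — is keeping the orbit inside the region $(-\infty,1/(\mu\zeta)]$ where $g$ is monotone: in the first case the orbit is non-increasing and bounded above by $r_0\le 1/(\mu\zeta)$, in the second it is non-decreasing and bounded above by $\overline\rho\le 1/(\mu\zeta)$. Beyond this bookkeeping the argument is routine; it also immediately shows $\rho_k\downarrow\overline\rho$ (resp. $\rho_k\uparrow\overline\rho$), which, combined with Lemma~\ref{lem:aux_1}, will be used downstream to control $r_k=\E[J(\vtheta^*)-J(\vtheta_k)]$ and close the sample-complexity bound of Theorem~\ref{thr:gen_conv_new}.
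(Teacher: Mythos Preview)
Your proof is correct and follows essentially the same route as the paper: an induction on $k$ that uses (i) the monotonicity of the update map on $(-\infty,1/(\mu\zeta)]$ and (ii) the sign of $g(x)-x$ relative to the fixed point $\overline\rho$. The only organizational difference is that you package these as global properties of the map $g$ and run a single uniform induction, whereas the paper carries out the two inequalities by explicit computation at each step and, for the second bullet, first treats the regime $\rho_k<0$ separately (where the recursion is linear) before rejoining the positive case; your treatment absorbs that case automatically via the $\max\{0,x\}$ in $g$.
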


Before proving the lemma, let us comment on it. We have stated that if we initialize the sequence with $\rho_0=r_0$ above the fixed-point $\overline{\rho}$, the sequence is non-increasing and remains in the interval $[\overline{\rho},r_0]$. Symmetrically, if we initialize $\rho_0=r_0$ (possibly negative) below the fixed-point $\overline{\rho}$, the sequence is non-decreasing and remains in the interval $[r_0,\overline{\rho}]$. These properties hold under specific conditions on the learning rate.

\begin{proof}[Proof of Lemma~\ref{lem:aux_2}]
    We first prove the first statement, by induction on $k$. The inductive hypothesis is ``$\rho_{k+1}\le\rho_k$ \emph{and} $\rho_{k+1}\ge\overline{\rho}$ ". For $k=0$, for the first inequality, we have:
    \begin{align}
        \rho_1 = \rho_0 - \frac{\zeta \mu}{2} \rho_0^2 + \frac{\Lhgen \Vgen}{2N} \zeta^2 \le \rho_0 - \frac{\zeta \mu}{2} \overline{\rho}^2 + \frac{\Lhgen \Vgen}{2N} \zeta^2 =\rho_0,
    \end{align}
    having exploited the fact that $\rho_0 > \overline{\rho}>0$ and the definition of $\overline{\rho}$. For the second inequality, we have:
    \begin{align}
        \rho_1 = \rho_0 - \frac{\zeta \mu}{2} \rho_0^2 + \frac{\Lhgen \Vgen}{2N} \zeta^2 \ge \overline{\rho} - \frac{\zeta \mu}{2}  \overline{\rho}^2 + \frac{\Lhgen \Vgen}{2N} \zeta^2  = \overline{\rho},
    \end{align}
    recalling that the function $ x - \frac{\zeta \mu}{2} x^2$ is non-decreasing in $x$ for $x\le\rho_0$ since $\zeta \le 1/(\mu \rho_0)$, and by definition of $\overline{\rho}$. Suppose now that the statement holds for every $j < k$. First of all, we observe that, under this inductive hypothesis, $\rho_k \le \rho_0$ and, consequently, the condition $\zeta \le 1/(\mu\rho_0)$ entails $\zeta \le 1/(\mu\rho_k)$. Thus, for the first inequality, we have:
    \begin{align}
         \rho_{k+1} = \rho_k - \frac{\zeta \mu}{2} \rho_k^2 + \frac{\Lhgen \Vgen}{2N} \zeta^2 \le \rho_k - \frac{\zeta \mu}{2} \overline{\rho}^2 + \frac{\Lhgen \Vgen}{2N} \zeta^2 =\rho_k,
    \end{align}
    having used the inductive hypothesis and the definition of $\overline{\rho}$. For the second inequality, we have:
    \begin{align}
        \rho_{k+1} = \rho_k - \frac{\zeta \mu}{2} \rho_k^2 + \frac{\Lhgen \Vgen}{2N} \zeta^2 \ge \overline{\rho} - \frac{\zeta \mu}{2}  \overline{\rho}^2 + \frac{\Lhgen \Vgen}{2N} \zeta^2  = \overline{\rho},
    \end{align}
    having used the inductive hypothesis and recalled that the function $ x - \frac{\zeta \mu}{2} x^2$ is non-decreasing in $x$ for $x\le\rho_k$ since $\zeta \le 1/(\mu\rho_k)$.

    For the second statement, we observe that if $\rho_0=r_0 < 0$, we have:
    \begin{align}
        \rho_k = \rho_0 + k \frac{\Lhgen \Vgen}{2N} \zeta^2,
    \end{align}
    for all $k \le k^*$, where $k^*$ is the minimum $k$ in which $\rho_0 + k \frac{\Lhgen \Vgen}{2N} \zeta^2 \ge 0$. From that point on,  we can proceed in an analogous way as for the first statement, simply switching the signs of the inequalities and recalling that the largest value of $\rho_k$ is bounded by $\overline{\rho}$ in this case.
\end{proof}

We now focus on the first case of the previous lemma in which $r_0 > \overline{\rho}$, as the second one, as we shall see later, is irrelevant for the convergence rate. We now want to show that the sequence $\rho_k$ actually converges to $\overline{\rho}$ and characterize its convergence rate. To this end, we introduce a new auxiliary sequence:
\begin{align}
     \begin{cases}
        \eta_{0}= \rho_0 \\
        \eta_{k+1} =   \left(1- \frac{\mu \zeta\overline{\rho}}{2}\right) \eta_k + \frac{\Lhgen \Vgen}{2N} \zeta^2 & \text{if } k \ge 0
    \end{cases}.
\end{align}

We show that the sequence $\eta_{k}$ upper bounds $\rho_k$ when $\rho_0 = r_0 \ge \overline{\rho}$.
\begin{lemma}
    If $r_0> \overline{\rho}$ and $\zeta \le \frac{1}{\mu r_0}$, then, for every $k \ge 0$, it holds that $\eta_k \ge \rho_k$.
\end{lemma}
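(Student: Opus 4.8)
The plan is to prove this by induction on $k$, exactly as in the previous two auxiliary lemmas. The base case is immediate since $\eta_0 = \rho_0$ by definition. For the inductive step, assume $\eta_k \ge \rho_k$. We compare the two recurrences: $\eta_{k+1} = \left(1 - \frac{\mu\zeta\overline{\rho}}{2}\right)\eta_k + \frac{L_{2,\simbolo}V_\simbolo}{2N}\zeta^2$, while $\rho_{k+1} = \rho_k - \frac{\mu\zeta}{2}\max\{0,\rho_k\}^2 + \frac{L_{2,\simbolo}V_\simbolo}{2N}\zeta^2$. Since the additive noise term $\frac{L_{2,\simbolo}V_\simbolo}{2N}\zeta^2$ is common to both, it suffices to show that $\left(1-\frac{\mu\zeta\overline{\rho}}{2}\right)\eta_k \ge \rho_k - \frac{\mu\zeta}{2}\max\{0,\rho_k\}^2$, i.e., that the linearized update overestimates the quadratic update.

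The key observation is that we are in the regime $r_0 > \overline{\rho}$, so by Lemma~\ref{lem:aux_2} (first case) the sequence $\rho_k$ stays in $[\overline{\rho}, r_0]$, and in particular $\rho_k \ge \overline{\rho} > 0$, so $\max\{0,\rho_k\}^2 = \rho_k^2$. Then $\rho_k - \frac{\mu\zeta}{2}\rho_k^2 = \rho_k\left(1 - \frac{\mu\zeta\rho_k}{2}\right) \le \rho_k\left(1 - \frac{\mu\zeta\overline{\rho}}{2}\right)$, using $\rho_k \ge \overline{\rho}$ and that the factor $1 - \frac{\mu\zeta\rho_k}{2}$ is nonnegative (this is guaranteed by $\zeta \le \frac{1}{\mu r_0} \le \frac{1}{\mu\rho_k}$, since $\rho_k \le r_0$). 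Finally, since $1 - \frac{\mu\zeta\overline{\rho}}{2} \ge 0$ (again from the step-size bound, as $\overline{\rho} \le r_0$) and $\eta_k \ge \rho_k$ by the inductive hypothesis, we get $\rho_k\left(1-\frac{\mu\zeta\overline{\rho}}{2}\right) \le \eta_k\left(1-\frac{\mu\zeta\overline{\rho}}{2}\right)$. Chaining these inequalities yields $\rho_{k+1} \le \eta_{k+1}$, completing the induction.

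The main subtlety — not really an obstacle, but the point that needs care — is to make sure the step-size hypothesis $\zeta \le \frac{1}{\mu r_0}$ is actually strong enough to guarantee both nonnegativity facts ($1 - \frac{\mu\zeta\rho_k}{2} \ge 0$ and $1 - \frac{\mu\zeta\overline{\rho}}{2} \ge 0$) for all $k$; this follows because Lemma~\ref{lem:aux_2} pins $\rho_k \in [\overline{\rho}, r_0]$ so both $\rho_k$ and $\overline{\rho}$ are bounded above by $r_0$. Everything else is routine monotonicity manipulation of the quadratic $x \mapsto x - \frac{\mu\zeta}{2}x^2$, which we have already used in Lemmas~\ref{lem:aux_1} and~\ref{lem:aux_2}. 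After this lemma, the proof of Theorem~\ref{thr:gen_conv_new} concludes by unrolling the linear recurrence $\eta_k$, which gives $\eta_k = \overline{\rho} + \left(1 - \frac{\mu\zeta\overline{\rho}}{2}\right)^k(\rho_0 - \overline{\rho})$, and then substituting $\overline{\rho} = \sqrt{\frac{L_{2,\simbolo}V_\simbolo\zeta}{\mu N}}$, $\rho_0 = r_0 = J^* - J(\vtheta_0)$ (with the $\widetilde J$/$\beta$ bookkeeping), and finally the prescribed value of $\zeta$ to extract the $\widetilde O(\epsilon^{-3})$ sample complexity; the case $r_0 < \overline{\rho}$ is handled separately and trivially since then $r_k \le \rho_k \le \overline{\rho}$ already meets the target accuracy.
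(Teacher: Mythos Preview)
Your proof is correct and follows essentially the same approach as the paper: induction on $k$, invoking Lemma~\ref{lem:aux_2} to pin $\rho_k\in[\overline\rho,r_0]$, and using the step-size bound to keep the relevant linear factor nonnegative. The only cosmetic difference is the order of the two bounding steps: the paper first replaces $\overline\rho$ by $\rho_k$ in the coefficient (using $\eta_k\ge 0$) and then replaces $\eta_k$ by $\rho_k$ (using $1-\tfrac{\mu\zeta\rho_k}{2}\ge 0$), whereas you first bound $\rho_k(1-\tfrac{\mu\zeta\rho_k}{2})\le \rho_k(1-\tfrac{\mu\zeta\overline\rho}{2})$ and then replace $\rho_k$ by $\eta_k$ (using $1-\tfrac{\mu\zeta\overline\rho}{2}\ge 0$); both chains are valid and arrive at the same inequality.
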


\begin{proof}
    By induction on $k$. For $k=0$, we have $\eta_0=\rho_0$, so, the statement holds. Suppose the statement holds for every $j \le k$, we prove it for $k+1$:
    \begin{align}
        \eta_{k+1} & = \left(1- \frac{\mu \zeta\overline{\rho}}{2}\right) \eta_k + \frac{\Lhgen \Vgen}{2N} \zeta^2 \\
        & \ge  \left(1- \frac{\mu \zeta{\rho_k}}{2}\right) \eta_k + \frac{\Lhgen \Vgen}{2N} \zeta^2 \\ 
        & \ge \left(1- \frac{\mu \zeta{\rho_k}}{2}\right) \rho_k + \frac{\Lhgen \Vgen}{2N} \zeta^2 \\
        & = \rho_k - \frac{\zeta \mu}{2} \max\{0,\rho_k\}^2 + \frac{\Lhgen \Vgen}{2N} \zeta^2 = \rho_{k+1}.
    \end{align}
    having exploited that $\rho_k \ge \overline{\rho}$ (by Lemma~\ref{lem:aux_2}) in the second line; using the inductive hypothesis in the third line, exploiting the fact that $1- \frac{\mu\zeta\rho_k}{2} \ge 0$ whenever $\zeta \le 2/(\mu\rho_k)$, which is entailed by the requirement $\zeta \le 1/(\mu\rho_0)$; and by recalling that $\rho_k > 0$ since $\overline{\rho}>0$ in the last line.
\end{proof}

Thus, we conclude by studying the convergence rate of the sequence $\eta_{k}$. This can be easily obtained by unrolling the recursion:
\begin{align}
    \eta_{k+1} & =  \left(1- \frac{\mu \zeta\overline{\rho}}{2}\right)^{k+1} \eta_0 +  \frac{\Lhgen \Vgen\zeta^2}{2N} \sum_{j=0}^k  \left(1- \frac{\mu \zeta\overline{\rho}}{2}\right)^{j} \\
    & \le \left(1- \frac{\mu \zeta\overline{\rho}}{2}\right)^{k+1} \eta_0+\frac{\Lhgen \Vgen\zeta^2}{2N} \sum_{j=0}^{+\infty}  \left(1- \frac{\mu \zeta\overline{\rho}}{2}\right)^{j} \\
    & = \left(1- \frac{\mu \zeta\overline{\rho}}{2}\right)^{k+1} \eta_0+\frac{\Lhgen \Vgen\zeta}{N\mu\overline{\rho}}\\
    & = \left(1- \frac{1}{2} \sqrt{\frac{\mu \zeta^3 \Lhgen \Vgen}{N}}\right)^{k+1} \eta_0 + \sqrt{\frac{\Lhgen \Vgen \zeta}{\mu N}}.
\end{align}
  Putting all the conditions on the step size $\zeta$ together, we must set:
    \begin{align}
        \zeta = \min \left\{ \frac{1}{\Lhgen}, \frac{1}{\mu \max\{0,r_0\}}, \left(\frac{N}{\Lhgen\Vgen\mu}\right)^{1/3} \right\}.
    \end{align}
  Thus, we have:
    \begin{align}
        J(\vtheta^*) - \E[J(\vtheta_K)] \le \betagen +\left(1- \frac{1}{2} \sqrt{\frac{\mu \zeta^3 \Lhgen \Vgen}{N}}\right)^{K} \max\left\{0,J(\vtheta^*) - J(\vtheta_0) - \betagen \right\} + \sqrt{\frac{\Lhgen \Vgen \zeta}{\mu N}} .
    \end{align}
  We derive the number of iterations (setting $K \leftarrow k+1$):
  \begin{align}
      \left(1- \frac{1}{2} \sqrt{\frac{\mu \zeta^3 \Lhgen \Vgen}{N}}\right)^{K} \eta_0 \le \frac{\epsilon}{2} \implies K \le  \frac{\log\frac{2\eta_0}{\epsilon}}{\log \frac{1}{1- \frac{1}{2} \sqrt{\frac{\mu \zeta^3 \Lhgen \Vgen}{N}}}} \le   \sqrt{\frac{4N }{\mu \zeta^3 \Lhgen \Vgen}}\log\frac{2\eta_0}{\epsilon},\label{eq:K_bound}
  \end{align}
  having exploited the inequality $\log \frac{1}{1-x} \ge x$.  Furthermore, let us observe that:
    \begin{align}
        \overline{\rho} = \sqrt{\frac{\Lhgen \Vgen \zeta}{\mu N}} \le \frac{\epsilon}{2} \implies \zeta \le \frac{\epsilon^2\mu N}{4 \Lhgen \Vgen}.
    \end{align}
    Thus, recalling that $\rho_0=\eta_0=r_0$, we have that: (i) when $r_0 < \overline{\rho}$, we have that $r_k \le \rho_k \le \overline{\rho} \le \epsilon/2$; (ii) when $r_0 \ge \overline{\rho}$, we have $r_k \le \rho_k \le \eta_k \le \epsilon$. 
    Thus, for sufficiently small $\epsilon$, we plug $\zeta = \frac{\epsilon^2\mu N}{4 \Lhgen \Vgen}$ in Equation~\eqref{eq:K_bound} to obtain the following upper bound on the sample complexity:
    \begin{align}
        KN \le \frac{16 \Lhgen \Vgen}{\epsilon^3 \mu^2 } \log \frac{\max\left\{0,J(\vtheta^*) - J(\vtheta_0) - \betagen \right\}}{\epsilon},
    \end{align}
    which guarantees $J(\vtheta^*) - \E[J(\vtheta_K)] \le \betagen + \epsilon$.
\end{proof}

\globalConvGeneral*
\begin{proof}
    Directly follows from the second statement of Theorem~\ref{thr:gen_conv_new}.
\end{proof}

\clearpage
\section{Specifications of the Algorithms} \label{apx:algo}

\textbf{PGPE.}~~In this section we report the algorithm PGPE as it is reported in its original paper~\cite{SEHNKE2010551}.
In particular, we show the pseudo-code (Algorithm~\ref{alg:pgpe}) of its original basic version, that is also the one we analyzed throughout this work, even if several variants are available.

\RestyleAlgo{ruled}
\begin{algorithm}[h!]
    \caption{PGPE.}\label{alg:pgpe}
    \SetKwInOut{Input}{Input}
    \small
    \Input{Number of iterations $K$, batch size $N$, initial parameter vector $\vtheta_{0}$, environment $\mathcal{M}$, deterministic policy $\mu_{\vtheta}$, hyperpolicy $\nu_{\vtheta}$, step size schedule $(\zeta_k)_{k=0}^{K-1}$, exploration parameter $\psigma$.}

    Initialize $\vtheta \xleftarrow[]{} \vtheta_{0}$
    
    \For{$i \in \dsb{K}$}{
        Set the hyperpolicy parameters: $\nu_{\vtheta}$
        
        \For{$l \in \dsb{N}$}{
            Sample a parameter configuration $\vrho_{l} \sim \nu_{\vtheta}$ according to the exploration parameter $\psigma$

            Collect a trajectory $\tau_{l}$ by acting in $\mathcal{M}$ with $\mu_{\vrho_{l}}$

            Compute the cumulative discounted reward $R(\tau_l)$
        }
        Compute the gradient estimator: $\widehat{\nabla}_{\vtheta} \Jp(\vtheta) \xleftarrow[]{} \frac{1}{N} \sum_{j=0}^{N-1} \nabla_{\vtheta} \log \nu_{\vtheta}(\vrho_j) R(\tau_j)$

        Update the hyperpolicy parameter vector: $\vtheta \xleftarrow[]{} \vtheta + \zeta_i \widehat{\nabla}_{\vtheta} \Jp(\vtheta)$
    }
    Return $\vtheta$.
\end{algorithm}

Notice that, the original version of PGPE by \citet{SEHNKE2010551} considers to collect $M$ trajectories for each parameter configuration $\vrho$ sampled from the hyperpolicy $\nu_{\vtheta}$.
In the pseudo-code (as well as in the paper) we consider $M=1$ (\ie we collect a single trajectory) in order to make GPOMDP and PGPE testing the same number of trajectories in each iteration, given an equal batch size $N$. 
In the original paper also other variants of PGPE are considered, that we have not considered in or work.
For instance, the one with symmetric sampling, or the one employing a baseline while sampling.
Moreover, it would be possible to learn a proper exploration amount $\psigma$ while learning the hyperpolicy parameters, however we decided to keep $\psigma$ fixed, for reasons remarked in Appendix~\ref{apx:mapping}.

\textbf{GPOMDP.}~~As done for PGPE, here we report the algorithm GPOMDP in its original version \citep{baxter2001infinite, peters2006policy}.
We show the pseudo-code (Algorithm~\ref{alg:pg}) of such original basic version, that is also the one we analyzed throughout this work.


\RestyleAlgo{ruled}
\begin{algorithm}[h!]
    \caption{GPOMDP.}\label{alg:pg}
    \SetKwInOut{Input}{Input}
    \small
    \Input{Number of iterations $K$, batch size $N$, initial parameter vector $\vtheta_{0}$, environment $\mathcal{M}$, stochastic policy $\pi_{\vtheta}$ (with exploration parameter $\asigma$), step size schedule $(\zeta_k)_{k=0}^{K-1}$, , horizon $T$, discount factor $\gamma$.}

    Initialize $\vtheta \xleftarrow[]{} \vtheta_{0}$
    
    \For{$i \in \dsb{K}$}{
        Set the stochastic policy parameters: $\pi_{\vtheta}$
        
        \For{$l \in \dsb{N}$}{
            Initialize trajectory $\tau_l$ as an empty tuple
            
            \For{$t \in \dsb{T}$}{
                Observe state $s_t$
                
                Play action $\va_t \sim \pi_{\vtheta}(\cdot | s_t)$

                Observe reward $r_t$

                Add to $\tau_l$ the tuple $(s_t, a_t, r_t)$
            }
        }
        Compute the gradient estimator: $\widehat{\nabla}_{\vtheta} \Ja(\vtheta) \xleftarrow[]{} \frac{1}{N} \sum_{i=1}^{N} \sum_{t=0}^{T-1} \left( \sum_{k=0}^{t} \nabla_{\vtheta} \log \pi_{\vtheta} (\va_{\tau_i, k} | \vs_{\tau_i, k}) \right) \gamma^t r(\vs_{\tau_i, t}, \va_{\tau_i, t})$

        Update the policy parameter vector: $\vtheta \xleftarrow[]{} \vtheta + \zeta_i \widehat{\nabla}_{\vtheta} \Ja(\vtheta)$
    }
    Return $\vtheta$.
\end{algorithm}

In the original paper, it is available a variant of GPOMDP which employs baselines while sampling, but in our work we do not consider this approach, as for PGPE.
Also in this case, we decided to employ a fixed value for $\asigma$, even if it would be possible to adapt it at runtime (Appendix~\ref{apx:mapping}).

\clearpage
\section{Additional Experimental Results} \label{apx:experiments}
In this section, we present additional experimental results for what concerns the comparison of GPOMDP and PGPE, and the sensitivity analysis on the exploration parameters, respectively $\asigma$ and $\psigma$.

\subsection{Learning Curves of the Variance Study of Section~\ref{sec:experiments}.} \label{apx:experiments:curves}

\textbf{Setting.}~~We show the results gained by learning in three environments of increasing complexity taken from the MuJoCo~\citep{todorov2012mujoco} suite: \emph{Swimmer-v4}, \emph{Hopper-v4}, and \emph{HalfCheetah-v4}. 
Details on the environmental parameters are shown in Table~\ref{tab:exp_setting}.
In order to facilitate the exploration, thus highlighting the results of the sensitivity study on the exploration parameters, we added an action clipping to the environments.\footnote{When the policy draws an action the environment performs a clip of the action \emph{before} the reward is computed.}
The target \emph{deterministic policy} $\bm{\mu}_{\vtheta}$ is linear in the state, while the \emph{hyperpolicy} $\nu_{\vtheta}$ employed by PGPE is Gaussian with a parameterized mean, and the \emph{stochastic policy} $\pi_{\vtheta}$ employed by GPOMDP is Gaussian with a mean linear in the state.
Both PGPE and GPOMDP were run for $K=2000$ iterations, generating $N=100$ trajectories per iteration.
We conducted a sensitivity analysis on the exploration parameters, using $\{ 0.01, 0.1, 1, 10, 100 \}$ as values for $\psigma^2$ and $\asigma^2$. 
We employed Adam~\citep{kingma2014adam} to set the step size with initial values $0.1$ for PGPE and $0.01$ for GPOMDP. 
The latter does not support a larger step size due to the higher variance of the employed estimator \wrt the one used by PGPE.

\begin{table}[h!]
    \centering
    \small
    \renewcommand{\arraystretch}{1.5}
    \begin{tabular}{|c||c|c|c|c|c|}
        \hline
        \rowcolor{gray!20}
         Environment & $T$ & $\gamma$ & $d_{\mathcal{S}}$ & $\da$ & $\dt$ \\
         \hline
         \hline
         Swimmer & $200$ & $1$ & $8$ & $2$ & $16$ \\
         \hline
         Hopper & $100$ & $1$ & $11$ & $3$ & $33$ \\
         \hline
         HalfCheetah & $100$ & $1$ & $17$ & $6$ & $102$ \\
        \hline        
    \end{tabular}
    \caption{Parameters of the environments.}
    \label{tab:exp_setting}
\end{table}

Here we show the learning curves of $\Jp$ and $\Ja$ (and the associated empirical $\Jd$) obtained in the same setting of Section~\ref{sec:experiments}, which is also summarized in Table~\ref{tab:exp_setting}.
In particular, Figures~\ref{fig:pgpe_hc_vs_curves} and~\ref{fig:pg_hc_vs_curves} show the learning curves associated with the \emph{HalfCheetah-v4} environment, Figures~\ref{fig:pgpe_hop_vs_curves} and~\ref{fig:pg_hop_vs_curves} show the ones for the \emph{Hopper-v4} environment, while Figures~\ref{fig:pgpe_swimmer_vs_curves} and~\ref{fig:pg_swimmer_vs_curves} show the ones for the \emph{Swimmer-v4} environment.
In all the environments, it is possible to notice that, for increasing values of the exploration parameters $\psigma$ and $\asigma$, the learning curves $\Jp$ and $\Ja$ (optimized respectively by PGPE and GPOMDP) differ increasingly with the associated empirical deterministic one $\Jd$ (reported in right-hand side column in the plots).
This is due to the fact that small values of $\psigma$ and $\asigma$ lead to a lower amount of exploration. 
Poorly exploratory $\nu_{\vtheta}$ and $\pi_{\vtheta}$ make the algorithms test actions that are very similar to the ones that target deterministic policy $\mu_{\vtheta}$ would suggest.
Conversely, large values of $\psigma$ and $\asigma$ lead to a higher amount of exploration, thus $\Jp$ and $\Ja$ tend to show a higher offset \wrt to the associated empirical $\Jd.$

\textbf{HalfCheetah.}~~In Figures~\ref{fig:pgpe_hc_vs_curves} and~\ref{fig:pg_hc_vs_curves}, it is possible to see the learning curves of $\Jp$ and $\Ja$ (and the associated empirical $\Jd$) seen by PGPE and GPOMDP while learning on \emph{HalfCheetah-v4}.
Note that, in this case, the optimal value for $\psigma^2$ is $1$, while the one for $\asigma^2$ is $10$.
With $T=100$, PGPE seems to struggle a bit more in finding a good deterministic policy \wrt GPOMDP.
This can be explained by the fact that the parameter dimensionality $\dt$ is the highest throughout the three presented environments. 

\textbf{Hopper.}~~In Figures~\ref{fig:pgpe_hop_vs_curves} and~\ref{fig:pg_hop_vs_curves}, it is possible to see the learning curves of $\Jp$ and $\Ja$ (and the associated empirical $\Jd$) seen by PGPE and GPOMDP while learning on \emph{Swimmer-v4}.
Also in this case, the optimal value for $\psigma^2$ is $1$, while the one for $\asigma^2$ is $10$.
As for \emph{HalfCheetah}, with $T=100$, PGPE seems to struggle a bit more in finding a good deterministic policy \wrt GPOMDP, even if this is the intermediate difficulty environment for what concerns the parameter dimensionality $\dt$.

\textbf{Swimmer.}~~In Figures~\ref{fig:pgpe_swimmer_vs_curves} and~\ref{fig:pg_swimmer_vs_curves}, it is possible to see the learning curves of $\Jp$ and $\Ja$ (and the associated empirical $\Jd$) seen by PGPE and GPOMDP while learning on \emph{Swimmer-v4}.
Note that, in this case, the optimal value for $\psigma^2$ is $10$, while the one for $\asigma^2$ is $1$.
Here we employed an horizon $T=200$.
Indeed, as also commented in Section~\ref{sec:experiments}, GPOMDP struggles more than PGPE in finding a good deterministic policy.

\begin{figure}[H]
    \centering
    \resizebox{\linewidth}{!}{\includegraphics{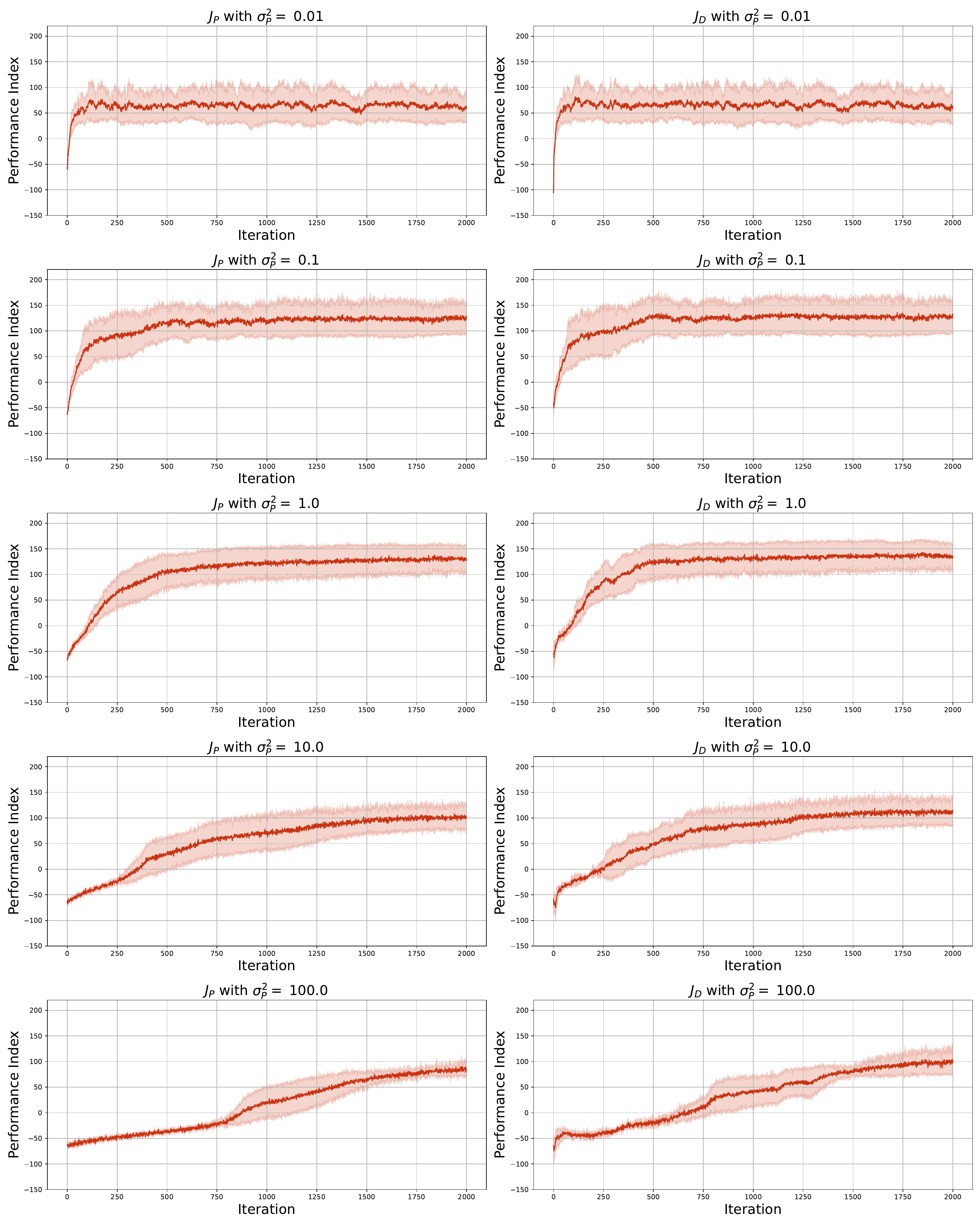}}
    \caption{$\Jp$ and $\Jd$ learning curves (5 runs, mean $\pm 95\%$ C.I.) for PGPE on \emph{Half Cheetah-v4}.}
    \label{fig:pgpe_hc_vs_curves}
\end{figure}

\begin{figure}[H]
    \centering
    \resizebox{\linewidth}{!}{\includegraphics{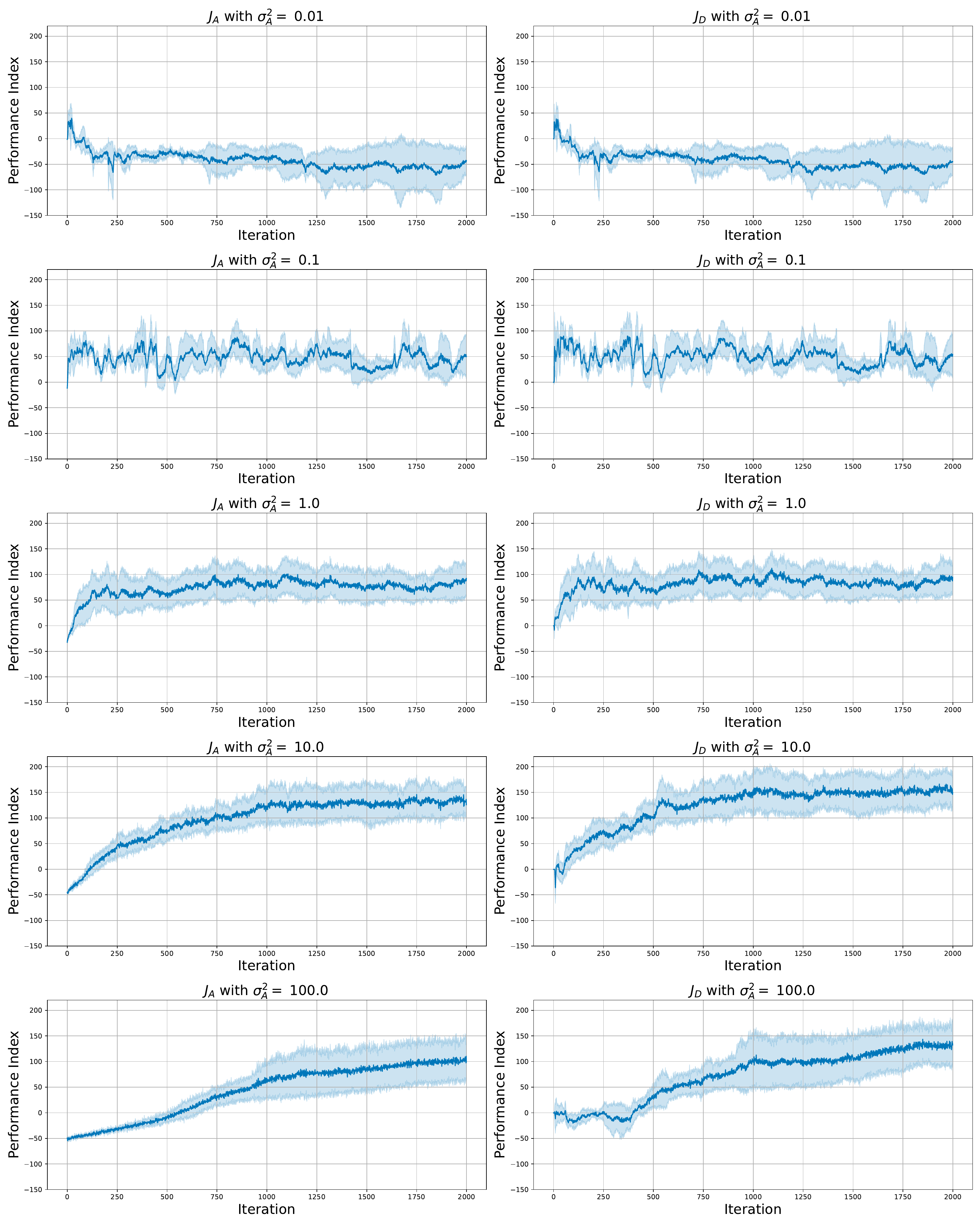}}
    \caption{$\Ja$ and $\Jd$ learning curves (5 runs, mean $\pm 95\%$ C.I.) for GPOMDP on \emph{Half Cheetah-v4}.}
    \label{fig:pg_hc_vs_curves}
\end{figure}

\begin{figure}[H]
    \centering
    \resizebox{\linewidth}{!}{\includegraphics{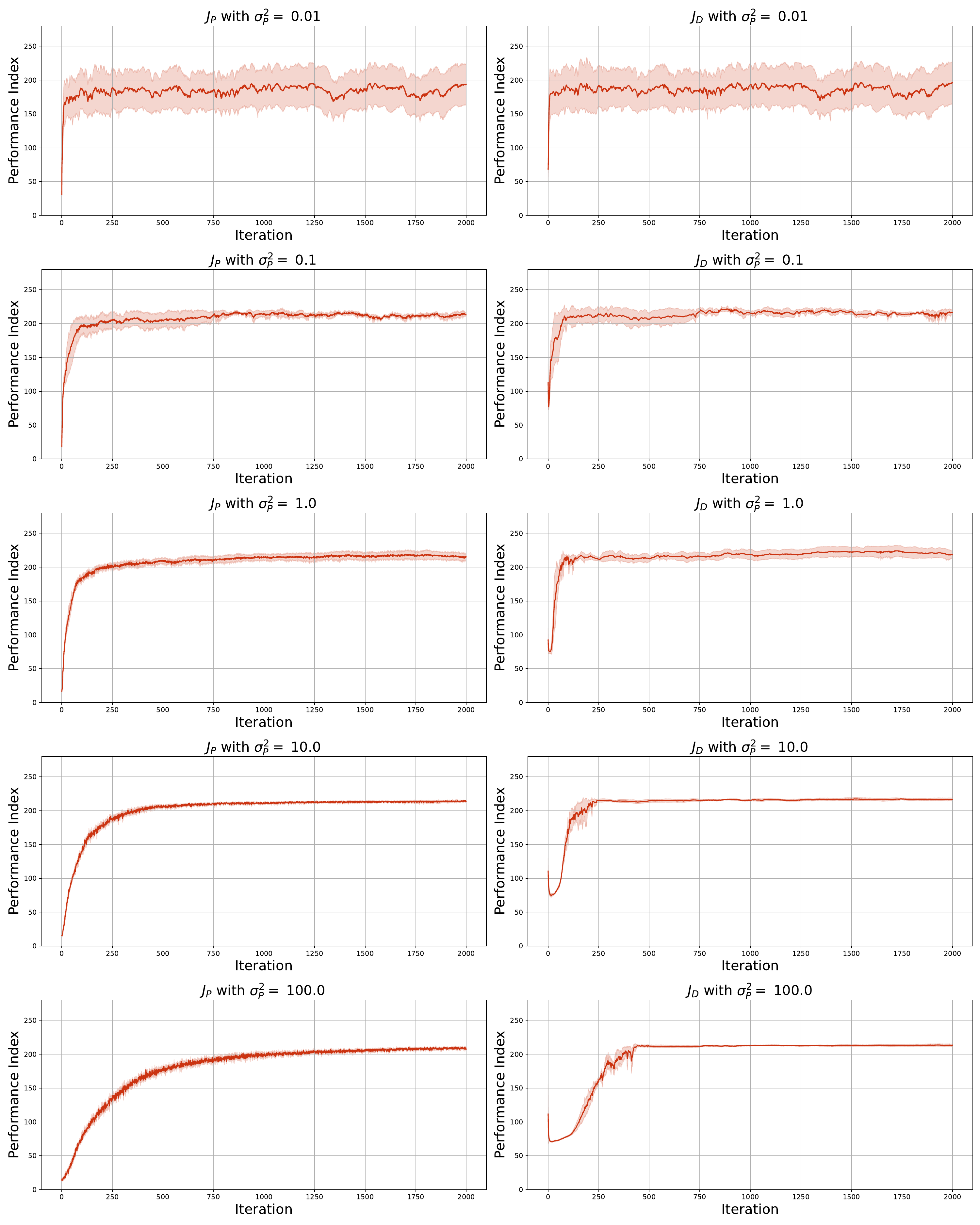}}
    \caption{$\Jp$ and $\Jd$ learning curves (5 runs, mean $\pm 95\%$ C.I.) for PGPE on \emph{Hopper-v4}.}
    \label{fig:pgpe_hop_vs_curves}
\end{figure}

\begin{figure}[H]
    \centering
    \resizebox{\linewidth}{!}{\includegraphics{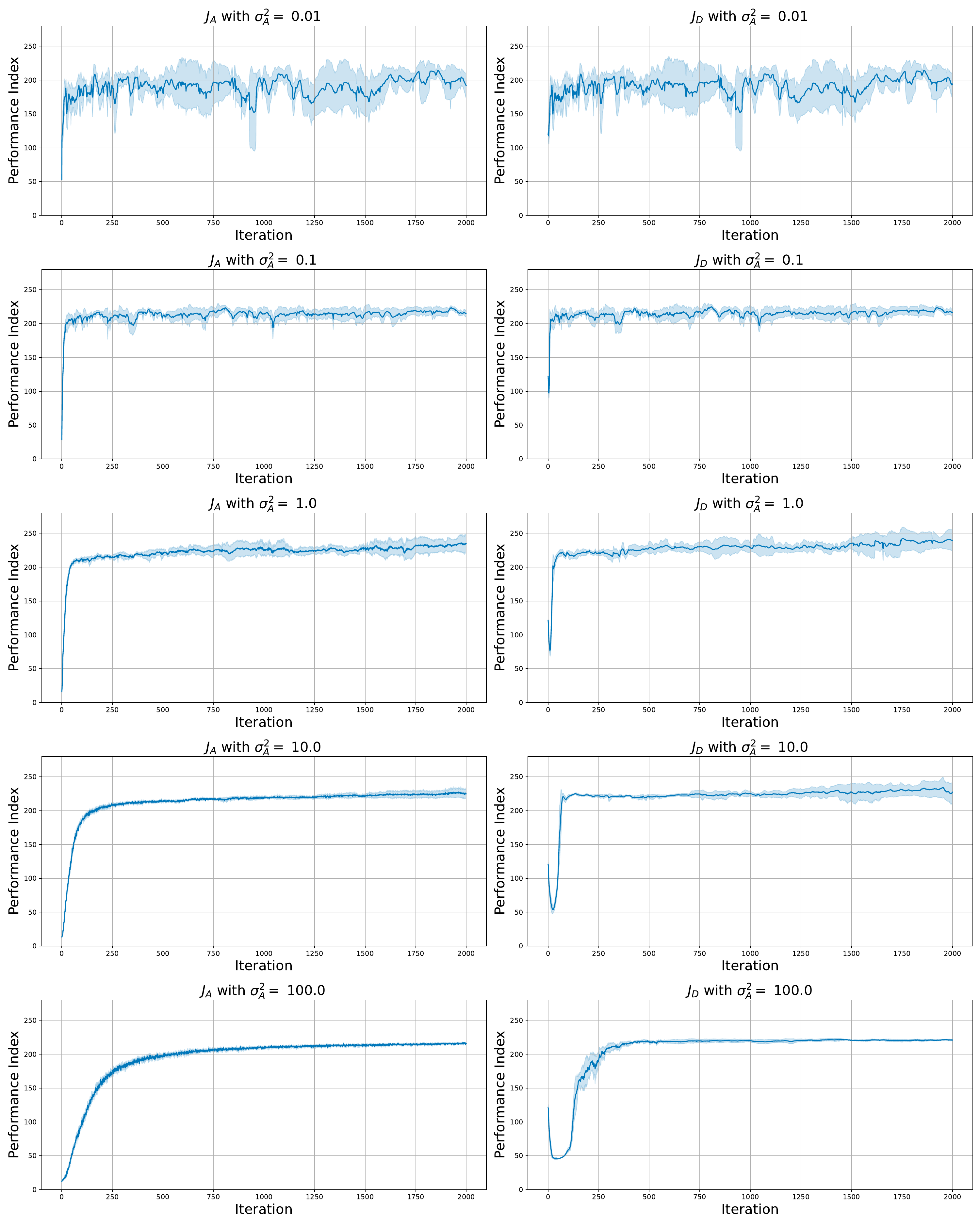}}
    \caption{$\Ja$ and $\Jd$ learning curves (5 runs, mean $\pm 95\%$ C.I.) for GPOMDP on \emph{Hopper-v4}.}
    \label{fig:pg_hop_vs_curves}
\end{figure}

\begin{figure}[H]
    \centering
    \resizebox{\linewidth}{!}{\includegraphics{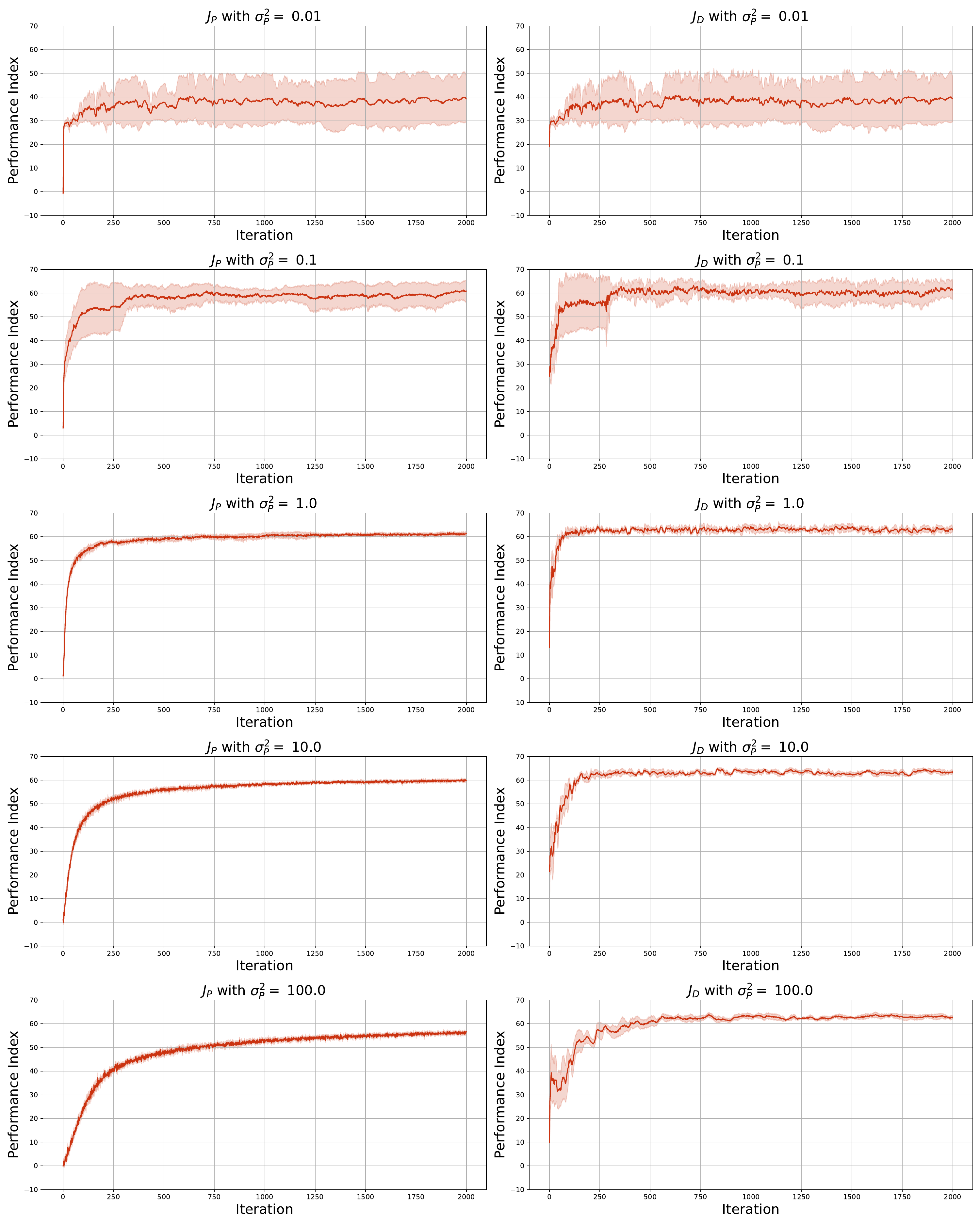}}
    \caption{$\Jp$ and $\Jd$ learning curves (5 runs, mean $\pm 95\%$ C.I.) for PGPE on \emph{Swimmer-v4}.}
    \label{fig:pgpe_swimmer_vs_curves}
\end{figure}

\begin{figure}[H]
    \centering
    \resizebox{\linewidth}{!}{\includegraphics{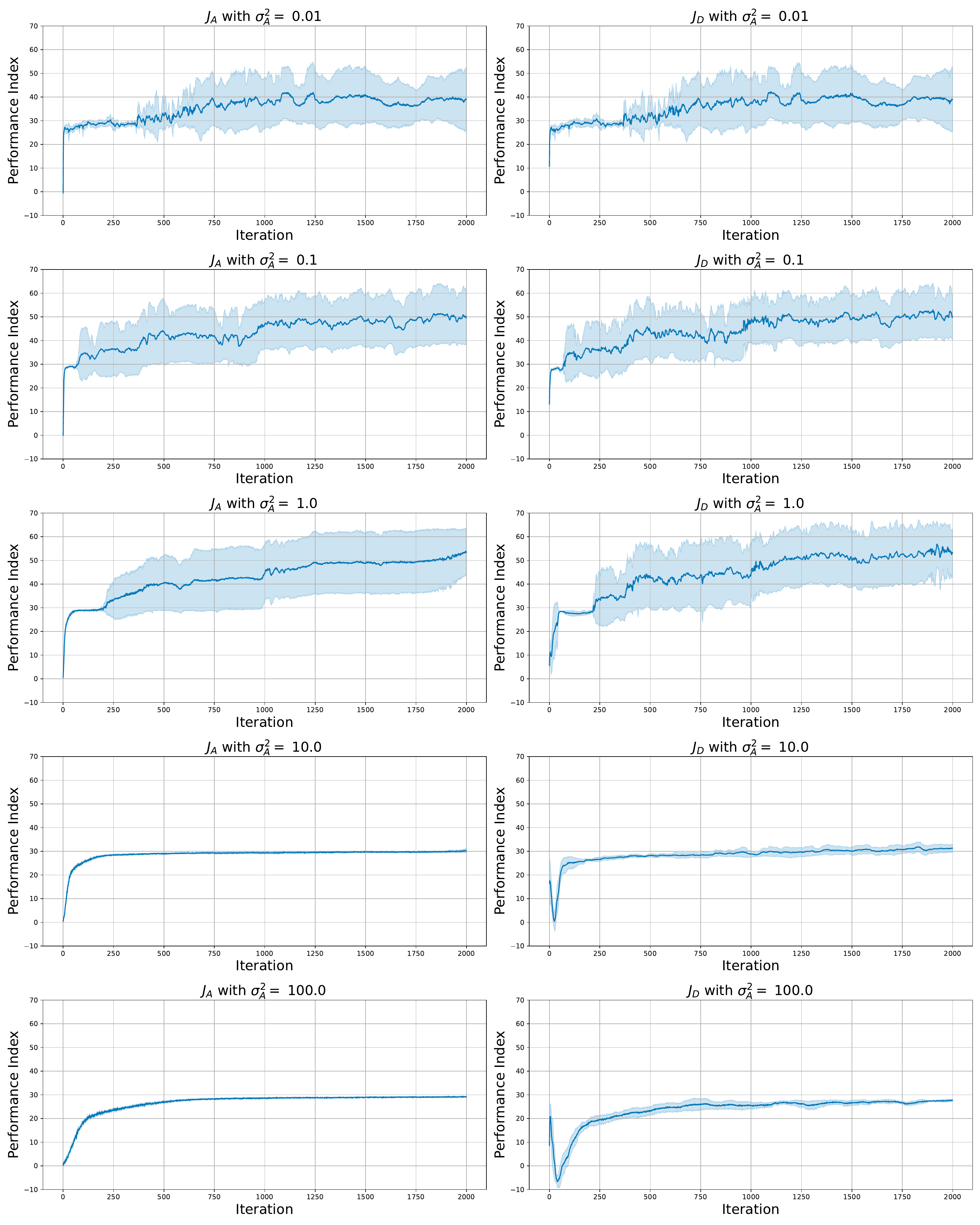}}
    \caption{$\Ja$ and $\Jd$ learning curves (5 runs, mean $\pm 95\%$ C.I.) for GPOMDP on \emph{Swimmer-v4}.}
    \label{fig:pg_swimmer_vs_curves}
\end{figure}

\subsection{GPOMDP vs. PGPE: the case of LQR}
 In order to show more clearly the discussed trade-offs, here we present a numerical validation conduced on the Linear Quadratic Regulator~\citep[LQR,][]{kucera1992optimal} environment, much smaller \wrt the ones offered by the MuJoCo suite.

\paragraph{Brief description of the environment}
Here we summarize the considered version of the LQR environment.
Considering $x_t$ and $u_t$ as the state and action at time $t$, respectively, the state evolution is computed as: $x_{t+1} = A x_t + B u_t$. 
The reward at time $t$ is computed as: $r_{t} = - x_{t}^{\top} Q x_{t} - u_{t}^{\top} R u_{t}$. 
The initial state of the environment is randomly sampled from the interval $[-3, 3]$ using a uniform distribution. 

\paragraph{Setting}
Our objective is to control the LQR environment via a deterministic linear policy.
For the presented results, we considered a number of iterations of $K=3000$ for both PGPE and GPOMDP, with a batch size of $N=100$ trajectories for each iteration, and a learning rate schedule governed by Adam, with initial step sizes of $0.01$. We conducted 3 runs for each experiment, and the plots depict the mean $\pm 95\%$ confidence interval.
Moreover, we considered a bi-dimensional LQR environment (\ie $d_{\mathcal{S}} = 2$ and $d_{\mathcal{A}} = 2$), with unlimited state and action spaces (\ie state and action ranges are $(-\infty, +\infty)$).
Furthermore, the characteristic matrices of the LQR environment were selected as:
\begin{align}
    A = B = \left[\begin{matrix} 0.9 & 0 \\ 0 & 0.9\end{matrix}\right], \qquad Q = \left[\begin{matrix} 0.9 & 0 \\ 0 & 0.1\end{matrix}\right], \qquad \text{and} \qquad  R = \left[\begin{matrix} 0.1 & 0 \\ 0 & 0.9\end{matrix}\right].
\end{align}

\paragraph{Sensitivity \wrt $\sigma_{\dagger}^2$}
Here we present a similar study to the one that has been discussed in the main paper.
We tested both PGPE and GPOMDP on the previously described LQR with $T=50$ and with the exploration amounts varying in $\sigma_{\dagger}^2 \in \{10^{-5}, 10^{-4}, 10^{-3}, 10^{-2}, 10^{-1}\}$.
As can be noticed in Figure~\ref{fig:lqr_sigma}, there are values for the exploration amounts $\psigma$ and $\asigma$ leading to higher performance values for the deployed deterministic policy. 
In particular, PGPE deploys its best version of $\mu_{\vtheta}$ when setting $\psigma^2 = 10^{-3}$, while the same happens with GPOMDP setting $\asigma = 10^{-4}$.

\begin{figure}[t!]
    \centering
    \resizebox{\linewidth}{!}{\includegraphics{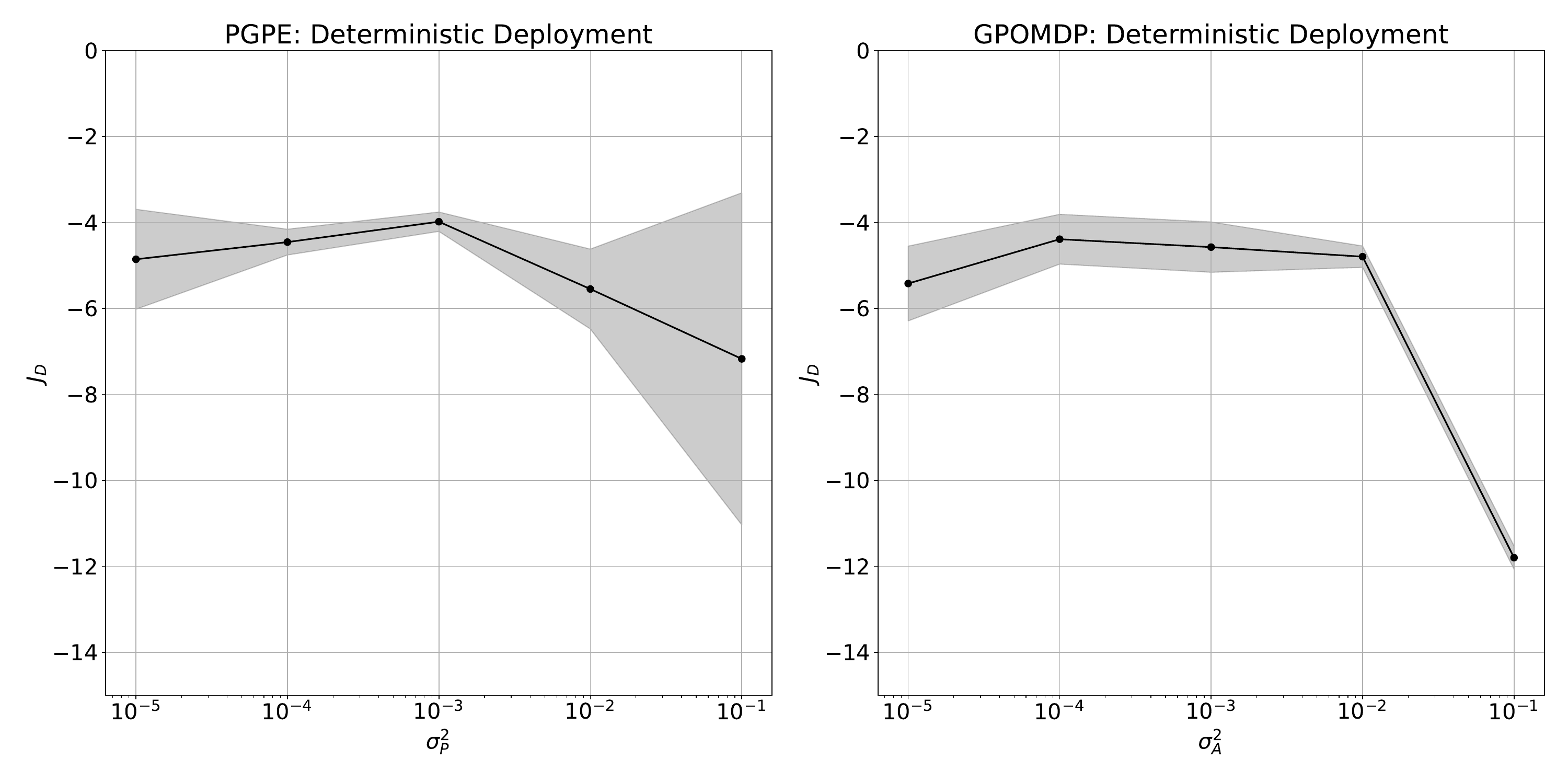}}
    \caption{$\Jd(\vtheta_K)$ associated with GPOMDP and PGPE in LQR ($T=50$) employing linear (hyper)policies and varying exploration amounts (5 runs, mean $\pm 95\%$ C.I.).}
    \label{fig:lqr_sigma}
\end{figure}

\paragraph{Increasing $T$}
Here we present a study on the horizon length $T$, for which we tested values $T \in \{ 50, 100, 200 \}$. We display in Figure~\ref{fig:lqr_h} the resulting $\Jd$ associated with the learning processes of PGPE and GPOMDP. For each of the algorithms, we employed the best values of $\sigma$ obtained from the previous experiment.
Additionally, we depicted as a dashed line the estimated performance of the optimal policy for LQR, showing that both algorithms manage to achieve performance close to optimal on average.
As can be observed, GPOMDP struggles more than PGPE in converging to the globally optimal deterministic policy.
In particular, the performance of the deterministic policy associated with PGPE appears not to change with increasing values of $T$.

\begin{figure}[t!]
    \centering
    \resizebox{0.85\linewidth}{!}{\includegraphics{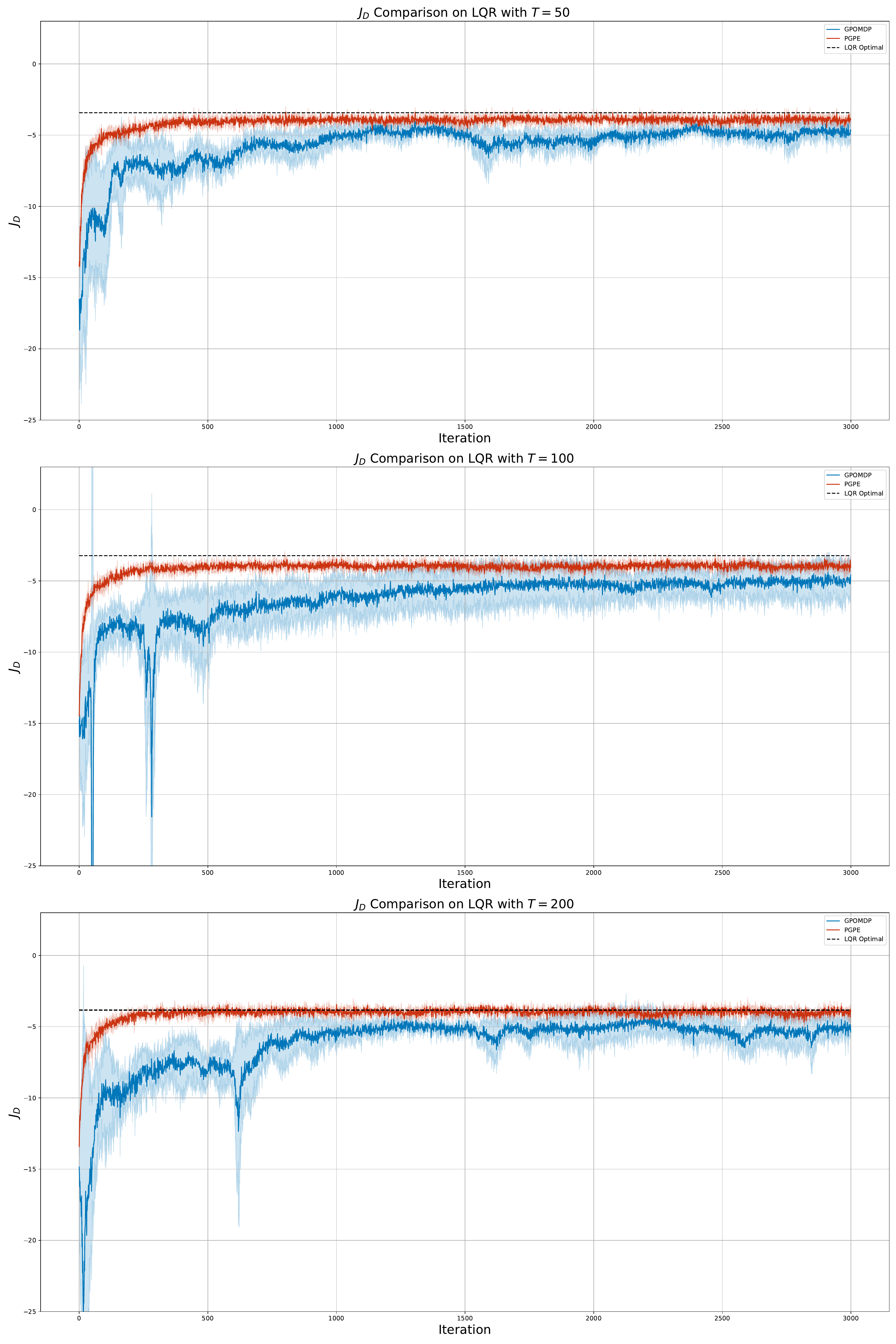}}
    \caption{$\Jd$ associated with GPOMDP and PGPE in LQR employing linear (hyper)policies and $T\in\{50,100,200\}$ (5 runs, mean $\pm 95\%$ C.I.).}
    \label{fig:lqr_h}
\end{figure}

\subsection{GPOMDP vs. PGPE: the case of \emph{Swimmer-v4}.} \label{apx:experiments:tradeoff}
In this section, we conduce experiments to highlight the trade-off between parameter dimensionality $\dt$ and trajectory length $T$.
As emerges from the theoretical results shown in the main paper, GPOMDP should struggle in finding a good deterministic policy $\mu_{\vtheta}$ from large values of $T$, while PGPE should struggle in the same task form large values of $\dt$.
Notice that this behavior was already visible in the variance study conducted in Section~\ref{sec:experiments} and Appendix~\ref{apx:experiments:curves}, where we added action clipping to environments.
To better illustrate the trade-off at hand, we removed the action clipping to conduce the following experimental results, restoring the original version of the MuJoCo environments. 
Indeed, we remark that action clipping was introduced to facilitate the exploration, highlighting the outcomes of the variance study.

\textbf{Setting.}~~We consider two different target deterministic policies $\mu_{\vtheta}$: 
\begin{itemize}
    \item \emph{linear}: PGPE and GPOMDP are run for $K=2000$, with $N=100$, $\dt = 16$ (parameters initialized to $0$);
    \item \emph{neural network} (two dense 
    hidden layers with $32$ neurons and with hyperbolic tangent activation functions): PGPE and GPOMDP are run for $K=2000$, with $N=100$, $\dt = 1344$ (parameters initial values sampled by $\mathcal{N}(0, 1)$).
\end{itemize}
For the learning rate schedule, we employed Adam with the same step sizes $0.1$ for PGPE and $0.01$ for GPOMDP (the reason is the same explained in Section~\ref{sec:experiments}).
For all the experiments we fixed both $\psigma$ and $\asigma$ to $1$.

\textbf{Increasing $T$.}~~Here we show the results of learning on \emph{Swimmer-v4} with $T \in \{100, 200\}$ (and $\gamma = 1$).
The target deterministic policy in this case is the linear one, thus $\dt = 16$.
Figures~\ref{fig:swimmer_linear_100} and~\ref{fig:swimmer_linear_200} show the learning curves of $\Jp$ and $\Ja$, with their associated empirical $\Jd$.
For $T=100$, PGPE and GPOMDP reach deterministic policies exhibiting similar values of $\Jd(\vtheta_{K})$.
For $T=200$, instead, the algorithms reach deterministic policies showing an offset in the values of $\Jd(\vtheta_{K})$ in favor of PGPE.
As suggested by the theoretical results shown in the paper, the fact that GPOMDP struggles in reaching a good deterministic policy can be explained by the doubling of the horizon value.

\textbf{Increasing $\dt$.}~~Here we show the results of learning on \emph{Swimmer-v4} with $T \in \{100, 200\}$ (and $\gamma = 1$), with two different target deterministic policies: the linear one ($\dt = 16$) and the neural network one ($\dt=1344$).

Figures~\ref{fig:pgpe_swimmer_nn_100} and ~\ref{fig:pgpe_swimmer_nn_200} show the learning curves of $\Jp$, with their associated empirical $\Jd$, for both the target policies, when learning with trajectories respectively of length $100$ and $200$.
For both the values of the horizon, it is possible to notice that with a smaller value of $\dt$ PGPE manages to find a better deterministic policy.
Indeed, the found linear and neural network deterministic policies show an offset in $\Jd(\vtheta_{K})$ in favor of the linear one.
As suggested by the theoretical results shown in the paper, the fact that PGPE struggles in reaching a good deterministic policy can be explained by the heavily increased parameter dimensionality $\dt$.

Figures~\ref{fig:pg_swimmer_nn_100} and ~\ref{fig:pg_swimmer_nn_200} show the learning curves of $\Ja$, with their associated empirical $\Jd$, for both the target policies, when learning with trajectories respectively of length $100$ and $200$.
From Figure~\ref{fig:pg_swimmer_nn_100}, even with the target neural network policy, for $T=100$ GPOMDP is able however to find a deterministic policy with similar performances to the one found when the target deterministic policy is the linear one.
Switching to $T=200$ (Figure~\ref{fig:pg_swimmer_nn_200}), it is possible to notice a severe offset between the learning curves of the empirical $\Jd$ associated to $\Ja$, in favor of the case in which the target policy is the linear one.
As done for the analysis on the increasing $T$, this can be explained by the fact that the horizon has been doubled, which is in line with the theoretical results shown throughout this work.

\begin{figure}[t!]
    \centering
    \subfloat[PGPE.]{
        \resizebox{\linewidth}{!}{\includegraphics{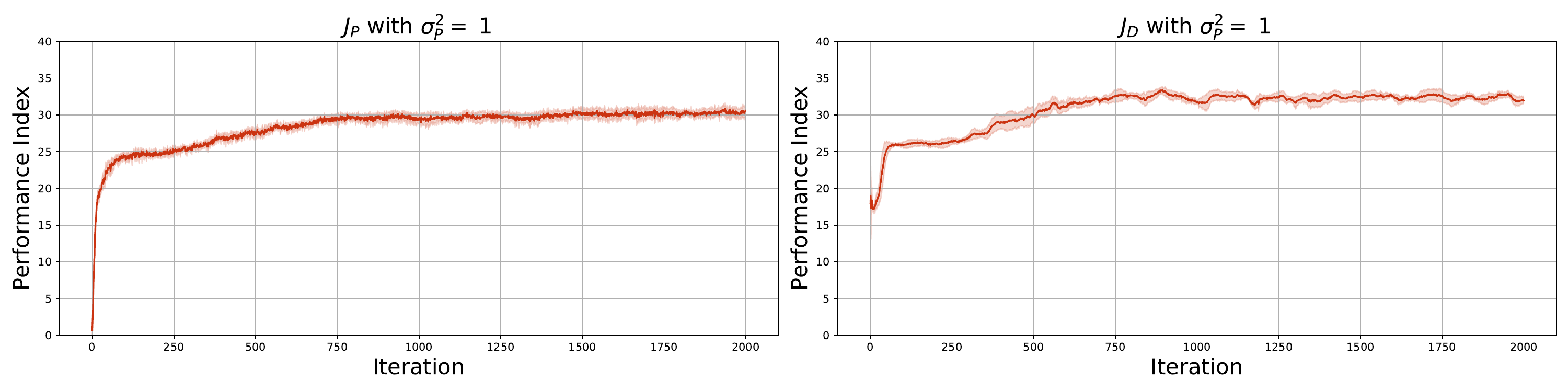}}
    }
    \hfill
    \subfloat[GPOMDP.]{
        \resizebox{\linewidth}{!}{\includegraphics{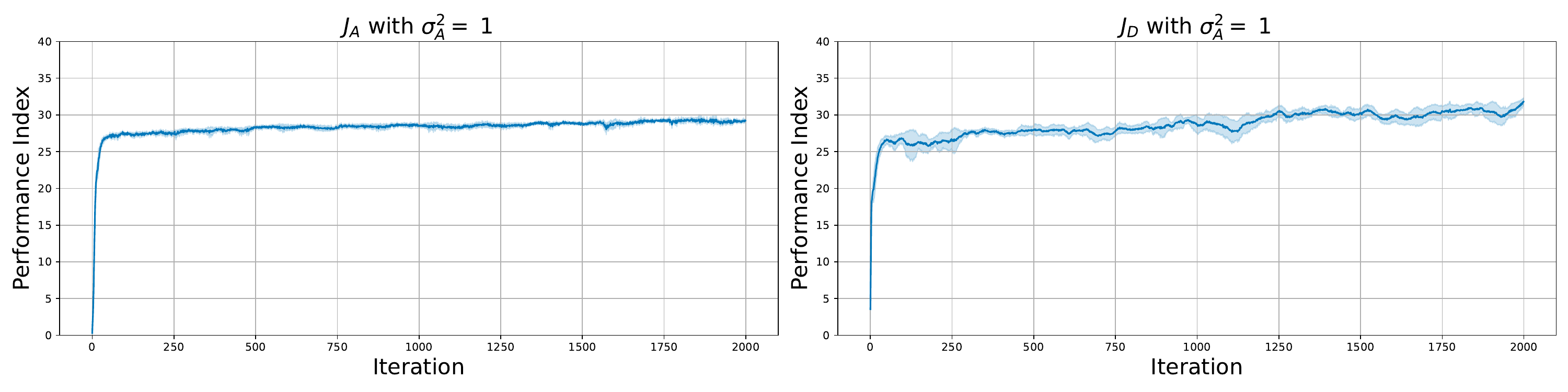}}
    }
    \caption{PGPE and GPOMDP on \emph{Swimmer-v4} with linear policy and $T=100$ (5 runs, mean $\pm 95\%$ C.I.).}
    \label{fig:swimmer_linear_100}
\end{figure}

\begin{figure}[t!]
    \centering
    \subfloat[PGPE.]{
        \resizebox{\linewidth}{!}{\includegraphics{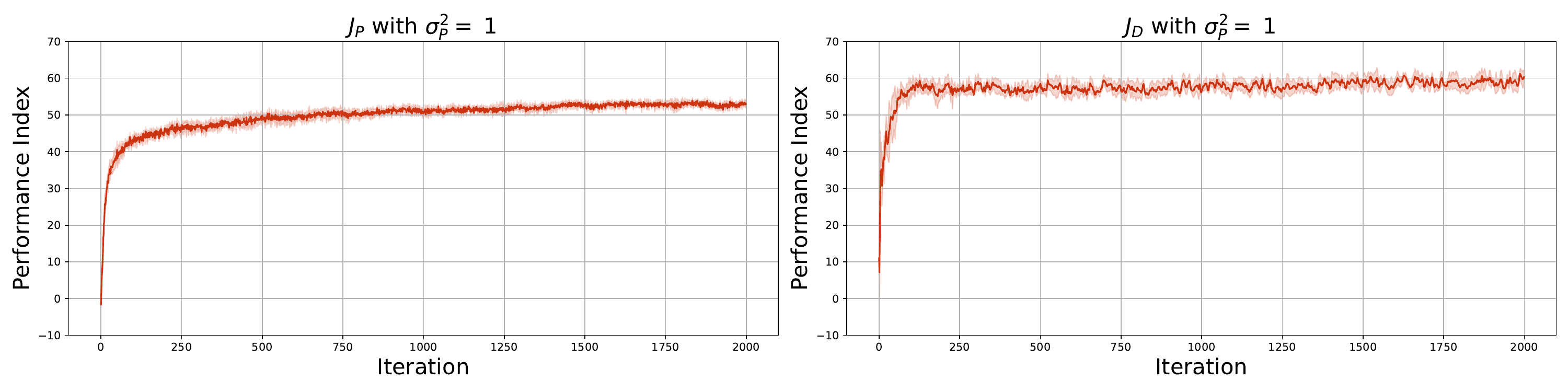}}
    }
    \hfill
    \subfloat[GPOMDP.]{
        \resizebox{\linewidth}{!}{\includegraphics{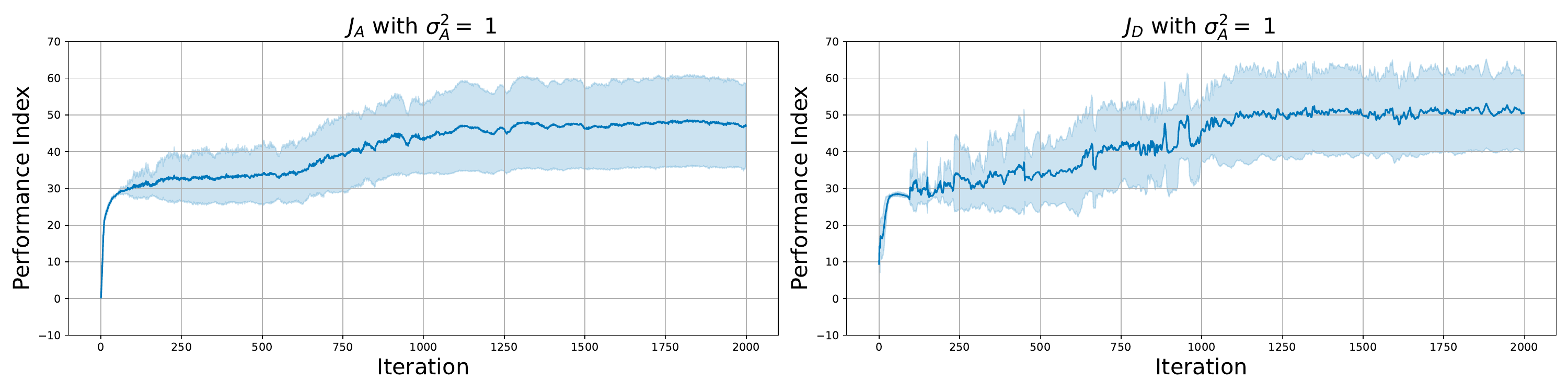}}
    }
    \caption{PGPE and GPOMDP on \emph{Swimmer-v4} with linear policy and $T=200$ (5 runs, mean $\pm 95\%$ C.I.).}
    \label{fig:swimmer_linear_200}
\end{figure}

\begin{figure}[t!]
    \centering
    \subfloat[Linear.]{
        \resizebox{\linewidth}{!}{\includegraphics{img/pgpe_swimmer_linear_curves_100.pdf}}
    }
    \hfill
    \subfloat[Neural Network.]{
        \resizebox{\linewidth}{!}{\includegraphics{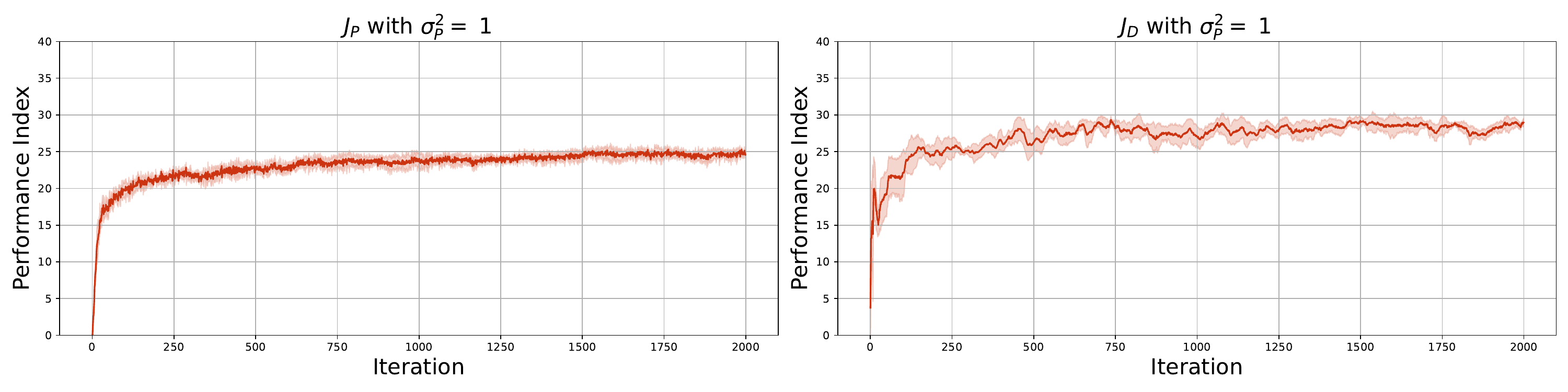}}
    }
    \caption{PGPE on \emph{Swimmer-v4} with linear and neural network policies, and $T=100$ (5 runs, mean $\pm 95\%$ C.I.).}
    \label{fig:pgpe_swimmer_nn_100}
\end{figure}

\begin{figure}[t!]
    \centering
    \subfloat[Linear.]{
        \resizebox{\linewidth}{!}{\includegraphics{img/pg_swimmer_linear_curves_100.pdf}}
    }
    \hfill
    \subfloat[Neural Network.]{
        \resizebox{\linewidth}{!}{\includegraphics{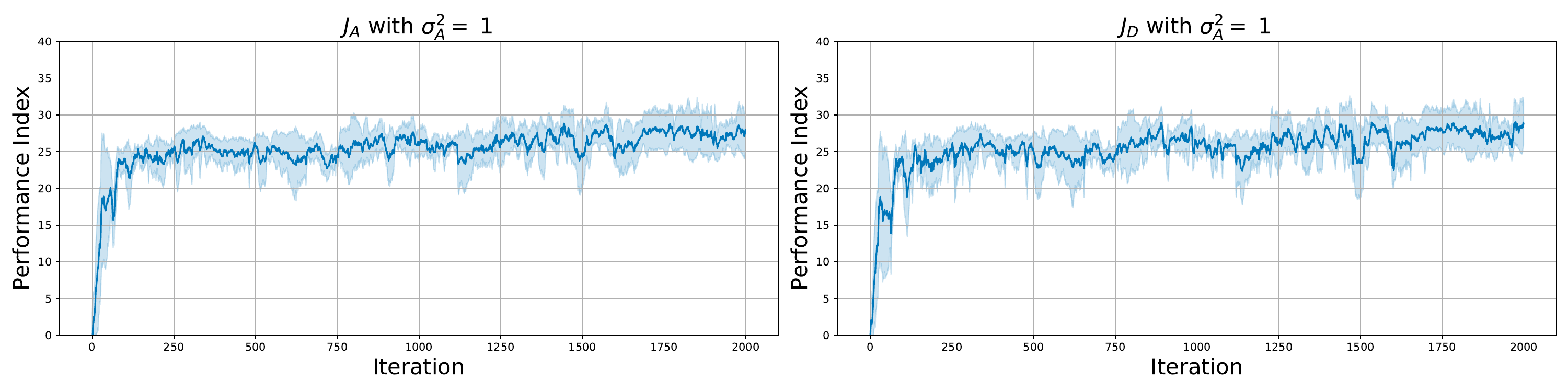}}
    }
    \caption{GPOMDP on \emph{Swimmer-v4} with linear and neural network policies, and $T=100$ (5 runs, mean $\pm 95\%$ C.I.).}
    \label{fig:pg_swimmer_nn_100}
\end{figure}

\begin{figure}[t!]
    \centering
    \subfloat[Linear.]{
        \resizebox{\linewidth}{!}{\includegraphics{img/pgpe_swimmer_linear_curves_200.pdf}}
    }
    \hfill
    \subfloat[Neural Network.]{
        \resizebox{\linewidth}{!}{\includegraphics{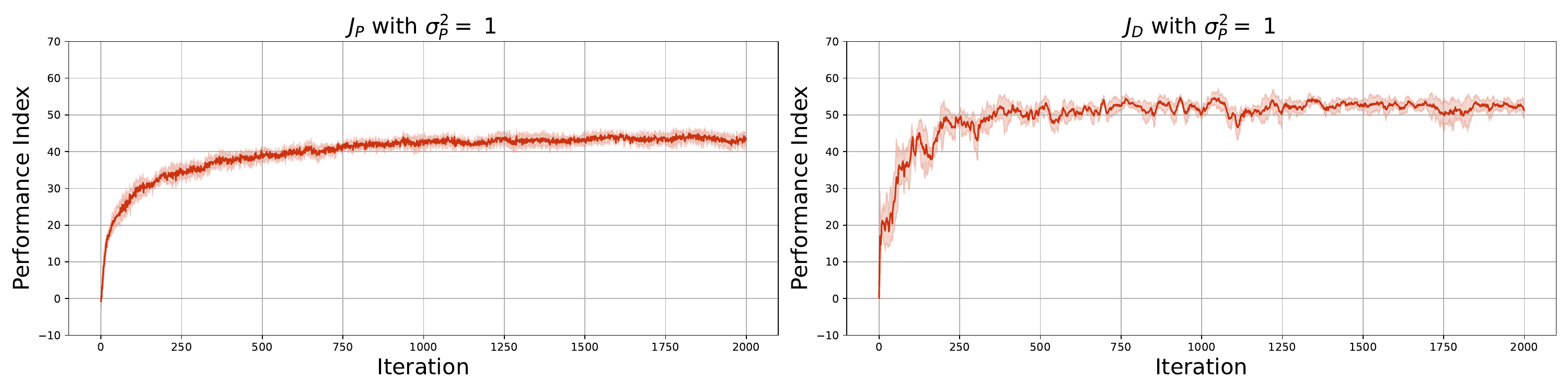}}
    }
    \caption{PGPE on \emph{Swimmer-v4} with linear and neural network policies, and $T=200$ (5 runs, mean $\pm 95\%$ C.I.).}
    \label{fig:pgpe_swimmer_nn_200}
\end{figure}

\begin{figure}[t!]
    \centering
    \subfloat[Linear.]{
        \resizebox{\linewidth}{!}{\includegraphics{img/pg_swimmer_linear_curves_200.pdf}}
    }
    \hfill
    \subfloat[Neural Network.]{
        \resizebox{\linewidth}{!}{\includegraphics{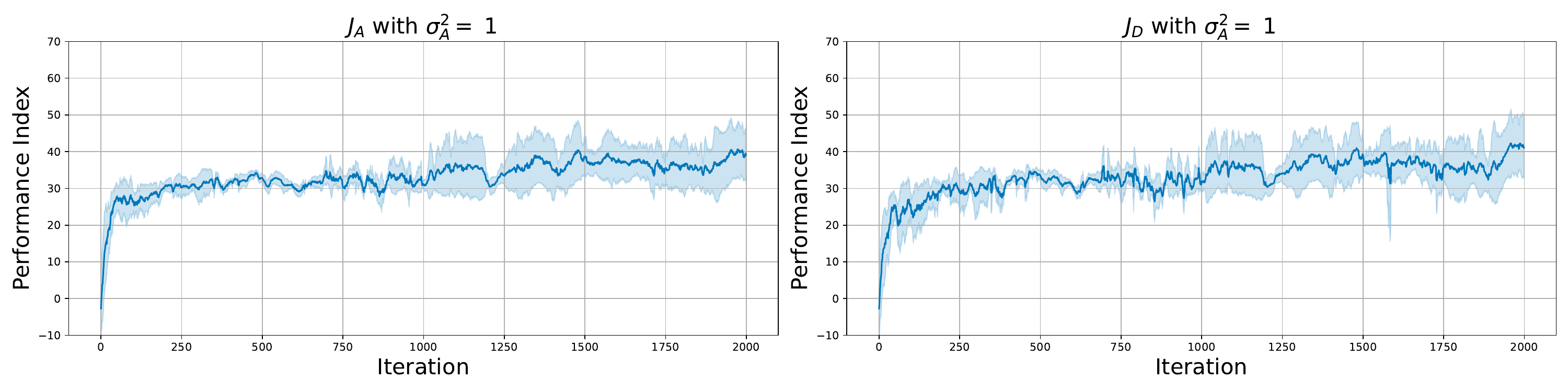}}
    }
    \caption{GPOMDP on \emph{Swimmer-v4} with linear and neural network policies, and $T=200$ (5 runs, mean $\pm 95\%$ C.I.).}
    \label{fig:pg_swimmer_nn_200}
\end{figure}

\end{document}